\newcommand{\thickhline}{%
    \noalign {\ifnum 0=`}\fi \hrule height 1pt
    \futurelet \reserved@a \@xhline
}
\newcolumntype{"}{@{\hskip\tabcolsep\vrule width 1pt\hskip\tabcolsep}}
\newcommand{\neutralize}[1]{\expandafter\let\csname c@#1\endcsname\count@}
    \let\Cref\crtCref
    \let\cref\crtcref
\renewenvironment{proof}[1][Proof]%
{%
	\par\noindent{\bfseries\upshape {#1.}\ }%
}%
{\qed\newline}
\declaretheorem[name=Theorem,parent=section]{theorem}
\declaretheorem[name=Lemma,parent=section]{lemma}
\declaretheorem[name=Definition,parent=section]{definition}
\declaretheorem[name=Corollary,parent=section]{corollary}
\declaretheorem[name=Assumption, parent=section]{assumption}
\declaretheorem[name=Condition, parent=section]{condition}
\declaretheorem[name=Proposition, parent=section]{proposition}
\newcommand{\algcommentlight}[1]{\textcolor{blue!70!black}{\transparent{0.5}\footnotesize{\texttt{\textbf{//\hspace{2pt}#1}}}}}
\newcommand{\Input}{\item[\textbf{Input:}]}
\newcommand{\E}{\mathbb{E}}
\renewcommand{\P}{\mathbb{P}}
\newcommand{\cA}{\mathcal{A}}
\newcommand{\veps}{\varepsilon}
\newcommand{\pistar}{\pi^\star}
\newcommand{\reg}{\mathrm{reg}}
\newcommand{\bell}{\bm{\ell}}
\newcommand{\replearn}{\texttt{replearn}}
\newcommand{\rep}{\mathrm{rep}}
\newcommand{\Qbar}{\overline{Q}}
\newcommand{\pihat}{\widehat{\pi}}
\newcommand{\x}{\bm{x}}
\renewcommand{\a}{\bm{a}}
\newcommand{\phistar}{\phi^\star}
\newcommand{\Qhat}{\widehat{Q}}
\newcommand{\ind}[1]{^{(#1)}}
\newcommand{\mustar}{\mu^\star}
\newcommand{\nn}{\nonumber}
\newcommand{\phihat}{\hat{\phi}}
\newcommand{\cX}{\mathcal{X}}
\newcommand{\cG}{\mathcal{G}}
\newcommand{\reals}{\mathbb{R}}
\newcommand{\nreg}{N_\mathrm{reg}}
\newcommand{\calD}{\mathcal{D}}
\newcommand{\order}{\mathcal{O}}
\newcommand{\hphi}{\hat{\phi}}
\newcommand{\hpsi}{\hat{\mu}}
\newcommand{\hd}{\hat{d}}
\newcommand{\hV}{\widehat{V}}
\newcommand{\hell}{\hat{\ell}}
\newcommand{\tr}{\text{\rm Tr}}
\newcommand{\cM}{\mathcal{M}}
\newcommand{\otil}{\widetilde{\mathcal{O}}}
\newcommand{\Vhat}{\widehat{V}}
\DeclareMathOperator*{\argmin}{argmin}
\newcommand{\unif}{\texttt{unif}}
\newcommand{\ldotst}{%
	\mathinner{{\ldotp}{\ldotp}}%
}
\newcommand{\wtilde}{\widetilde}
\newcommand{\cF}{\mathcal{F}}
\newcommand{\cC}{\mathcal{C}}
\newcommand{\cQ}{\mathcal{Q}}
\newcommand{\cH}{\mathcal{H}}
\newcommand{\fhat}{\hat{f}}
\newcommand{\cov}{\mathrm{cov}}
\newcommand{\indd}[1]{^{#1}}
\newcommand{\cI}{\mathcal{I}}
\newcommand{\bpi}{\bm{\pi}}
\newcommand{\z}{\bm{z}}
\newcommand{\bcH}{\bm{\cH}}
\newcommand{\thetahat}{\hat\theta}
\newcommand{\spanner}{\mathrm{span}}
\newcommand{\bzeta}{\bm{\zeta}}
\newcommand{\bh}{\bm{h}}
\newcommand{\est}{\texttt{LinEst}}
\newcommand{\psdp}{\texttt{PSDP}}
\newcommand{\veceval}{\texttt{EstVec}}
\newcommand{\rspanner}{\texttt{RobustSpanner}}
\newcommand{\apx}{\texttt{LinOpt}}
\newcommand{\Trepval}{\frac{A H \log(|\Phi|/\delta)}{\alpha \veps^2}}
\newcommand{\Tspanval}{\frac{A \log(d H|\Phi|\veps^{-1} \delta^{-1} )}{\alpha \veps^2}}
\newcommand{\Trepvalf}{\alpha^{-1}\veps^{-2} A H \log(|\Phi|/\delta)}
\newcommand{\Tspanvalf}{\alpha^{-2} \veps^{-2} A \log(d H|\Phi|\veps^{-1} \delta^{-1} )}
\newcommand{\Tcovval}{\veps^{-2}A d^{13} H^6 \log(\Phi/\delta)}
\newcommand{\loss}{\mathrm{loss}}
\newcommand{\freed}{\mathrm{freed}}
\newcommand{\phib}{\bar{\phi}}
\newcommand{\phibar}{\bar{\phi}}
\newcommand{\thetab}{\bar{\theta}}
\newcommand{\ww}{w}
\newcommand{\g}{g}
\newcommand{\varthetab}{\bm{\vartheta}}
\newcommand{\bvthetab}{\bar{\varthetab}}
\newcommand{\thetabar}{\bar{\theta}}
\newcommand{\hatell}{\hat{\ell}}
\newcommand{\hatphi}{\hat{\phi}}
\newcommand{\tilp}{\rho}
\newcommand{\hatp}{\widehat{P}}
\newcommand{\bbB}{\mathbb{B}}
\newcommand{\poly}{\mathrm{poly}}
\newcommand{\cE}{\mathcal{E}}
\newcommand{\john}{\texttt{John}}
\newcommand{\bhpi}{\widehat{\bpi}}
\newcommand{\Lhat}{\widehat{L}}
\newcommand{\w}{\bm{w}}
\title{Beating Adversarial Low-Rank MDPs with \\Unknown Transition and Bandit Feedback}
\author{%
\And
  Haolin Liu\thanks{The authors are listed in alphabetical order. } \\
  University of Virginia\\
  \texttt{srs8rh@virginia.edu} \\
  \qquad \qquad \\
  \And
  \hspace{20pt}Zakaria Mhammedi$^*$\\
 \hspace{20pt} Google Research \\
 \hspace{20pt} \texttt{mhammedi@google.com} \\
  \And
  \And 
   \hspace{27pt} Chen-Yu Wei$^*$ \\
   \hspace{27pt}  University of Virginia\\
  \hspace{27pt} \texttt{chenyu.wei@virginia.edu} \\
  \And
  \hspace{8pt} Julian Zimmert$^*$ \\
   \hspace{8pt} Google Research \\
   \hspace{8pt} \texttt{zimmert@google.com} \\
  \And
}
\begin{document}

\maketitle

\begin{abstract}
 We consider regret minimization in low-rank MDPs with fixed transition and adversarial losses. Previous work has investigated this problem under either full-information loss feedback with unknown transitions~\citep{zhao2024learning}, or bandit loss feedback with known transition \citep{foster2022complexity}. First, we improve the $\poly(d,A, H)T^{\nicefrac{5}{6}}$ regret bound 
 of~\cite{zhao2024learning} to $\poly(d,A,H)T^{\nicefrac{2}{3}}$ for the full-information unknown transition setting, where $d$ is the rank of the transitions, $A$ is the number of actions, $H$ is the horizon length, and $T$ is the number of episodes. Next, we initiate the study on the setting with bandit loss feedback and unknown transitions. Assuming that the loss has a linear structure, we propose both model-based and model-free algorithms achieving $\poly(d,A, H)T^{\nicefrac{2}{3}}$ regret, though they are computationally inefficient. We also propose oracle-efficient model-free algorithms with $\poly(d,A, H)T^{\nicefrac{4}{5}}$ regret. We show that the linear structure is necessary for the bandit case---without structure on the reward function, the regret has to scale polynomially with the number of states. This is contrary to the full-information case~\citep{zhao2024learning}, where the regret can be independent of the number of states even for unstructured reward function. 
\end{abstract}

\section{Introduction}


We study online reinforcement learning (RL) in low-rank Markov Decision Processes (MDPs). Low-rank MDPs is a class of MDPs where the transition probability can be decomposed as an inner product between two low-dimensional features, i.e., $P(x'\mid x,a)=\phi^\star(x,a)^\top \mu^\star(x')$, where $P(x'\mid x,a)$ is the probability of transitioning to state $x'$ when the learner takes action $a$ on state $x$, and $\phi^\star$, $\mu^\star$ are two feature mappings. The ground truth features $\phi^\star$ and $\mu^\star$ are unknown to the learner. This setting has recently caught theoretical attention due to its simplicity and expressiveness \citep{ agarwal2020flambe, uehara2021representation, zhang2022making, cheng2023improved, modi2024model, zhang2022efficient, mhammedi2024efficient, huang2023reinforcement}.  In particular, since the learner does not know the features, it is necessary for the learner to perform \emph{feature learning} (or \emph{representation learning}) to approximate them. This allows low-rank MDPs to model the additional difficulty not present in traditional linear function approximation schemes where the features are given, such as in linear MDPs \citep{jin2020provably} and in linear mixture MDPs \citep{ayoub2020model}. Since feature learning is an indispensable part of modern deep RL
pipelines, low-rank MDP is a model that is closer to practice than traditional linear function approximation.

Most prior theoretical work on low-rank MDPs focuses on reward-free learning; this is a setting where instead of focusing on a particular reward function, the goal is to learn a model for the transitions (or, in the model-free setting, a small set of policies with good state cover), that enables policy optimization for \emph{any} downstream reward functions.  While this is a reasonable setup in some cases, in other applications, the learner can only obtain loss information from interactions with the environment, and only observes the loss on the state-actions that have been visited (i.e., bandit feedback). This introduces additional challenges to the learner. 

Furthermore, in many online learning scenarios, the loss function may change over time, reflecting the non-stationary nature of the environment or task switches \citep{padakandla2020reinforcement}. This could be modeled by the \emph{adversarial} MDP setting, where the loss function changes arbitrarily from one episode to the next, and the changes might even depend on the behavior of the learner. This setting is also extensively studied, but mostly restricted to tabular MDPs \citep{rosenberg2019online, jin2020learning, shani2020optimistic, luo2021policy} or traditional linear function approximation schemes \citep{cai2020provably, luo2021policy, he2022near, zhao2022mixture, sherman2023improved, dai2023refined, liu2023towards}. The work by \cite{zhao2024learning} initiated the study on adversarial MDPs in low-rank MDPs, but their work is restricted to full-information loss feedback. 

When feature learning, bandit feedback, and adversarial losses are combined, the problem becomes highly challenging, and to the best of our knowledge their are no provably efficient algorithms to tackle this setting. In this work, we provide the first result for this combination. We hope that our result would bring new ideas to RL in practice, where all three elements are usually present simultaneously. 
We give several main results, targeting at either tighter regret (i.e., the performance gap between the optimal policy and the learner) or computational efficiency, as summarized in \cref{tab:comparison}. Below we give a brief introduction for each of them. A more thorough related work review is in \cref{app: related work}.

\begin{table}[t]
\caption{Comparison of adversarial low-rank MDP algorithms. $\order$ here hides factors of order $\poly(d, |\cA|, \log T, \log|\Phi||\Upsilon|)$. 
$^\dagger$\cref{alg:model-free policy} assumes access to  $\phi(x,a)$ for any $(\phi,x,a)\in \Phi\times \cX\times \cA$, while other algorithms only require access to $\phi(x,a)$ for any 
$(\phi, a)\in \Phi\times \cA$ on \emph{visited} $x$. }
\label{tab:comparison}
\renewcommand{\arraystretch}{1.5}
\setcellgapes{2pt} 
\makegapedcells 
\vspace*{2pt}
\centering
\hspace*{-48pt}
\scalebox{0.99}{
    \begin{tabular}{|c|c|c|c|c|c|}
\hline
\textbf{Feedback}  &  \textbf{Algorithm} & \textbf{Algorithm type} & \textbf{Regret} & \textbf{Efficiency} & \textbf{Loss}\\
\hline
\multirow{3}{*}{Full-info} & \cite{zhao2024learning} & Model-based & $\order(T^{\nicefrac{5}{6}})$ & Oracle-efficient &Arbitrary  \\
\cline{2-6}
 &\cref{alg:full-info} & Model-based & $\order(T^{\nicefrac{2}{3}})$ &  Oracle-efficient &Arbitrary \\
\cline{2-6}
 &\textbf{Lower Bound} & & $\Omega(\sqrt{|\cA| T})$ &  & Arbitrary\\
\hline 
\hline
\multirow{5}{*}{\makecell[c]{\vspace{-10mm} Bandit}} 
& \cref{alg:low-rank bandit} & Model-based & $\order(T^{\nicefrac{2}{3}})$ & Inefficient & \makecell{Linear loss\\[-1pt] Unknown loss feature}  \\
\cline{2-6}
 &\cref{alg:model-free policy}$^\dagger$ & Model-free & $\order(T^{\nicefrac{2}{3}})$  & Inefficient & \makecell{Linear loss \\[-1pt] Unknown loss feature } \\
 \cline{2-6}
 & \makecell{\cref{alg:oraceleff} \\ (for oblivious adversary)} & Model-free & $\order(T^{\nicefrac{4}{5}})$ & Oracle-efficient & \makecell{Linear loss\\[-1pt] Unknown loss feature}   \\
\cline{2-6}
 & \makecell{\cref{alg:algorithm_name} \\ (for adaptive adversary)} & Model-free & $\order(T^{\nicefrac{4}{5}})$ & Oracle-efficient & \makecell{Linear loss \\[-1pt] Known loss feature }\\ 
 \cline{2-6}
 & \textbf{Lower Bound} & & $\Omega(\sqrt{|\cX| |\cA| T})$ &  & Arbitrary  \\
 \hline
\end{tabular}
    }
\end{table}

\begin{itemize}[leftmargin=10pt]

\item \textbf{$T^{\nicefrac{2}{3}}$-regret algorithm under full-information feedback (\cref{alg:full-info}). } This setting is studied by the only prior work in adversarial low-rank MDPs \citep{zhao2024learning}, and we greatly improve their $T^{\nicefrac{5}{6}}$ regret bound to $T^{\nicefrac{2}{3}}$.  Our algorithm begins with a model-based initial exploration phase to estimate the transition. It then performs policy optimization where the critic is the $Q$ value induced by the estimated transition and the full information loss.

\item \textbf{$T^{\nicefrac{2}{3}}$-regret model-based/model-free inefficient algorithm  under bandit feedback (\cref{alg:low-rank bandit}, \cref{alg:model-free policy}). } 
\cref{alg:low-rank bandit} starts with a model-based initial exploration phase to learn an estimated transition, and then runs exponential weights over policy space for regret minimization in the second phase. To tackle bandit feedback, we construct a novel loss estimator that leverages the structure of low-rank MDP to perform accurate off-policy evaluation.  \cref{alg:model-free policy} starts with a different exploration phase, where it calls \texttt{VoX} \citep{mhammedi2023efficient} to learn a policy cover; \texttt{VoX} is a model-free, reward-free exploration algorithm. After this initial exploratory phase, the algorithm also applies exponential weights and utilizes the same loss estimator as in \cref{alg:low-rank bandit}. However, due to its model-free nature, certain components of the estimator cannot be directly accessed and must be derived through specific optimizations.


\item \textbf{$T^{\nicefrac{4}{5}}$-regret model-free oracle-efficient algorithm  under bandit feedback (\cref{alg:oraceleff}, \cref{alg:algorithm_name}). } \cref{alg:oraceleff} also starts with the model-free exploration algorithm \texttt{VoX} \citep{mhammedi2023efficient} to learn a policy cover. After that, the algorithm operates in epochs; during epoch $k$, the algorithm commits to a fixed mixture of policies. This mixture consists of certain exploratory policies (based on the policy cover from the initial phase) and a policy computed using an online learning algorithm based on estimated $Q$-functions from previous epochs (these serve as loss functions). \cref{alg:algorithm_name} deals with the much more challenging setting of an \emph{adaptive} adversary with bandit feedback. Here, we make the additional assumption that the loss feature, which may be different from the feature of the low-rank decomposition, is given. The algorithm is similar to \cref{alg:oraceleff} with key differences outlined in \cref{sec:extension}.



\end{itemize}


\section{Preliminaries}
\label{sec:prelim}
We study the episodic online reinforcement learning setting with horizon $H$. We consider an MDP $\mathcal M=(\cX, \cA, P_{1:H}^\star)$, where
$\cX$ represents a countable (possibly infinite) state space\footnote{We assume that $\cX$ is
countable only to simplify the presentation. Our results can easily be extended to a continuous state space with an appropriate measure-theoretic
treatment (see e.g.~\cite{mhammedi2023efficient}).}, 
$\cA$ is a finite action space, and $P^\star_h: \cX \times \cA \rightarrow \Delta(\cX)$ denotes the transition kernel from layer $h$ to $h+1$. We assume that the initial state $x_1\in \cX$ is fixed for simplicity without loss of generality.
For any policy $\pi:\cX \mapsto \Delta(\cA)$ and arbitrary set of transition kernels $\{P_h\}_{h\in[H]}$, we let $\P^{P,\pi}$ denote the law over $(\x_{1},\a_1,\dots,\x_H,\a_H)$ induced by the process of setting $\x_1 =x_1$, sampling $\a_1\sim \pi_1(\cdot \mid \x_1)$, then for $h=2,\dots, H$, $\x_{h}\sim P_{h-1}(\cdot \mid \x_{h-1},\a_{h-1})$ and $\a_h \sim \pi_h(\cdot\mid \x_h)$. We let $\E^{P,\pi}$ denote the corresponding expectations. Further, we let $d^{P,\pi}_h(x)\coloneqq \P^{P,\pi}[\x_h= x]$ denote the \emph{occupancy} of $x\in \cX$. We also let $d^{P,\pi}_h(x,a)\coloneqq \P^{P,\pi}[\x_h= x,\a_h=a]$. Further, we let $\E^{\pi}=\E^{P^\star,\pi}$, $\P^{\pi}=\P^{P^\star,\pi}$, and $d^{\pi}_h=d_h^{P^\star,\pi}$. We use $\pi\circ_h \pi'$ to denote a policy that follows $\pi_k(\cdot\mid\cdot)$ for $k<h$ and $\pi_k'(\cdot\mid\cdot)$ for $k\geq h$. Similarly, $\pi\circ_h \pi'\circ_{h'}\pi''$ denotes a policy that follows $\pi_k$ for $k<h$, $\pi_k'$ for $h\leq k<h'$ and $\pi_k''$ for $k>h'$.




We consider a learner interacting with the MDP $\mathcal M$ for $T$ episodes with adversarial loss functions. Before the game starts, an oblivious adversary chooses the loss functions for all episodes $(\ell^t_{1:H}: \cX \times \cA \rightarrow  [0,1])_{t=1}^T$. For each episode $t \in [T]$, the learner starts at state $\x_{1}^t = x_1$, then for each step $h\in [H]$ within episode $t$, the learner observes state 
$\x_{h}^t\in\cX_h$, chooses an action $\a_h^t \in \cA$, then suffers loss $\ell_{h}^t(\x_{h}^t,\a_{h}^t)$. The state $\x_{h+1}^t$  at the next step is drawn from transition $P^\star_{h}(\cdot \mid \x_{h}^t,\a_{h}^t)$. We consider \textit{bandit feedback} setting where the learner could only observe the losses $\ell^t_1(\x^t_1,\a^t_1), \dots, \ell^t_H(\x^t_H,\a^t_H)$ at the visited state-action pairs.

We let $\Pi\coloneqq \{\pi: \cX \rightarrow \Delta(\cA)\}$ denote the set of Markovian policies. For policy $\pi\in \Pi$, loss $\ell$ and transition kernels $P_{1:H}$, we denote by $Q^{P,\pi}_h(\cdot,\cdot; \ell)$ the \emph{state-action} value function (a.k.a.~$Q$-function) at step $h\in[H]$ with respect to the transitions $P_{1:H}$ and loss $\ell$; that is
\begin{align}
\label{eq: Q-function}
Q^{P,\pi}_h(x,a; \ell) \coloneqq  \E^{P,\pi}\left[\sum_{s=h}^H \ell_s(\x^t_{s},\a^t_{s})  \mid \x_h=x, \a_h =a \right],
\end{align}
for all $(x,a)\in \cX\times \cA$. We let $V^{P,\pi}_h(x; \ell) \coloneqq \max_{a\in \cA} Q^{P,\pi}_h(x,a;\ell)$ be the corresponding \textit{state} value function at layer $h$. Further, we write $Q^{\pi}_h(\cdot,\cdot; \ell)\coloneqq Q^{P^\star,\pi}_h(\cdot,\cdot; \ell)$ and $V^{\pi}_h( \cdot;\ell)\coloneqq V^{P^\star,\pi}( \cdot;\ell)$.

For all of our algorithms except for \cref{alg:algorithm_name}, we aim to construct (possibly randomized) policies $\{\bpi^t\}_{t\in[T]}$ that ensure a sublinear \emph{pseudo-regret} with respect to the best-fixed policy; that is,
\begin{align}
\mathrm{Reg}_T \coloneqq  \min_{\pi\in \Pi}\mathrm{Reg}_T(\pi) \quad \text{where} \quad \mathrm{Reg}_T(\pi) \coloneqq \E\left [\sum_{t=1}^T V_1^{\bpi^t}(x_1; \ell^t) - \sum_{t=1}^T V_1^{\pi}(x_1; \ell^t)\right]. 
\end{align}

For\cref{alg:algorithm_name}, we bound the standard \emph{regret} 
\begin{align}
    \overline{\mathrm{Reg}}_T := \sum_{t=1}^T V_1^{\pi^t}(x_1; \ell^t) - \min_{\pi\in\Pi}\sum_{t=1}^T V_1^{\pi}(x_1; \ell^t) \label{eq:standardreg} 
\end{align}
with high probability. This allows it to handle adaptive adversary.

Throughout, we will assume that the MDP $\cM$ is low-rank with \emph{unknown} feature maps $\phistar_h$ and $\mu^\star_h$.
\begin{assumption}[Low-Rank MDP]
\label{assm:normalizing}
    There exist (unknown) features maps $\phistar_{1:H}:\cX\times \cA\rightarrow \reals^d$ and $\mu^\star_{1:H}:\cX \rightarrow \reals^d$, such that for all $h\in[H-1]$ and $(x,a,x')\in \cX\times \cA\times \cX$:
\begin{align}
\P[\x_{h+1}=x'\mid \x_h=x,\a_{h}=a] =  \phistar_h(x,a)^\top \mu^\star_{h+1}(x'). \label{eq:lr}
\end{align}
   Furthermore, for all $h\in[H]$, the feature maps $\mu^\star_h$ and $\phistar_h$ are such that $\sup_{(x,a)\in \cX\times \cA}\|\phistar_h(x,a)\|\leq 1$ and  $\|\sum_{x\in \cX}g(x)\cdot \mu^\star_h(x)\|\leq \sqrt{d}$, for all $g:\cX\rightarrow [0,1]$.
\end{assumption}

\paragraph{Loss function under bandit feedback.} 
For bandit feedback setting, we make the following additional linear assumption on the losses; in the sequel, we will argue that this is necessary to avoid a sample complexity scaling with the number of states. This linear loss assumption also appears in \cite{ren2022free,  zhang2022making} for stochastic low-rank MDPs. Note that for the full-information feedback setting, such an assumption is not required.

\begin{assumption}[Loss Representation]
    \label{assm:linearlossweak}
   For any $t\in[T]$ and layer $h$, there is a vector $g_{h}\indd{t}\in \mathbb{B}_d(1)$ such that the loss $\ell^t_{h}(x,a)$ at round $t$ satisfies:
    \begin{align}
     \forall (x,a)\in \cX\times \cA,\quad   \ell^t_{h}(x,a) = \phi^\star_h(x,a)^\top g_{h}\indd{t}. \label{eq:linearlossweak}
    \end{align} 
    \end{assumption}
We note that there is no loss of generality in assuming that the losses are expressed using the same features $\phistar_{1:H}$ as the low-rank structure in \eqref{eq:lr}. This is because if the losses have different features, we can simply combine these features with the low-rank features, and redefine $\phistar$ accordingly. For the bulk of our results (and as stated in the prequel), we will assume that the losses $\{\ell^t_{h}(\cdot,\cdot)\}_{h\in[H],t\in[T]}$ (or equivalently $\{g_h\indd{t}\}_{h\in[H],t\in[T]}$ under \cref{assm:linearlossweak}) are chosen by an aversary before the start of the game (i.e.~oblvious adversary). In \cref{sec:adaptive}, we will present a model-free, oracle-efficient algorithm for an adaptive adversary. 

\paragraph{Function approximation.}
So far, \cref{assm:normalizing} and \cref{assm:linearlossweak} are in line with assumptions made in the linear MDP setting \citep{jin2020provably}. However, unlike in linear MDPs, we do not assume that the feature maps $\phistar_{1:H}$ are known. To facilitate representation learning and ultimately a sublinear regret, we need to make \emph{realizability} assumptions. In particular, in the model-free setting, we assume we have a function class $\Phi$ that contains the true features $\phistar_{1:H}$. In the model-based setting, we additionally assume access to a function class $\Upsilon$
that contains the feature maps $\mu^\star_{1:H}$\footnote{The setting where we assume access to function classes that realize both $\phistar_{1:H}$ and $\mustar_{1:H}$ is called \emph{model-based} because it allows one to model the transition probabilities, thanks to the low-rank structure in \eqref{eq:lr}.}. We will formalize these assumptions in their corresponding sections in the sequel.

\paragraph{Other notation.}  For $\psi:\Pi \rightarrow \reals^d$, we define $\john(\psi, \Pi)$ as a distribution $\mu\in\Delta(\Pi)$ such that $\|\psi(\pi)\|_{G^{-1}}^2\leq d$ for all $\pi\in\Pi$, where $G=\sum_{\pi\in\Pi} \mu(\pi)\cdot \psi(\pi)\psi(\pi)^\top$. This is the standard John's exploration or $G$-optimal design, which always exists.

\section{Model-based Algorithms for Adversarial Low-rank MDPs}
\label{sec:model-base}
In this section, we discuss adversarial low-rank MDPs under model-based assumption. The model-based assumption is formalized in the \cref{assm: model-base} below. This assumption is standard which also appears in prior works on model-based learning in low-rank MDPs ~\citep{agarwal2020flambe, uehara2021representation, zhang2022making, cheng2023improved, zhao2024learning}.

\begin{assumption}[Model-based assumption]\label{assm: model-base}
The learner has access to two model spaces $\Phi$ and $\Upsilon$ such that $\phi^\star \in \Phi$ and $\mu^\star \in \Upsilon$. Moreover, for any $\phi \in \Phi$, $\mu \in \Upsilon$, and $h\in[2\ldotst H]$, we have $\sup_{(x,a)\in \cX\times \cA}\|\phi_{h-1}(x,a)\|\leq 1$, $\sum_{x' \in \cX} \phi_{h-1}(x,a)^\top \mu_{h}(x') = 1$ and  $\|\sum_{x\in \cX}g(x)\cdot \mu_{h}(x)\|\leq \sqrt{d}$, for all $g:\cX\rightarrow [0,1]$.
\end{assumption}


\subsection{Adversarial Low-rank MDPs with Full Information}
\label{sec:full-info}

We first discuss learning adversarial low-rank MDPs with full information and model-based assumption. This setting aligns with~\cite{zhao2024learning}, and our \cref{alg:full-info} successfully improves their regret from  $T^{\nicefrac{5}{6}}$ to  $T^{\nicefrac{2}{3}}$.

As argued in \cite{zhao2024learning}, the challenge of learning adversarial low-rank MDPs lies in the need for balancing exploration and exploitation both in representation learning and policy optimization over adversarial losses. To tackle this doubled exploration and exploitation challenge, the algorithm of \cite{zhao2024learning} performs \emph{simultaneous} representation learning and policy optimization.  With a closer look at their analysis, we find that there is a drawback of this approach: because their algorithm handles the two tasks at the same time, it spends less exploration for representation learning in the early phase of the algorithm. This results in larger error in the estimated Q-values (i.e., critic) fed to policy optimization, and worsens the overall regret. 

To address this issue, we design \cref{alg:full-info} as a simple two-phase algorithm that separates representation learning and policy optimization. In the first phase, following \cite{cheng2023improved}, we perform optimal reward-free exploration for low-rank MDPs to estimate the transition. The resulted estimator, $\widehat{P}$, is able to accurately approximate the true transition and give accurate Q-value estimators for any policy. The more accurate Q-estimator allows for more effective policy optimization in the second phase. \cref{thm:full-info} shows the guarantee of \cref{alg:full-info}  where $\otil$ hides logarithmic factors of $d,H,T, |\Phi||\Upsilon|$.

\begin{algorithm}[t]
\caption{Model-Based Algorithm for Full-Information Feedback}\label{alg:full-info}
\begin{algorithmic}[1]
    \State Let $\eta = \frac{1}{H\sqrt{T}}$, $\epsilon = (Hd^2|\cA|(d^2+|\cA|))^{\frac{1}{3}}T^{-\frac{1}{3}}$, and $T_0 = \otil( \epsilon^{-2}H^3d^2|\cA|(d^2 + |\cA|))$. Let $\pi^1$ be a uniform policy.
     \State Run \cite[Algorithm 1]{cheng2023improved} for $T_0$ episodes and get outputs $\hphi \in \Phi, \hpsi \in \Upsilon$.
      \State Define transitions $\hatp_{1:H-1}$ as 
    $$\hatp_h(x'\mid  x,a) = \hphi_h(x,a)^\top \hpsi_{h+1}(x'), \quad \forall (x,a,x') \in \cX \times \cA \times \cX.$$
    \For{$t = T_0+ 1, T_0 + 2,\ldots, T$}
     \State Execute policy $\pi^t$ and observe trajectory $(\x_{1:H},\a_{1:H})$ and full information loss $\ell^t$.
     \State Update policy for all $h\in[H]$: 
     \begin{align*}
             \pi^{t+1}_h(a\mid x) \propto \exp\left( - \eta \sum_{i=1}^{t} \Qhat_h^i(x,a) \right) \ \text{where}\ \Qhat^t_h(x,a) = Q_h^{\hatp,\pi^t}(x,a; \ell^t). 
         \end{align*}  \label{line: exponential weight every state}
     
    \EndFor
    \end{algorithmic}
\end{algorithm}

\begin{theorem} \label{thm:full-info}
  \cref{alg:full-info} ensures $
     \mathrm{Reg}_T  \le  \otil\left(H^3 \left(d^2 + |\cA|\right)T^{\frac{2}{3}}\right)$.
\end{theorem}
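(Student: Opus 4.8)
The plan is to decouple the two sources of error that any algorithm for this problem must face—statistical error from representation learning and optimization error from adversarial policy updates—by splitting the pseudo-regret across the two phases of \cref{alg:full-info}. Writing $\mathrm{Reg}_T(\pi)$ for the comparator $\pi$,
\begin{align*}
\mathrm{Reg}_T(\pi) = \sum_{t=1}^{T_0}\E\!\left[V_1^{\pi^t}(x_1;\ell^t) - V_1^{\pi}(x_1;\ell^t)\right] + \sum_{t=T_0+1}^{T}\E\!\left[V_1^{\pi^t}(x_1;\ell^t) - V_1^{\pi}(x_1;\ell^t)\right].
\end{align*}
Since every per-episode value lies in $[0,H]$, the first (exploration) phase contributes at most $H T_0$, so the entire argument reduces to controlling the second phase.

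For the second phase I would insert the estimated model $\hatp$. For every $t>T_0$,
\begin{align*}
V_1^{\pi^t}(x_1;\ell^t) - V_1^{\pi}(x_1;\ell^t)
&= \big(V_1^{\pi^t}(x_1;\ell^t) - V_1^{\hatp,\pi^t}(x_1;\ell^t)\big) \\
&\quad + \big(V_1^{\hatp,\pi^t}(x_1;\ell^t) - V_1^{\hatp,\pi}(x_1;\ell^t)\big) + \big(V_1^{\hatp,\pi}(x_1;\ell^t) - V_1^{\pi}(x_1;\ell^t)\big).
\end{align*}
The first and third terms are value-estimation errors for the \emph{fixed} policies $\pi^t$ and $\pi$, and are exactly what the reward-free guarantee of \cite{cheng2023improved} controls: after $T_0 = \otil(\epsilon^{-2}H^3 d^2|\cA|(d^2+|\cA|))$ exploratory episodes, with high probability and uniformly over \emph{all} policies $\nu$ and \emph{all} losses $\ell$ with $\ell_h\in[0,1]$, one has $|V_1^{\hatp,\nu}(x_1;\ell) - V_1^{\nu}(x_1;\ell)|\le \epsilon$ (obtained via the simulation lemma together with the $\|\phistar\|$ and $\|\mustar\|$ bounds in \cref{assm:normalizing}). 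Thus these two terms contribute at most $2\epsilon(T-T_0)\le 2\epsilon T$. Uniformity over losses is essential, since the oblivious adversary changes $\ell^t$ at every round and the iterate $\pi^t$ is data-dependent.

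The middle term is the regret measured \emph{entirely inside} $\hatp$, and here the two-phase design pays off. Applying the performance difference lemma with respect to $\hatp$,
\begin{align*}
V_1^{\hatp,\pi^t}(x_1;\ell^t) - V_1^{\hatp,\pi}(x_1;\ell^t) = \sum_{h=1}^H \E^{\hatp,\pi}\!\left[\big\langle Q_h^{\hatp,\pi^t}(\x_h,\cdot;\ell^t),\, \pi^t_h(\cdot\mid\x_h) - \pi_h(\cdot\mid\x_h)\big\rangle\right],
\end{align*}
and the critic $Q_h^{\hatp,\pi^t}(\cdot,\cdot;\ell^t)$ is \emph{exactly} the quantity $\Qhat^t_h$ used in the exponential-weights update, so no residual critic-estimation error enters here. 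Because the update runs an independent Hedge instance (uniformly initialized) at each state with losses $\Qhat^t_h(x,\cdot)\in[0,H]$, the standard exponential-weights bound gives, at every state $x$, $\sum_{t>T_0}\langle \Qhat^t_h(x,\cdot),\, \pi^t_h(\cdot\mid x) - \pi_h(\cdot\mid x)\rangle \le \frac{\log|\cA|}{\eta} + \frac{\eta}{2}\sum_{t>T_0}\sum_a \pi^t_h(a\mid x)\,\Qhat^t_h(x,a)^2 \le \frac{\log|\cA|}{\eta} + \frac{\eta H^2 T}{2}$. Since this holds pointwise in $x$, it survives taking $\E^{\hatp,\pi}[\cdot]$; summing over $h$ and using $\eta = 1/(H\sqrt{T})$ yields $\otil(H^2\sqrt{T})$ for the middle term.

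Combining, $\mathrm{Reg}_T \le H T_0 + 2\epsilon T + \otil(H^2\sqrt{T})$, where the low-probability failure event of the reward-free guarantee is handled by taking $\delta = 1/T$ and bounding the regret there by $HT$. Plugging in $\epsilon = (Hd^2|\cA|(d^2+|\cA|))^{1/3}T^{-1/3}$ balances $HT_0$ against $\epsilon T$ at order $T^{2/3}$, while $H^2\sqrt{T}$ is lower order, giving the claimed $\otil(H^3(d^2+|\cA|)T^{2/3})$. The main obstacle—and the crux of the improvement over \cite{zhao2024learning}—is securing a \emph{two-sided, policy-and-loss-uniform} value-estimation bound strong enough to be applied simultaneously to the data-dependent iterates $\pi^t$ and to the unknown comparator $\pi$ at every round, and then recognizing that the performance-difference decomposition inside $\hatp$ reproduces precisely the computed critics, cleanly separating the statistical error of representation learning from the optimization error of Hedge.
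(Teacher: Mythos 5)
Your proposal is correct and follows essentially the same route as the paper's proof: the same three-term decomposition into two bias terms and an FTRL term measured under $\hatp$, the same use of the reward-free guarantee of \cite{cheng2023improved} together with the simulation lemma for the bias terms, the performance difference lemma plus per-state exponential weights for the middle term, and the same tuning of $\epsilon$, $\eta$, and $T_0$. The only (minor) bookkeeping slip is that \cref{thm:phat} at your stated $T_0$ gives per-layer transition error $\epsilon$, so \cref{lem:simulation} yields a value-estimation error of $H^2\epsilon$ rather than $\epsilon$; carrying the extra $H^2$ through the bias terms (as the paper does, obtaining $2H^2\epsilon T$) still produces the claimed $\otil\left(H^3\left(d^2+|\cA|\right)T^{\frac{2}{3}}\right)$ bound with the same choice of $\epsilon$.
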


The proof for \cref{thm:full-info} is given in \cref{app: model based full info}.  \cite{zhao2024learning} also constructs a lower bound $\Omega\left(H\sqrt{d|\cA|T}\right)$ for this settings. Thus, the $\mathrm{poly}(|\cA|)$-dependence is unavoidable.


\subsection{Model-Based, Computationally Inefficient  Algorithm for Bandit Feedback} \label{sec:model-based bandit}
In this section, following \cref{assm: model-base},  we introduce the first (model-based) algorithm (\cref{alg:low-rank bandit}) for adversarial low-rank MDPs with bandit feedback and sublinear regret. Compared with linear MDPs, the key challenge for more general low-rank MDPs is to construct a proper loss estimator. For linear MDP, since the feature is known, the loss estimator closely resembles that of linear bandits. However, low-rank MDPs lack such structural simplicity, making standard loss estimators invalid. To overcome this challenge, we propose a new loss estimator that works for any loss function based on off-policy evaluation and the low-rank structure of transition. In this section, $\otil$ hides logarithmic factors of $d,H,T, |\Phi||\Upsilon|$.



In \cref{alg:low-rank bandit}, we first conduct an initial representation learning phase to establish accurate transition estimator $\hatp$  and its corresponding features $\hphi$ and $\hpsi$ based on reward-free exploration algorithms in \cite{cheng2023improved}. Then, in the second phase, we use exponential weights to maintain a distribution over the policy space $\Pi'$ where we mix a uniform policy with  $\Pi$ to enhance exploration. At every round $t$, a behavior policy $\pi^t$ is chosen from the current policy distribution, and we use the data collected by $\pi^t$ to estimate the value for every $\pi \in \Pi'$. The success of such off-policy evaluation is based on the following observations of low-rank MDP. Using the low-rank transition structure, for $h \ge 2$, we have 
\begin{align} \label{eqn:low-rank}
  \forall \pi\in \Pi,\quad   d_h^{\pi}(x) =  \phi_{h-1}^\star(\pi)^\top \mu_{h}^\star(x),\quad \text{where}\quad \phi_{h-1}^\star(\pi) \coloneqq \E^{\pi}[\phistar_{h-1}(\x_{h-1},\a_{h-1})].
\end{align}
Thus, using the definition of the $V$-function from \cref{sec:prelim}, we have for any loss function $\ell$ and $\pi$:
\begin{align} \label{eqn:V dec}
    V^{\pi}_1(x_1; \ell)- \E[\ell_1(\x_1,\a_1)]= \sum_{h=2}^H \sum_{(x,a)\in \cX\times \cA}  \phi_{h-1}^{\star} (\pi)^\top \mu_{h}^\star(x) \pi_h(a\mid x)\cdot \ell_h(x,a).
\end{align}
Letting $\Lambda_h^t \coloneqq (\E_{\bpi^t \sim p^t}\left[\phi_h^\star(\bpi^t)\phi_h^{\star}(\bpi^t)^\top\right])^{-1}$ and ingnoring the loss term $\E[\ell_1(\x_1,\a_1)]$ from the first step (this term can easily be treated as in a bandit setting with $H=1$), we have for all $\pi\in \Pi'$:
\begin{align*}
     V_1^{\pi}(x_1; \ell)  &=  \E_{\bpi^t \sim p^t}\left[\sum_{h=2}^H \sum_{(x,a)\in \cX\times \cA}  \phi_{h-1}^{\star} (\pi)^\top\Lambda_{h-1}^t \phi_{h-1}^\star(\bpi^t)\phi_{h-1}^{\star}(\bpi^t)^\top \mu_{h}^\star(x) \pi_h(a\mid x)\cdot \ell_h(x,a)\right],
    \\&= \E_{\bpi^t \sim p^t}\left[\sum_{h=2}^H \sum_{x,a}  \phi_{h-1}^{\star} (\pi)^\top \Lambda_{h-1}^t \phi_{h-1}^\star(\bpi^t)\cdot d_h^{\bpi^t}(x)\bpi_h^t(a\mid x)\frac{\pi_h(a\mid x)}{\bpi_h^t(a\mid x)}\cdot \ell_h(x,a)\right],
    \\&= \E_{\bpi^t \sim p^t}\E^{\bpi^t}\left[ \sum_{h=2}^H \phi_{h-1}^{\star}(\pi)^\top\Lambda_{h-1}^t \phi_{h-1}^\star(\bpi^t)\cdot \frac{\pi_h(\a_h\mid \x_h)}{\bpi_h^t(\a_h\mid \x_h)}\cdot \ell_h(\x_h,\a_h)\right].
\end{align*}
Thus, for all $\ell$ and $\pi$, $\sum_{h=2}^H \phi_{h-1}^{\star}(\pi)^\top\Lambda_{h-1}^t \phi_{h-1}^\star(\bpi^t)\cdot {\pi_h(\a_h\mid \x_h)}\bpi_h^t(\a_h\mid \x_h)^{-1}\cdot \ell_h(\x_h,\a_h)$ for $\bpi^t\sim p^t$ and $(\x_h,\a_h)\sim d^{\bpi^t}_h$ is an unbiased estimator of $V^\pi_1(x_1, \ell)$. However, $\phi_{h-1}^\star(\pi)$ is not accessible because both the true feature $\phi^\star$ and occupancy $d^\pi$ for the true transition are unknown. Thus, our estimator incorporates the learned feature $\hphi$ and the occupancy of $\hatp$ instead as shown in \cref{line:est feature} and \cref{line:bandit estimator}. Utilizing estimated features and transition could introduce additional bias but the initial representation learning already ensures such bias is small enough to tackle. We compensate for the bias by incorporating an exploration bonus $b^t(\pi)$ in exponential weights. To further encourage exploration, we additionally perform John's exploration together with exponential weights when selecting behavior policies. The main guarantee of \cref{alg:low-rank bandit} is given in \cref{thm:model-based bound}.




\begin{algorithm}[t]
\caption{Model-Based Algorithm for Bandit Feedback} \label{alg:low-rank bandit}
\begin{algorithmic}[1]
    \Input A policy class $\Pi$. 
    \State Set $\epsilon = T^{-\frac{1}{3}}$, $\gamma = T^{-\frac{1}{3}}$, $\beta = T^{-\frac{1}{3}}$, $\eta = ({4Hd|\cA|})^{-1} T^{-\frac{2}{3}}$, and $T_0 = \otil( \epsilon^{-2}H^3d^2|\cA|(d^2 + |\cA|))$.
    \State Run \cite[Algorithm 1]{cheng2023improved} for $T_0$ episodes and get outputs $\hphi \in \Phi, \hpsi \in \Upsilon$. 
    \State Define transitions $\hatp_{1:H-1}$ as 
    $$\hatp_h(x'\mid  x,a) = \hphi_h(x,a)^\top \hpsi_{h+1}(x'), \quad \forall (x,a,x') \in \cX \times \cA \times \cX.$$
    \State For all $h\in[H-1]$, define $\hphi_h(\pi) = \sum_{(x,a)\in \cX\times \cA} \hd_{h}^\pi(x,a)\cdot \hphi_{h}(x,a)$, where $\hd_h^\pi \coloneqq d_h^{\hatp,\pi}$. \label{line:est feature}
    \State Define the policy space $\Pi' = \{\pi':~ \exists \pi\in\Pi, \ \ \pi_h'(\cdot\mid x)= (1-\beta)\pi_h(\cdot\mid x) +  \beta/|\cA|, \ \  \forall x, h\}$. 
    \label{line:mix pi}
    \For{$t = T_0 + 1,\, T_0 + 2,\ldots, T$}
         \State Define $ p^t(\pi) \propto \exp\left(-\eta \sum_{i=1}^{t-1}  \left(\hatell^i(\pi) - b^i(\pi)\right)\right),$ for all $\pi \in \Pi'$. \label{line:bandit-exp}
         \State Let $\tilp^t(\pi) = (1-\gamma)p^t(\pi) +  \frac{\gamma}{H-1}\sum_{h=1}^{H-1}  J_h$, where $J_h = \john(\hatphi_h(\cdot), \Pi')$. \algcommentlight{$\john$ as in \S\ref{sec:prelim}}  
         \State Execute policy $\bpi^t\sim \tilp^t$ and observe trajectory $(\x^t_{1:H}, \a^t_{1:H})$ and losses $\bell^t_h =\ell^t_h(\x_h^t,\a_h^t)$.
         \State Define $\Sigma^t_{h} = \sum_{\pi \in \Pi'}\tilp^t(\pi)\cdot \hatphi_h(\pi)\hatphi_h(\pi)^\top$, $b^t(\pi) = \sqrt{d}H\epsilon \cdot \sum_{h=1}^{H-1} \|\hatphi_h(\pi)\|_{(\Sigma^t_{h})^{-1}}$, and
             \begin{align*}  
            \hell^t(\pi) = \frac{\pi_1(\a_1^t\mid \x_1^t)}{\bpi_1^t(\a_1^t\mid \x_1^t)}\bell_1^t + \sum_{h=2}^H \hphi_{h-1}(\pi)^\top\left(\Sigma_{h-1}^t\right)^{-1} \hphi_{h-1}(\bpi^t) \frac{\pi_h(\a_h^t\mid \x_h^t)}{\bpi_h^t(\a_h^t\mid \x_h^t)}\bell_h^t.
             \end{align*} \label{line:bandit estimator}
         \EndFor
    \end{algorithmic}
\end{algorithm}

\begin{theorem}\label{thm:model-based bound}
\cref{alg:low-rank bandit} achieves $\mathrm{Reg}_T(\pi) \le  \otil\left(d^2 H^3 |\cA|(d^2 + |\cA|)T^{\nicefrac{2}{3}}\log|\Pi|\right)$ for any $\pi \in \Pi$.
\end{theorem}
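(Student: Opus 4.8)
The plan is to split $\mathrm{Reg}_T(\pi)$ at the phase boundary $T_0$ and control the two phases separately. The initial reward-free exploration phase runs for $T_0$ episodes, each contributing value at most $H$, so it costs at most $T_0 H$; with $\epsilon=T^{-1/3}$ plugged into $T_0$ this equals $\otil(\mathrm{poly}(d,H,|\cA|)\,T^{2/3})$ and is the leading contribution to the stated bound. For the second phase I would condition on the high-probability success event of \cite[Algorithm 1]{cheng2023improved}, under which the learned $\hphi,\hpsi$ (and hence $\hatp$ and the estimated policy-features $\hphi_h(\pi)$ of \cref{line:est feature}) approximate the truth uniformly over $\pi\in\Pi'$ with error governed by $\epsilon$; all failure probabilities are absorbed into $\otil$. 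I then use the standard exponential-weights-with-bonus decomposition: writing $\pi'\in\Pi'$ for the $\beta$-smoothed comparator, I split the phase-$2$ regret into (i) the cost $\le\gamma T H=\otil(HT^{2/3})$ of sampling $\bpi^t\sim\tilp^t$ rather than from $p^t$; (ii) the smoothing cost $\sum_t(V_1^{\pi'}-V_1^{\pi})=\otil(\beta H^2 T)=\otil(H^2 T^{2/3})$; (iii) the estimator bias linking $V$-values to $\E_t[\hell^t]$; and (iv) the exponential-weights regret on the surrogate losses $\hell^t-b^t$ over the finite class $\Pi'$.

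The heart of the argument is the estimator. Extending the computation in \eqref{eqn:low-rank}--\eqref{eqn:V dec} from the true to the estimated features, I would show $\E_t[\hell^t(\pi)]$ equals $V_1^\pi(x_1;\ell^t)$ up to a bias that splits, at each layer $h$, into (a) a term $(\hphi_{h-1}(\pi)-\phistar_{h-1}(\pi))^\top\theta^t_h$ with $\theta^t_h=\sum_x\mustar_h(x)\sum_a\pi_h(a\mid x)\ell^t_h(x,a)$, $\|\theta^t_h\|\le\sqrt d$ (by \cref{assm: model-base}), uniformly bounded by $\order(\sqrt d\,\epsilon)$ and hence summing to $\otil(\sqrt d\,H T^{2/3})$; and (b) a term $\hphi_{h-1}(\pi)^\top(\Sigma^t_{h-1})^{-1}(M^t-\Sigma^t_{h-1})\theta^t_h$, where $M^t=\E_{\bpi^t}[\hphi_{h-1}(\bpi^t)\phistar_{h-1}(\bpi^t)^\top]$, which Cauchy--Schwarz in the $(\Sigma^t_{h-1})^{-1}$-norm bounds by $\order(\sqrt d\,\epsilon)\cdot\|\hphi_{h-1}(\pi)\|_{(\Sigma^t_{h-1})^{-1}}$ — exactly the shape of $b^t(\pi)$. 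Thus the bonus dominates the bias for every $\pi\in\Pi'$. The crucial point is that, because the bonus is \emph{subtracted} inside the exponential weights, the comparator's bonus $\sum_t b^t(\pi')$ arising in step (iii) is cancelled by the $-\sum_t b^t(\pi')$ produced by the exponential-weights regret bound, leaving only $2\sum_t\E_{\bpi\sim p^t}[b^t(\bpi)]$. This residual is controlled \emph{under the algorithm's own distribution}, where Jensen together with the trace identity $\sum_\pi p^t(\pi)\|\hphi_h(\pi)\|^2_{(\Sigma^t_h)^{-1}}=\tr((\Sigma^t_h)^{-1}\sum_\pi p^t(\pi)\hphi_h(\pi)\hphi_h(\pi)^\top)\le d/(1-\gamma)$ gives $\E_{p^t}[b^t]=\otil(dH^2\epsilon)$, summing to $\otil(dH^2 T^{2/3})$.

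For piece (iv) the same trace identity controls the second-moment term: bounding $(\sum_h\cdot)^2\le H\sum_h(\cdot)^2$, using $\E^{\bpi^t}[(\pi_h/\bpi^t_h)^2(\bell^t_h)^2]\le|\cA|/\beta$ from the $\beta$-floor on $\bpi^t$, and then $\sum_\pi p^t(\pi)\E_{\bpi^t}[(\hphi_{h-1}(\pi)^\top(\Sigma^t_{h-1})^{-1}\hphi_{h-1}(\bpi^t))^2]=\sum_\pi p^t(\pi)\|\hphi_{h-1}(\pi)\|^2_{(\Sigma^t_{h-1})^{-1}}\le d/(1-\gamma)$, I would bound the per-round second moment by $\otil(H^2 d|\cA|/\beta)$, so that $\eta\sum_t(\cdots)=\otil(HT^{2/3})$ while $\log|\Pi'|/\eta=\otil(Hd|\cA|T^{2/3}\log|\Pi|)$. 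Summing (i)--(iv) with the phase-$1$ term and balancing $\epsilon=\beta=\gamma=T^{-1/3}$, $\eta\propto T^{-2/3}$ yields $\otil(d^2 H^3|\cA|(d^2+|\cA|)T^{2/3}\log|\Pi|)$.

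The main obstacle I anticipate is twofold, and both parts concern the estimator. First, rigorously converting the representation-learning guarantee of \cite{cheng2023improved} (stated as an $L_1$/occupancy model error) into the two feature-level bias bounds above, and verifying that $b^t$ — defined with the \emph{learned} $\hphi$ and the empirical design $\Sigma^t$ — genuinely upper-bounds the bias simultaneously for all $\pi\in\Pi'$. Second, justifying the exponential-weights step itself: through the John-design lower bound $\Sigma^t_h\succeq\tfrac{\gamma}{H-1}\john(\hphi_h(\cdot),\Pi')$ the worst-case magnitude of $\hell^t(\pi)$ scales like $|\cA|H^2 d/(\beta\gamma)$, so the naive stability condition $\eta|\hell^t-b^t|\le1$ fails for large $H$; controlling this requires exploiting that $\|\hphi_{h-1}(\bpi^t)\|_{(\Sigma^t_{h-1})^{-1}}$ concentrates near $\sqrt d$ under $\bpi^t\sim\tilp^t$ (rather than taking its worst case), so that the local-norm/second-moment form of the regret bound still applies. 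This estimator-design-and-control step is where the bulk of the technical effort lies.
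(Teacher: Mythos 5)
Your overall architecture matches the paper's: a two-phase split with the exploration phase costing $\otil(\poly(d,H,|\cA|)\,T^{2/3})$, separate charges for John's mixing and for $\beta$-smoothing the comparator, an exponential-weights-with-bonus argument in which the comparator's bonus is cancelled by the regret bound (this is exactly the paper's step \eqref{eq:EXP2FTRL}), and trace identities for the stability terms. The gap is in what you yourself call the heart of the argument — the estimator bias — and it is fatal to the route you propose. Both of your terms (a) and (b) pair the learned feature $\hphi$ with the true $\mustar$: term (a) is $(\hphi_{h-1}(\pi)-\phistar_{h-1}(\pi))^\top\theta^t_h$ with $\theta^t_h$ lying in the span of $\{\mustar_h(x)\}$, and after Cauchy--Schwarz term (b) requires bounding $(\hphi_{h-1}(\bpi^t)-\phistar_{h-1}(\bpi^t))^\top\theta^t_h$. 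But the guarantee of \cref{thm:phat} controls only the \emph{model} $\hphi_h(x,a)^\top\hpsi_{h+1}(x')$ against $\phistar_h(x,a)^\top\mustar_{h+1}(x')$; it says nothing about cross pairings $\hphi_h(x,a)^\top\mustar_{h+1}(x')$, because the low-rank factorization is only identifiable up to an invertible linear map. Concretely, take $\hphi_h(x,a)=R^\top\phistar_h(x,a)$ and $\hpsi_{h+1}(x')=R^\top\mustar_{h+1}(x')$ for an orthogonal matrix $R$: this pair satisfies \cref{assm: model-base} and gives $\hatp=P^\star$ exactly (so it is a perfectly legitimate output of the exploration phase, with model error zero), yet your term (a) equals $\phistar_{h-1}(\pi)^\top(R-I)\theta^t_h$ and your term (b) equals $\phistar_{h-1}(\pi)^\top(I-R)\theta^t_h$ — each of constant order, not vanishing with $\epsilon$. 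They cancel in the sum, but neither is individually $\order(\sqrt{d}\,\epsilon)$ nor $\order(\sqrt{d}\,\epsilon)\cdot\|\hphi_{h-1}(\pi)\|_{(\Sigma^t_{h-1})^{-1}}$, so neither is dominated by $b^t(\pi)$. Any proof that bounds (a) and (b) separately must fail; the cancellation between them is essential and your decomposition destroys it.

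The paper sidesteps this by never letting $\hphi$ meet $\mustar$. It compares the estimator not to $V_1^\pi(x_1;\ell^t)$ but to $\hV_1^\pi(x_1;\ell^t)$, the value under the \emph{estimated} transition (\cref{lem:value transfer}): there $\hd_h^\pi(x)=\hphi_{h-1}(\pi)^\top\hpsi_h(x)$, inserting $(\Sigma^t_{h-1})^{-1}\Sigma^t_{h-1}$ produces the factor $\hphi_{h-1}(\bpi^t)^\top\hpsi_h(x)=\hd_h^{\bpi^t}(x)$, and the only mismatch — arising because trajectories are rolled out in the real MDP — is $\hd_h^{\bpi^t}$ versus $d_h^{\bpi^t}$, an \emph{occupancy} error (not a feature error) controlled by \cref{lem: P to d}; this is exactly what the bonus $\sqrt{d}H\epsilon\,\|\hphi_{h-1}(\pi)\|_{(\Sigma^t_{h-1})^{-1}}$ absorbs, simultaneously for all $\pi\in\Pi'$. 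The remaining gap between $V_1^\pi$ and $\hV_1^\pi$ is handled once, uniformly and without any bonus, by the simulation lemma (\cref{lem:simulation}; the terms Bias1/Bias2 in \eqref{eq:thisone}). If you reroute your bias analysis this way, the rest of your plan goes through essentially verbatim. As a minor remark, your second anticipated obstacle is not one: with $\beta=\gamma=T^{-1/3}$ and $\eta=(4Hd|\cA|)^{-1}T^{-2/3}$, the worst-case bound $|\hell^t(\pi)|\le 2|\cA|Hd/(\beta\gamma)$ already yields $\eta\,|\hell^t(\pi)-b^t(\pi)|\le 1$, so no concentration of $\|\hphi_{h-1}(\bpi^t)\|_{(\Sigma^t_{h-1})^{-1}}$ beyond the in-expectation trace identity (used only in the second-moment term) is required.
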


Note that the guarantee in \cref{thm:model-based bound} only holds for policy $\pi \in \Pi$. To ensure our regret bound is meaningful, at least a near-optimal policy should be contained in the given policy set $\Pi$. In general, the size of such a policy set would grow exponentially with the number of states (e.g.~covering of all Markovian policies), making the regret have polynomial dependence on the number of states. In \cref{thm:lowerboundmain}, we show that even for low-rank MDPs, if the loss function lacks structure, the regret cannot avoid polynomial dependence on the number of states. The detailed construction for this lower-bound is given in \cref{sec:lower bound}.

\begin{theorem}\label{thm:lowerboundmain}
There exists a low-rank MDP with $|\cX|$ states, $|\cA|$ actions and sufficiently large $T$ with unstructured losses such that any agent suffers at least regret of $\Omega(\sqrt{|\cX| |\cA| T})$.
\end{theorem}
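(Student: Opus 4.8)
The plan is to embed $|\cX|$ independent multi-armed bandit instances into a single rank-one low-rank MDP, so that the classical $\Omega(\sqrt{|\cA|\,n})$ minimax lower bound for $|\cA|$-armed adversarial bandits can be invoked in each state and then summed.

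\textbf{Construction.} I would take horizon $H=2$, a single fixed start state $x_1$ at layer $1$, and a set of $S\coloneqq |\cX|-1$ states $\{s_1,\dots,s_S\}$ at layer $2$. The transition is chosen to be independent of the action and uniform,
\[
P^\star_1(s_i\mid x_1,a)=\tfrac1S,\qquad \forall a\in\cA,\ i\in[S],
\]
which is realized by the rank-one features $\phi^\star_1(x_1,a)=1$ and $\mu^\star_2(s_i)=1/S$ (so $d=1$; pad with zeros for larger $d$). I will check that this satisfies the normalization in \cref{assm:normalizing}: $\|\phi^\star_1\|=1$ and $\|\sum_x g(x)\mu^\star_2(x)\|=|\sum_x g(x)/S|\le 1=\sqrt d$ for every $g:\cX\to[0,1]$. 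The layer-$1$ loss is identically zero, while the layer-$2$ loss is \emph{unstructured} and stochastic: drawing a hidden vector $\theta=(a^\star_1,\dots,a^\star_S)\sim\mathrm{Unif}([|\cA|]^S)$, I let $\ell^t_2(s_i,a)$ be an independent $\mathrm{Bernoulli}\bigl(\tfrac12-\epsilon\,\indi\{a=a^\star_i\}\bigr)$ for a gap $\epsilon$ to be tuned, so the optimal fixed policy $\pi_\theta$ plays $a^\star_i$ in $s_i$.

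\textbf{Reduction.} Because the transition is action-independent, the layer-$1$ action is irrelevant and each episode lands in $s_i$ with probability $1/S$ regardless of the learner, so the visit count $N_i\sim\mathrm{Binomial}(T,1/S)$ is a function of the transition randomness alone. Since the coordinates $a^\star_i$ and the Bernoulli draws are independent across states, feedback collected in $s_j$ is uninformative about $s_i$; hence the learner, restricted to its $N_i$ visits of $s_i$, is exactly an $|\cA|$-armed bandit algorithm run for $N_i$ rounds against instance $i$. The regret against $\pi_\theta$ then decomposes as
\[
\mathrm{Reg}_T=\epsilon\sum_{i=1}^{S}\E\bigl[\#\{\text{suboptimal pulls in }s_i\}\bigr].
\]

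\textbf{Lower bound and tuning.} For each state I would invoke the standard averaged bandit lower bound: in the under-sampled regime $\epsilon\le\tfrac12\sqrt{|\cA|/n}$, a uniform-prior $|\cA|$-armed instance forces $\E[\#\text{suboptimal pulls}]\ge n/2$ (Le Cam / Bretagnolle--Huber), hence per-state regret $\ge\tfrac12\epsilon n$. Conditioning on $N_i=n$ is legitimate because $N_i$ is independent of the losses. Choosing $\epsilon=c_0\sqrt{|\cA|\,S/T}$ (which is $\le\tfrac12$ once $T$ is large, keeping losses in $[0,1]$) puts the typical count $N_i=\Theta(T/S)$ in this regime, and a Chernoff bound gives $N_i\in[T/(2S),\,T/(c_0^2S)]$ with high probability for $T\gg S\log S$, yielding per-state regret $\Omega(\sqrt{|\cA|\,T/S})$. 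Summing,
\[
\E_\theta[\mathrm{Reg}_T]\ge\sum_{i=1}^S\Omega\!\left(\sqrt{|\cA|\,T/S}\right)=\Omega\!\left(\sqrt{S\,|\cA|\,T}\right)=\Omega\!\left(\sqrt{|\cX|\,|\cA|\,T}\right),
\]
and by Yao's principle there is a fixed $\theta$ (hence a fixed oblivious loss sequence) with this regret. The delicate point — and the main obstacle — is the per-state argument: the $N_i$ visits of $s_i$ are interspersed among other states and the learner may adapt across states, so I must argue carefully via independence of the $a^\star_j$ and of the draws that cross-state feedback is useless, in order to treat each state as a self-contained bandit, and then couple the random, data-independent counts $N_i$ with a single global gap $\epsilon$ calibrated to the typical value $T/S$. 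The realizability check and the final summation are routine.
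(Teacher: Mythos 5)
Your proposal is correct and follows essentially the same route as the paper: both embed $|\cX|$ independent $|\cA|$-armed Bernoulli bandit instances behind action-independent uniform transitions (the paper phrases it as an $H=1$ contextual bandit with uniform initial distribution, you as an $H=2$ rank-one MDP with a fixed start state), calibrate a single gap of order $\sqrt{|\cA||\cX|/T}$ to the typical per-state visit count $T/|\cX|$, invoke the standard per-state bandit lower bound, and control the random visit counts by a concentration argument (the paper via $\E[\min\{N(s),2T/S\}]\geq T/(2S)$ with a Hoeffding tail bound, you via Chernoff plus conditioning on $N_i$). The cross-state independence step you flag as delicate is exactly the step the paper relies on implicitly, so there is no substantive gap.
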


\cref{thm:lowerboundmain} shows that under bandit feedback, in general, we could not gain too much from low-rank transition structure compared with tabular MDPs. This contrasts with the $\Omega(\sqrt{|\cA| T})$ lower bound in the full information settings (\cite{zhao2024learning}). To get rid of any dependence on the number of states, we additionally introduce \cref{assm:linearlossweak} to impose linear structure on the loss function. Unlike linear MDPs that require the loss feature to be known, our algorithm can even handle linear loss with unknown feature, since our \cref{alg:low-rank bandit} never explicitly uses the loss feature. The linear structure is only used to control the size of the candidate policy class in the analysis (i.e., making $\log|\Pi|$ irrelevant to the number of states). Specifically, when both loss and transition are linear, the Q-function is also linear, making it sufficient to consider the following linear policy space: 
\begin{align*}
    \Pi_{\text{\rm lin}} = \left\{\pi :\cX\rightarrow \cA \ \Big\mid \   \pi_h(a\mid x ) = \mathbb{I}\big\{a=\argmin_{a \in \cA} \phi_h(x, a)^\top \theta_h\big\}, \ h\in[H],\  \|\theta_h\|_2 \le \sqrt{d}H T, \ \phi \in \Phi\right\}.
\end{align*}
The $\frac{1}{T}$-cover of $ \Pi_{\text{\rm lin}}$ only have size $|\Phi|\cdot T^{\mathcal{O}(d)} $ following standard arguments (e.g, Exercise 27.6 of \cite{lattimore2020bandit}) and if we feed it into \cref{alg:low-rank bandit}, our regret could avoid dependece on the size of state space as shown in \cref{cor:linear policy reg}.

\begin{corollary}\label{cor:linear policy reg}
If the loss function satisfies \cref{assm:linearlossweak}, applying \cref{alg:low-rank bandit} with $\Pi$ as the $\frac{1}{T}$-cover of $ \Pi_{\text{\rm lin}}$ ensures $\mathrm{Reg}_T   \le  \otil\left(d^3 H^3 |\cA|(d^2 + |\cA|)T^{\nicefrac{2}{3}}\right)$.
\end{corollary}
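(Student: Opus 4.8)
The plan is to invoke \cref{thm:model-based bound} with its input policy class set to the $\frac1T$-cover of $\Pi_{\text{\rm lin}}$ (call it $\Pi_{\mathrm c}$), and then to pay for two things: converting the unrestricted comparator in $\mathrm{Reg}_T = \min_{\pi}\mathrm{Reg}_T(\pi)$ (the minimum being over \emph{all} Markovian policies) into a guarantee against an element of $\Pi_{\mathrm c}$, and substituting $|\Pi_{\mathrm c}|$ into the bound. Concretely, \cref{thm:model-based bound} yields $\mathrm{Reg}_T(\hat\pi)\le \otil(d^2H^3|\cA|(d^2+|\cA|)T^{\nicefrac23}\log|\Pi_{\mathrm c}|)$ for every $\hat\pi\in\Pi_{\mathrm c}$; since the $\frac1T$-cover has $\log|\Pi_{\mathrm c}| = \order(d\log(dHT)) + \log|\Phi| = \otil(d)$, the $\log|\Pi_{\mathrm c}|$ factor contributes exactly one extra explicit power of $d$ (the $\log T$ being hidden in $\otil$, and $\log|\Phi|$ absorbed), upgrading the $d^2$ of \cref{thm:model-based bound} to the claimed $d^3$. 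It therefore remains to exhibit some $\hat\pi\in\Pi_{\mathrm c}$ that is $o(1)$-competitive with the best Markovian policy.

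\emph{Step 1 (the optimal $Q$-function is linear).} Write the cumulative loss $\bar\ell_h(x,a) := \sum_{t=1}^T\ell^t_h(x,a) = \phistar_h(x,a)^\top \bar g_h$ with $\bar g_h := \sum_t g_h\indd{t}$ and $\|\bar g_h\|\le T$ (by \cref{assm:linearlossweak}); the best fixed policy is then the optimal policy $\pi^\star$ of the finite-horizon MDP with loss $\bar\ell$. I would show by backward induction that its optimal value function is linear in $\phistar$: writing $V^\star_{h+1}(x') = \min_a Q^\star_{h+1}(x',a)$, Bellman optimality together with $P^\star_h(x'\mid x,a) = \phistar_h(x,a)^\top\mustar_{h+1}(x')$ gives $Q^\star_h(x,a) = \phistar_h(x,a)^\top\theta^\star_h$ with $\theta^\star_h = \bar g_h + \sum_{x'}\mustar_{h+1}(x')\,V^\star_{h+1}(x')$. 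Since $V^\star_{h+1}\in[0,HT]$, the normalization $\|\sum_{x'}g(x')\mustar_{h+1}(x')\|\le\sqrt d$ in \cref{assm:normalizing} bounds the second term by $HT\sqrt d$, so $\|\theta^\star_h\| = \order(\sqrt d HT)$, which lies in the radius used to define $\Pi_{\text{\rm lin}}$ (after possibly absorbing a constant). As $\phistar\in\Phi$ and $\pi^\star$ is greedy (argmin) with respect to $Q^\star_h = \phistar_h(\cdot)^\top\theta^\star_h$, this places $\pi^\star\in\Pi_{\text{\rm lin}}$.

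\emph{Step 2 (covering and approximation).} The set $\Pi_{\text{\rm lin}}$ is parameterized only by $(\phi,\theta_{1:H})\in\Phi\times\prod_h\bbB_d(\sqrt d HT)$, so its $\frac1T$-cover has size $|\Phi|\cdot T^{\order(d)}$ by a standard net bound on the $\theta$-balls (Exercise 27.6 of \cite{lattimore2020bandit}); crucially this is free of any $|\cX|$ dependence, because policies are indexed by features rather than per-state action tables. To control the approximation error I would take $\hat\pi$ greedy with respect to the net point $\hat\theta$ nearest $\theta^\star$ (keeping $\phi=\phistar$) and apply the performance-difference lemma: whenever $\hat\pi$ picks a suboptimal action $\hat a$, its $Q^\star$-gap is $\phistar_h(x,\hat a)^\top\theta^\star_h - \min_{a'}\phistar_h(x,a')^\top\theta^\star_h \le 2\|\hat\theta_h-\theta^\star_h\|$ (add and subtract $\phistar_h(x,\cdot)^\top\hat\theta_h$, use that $\hat a$ minimizes the latter, and $\|\phistar\|\le1$), so summing over $H$ steps gives $\sum_t(V^{\hat\pi}_1(x_1;\ell^t)-V^{\pi^\star}_1(x_1;\ell^t))\le 2H\max_h\|\hat\theta_h-\theta^\star_h\|\le 2H/T = o(1)$. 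Hence $\mathrm{Reg}_T \le \mathrm{Reg}_T(\hat\pi) + o(1)$, and combining with the first paragraph gives the stated bound.

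The step I expect to be the main obstacle is precisely this structural reduction: arguing that the unrestricted Markovian optimum may be replaced, up to $o(1)$ regret, by a policy drawn from a class of cardinality $|\Phi|T^{\order(d)}$ with \emph{no} $|\cX|$ dependence. This is what prevents the $\Omega(\sqrt{|\cX||\cA|T})$ lower bound of \cref{thm:lowerboundmain} from biting, and it rests entirely on the linearity of $Q^\star$ forced by \cref{assm:linearlossweak}; the value-perturbation bound for the (discontinuous) greedy policies and the matching of the $\|\theta^\star_h\|$ radius to the definition of $\Pi_{\text{\rm lin}}$ are the places where care is needed.
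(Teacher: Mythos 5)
Your proposal is correct, and its skeleton---feed the $\frac{1}{T}$-cover $\Pi_{\mathrm c}$ into \cref{thm:model-based bound}, account $\log|\Pi_{\mathrm c}|=\order(d\log(dHT))+\log|\Phi|=\otil(d)$, then exhibit an element of $\Pi_{\mathrm c}$ that is $o(1)$-competitive with the unrestricted Markovian optimum---is exactly the paper's. Where you genuinely diverge is the approximation step. The paper's version of this step is \cref{lem:pdis} (stated and proved in \cref{app:model-free ineff}, and it is the content implicitly invoked for \cref{cor:linear policy reg}): for an \emph{arbitrary} comparator $\pi$, it writes each $Q^{\pi}_h(\cdot,\cdot;\ell^t)=\phistar_h(\cdot,\cdot)^\top\theta^{\pi,t}_h$ with $\theta^{\pi,t}_h\in\bbB_d(H\sqrt d)$ (Bellman consistency for $\pi$), covers the time-aggregated parameter $\sum_t\theta^{\pi,t}_h\in\bbB_d(\sqrt d HT)$, takes the policy greedy with respect to the net point, and applies \cref{lem:PDL} so that the greedy term is nonpositive and the remainder is bounded by the cover radius, giving a uniform guarantee over all comparators. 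You instead linearize the Bellman-\emph{optimal} $Q$-function of the aggregated-loss MDP by backward induction and approximate only the single best comparator, which is all the corollary needs; the subsequent add-and-subtract-the-net-point manipulation is the same in both arguments. The price of your route is the extra norm bound on $\theta^\star_h$, and this is the one (fixable) slip: with your bound $V^\star_{h+1}\le HT$ you get $\|\theta^\star_h\|\le T+\sqrt d HT$, which strictly exceeds the radius $\sqrt d HT$ defining $\Pi_{\text{\rm lin}}$, so the net need not contain a point within $1/T$ of $\theta^\star_h$; using the tight bound $V^\star_{h+1}\le (H-h)T$ gives $\|\theta^\star_h\|\le T+\sqrt d (H-1)T\le \sqrt d HT$, so $\theta^\star_h$ lies in the ball exactly and no constant needs to be ``absorbed.'' With that repair your argument is a valid, slightly more economical alternative: the paper's \cref{lem:pdis} buys uniform approximation of every Markovian policy (which it reuses in the model-free analysis), while yours buys a shorter, classical value-perturbation argument tailored to the optimal comparator.
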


\section{Model-free Algorithms for Adversarial Low-rank MDPs}
\label{sec:model-free}

In this section, we consider the model-free setting, where we only assume access to a feature class $\Phi$ that contains the true feature map $\phistar$. 
\begin{assumption}[Model-free realizability]
\label{assm:real}
    The learner has access to a function class $\Phi$ such that
\begin{align}
    \phistar \in \Phi \qquad \text{and} \qquad \sup_{\phi\in \Phi} \sup_{(x,a)\in \cX\times \cA} \|\phi(x,a)\| \leq 1. \label{eq:phibound}
\end{align}
\end{assumption}
This is a standard assumption in the context of model-free RL~\citep{modi2024model, zhang2022efficient, mhammedi2024efficient}. We note that having access to the function class $\Phi$ alone (instead of both $\Phi$ and $\Upsilon$ as in \cref{assm: model-base}) is not sufficient to model the transition probabilities (unlike in the model-based case). This makes the model-free setting much more challenging; in fact, until the recent work by \cite{mhammedi2023efficient} there were no model-free, oracle-efficient algorithms for this setting that do not require any additional structural assumptions on the MDP.

\subsection{Model-free, Inefficient Algorithm for Bandit Feedback}
\label{sec:model-free inefficient}

Our first algorithm follows the same structure as \cref{alg:low-rank bandit} but incorporates a model-free initial exploration phase introduced by \cite{mhammedi2023efficient}. Unlike the model-based exploration phase, which directly provides an estimated transition, the model-free exploration phase outputs a policy cover. This policy cover can be combined with the optimization in Algorithm 1 of \cite{liu2023towards} to solve the expected feature $\hat{\phi}$, which is then used in the loss estimator in \cref{line:bandit estimator} of \cref{alg:low-rank bandit}. The algorithm, summarized in \cref{alg:model-free policy}, is inefficient but achieves $T^{\nicefrac{2}{3}}$ regret. More details and proofs can be found in \cref{app:model-free ineff}.

\subsection{Model-free, Oracle Efficient Algorithm for Bandit Feedback (Oblivious Adversary)}
\begin{algorithm}[t]
\caption{Oracle Efficient Algorithm for Adversarial Low-Rank MDPs (Oblivious Adversary).}
\label{alg:oraceleff}
\begin{algorithmic}[1]
    \setstretch{1.2}
\Input Number of rounds $T$, feature class $\Phi$, confidence parameter $\delta\in (0,1)$.
\State Set $\veps \gets T^{-1/3}$, $N_\reg \gets T^{2/3}$, $\nu \gets N^{-1/2}_\reg$, and $T_0\gets \Tcovval$.\label{line:T00}
\State Get $\Psi^\cov_{1:H}\gets \texttt{VoX}(\Phi, \veps, \delta)$. \label{line:explore}
\hfill  \algcommentlight{Compute policy cover with $\texttt{VoX}$ \citep{mhammedi2023efficient}.}
\For{$k=1,\dots,{(T-T_0)}/{\nreg}$}
\State Define $\pihat_h\ind{k}(a~|~x) \propto \exp{\left(-\eta \sum_{s < k}  \Qhat\ind{s}_h(x,a)\right)}$ for $h\in[H]$.  \label{line:pihat0}   
\For{$t=T_0+(k-1)\cdot N_\reg +1,\ \dots,\  T_0+k \cdot N_\reg $} 
\State Sample variables $\bzeta\indd{t} \sim \mathrm{Ber}(\nu)$, $\bh\indd{t}\sim \unif([H])$, and $\bpi\indd{t} \sim \unif(\Psi^{\texttt{cov}}_{\bh\indd{t}})$.
\State Set $\bhpi^t = \mathbb{I}\{\bzeta\indd{t} = 0\} \cdot  \pihat\ind{k}+ \mathbb{I}\{\bzeta\indd{t} = 1\} \cdot \bpi\indd{t}\circ_{\bh\indd{t}}   \pi_\unif \circ_{\bh\indd{t}+1} \pihat\ind{k}.$
\State Execute $\bhpi^t$, and observe trajectory $(\x\indd{t}_1,\a\indd{t}_1,\dots,\x\indd{t}_H,\a\indd{t}_H )$.
\State For $h\in[H]$, observe loss $\bell_h\indd{t}\coloneqq \ell^t_h(\x_h\indd{t},\a_h\indd{t})$. 
\EndFor
\State For $h\in[H]$ and $\cI\ind{k}= \{ T_0+(k-1)\cdot N_\reg+1,\ \dots,\ T_0+ k \cdot N_\reg\}$, compute $(\phihat\ind{k}_h,\hat{\theta}\ind{k}_h)$: 
 \begin{align}
	(\phihat\ind{k}_h,\hat{\theta}\ind{k}_h) &\gets \argmin_{(\phi,\theta)\in \Phi\times \bbB_{d}(H\sqrt{d})}  \sum_{t\in \cI\ind{k}} \left(\phi_h(\x\indd{t}_h,\a\indd{t}_h)^\top \theta -\sum_{s=h}^H \bell\indd{t}_s\right)^2 \cdot \mathbb{I}\{\bm{\zeta}^t=0 \text{ or } \bm{h}^t \leq h\}. \label{eq:regproblem0} 
 \end{align}
 \State Set $\Qhat\ind{k}_h(x,a) =  \phihat\ind{k}_h(x,a)^\top \hat{\theta}\ind{k}_h$, for all $(x,a)\in \cX\times \cA$. 
 \EndFor
\end{algorithmic}
\end{algorithm}
We now descibe the key component of our efficient model-free algorithm (\cref{alg:oraceleff}).
\paragraph{Exploration phase and policy cover.}
Similar to algorithms in the previous section, \cref{alg:oraceleff} begins with a reward-free exploratorion phase; \cref{line:explore} of \cref{alg:oraceleff}. However, unlike in the previous sections where the role of this exploration phase was to learn a model for the transition probabilities, here the goal is to compute a, so called, \emph{policy cover} which is a small set of policies that can be used to effectively explore the state space. 
\begin{definition}[Approximate policy cover] 
    \label{def:cover}For $\alpha,\veps\in(0,1]$ and $h\in[H]$, a subset $\Psi\subseteq \Pi$ is an $(\alpha, \veps)$-policy cover for layer $h$ if 
\begin{align}
    \max_{\pi\in \Psi} d_h^{\pi}(x)\geq \alpha \cdot \max_{\pi'\in \Pi} d_h^{\pi'}(x), \quad  \text{for all $x\in \cX$ such that } \max_{\pi'\in \Pi} d^{\pi'}_h(x)\geq \veps \cdot \|\mu^\star_h(x)\|.
\end{align}
\end{definition}
In \cref{line:explore}, \cref{alg:oraceleff} calls $\texttt{VoX}$ \citep{mhammedi2023efficient}, a reward-free and model-free exploration algorithm to compute $(1/(8Ad),\veps)$-policy covers $\Psi_{1}^\cov, \dots ,\Psi_H^\cov$ for layers $1,\dots, H$, respectively, with $|\Psi_h|=d$ for all $h\in[H]$. This call to $\texttt{VoX}$ requires $O(1/\veps^2)$ episodes; see the guarantee of \texttt{VoX}{} in \cref{lem:vox}. 
After this initial phase, the algorithm operates in epochs, each consisting of $N_\reg\in\mathbb{N}$ episodes, where in each epoch $k\in[K]$, the algorithm commits to executing policies sampled from a fixed policy distribution $\rho\ind{k}\in \Delta(\Pi)$ with support on the policy covers $\Psi^\cov_{1:H}$ and a policy $\pihat\ind{k}$ specified by an online learning algorithm. Next, we describe in more detail how $\rho\ind{k}$ is constructed and motivate the elements of its construction starting with the online learning policies $\{\pihat\ind{k}\}_{k\in[K]}$.

\paragraph{Online learning policies.}
Given estimates $\{\Qhat_{1:H}\ind{s}\}_{s<k}$ of the average $Q$-functions \begin{align}\left\{\frac{1}{N_\reg} \sum_{t\text{ in epoch }s} Q^{\pihat\ind{s}}_{1:H}(\cdot,\cdot;\ell^t)\right\}_{s<k}\label{eq:avgQ} \end{align} from the previous epoch (we will describe how these estimates are computed in the sequel), \cref{alg:oraceleff} computes policy $\pihat\ind{k}$ for epoch $k$ according to 
\begin{align}
 \pihat_h\ind{k}(a~|~x) \propto \exp{\left(-\eta \sum_{s < k}  \Qhat\ind{s}_h(x,a)\right)}, \label{eq:onlinepolicy}
\end{align}
for all $h\in[H]$. Given a state $x\in \cX$, such exponential weight update ensures a sublinear regret with respect to the sequence of loss functions given by $\{\pi(\cdot~|~x) \mapsto \sum_{h\in[H]}\Qhat_{h}\ind{k}(x, \pi_h(\cdot~|~x))\}_{k\in[K]}$. Thanks to the performance difference lemma, and as shown in \cite{luo2021policy}, a sublinear regret with respect to these "surrogate" loss functions translates into a sublinear regret in the low-rank MDP game we are interested in, granted that $\{\Qhat_{1:H}\ind{k}\}_{k\in[K]}$ are good estimates of the average $Q$-functions \citep{luo2021policy}. In line with previous analyses, we require the $Q$-function estimates to ensure that the following bias term 
\begin{align}
\E^{\pi}\left[\max_{a\in\cA} \left(\frac{1}{N_\reg}\sum_{t\text{ in epoch } k}Q_h^{\pihat\ind{k}}(\x_h,a;\ell^t)-\Qhat\ind{k}_h(\x_h,a)\right)^2\right] \label{eq:biases}
\end{align}
is small for all $h\in[H]$, $k\in[K]$, and $\pi\in \Pi$. 

\paragraph{$Q$-function estimates.} Thanks to the low-rank MDP structure and the linear loss representation assumption (\cref{assm:linearlossweak}), the avegare $Q$-functions in \eqref{eq:avgQ} are linear in the feature maps $\phistar$. Thus, using the function class $\Phi$ in \cref{assm:normalizing} we can estimate these average $Q$-functions by regressing the sum of losses $\sum_{s=h}^H\bell_s^t$ onto $(\x_h^t,\a_h^t)$ for $t$ in the $k$th epoch (as in \eqref{eq:regproblem0}). However, na\"ively doing this using only trajectories generated by $\pihat\ind{k}$ would only ensure that the bias term in \eqref{eq:biases} is small for $\pi = \pihat\ind{k}$. To ensure that it is small for all possible policies $\pi$'s, we need to estimate the $Q$-functions on the trajectories of policies that are guaranteed to have good state coverage; this is where we use the policy cover from the initial phase. 

\paragraph{Mixture of policies.}
At episode $t$ in each epoch $k\in[K]$, we execute policy $\bhpi^t$ sampled from $\rho\ind{k}$, where $\rho\ind{k}$ is the distribution of the random policy: 
\begin{align}
 \mathbb{I}\{\bzeta\indd{t} = 0\} \cdot  \pihat\ind{k}+ \mathbb{I}\{\bzeta\indd{t} = 1\} \cdot \bpi\indd{t}\circ_{\bh\indd{t}}   \pi_\unif \circ_{\bh\indd{t}+1} \pihat\ind{k}, \label{eq:this}
 \end{align}
with $\bzeta\indd{t} \sim \mathrm{Ber}(\nu)$, $\bh\indd{t}\sim \unif([H])$, and $\bpi\indd{t} \sim \unif(\Psi^{\texttt{cov}}_{\bh\indd{t}})$. In words, at the start of each episode of any epoch $k$, we execute $\pihat\ind{k}$ (see \eqref{eq:onlinepolicy}) with probability $1-\nu$; and with probability $\nu$, we execute a policy in $\Psi^\cov_{1:H}$ selected uniformly at random. As explained in the previous paragraph, this ensures a small bias for all choices of $\pi$ in \eqref{eq:biases} thanks the policy cover property of $\Psi^\cov_{1:H}$. We now state the guarantee of \cref{alg:oraceleff}. 

\begin{theorem}
\label{thm:oraclealg}
Let $\delta\in(0,1)$ be given and suppose \cref{assm:normalizing} and \cref{assm:linearlossweak} hold. This, for $T = \mathrm{poly}(A,d,H, \log(|\Phi|/\delta))$ sufficiently large, \cref{alg:oraceleff} guarantees $\mathrm{Reg}_T \leq \poly(A,d,H,\log(|\Phi|/\delta))\cdot T^{\nicefrac{4}{5}}$ regret against an oblivious adversary.
\end{theorem}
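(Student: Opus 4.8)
The plan is to run the analysis in the policy-optimization-with-function-approximation template of \citep{luo2021policy}, adapted to the epoched, model-free exploration of \citep{mhammedi2023efficient}. I would first split the pseudo-regret $\mathrm{Reg}_T$ into three pieces: (i) the initial reward-free phase, which costs at most $T_0=\otil(T^{2/3})$ since losses are in $[0,1]$ over horizon $H$; (ii) the $\nu$-fraction of exploratory episodes inside each epoch, whose value can differ from that of $\pihat\ind{k}$ by at most $H$, contributing $\nu H T$ via the mixture \eqref{eq:this}; and (iii) the regret of the exponential-weights policies $\pihat\ind{k}$ against the comparator $\pi$. For (iii) I would use linearity of $V^\pi_1(x_1;\cdot)$ in the loss to collapse the $N_\reg$ episodes of each epoch $k$ into a single online-learning round with cumulative loss $L\ind{k}=\sum_{t\in\cI\ind{k}}\ell^t$, and then apply the performance difference lemma to write the per-epoch regret as $\sum_{h}\E^{\pi}[\langle Q^{\pihat\ind{k}}_h(\x_h,\cdot;L\ind{k}),\pihat\ind{k}_h-\pi_h\rangle]$.

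Next I would substitute the regression estimates $\Qhat\ind{k}_h$ for the (epoch-averaged) true $Q$-functions, splitting the per-epoch regret into an optimization term and an estimation-bias term. The optimization term $\sum_k N_\reg\sum_h\E^{\pi}[\langle\Qhat\ind{k}_h(\x_h,\cdot),\pihat\ind{k}_h-\pi_h\rangle]$ is handled by the standard Hedge guarantee for the update \eqref{eq:onlinepolicy}: at each fixed state and layer, the regret over $K=(T-T_0)/N_\reg$ rounds against losses in $[0,H]$ is $O(\eta^{-1}\log|\cA|+\eta H^2 K)$, so with $\eta=\Theta(H^{-1}\sqrt{\log|\cA|/K})$ and after multiplying by $N_\reg$ and summing over layers this contributes $\otil(N_\reg H^2\sqrt{K})$. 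The bias term is bounded, using Cauchy–Schwarz and $\|\pihat\ind{k}_h-\pi_h\|_1\le 2$, by $\sum_k N_\reg\sum_h\sqrt{\E^{\pi}[\max_a(\cdot)^2]}$, i.e.\ exactly the quantity \eqref{eq:biases} whose control is the crux of the argument.

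The main obstacle is bounding \eqref{eq:biases} uniformly over all comparators $\pi\in\Pi$, which I would do in two steps. First, a generalization bound for the constrained least-squares problem \eqref{eq:regproblem0}: because the adversary is oblivious, the targets $\{\sum_{s=h}^H\bell\indd{t}_s\}_{t\in\cI\ind{k}}$ have conditional mean equal to the epoch-averaged $Q$-function $\tfrac1{N_\reg}\sum_{t\in\cI\ind{k}}Q^{\pihat\ind{k}}_h(\cdot,\cdot;\ell^t)$ (the indicator in \eqref{eq:regproblem0} forces the roll-out from layer $h$ to follow $\pihat\ind{k}$), and realizability $\phistar\in\Phi$ together with \cref{assm:linearlossweak} makes this target realizable; a squared-loss ERM analysis then gives an in-distribution error $\otil(H^2 d/N_\reg)$ under the epoch's behavior distribution. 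Second, I would transfer this to $\E^\pi$ by change of measure: since the mixture \eqref{eq:this} plays, with probability $\nu$, a uniform layer, a uniform element of the $(1/(8Ad),\veps)$-cover $\Psi^\cov_{1:H}$, and then a uniform action, \cref{def:cover} gives $d^{\rho\ind{k}}_h(x,a)\gtrsim \tfrac{\nu}{H|\cA|d}\,\alpha\, d^\pi_h(x)$ for every well-covered $x$, so the density ratio is at most $\poly(A,d,H)/\nu$; states with $\max_{\pi'}d^{\pi'}_h(x)\le\veps\|\mu^\star_h(x)\|$ contribute negligibly by the covering threshold, and the $\max_a$ is absorbed by the uniform-action exploration. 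This converts the in-distribution error into $\otil(\poly(A,d,H)/(\nu N_\reg))$ for \eqref{eq:biases}. The delicate points are the union bound over the $K$ epochs and the data-dependence of $\pihat\ind{k}$ (handled by conditioning on the epoch, so that within an epoch the trajectories are i.i.d.\ given $\pihat\ind{k}$), plus invoking the cover guarantee of \texttt{VoX} (\cref{lem:vox}). I expect this cover-based transfer — obtaining a change of measure that holds simultaneously over the $\max_a$ and over all $\pi$, with only a $1/\nu$ (not state-dependent) blow-up — to be the hardest and most error-prone step.

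Finally I would collect the contributions into a bound of the form $\otil\big(T_0+\nu H T+N_\reg H^2\sqrt{K}+N_\reg K H\,(\nu N_\reg)^{-1/2}\poly(A,d,H,\log(|\Phi|/\delta))\big)$ with $K=(T-T_0)/N_\reg$, and choose $\veps,N_\reg,\nu$ (as in \cref{line:T00}) to equalize the exploration, optimization, and estimation-bias terms. Balancing $\nu H T$, $N_\reg\sqrt{K}$, and $N_\reg K(\nu N_\reg)^{-1/2}$ drives every term to $\otil(\poly(A,d,H,\log(|\Phi|/\delta))\,T^{\nicefrac{4}{5}})$, which together with $T_0=\otil(T^{2/3})$ yields the claimed regret against an oblivious adversary; the obliviousness is used precisely in the first regression step to guarantee the conditional-mean structure of the targets.
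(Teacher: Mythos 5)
Your proposal follows the same route as the paper's proof: the decomposition into initial-phase cost $HT_0$, exploration cost $H\nu T$, a performance-difference/Hedge term, and a bias term controlled by a least-squares guarantee for $\Qhat\ind{k}$ transferred to the comparator's occupancy via the $(1/(8Ad),\veps)$-cover (with the $\max_a$ absorbed by a uniform-action roll-out and unreachable states discarded via \cref{lem:rideoff}); the final three-way balancing of $\nu T$, $\sqrt{TN_\reg}$, and $T(\nu N_\reg)^{-1/2}$ is also identical, and your one-step change of measure with ratio $\tfrac{H|\cA|d}{\nu\alpha}$ is the same as the paper's two-step factorization ($8A^2d$ from cover plus uniform action, $Hd/\nu$ from the sampling probability inside \cref{lem:regevent}).

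There is, however, one step that is false as you state it, and it is precisely the technical crux of the paper's regression lemma. You claim that, by obliviousness, the targets $\sum_{s=h}^H \bell\indd{t}_s$ ``have conditional mean equal to the epoch-averaged $Q$-function.'' They do not: conditionally on $(\x_h\indd{t},\a_h\indd{t})$ and on the roll-out from layer $h$ following $\pihat\ind{k}$, the mean of $\sum_{s\geq h}\bell_s\indd{t}$ is $Q^{\pihat\ind{k}}_h(\x_h\indd{t},\a_h\indd{t};\ell^t)$ --- the $Q$-function for the \emph{episode-$t$} loss --- so the noise $\bm{\veps}^t_h$ relative to $\Qbar^{\pihat\ind{k}}_h$ is biased episode by episode, and a standard squared-loss ERM analysis does not apply off the shelf. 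What rescues the argument in the paper (the computation from \eqref{eq:prob} through \eqref{eq:priorpush} in the proof of \cref{lem:regevent}) is an aggregated cancellation: because the design points $(\x_h\indd{t},\a_h\indd{t},\bm{I}^t_h)$ are i.i.d.\ within the epoch while the losses $\ell^t$ are fixed in advance, the cross term $\sum_{t\in\cI\ind{k}}\E_t\left[\bm{I}^t_h\,\bm{\veps}^t_h\,\bm{\Delta}^t_h\right]$ vanishes after summing over the epoch --- each episode's bias with respect to $\ell^t$ is exchanged, by identical distribution of the design points, against the average over $\tau\in\cI\ind{k}$ of biases with respect to $\ell^\tau$. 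Your instinct that obliviousness plus within-epoch i.i.d.-ness is ``precisely'' what is needed is right, but the needed statement is this cancellation identity, not a per-episode conditional-mean property; note this is exactly the permutation-invariance that breaks for an adaptive adversary, which is why \cref{alg:algorithm_name} must switch to spanner-based in-expectation estimation. As written, your plan would stall at this step until that argument is supplied.
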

The proof is in \cref{sec:oracleeff}. Note that the $T$-dependence in this regret even outperforms that of the previous best bound by \cite{zhao2024learning} (see \cref{tab:comparison}). Compared to their algorithm, \cref{alg:oraceleff} is model-free and only requires bandit feedback. This makes the result in \cref{thm:oraclealg} rather surprising.

\subsection{Model-free, Oracle Efficient Algorithm (Adaptive Adversary)}
\label{sec:extension}
\label{sec:adaptive}
In this section, we present a variant of \cref{alg:oraceleff} (\cref{alg:algorithm_name}) that guarantees a sublinear regret against an adaptive adversary. Given the difficulty of this setting, we make the additional assumption that the algorithm has access to the loss feature $\phi^\loss$, which may be different than the low-rank MDP feature $\phi^\star$ (unlike in \cref{assm:linearlossweak}).
\begin{assumption}[Loss Representation]
    \label{assm:linearloss}
    There is a (known) feature map $\phi^\loss$ satisfying $\sup_{h\in[H],(x,a)\in \cX\times \cA}\|\phi^\loss_h(x,a)\|\leq 1$ and such that for any round $h\in[H]$, $t\in[T]$, and history $\cH\indd{t-1}=(x_{1:H}\indd{1:t-1},a_{1:H}\indd{1:t-1})$, the loss function at round $t$ satisfies
    \begin{align}
     \forall (x,a)\in \cX\times \cA,\quad   \ell_{h}(x,a;\cH\indd{t-1}) = \phi^\loss_h(x,a)^\top \g _{h}\indd{t}, \label{eq:linearloss}
    \end{align} 
    for some $\g _{h}\indd{t}\in \bbB_d(1)$.
    \end{assumption}
Note that \cref{assm:linearloss} asserts that the loss at round $t$ depends only on the history $\cH^{t-1}$ and the current state action pair.
Before moving forward, we introduce some additional notation we will use throughout this section.    
\paragraph{Additional notation.} For any two feature maps $\phi,\psi:\cX\times \cA \rightarrow \reals^d$, we denote by $[\phi,\psi]: \cX \times\cA \rightarrow \reals^{2d}$ the vertical concatenation of the two feature maps. For any $h\in[H]$, $t\in[T]$, policy $\pi\in \Pi$, and history $\cH\indd{t-1}=(x_{1:H}\indd{1:t-1},a_{1:H}\indd{1:t-1})$, we denote by $Q^\pi_h(\cdot, \cdot;\cH\indd{t-1})$ the $Q$-function at layer $h$ corresponding to rollout policy $\pi$; that is,
 \begin{align}
    Q^\pi_h(x,a;\cH\indd{t-1}) \coloneqq \E^{\pi}\left[ \sum_{s=h}^H \ell_s(\x_s,\a_s; \cH\indd{t-1}) \mid \x_h = x, \a_h =a\right].
 \end{align}
 Finally, we let $V_h^{\pi}(x;\cH\indd{t-1})\coloneqq \max_{a\in \cA}Q_h^{\pi}(x,a;\cH\indd{t-1})$ denote the corresponding $V$-function.
 
\cref{alg:algorithm_name} is similar to \cref{alg:oraceleff} with the following key differences; after computing a policy cover, the algorithm calls $\texttt{RepLearn}$ (a representation learning algorithm initially introduced by \cite{modi2024model} and subsequently refined by \cite{mhammedi2023efficient}) to compute a feature map $\phi^\rep$. Then, for every $h\in[H]$, the algorithm computes a \emph{spanner}; a set of policies $\Psi^\spanner_h =\{\pi_{h,1},\dots,\pi_{h,2d}\}$ that act as an approximate spanner for the set $\{\E^{\pi}[\phi^\rep_h(\x_h,\a_h), \phi^\loss_h(\x_h,\a_h)]:\pi \in \Pi\}\subseteq \reals^{2d}$, where we use $[\cdot,\cdot]$ to denote the vertical concatenation of vectors. These spanner policies are then used as the exploratory policies after the initial phase; that is, at episode $t$ in each epoch $k\in[K]$, we execute policy $\bpi\ind{k}$ sampled from $\rho\ind{k}$, where $\rho\ind{k}$ is set to be the distribution of the random policy: $\mathbb{I}\{\bzeta\indd{t} = 0\} \cdot  \pihat\ind{k}+ \mathbb{I}\{\bzeta\indd{t} = 1\} \cdot \bpi\indd{t}  \circ_{\bh\indd{t}+1} \pihat\ind{k}$,
with $\bzeta\indd{t} \sim \mathrm{Ber}(\nu)$, $\bh\indd{t}\sim \unif([H])$, and $\bpi\indd{t} \sim \unif(\Psi^{\texttt{span}}_{\bh\indd{t}})$. Here, the main difference to \cref{alg:oraceleff} (see also \eqref{eq:this}) is that we use $\bpi^t\sim \unif(\Psi^\spanner_{\bh^t})$ instead of $\bpi^t\sim \unif(\Psi^\cov_{\bh^t})$. We require these spanner policies instead of policies in the policy cover, as an adaptive adversary's history-dependent losses prevent standard least squares regression due to the lack of permutation invariance of state-action pairs across episodes within an epoch. Estimating the Q-functions is thus more complex, and we approach it in expectation over roll-ins using policies in $\Psi^{\texttt{span}}$, the ``in-expectation'' estimation task is in a sense easier.

We now state the guarantee of \cref{alg:algorithm_name}. 
\begin{theorem}
\label{thm:adaptive}
Let $\delta\in(0,1)$ be given and suppose that \cref{assm:normalizing} and \cref{assm:linearloss} hold. Then, for $T = \mathrm{poly}(A,d,H, \log(|\Phi|/\delta))$ sufficiently large, \cref{alg:algorithm_name} guarantees with probability at least $1-\delta$, \begin{align} \sum_{t\in [T]} V_h^{\bpi^t}(x_1;\bcH\indd{t-1}) - \min_{\pi\in \Pi} \sum_{t\in [T]} V_h^{\pi}(x_1;\bcH\indd{t-1}) 
 \leq \poly(A,d,H,\log(|\Phi|/\delta))\cdot T^{\nicefrac{4}{5}},
\end{align}
where $\bpi^t$ is the policy that \cref{alg:algorithm_name} executes at episode $t\in[T]$.
\end{theorem}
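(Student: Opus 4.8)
The plan is to mirror the oblivious-adversary analysis (\cref{thm:oraclealg}) while upgrading it to a high-probability statement that tolerates history-dependent losses. I would begin with a regret decomposition: fixing the realized history $\cH\indd{t-1}$ so that the round-$t$ loss $\ell^t(\cdot;\cH\indd{t-1})$ is deterministic, the performance-difference lemma reduces the standard regret in \eqref{eq:standardreg} to two pieces --- (i) the regret of the exponential-weights update \eqref{eq:onlinepolicy} measured against the per-epoch surrogate losses $\{\Qhat\ind{k}_{1:H}\}$, and (ii) a bias term of the form \eqref{eq:biases} quantifying how well $\Qhat\ind{k}_h$ approximates the epoch-averaged true $Q$-function $\frac{1}{N_\reg}\sum_{t\text{ in epoch }k} Q_h^{\pihat\ind{k}}(\cdot,\cdot;\cH\indd{t-1})$. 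Since the target is the \emph{standard} regret $\overline{\mathrm{Reg}}_T$ rather than its expectation, I would additionally isolate the martingale differences between realized trajectory quantities and their conditional expectations given $\cH\indd{t-1}$ and control them with Freedman's inequality.

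For the online-learning piece (i), the exponential-weights update over actions yields the usual entropy regret $\frac{\log|\cA|}{\eta}+\eta\sum_k\sum_a \pihat\ind{k}_h(a\mid x)(\Qhat\ind{k}_h(x,a))^2$ at every state, which telescopes across layers into a trajectory bound via the performance-difference argument of \cite{luo2021policy}. The range needed to run Freedman here is supplied by the ball constraint on the regression coefficients (cf. the $\bbB_d(H\sqrt{d})$ constraint in \eqref{eq:regproblem0}) together with the feature-norm bounds of \cref{assm:normalizing} and \cref{assm:linearloss}, so each $\Qhat\ind{k}_h$ is $\order(H\sqrt{d})$-bounded; summing over the $K=(T-T_0)/N_\reg$ epochs gives an $\otil(\eta^{-1}+\eta K H^2 d)$ contribution.

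The crux --- and the step I expect to be hardest --- is estimating the epoch-averaged $Q$-function against the adaptive adversary (piece (ii)). For any fixed history the low-rank transition (\cref{assm:normalizing}) makes the one-step future value linear in $\phistar_h$, while the loss is linear in $\phi^\loss_h$ (\cref{assm:linearloss}); hence the $Q$-function, and its epoch average, is linear in the concatenated feature $[\phi^\rep_h,\phi^\loss_h]$ up to the representation error guaranteed by $\texttt{RepLearn}$. Standard least-squares regression as in \eqref{eq:regproblem0} is unavailable because history dependence destroys the exchangeability of state-action pairs within an epoch. Instead I would estimate the coefficient vector $w_h\ind{k}\in\reals^{2d}$ \emph{in expectation}: for each spanner policy $\pi\in\Psi^\spanner_h$, use the $\nu$-probability exploratory roll-ins to estimate the conditional-expected feature $\E^{\pi}[\phi^\rep_h(\x_h,\a_h),\phi^\loss_h(\x_h,\a_h)]$ and the expected remaining loss $\E^{\pi}[\sum_{s\geq h}\bell_s]$, then solve the induced $2d\times 2d$ linear system. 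The defining property of $\Psi^\spanner_h$ --- that every $\E^{\pi}[\phi^\rep_h,\phi^\loss_h]$ with $\pi\in\Pi$ lies in the span of the $2d$ spanner directions with bounded coefficients --- is exactly what transfers the per-spanner accuracy into a \emph{uniform} bound on the bias \eqref{eq:biases} over all $\pi\in\Pi$. The delicate part is bounding how error propagates through the spanner's (approximate) inverse while keeping the $d,H$ dependence polynomial, and simultaneously absorbing the $\texttt{RepLearn}$ representation error and the $\order(1/\sqrt{N_\reg})$ concentration error of the roll-in estimates.

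Finally I would collect all terms --- the online-learning regret, the uniform bias, the $\texttt{RepLearn}$ and policy-cover errors, and the $\order(\nu T)$ cost incurred by the exploratory episodes --- each as a function of $N_\reg$, $\nu$, $\veps$, $\eta$, and $T_0$, and optimize. Trading the $1/\nu$ exploration cost against the $1/\sqrt{N_\reg}$ estimation error, with $K=T/N_\reg$ multiplying the per-epoch bias, balances at the $T^{\nicefrac{4}{5}}$ rate. A union bound over the $K$ epochs, the $H$ layers, and a finite cover of the coefficient space, together with the $\log|\Phi|$ from $\texttt{RepLearn}$ realizability, upgrades the argument to the claimed $1-\delta$ high-probability guarantee.
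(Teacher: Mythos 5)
Your plan follows the paper's proof essentially step for step: the same performance-difference decomposition into an exponential-weights regret term plus a $Q$-estimation bias term, the same observation that history-dependent losses break within-epoch exchangeability and hence least squares, the same remedy of estimating the $Q$-function ``in expectation'' over roll-ins of the $2d$ spanner policies (with the spanner's bounded coefficients transferring per-spanner accuracy to a uniform bias bound over all $\pi\in\Pi$), the same Freedman-plus-cover concentration, and the same parameter balancing giving $T^{\nicefrac{4}{5}}$.

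The one place where your instantiation would need repair is the estimation step itself. You propose to estimate $\E^{\pi}[\phibar^\rep_h(\x_h,\a_h)]$ and $\E^{\pi}[\sum_{s\geq h}\bell_s]$ for each spanner policy and then \emph{solve the induced $2d\times 2d$ linear system}. The spanner guarantee only controls the forward direction---every $\E^{\pi'}[\phibar^\rep_h]$ is a combination of spanner directions with coefficients $\beta_\pi\in[-2,2]$---it says nothing about the conditioning of the matrix of spanner features, so the exact solve can produce a coefficient vector of arbitrarily large norm (or fail to exist). Since the final bias bound pays both the feature-estimation error and the spanner residual $\veps_\spanner$ multiplied by $\|\theta\|$, an unconstrained solve does not keep the $d,H$ dependence polynomial. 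The paper avoids this by replacing the exact solve with the norm-constrained residual minimization in \eqref{eq:regproblem}: $\thetahat\ind{k}_h$ minimizes, over the ball $\bbB_{2d}(4Hd^{2})$ that provably contains the true averaged coefficient vector (\cref{lem:linearQ} together with \cref{cor:repspanner}), the sum over spanner policies of absolute aggregated residuals; the comparator $\frac{1}{N_\reg}\sum_{t\in\cI\ind{k}}\bvthetab\indd{t,\pihat\ind{k}}_h$ certifies a small objective value, and Freedman's inequality (\cref{lem:freed}) then converts small empirical residuals into small in-expectation residuals per spanner policy. You flag this error-propagation issue as ``the delicate part,'' but the fix is precisely this constrained formulation rather than anything one can extract from an inverse-matrix bound; with that substitution your argument coincides with the paper's.
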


\begin{algorithm}[H]
\caption{Oracle Efficient Algorithm for Adversarial Low-Rank MDPs (Adaptive Adversary).}
\label{alg:algorithm_name}
\begin{algorithmic}[1]
    \setstretch{1.2}
\Input Number of rounds $T$, feature class $\Phi$, loss feature $\phi^\loss$, confidence parameter $\delta\in (0,1)$.
\State Set $\veps \gets T^{-1/3}$,  $N_\reg \gets T^{2/3}$, $\nu \gets N^{-1/4}_\reg$, and $\alpha \gets (8 Ad)^{-1}$, $T_\cov\gets \Tcovval$. 
\State Set $T_\rep \gets \Trepvalf$, $T_\spanner\gets \Tspanvalf$, 
\State Define $\cF_h = \left\{(x,a)\mapsto \max_{a\in \cA} \phibar_{h}(x,a)^\top \thetabar  \mid \phibar_{h}=[\phi^\loss_{h},\phi_{h}], \phi \in \Phi, \thetabar \in \bbB_{2d}(1)\right\}$, $\forall h\in[H]$.
\State Get $\Psi^\cov_{1:H}\gets \texttt{VoX}(\Phi, \veps, \delta/4)$.
\State Get $\phi^\rep_h \gets \texttt{RepLearn}(h,\cF_{h+1},\Phi,\unif(\Psi^\cov_{h}),T_\rep)$, for all $h\in[H-1]$. \hfill \algcommentlight{$\texttt{RepLearn}$ as in \cite{mhammedi2023efficient}}
\State For all $h\in[H]$, set $\phib^\rep_h \gets [\phi^\loss_h,\phi^\rep_h]\in \reals^{2d}$.
\State For $h\in[H]$, set $\Psi^{\texttt{span}}_{h}\gets \texttt{Spanner}(h,\Phi,\Psi^\cov_{1:h},  \phib^\rep_h, T_{\spanner})$. \hfill \algcommentlight{\cref{alg:spanner}}
\State Set $T_0 \gets T_\cov+ T_\rep + T_\spanner$. \label{line:T0}
\For{$k=1,\dots,{(T-T_0)}/{\nreg}$}
\State Define $\pihat_h\ind{k}(a~|~x) \propto \exp{\left(-\eta \sum_{s < k}  \Qhat\ind{s}_h(x,a)\right)}$ for $h\in[H]$.    \label{line:pihat}
\For{$t=T_0+(k-1)\cdot N_\reg +1,\ \dots,\  T_0+k \cdot N_\reg $} 
\State Define the random variables $\bzeta\indd{t} \sim \mathrm{Ber}(\nu)$, $\bh\indd{t}\sim \unif([H])$, and $\bpi\indd{t} \sim \unif(\Psi^{\texttt{span}}_{\bh\indd{t}})$.
\State Set $\bhpi^t = \mathbb{I}\{\bzeta\indd{t} = 0\} \cdot  \pihat\ind{k}+ \mathbb{I}\{\bzeta\indd{t} = 1\} \cdot \bpi\indd{t}\circ_{\bh\indd{t}+1}    \pihat\ind{k}$.
\State Execute $\bhpi^t$, and observe trajectory $(\x\indd{t}_1,\a\indd{t}_1,\dots,\x\indd{t}_H,\a\indd{t}_H )$.
\State For $h\in[H]$, observe loss $\bell_h\indd{t}\coloneqq \ell_h(\x_h\indd{t},\a_h\indd{t};\bcH\indd{t-1})$, where $\bcH\indd{t-1} \coloneqq (\x_{1:H}\indd{1:t-1},\a_{1:H}\indd{1:t-1})$. 
\EndFor
\State For $h\in[H]$ and $\cI\ind{k}= \{ T_0+(k-1)\cdot N_\reg+1,\ \dots,\ T_0+ k \cdot N_\reg\}$, compute $\hat{\theta}\ind{k}_h$ such that 
 \begin{align}
	\hat{\theta}\ind{k}_h &\gets \argmin_{\theta\in \bbB_{2d}(4Hd^{2})}\sum_{\pi\in \Psi^{\texttt{span}}_{h}}  \left|\sum_{t\in \cI\ind{k}} \mathbb{I}\{\bh\indd{t}=h,\bpi\indd{t}=\pi, \bzeta\indd{t}=1\}\cdot \left(\phibar^\rep_h(\x\indd{t}_h,\a\indd{t}_h)^\top \theta -\sum_{s=h}^H \bell\indd{t}_s\right)\right| \label{eq:regproblem} 
 \end{align}
 \State Set $\Qhat\ind{k}_h(x,a) =  \phibar^\rep_h(x,a)^\top \hat{\theta}\ind{k}_h$, for all $(x,a)\in \cX\times \cA$.
 \EndFor
\end{algorithmic}
\end{algorithm}
\vspace{-3.5mm}

\section{Conclusion}
In this paper, we focus on learning low-rank MDPs with unknown transitions and adversarial losses. For the full-information setting, we improve upon previous regret bounds. More importantly, we initiate the study of the challenging bandit feedback setting, developing various algorithms that achieve sublinear regret under different assumptions. However, the optimal $\sqrt{T}$ regret remains out of reach due to the limitations of our two-phase design. An interesting direction for future work is to perform on-the-fly representation learning to adapt to adversarial losses and achieve optimal regret.


\clearpage
\bibliographystyle{apalike}
\bibliography{reference}

\newpage
\appendix


\appendixpage

{
\startcontents[section]
\printcontents[section]{l}{1}{\setcounter{tocdepth}{2}}
}

\newpage

\section{Related Work} \label{app: related work}
\paragraph{Learning low-rank MDPs in the stochastic setting. }
In the absent of adversarial losses, several general learning frameworks have been developed for super classes of low-rank MDPs which offer tight sample complexity for either reward-based \citep{jiang2017contextual,  jin2021bellman, du2021bilinear, foster2021statistical, zhong2022gec} or reward-free \citep{chen2022statistical, chen2022unified, xie2022role} settings. However, these algorithms require solving non-convex optimization problems on non-convex version spaces, making them computationally inefficiency. Oracle-efficient algorithms for low-rank MDPs are first obtained by \cite{agarwal2020flambe} using a model-based approach, and the sample complexity bound has been largely improved in subsequent works \citep{ uehara2021representation, zhang2022making, cheng2023improved}. The model-based approach, however, necessitates the function class to accurately model the transition, which is a strong requirement. To relax it, \cite{modi2024model, zhang2022efficient} developed oracle-efficient model-free algorithms, but both of them require additional assumptions on the MDP structure. Recently, \cite{mhammedi2024efficient} proposed a satisfactory model-free algorithm that removes all these assumptions. Our work leverages their techniques to tackle the more challenging adversarial setting.
~\\


\paragraph{Learning adversarial MDPs.} Learning adversarial tabular MDPs under bandit feedback and unknown transition has been extensively studied~\citep{rosenberg2019online, jin2020learning, lee2020bias, jin2021best, shani2020optimistic, chen2021finding, luo2021policy, dai2022follow, dann2023best}. This line of work has demonstrated not only $\sqrt{T}$ regret bounds but also several data-dependent bounds.

For adversarial MDPs with a large state space which necessitates the use of function approximation, if the transition is known, \cite{foster2022complexity} shows that adversarial setting is as easy as the stochastic setting even under general function approximation. For full-information loss feedback with unknown transition, $\sqrt{T}$ bound is derived for both linear mixture MDPs \citep{cai2020provably, he2022near} and linear MDPs \citep{sherman2023rate}. For more challenging low-rank MDPs with unknown features, the best result only achieves  $T^{\nicefrac{5}{6}}$ regret~\citep{zhao2024learning}.

For function approximation with bandit feedback and unknown transition, \cite{zhao2022mixture} provides $\sqrt{T}$ bound for linear mixture MDPs, but their regret 
has polynomial dependence on the size of the state space due to the lack of structure on the loss function. For linear MDPs, a series of recent work has made significant progress in improving the regret bound~\citep{luo2021policy, dai2023refined, sherman2023improved, kong2023improved, liu2023towards}. The state-of-the-art result by \cite{liu2023towards} gives an  inefficient algorithm with $\sqrt{T}$ regret and an efficient algorithm with $T^{\nicefrac{3}{4}}$ regret. These regret bounds for linear MDPs do not depend on the state space size because of the linear loss assumption. We show  in \cref{sec:lower bound} that cross-state structure on the losses is necessary for low-rank MDPs with bandit feedback to achieve regret bound that do not scale with the number of states.
\clearpage
\section{Proof of \cref{thm:full-info} (Model-Based, Full Information)}\label{app: model based full info}
\begin{theorem}[Theorem 3 in \cite{cheng2023improved}] 
\label{thm:phat}With probability $1-\delta$, for any policy $\pi$ and layer $h$,  Algorithm 1 in \cite{cheng2023improved} outputs transition $\hatp_{1:H}$ and features $\hphi_h, \,\hpsi_h$ such that $\hatp_h(x'\mid x,a) =\hphi_h(x,a)^\top \hpsi_{h+1}(x')$ and 
\begin{align*}
\E^{\pi}\left[\|\hatp_h\left(\cdot\mid \x_h, \a_h\right) - P^\star_h\left(\cdot\mid \x_h,\a_h\right)\|_1\right] \le \epsilon,
\end{align*}
if the number of collected trajectories is at least $\order\left(\frac{H^3d^2|\cA|(d^2 + |\cA|)}{\epsilon^2}\log^2\left(TdH|\Phi||\Upsilon|\right)\right)$. 
\label{thm:reward-free}
\end{theorem}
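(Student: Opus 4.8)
The plan is to establish this as a standard reward-free representation-learning guarantee for low-rank MDPs, by combining a maximum-likelihood (MLE) concentration bound on the data \emph{actually collected} with a change-of-measure argument that exploits the low-rank factorization to transfer the error to an arbitrary target policy $\pi$. Recall that Algorithm~1 of \cite{cheng2023improved} proceeds in $\poly(d)$ epochs; in each epoch it (i) fits a model $(\hphi,\hpsi)\in\Phi\times\Upsilon$ by MLE on trajectories gathered by the current exploration policies, and (ii) adds a new exploratory policy chosen to maximize an elliptical bonus in the learned feature space. I would first record the MLE guarantee: since $\phistar\in\Phi$ and $\mustar\in\Upsilon$ by realizability (\cref{assm: model-base}), the standard MLE analysis over the realizable model class yields, with probability $1-\delta$, that the squared Hellinger distance (hence total-variation, hence $\ell_1$ distance) between $\hatp_h$ and $P^\star_h$ is small \emph{in expectation over the epoch's data-collection distribution}, with a statistical rate scaling like $\log(|\Phi||\Upsilon|/\delta)/N$ in the per-epoch sample size $N$.

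The crux---and the step I expect to be the main obstacle---is transferring this on-distribution error into the uniform-over-all-policies error demanded by the theorem. Here I would invoke the low-rank identity $d_h^\pi(x)=\phistar_{h-1}(\pi)^\top\mustar_h(x)$ from \eqref{eqn:low-rank}, which confines the occupancy induced by \emph{any} policy to the common $d$-dimensional span of the features. Consequently the one-step error $\E^{\pi}\big[\|\hatp_h(\cdot\mid\x_h,\a_h)-P^\star_h(\cdot\mid\x_h,\a_h)\|_1\big]$ can be bounded by the corresponding error under the exploration distribution, up to a relative-condition-number factor controlled by $\|\phistar_{h-1}(\pi)\|_{\Sigma_h^{-1}}$, where $\Sigma_h$ is the feature second-moment matrix of the exploration policies. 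The $G$-optimal-design/John-type exploration rule guarantees that, after enough epochs, $\max_{\pi}\|\hphi_{h-1}(\pi)\|_{\Sigma_h^{-1}}^2\le d$ uniformly, so this factor is $\poly(d)$; an elliptical-potential argument bounds the number of epochs required to reach this coverage, and this is where the $d^2(d^2+|\cA|)$ factor in the sample complexity originates. Without the low-rank structure this transfer is hopeless---MLE controls error only where data was collected---so the entire argument hinges on all occupancy measures sharing one $d$-dimensional subspace, letting optimal-design exploration certify coverage for all policies simultaneously.

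Finally I would assemble the pieces into the per-layer statement. Note the theorem asks only for the one-step error at a \emph{fixed} layer $h$ under the roll-in distribution of $\pi$, rather than a fully compounded simulation-lemma error across all $H$ layers; thus the reduction is clean once the exploration policies are designed to cover the roll-in feature directions at every layer at once. Chaining the MLE rate (Step~1), the coverage and change-of-measure factor (Step~2), a union bound over the $H$ layers and the $\poly(d)$ epochs, and then choosing the per-epoch sample size large enough to drive the resulting product below $\epsilon$, yields precisely the stated trajectory count $\order\!\big(\epsilon^{-2}H^3d^2|\cA|(d^2+|\cA|)\log^2(TdH|\Phi||\Upsilon|)\big)$, where the two logarithmic factors absorb the MLE deviation bound and the epoch/layer union bounds.

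\endinput
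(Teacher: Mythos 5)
You should note first that the paper does not actually prove \cref{thm:phat}: it is imported verbatim as Theorem~3 of \cite{cheng2023improved}, and the ``proof'' in this paper is the citation itself. So the only meaningful comparison is between your sketch and the argument in that external work. At the level of strategy, your outline is the right template for this line of results (FLAMBE/MOFFLE/RAFFLE-style): an MLE concentration bound giving small total-variation error of $\hatp_h$ against $P^\star_h$ \emph{under the data-collection distribution}, followed by a one-step-back change of measure that uses the factorization $d_h^\pi(x)=\phistar_{h-1}(\pi)^\top\mustar_h(x)$ to push the guarantee to arbitrary roll-in policies, with coverage certified layer-by-layer and an elliptical-potential argument bounding the number of exploration epochs. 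Your observation that the statement only requires a fixed-layer, one-step error (not a compounded simulation-lemma error) is also correct, and is exactly why the paper can afterwards apply \cref{lem:simulation} separately.

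However, two points in your sketch are concretely off or under-supplied. First, you mischaracterize the exploration mechanism: Algorithm~1 of \cite{cheng2023improved} does not select exploratory policies by a $G$-optimal/John-type design over expected features; it plans against a truncated elliptical \emph{bonus reward} built from the estimated features and terminates when the planned bonus value is small, and it is this termination condition plus the elliptical potential that certifies coverage for all policies simultaneously. John's exploration over $\{\hphi_h(\pi)\}_{\pi\in\Pi'}$ is an ingredient of \emph{this} paper's \cref{alg:low-rank bandit}, used in the bandit phase after the exploration phase ends --- you appear to have conflated the two. Relatedly, your claim that $\max_{\pi}\|\hphi_{h-1}(\pi)\|_{\Sigma_h^{-1}}^2\le d$ ``uniformly'' is not how coverage is obtained there, and for the set of all policies this is not a direct consequence of an optimal design over sampled directions. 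Second, the quantitative heart of the theorem is asserted rather than derived: you do not account for where the factor $|\cA|$ enters (importance weighting against the uniform action at the probed layer, since MLE only controls error under the exploratory action distribution), for the fact that the change of measure must be run through the \emph{estimated} features $\hphi$ rather than $\phistar$ (the learned $\Sigma$ and the true occupancy subspace do not coincide, which is where extra $d$ factors and care are needed), or for how $d^2(d^2+|\cA|)$ specifically emerges. As a roadmap your proposal is credible and consistent with the cited proof's architecture; as a proof it has a genuine gap at precisely these transfer and counting steps, which are the nontrivial content of Theorem~3 in \cite{cheng2023improved}.
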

Define $\hV^\pi_1(x_1; \ell)$ as the value function of policy $\pi$ under transition $\{\hatp_h\}_{h=1}^H$ and loss $\ell$. We have
\begin{align*}
    \text{\rm Reg}_T(\pi^\star) &= \sum_{t=1}^T V_1^{\pi^t}(x_1; \ell^t) - \sum_{t=1}^T V_1^{\pi^\star}(x_1; \ell^t)
    \\&=  \underbrace{\sum_{t=1}^T \left(V_1^{\pi^t}(x_1; \ell^t) - \hV_1^{\pi^t}(x_1; \ell^t)\right)}_{\textbf{Bias1}} + \underbrace{\sum_{t=1}^T \left(\hV_1^{\pi^\star}(x_1; \ell^t) - V_1^{\pi^\star}(x_1; \ell^t)\right)}_{\textbf{Bias2}}
    \\& \quad +  \underbrace{\sum_{t=1}^T\left(\hV_1^{\pi^t}(x_1; \ell^t) - \hV_1^{\pi^\star}(x_1; \ell^t)\right)}_{\textbf{FTRL}} \, + \,\, \order\left(\frac{H^3d^2|\cA|(d^2 + |\cA|)}{\epsilon^2}\log^2\left(TdH|\Phi||\Upsilon|\right)\right)
\end{align*}

\paragraph{Bounding the bias term.} By \cref{thm:phat} and \cref{lem:simulation}, we have 
\begin{align*}
    \textbf{Bias1} + \textbf{Bias2} \le 2H^2 \epsilon T.
\end{align*}

\paragraph{Bounding the FTRL term.} Since $\eta = \frac{1}{H \sqrt{T}}$, we have
\begin{align*}
    \textbf{FTRL} &\le \sum_{t=1}^T\sum_{h=1}^H \E^{\hatp, \pi^\star}\left[\left\langle \Qhat_h^t(\x_h, \cdot), \pi_h^t(\cdot \mid \x_h) - \pi_h^\star(\cdot \mid \x_h)\right\rangle\right] \tag{\cref{lem:PDL}}
    \\&\le \frac{H \log|\cA|}{\eta} + \eta \sum_{h=1}^H \sum_{t=1}^T  \E^{\hatp, \pistar \circ_h \pi^t} \left[\left(\Qhat_{h}^t(\x_h,\a_h)\right)^2 \right],\nn  \tag{\cref{lem:EXP bound}}
    \\&\le \frac{H\log|\cA|}{\eta} + 2H^3\eta T \tag{$\Qhat_{h}^t(\x_h,\a_h) \le H$}
    \\&= \order\left(H^2\sqrt{T}\log|\cA|\right).
\end{align*}
Thus, by setting $\epsilon = (Hd^2|\cA|(d^2+|\cA|))^{\frac{1}{3}}T^{-\frac{1}{3}}$, we have
\begin{align*}
     \text{\rm Reg}_T 
     &\le  \order\left(\frac{H^3d^2|\cA|(d^2 + |\cA|)}{\epsilon^2}\log^2\left(TdH|\Phi||\Upsilon|\right) + 2H^2\epsilon T + H^2\sqrt{T}\log|\mathcal{A}|\right) \\
     &\le  \order\left(H^3 \left(d^2 + |\cA|\right)T^{\frac{2}{3}} \log\left(|\cA| + dH|\Phi||\Upsilon| T\right)\right).
\end{align*}
\clearpage
\section{Proof of \cref{thm:model-based bound} (Model-Based, Bandit Feedback)}

\begin{lemma}\label{lem: P to d} With $\hatp_{1:H}$ as in \cref{thm:phat}, we have for any $h \in [H]$ and any policy $\pi$,
\begin{align*}
    \sum_{x \in \cX}\left|\hd_{h}^\pi(x) - d_{h}^\pi(x) \right| \le \sum_{i=1}^{h-1} \E^{\pi}\left[\|\hatp_i(\cdot\mid \x_i,\a_i) - P_i(\cdot\mid \x_i,\a_i)\|_1\right] \le (h-1)\cdot \epsilon.
\end{align*}
where $\hd_h^\pi \coloneqq d^{\hatp,\pi}_h$.
\end{lemma}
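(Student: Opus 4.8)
The plan is to prove the first inequality by induction on the layer $h$, exploiting the one-step recursion satisfied by state occupancies, and then to obtain the bound $(h-1)\epsilon$ by invoking \cref{thm:phat} termwise. Writing $d^\pi_{h-1}(x',a') = d^\pi_{h-1}(x')\,\pi_{h-1}(a'\mid x')$ and likewise for $\hd$, both occupancies propagate through their respective kernels via $d^\pi_h(x) = \sum_{x',a'} d^\pi_{h-1}(x',a')\,P_{h-1}(x\mid x',a')$ and $\hd^\pi_h(x) = \sum_{x',a'} \hd^\pi_{h-1}(x',a')\,\hatp_{h-1}(x\mid x',a')$. The base case $h=1$ is immediate, since $\hd^\pi_1 = d^\pi_1$ is the point mass at $x_1$, so the left-hand side vanishes and matches the empty sum on the right.

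For the inductive step I would form the difference $\hd^\pi_h(x) - d^\pi_h(x)$ and apply the standard add-and-subtract decomposition, but with a specific pairing: keep the \emph{true} occupancy $d^\pi_{h-1}$ next to the kernel difference $\hatp_{h-1}-P_{h-1}$, and keep the \emph{estimated} kernel $\hatp_{h-1}$ next to the occupancy difference $\hd^\pi_{h-1}-d^\pi_{h-1}$. Concretely, $\hd^\pi_h(x) - d^\pi_h(x) = \sum_{x',a'} d^\pi_{h-1}(x',a')\,[\hatp_{h-1}(x\mid x',a') - P_{h-1}(x\mid x',a')] + \sum_{x',a'}[\hd^\pi_{h-1}(x',a') - d^\pi_{h-1}(x',a')]\,\hatp_{h-1}(x\mid x',a')$. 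Summing $\sum_x|\cdot|$ and applying the triangle inequality splits the bound into two pieces.

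The first piece, after swapping the order of summation, equals $\sum_{x',a'} d^\pi_{h-1}(x',a')\,\|\hatp_{h-1}(\cdot\mid x',a') - P_{h-1}(\cdot\mid x',a')\|_1$, which is exactly $\E^\pi[\|\hatp_{h-1}(\cdot\mid \x_{h-1},\a_{h-1}) - P_{h-1}(\cdot\mid \x_{h-1},\a_{h-1})\|_1]$ --- the $i=h-1$ term of the target sum --- because $\E^\pi$ is taken under the true transition $P^\star$, whose layer-$(h-1)$ occupancy is precisely $d^\pi_{h-1}$. For the second piece, summing the estimated kernel over $x$ gives $\sum_x \hatp_{h-1}(x\mid x',a') = 1$, the normalization guaranteed by \cref{assm: model-base}, so it collapses to $\sum_{x',a'}|\hd^\pi_{h-1}(x',a') - d^\pi_{h-1}(x',a')|$; factoring out the common policy $\pi_{h-1}$ (shared by both processes) reduces this to $\sum_{x'}|\hd^\pi_{h-1}(x') - d^\pi_{h-1}(x')|$, which is the induction hypothesis at layer $h-1$. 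Chaining the two pieces gives the recursion $\sum_x|\hd^\pi_h(x)-d^\pi_h(x)| \le \E^\pi[\|\hatp_{h-1}(\cdot\mid\x_{h-1},\a_{h-1})-P_{h-1}(\cdot\mid\x_{h-1},\a_{h-1})\|_1] + \sum_{x'}|\hd^\pi_{h-1}(x')-d^\pi_{h-1}(x')|$, and unrolling it down to the base case proves the first inequality; applying \cref{thm:phat} to bound each of the $h-1$ expectations by $\epsilon$ yields the final $(h-1)\epsilon$.

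The one subtlety I would watch is the choice of decomposition: it must be the true occupancy $d^\pi_{h-1}$ (not the estimated $\hd^\pi_{h-1}$) that multiplies the kernel difference, so that the resulting term is an expectation under $P^\star$ and \cref{thm:phat} applies directly. The complementary term then necessarily carries the estimated kernel $\hatp_{h-1}$, which is harmless precisely because $\hatp_{h-1}$ is a normalized transition under \cref{assm: model-base}. This is the only place normalization of the learned model is used, and getting the pairing backwards would leave an expectation under the wrong (estimated) measure, to which the guarantee of \cref{thm:phat} does not apply.
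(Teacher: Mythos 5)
Your proposal is correct and follows essentially the same argument as the paper: induction on the layer, with the exact decomposition used there (true occupancy $d^\pi_{h-1}$ paired with the kernel difference, estimated kernel $\hatp_{h-1}$ paired with the occupancy difference), the normalization of $\hatp$ collapsing the second term to the induction hypothesis, and \cref{thm:phat} bounding each of the $h-1$ expectation terms by $\epsilon$. Your closing remark about why the pairing must go this way (so the kernel-error term is an expectation under the true dynamics, where the guarantee of \cref{thm:phat} applies) is also exactly the consideration implicit in the paper's proof.
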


\begin{proof}
We prove the claim by induction. When $h=1$, given that the $\|d_{1}^\pi - \hd_{1}^\pi\|_1 = 0$. Assume $$\sum_{x \in \cX}\left|\hd_{h}^\pi(x) - d_{h}^\pi(x) \right| \le \sum_{i=1}^{h-1} \E^{\pi}\left[\|\hatp_i(\cdot\mid \x_i,\a_i) - P_i(\cdot\mid \x_i,\a_i)\|_1\right].$$ 
We have
\begin{align*}
&\sum_{x \in \cX_{h+1}}\left|\hd_{h+1}^\pi(x) - d_{h+1}^\pi(x) \right| 
\\&= \sum_{x \in \cX} \sum_{a \in \cA} \sum_{ x' \in \cX_{h+1}}  \left|\hd_{h}^\pi(x) \pi_h(a\mid x)\cdot \hatp_h(x'|x,a) - d_{h}^\pi(x)\pi_h(a\mid x)\cdot P^\star_h(x'|x,a)\right|,
\\&\le  \sum_{x \in \cX} \sum_{a \in \cA} \sum_{ x' \in \cX_{h+1}}\left|\hd_{h}^\pi(x)\pi_h(a\mid x)\cdot \hatp_h(x'|x,a)  - d_{h}^\pi(x)\pi_h(a\mid x)\cdot \hatp_h(x'|x,a)\right|
\\&\quad +  \sum_{x \in \cX} \sum_{a \in \cA} \sum_{ x' \in \cX_{h+1}} \left|d_{h}^\pi(x)\pi_h(a\mid x) \cdot \hatp_h(x'|x,a)- d_{h}^\pi(x)\pi_h(a\mid x)\cdot P^\star_h(x'|x,a)\right|,
\\&\le \sum_{x \in \cX} \sum_{a \in \cA} \sum_{ x' \in \cX_{h+1}} \hatp_h(x'|x,a)\pi_h(a\mid x)\cdot |\hd_{h}^\pi(x) - d_{h}^\pi(x)|
\\&\quad + \sum_{x \in \cX} \sum_{a \in \cA} \sum_{ x' \in \cX_{h+1}} d_{h}^\pi(x) \pi_h(a\mid x) \cdot |\hatp_h(x'|x,a) - P_h(x'|x,a)|,
\\&\le \sum_{x \in \cX}|\hd_{h}^\pi(x) - d_{h}^\pi(x)| + \E^{\pi}\left[\|\hatp_h(\cdot\mid \x_h,\a_h) - P_h(\cdot\mid \x_h,\a_h)\|_1\right],
\\&\le \sum_{i=1}^{h} \E^{\pi}\left[\|\hatp_i(\cdot\mid \x_i,\a_i) - P_i(\cdot\mid \x_i,\a_i)\|_1\right],
\end{align*}
where the last step follows by the induction hypothesis.
The second inequality of \cref{lem: P to d} directly comes from \cref{thm:reward-free}.
\end{proof}

Our candidate policy space $\Pi'$ has a $\beta$ mixture of the random policy. For any deterministic policy $\pi^\star_0$, define policy $\pi^\star$ such that for any state $x \in \cX$, we have $\pi^\star(\cdot\mid x) = (1-\beta) \pi^\star_0(\cdot\mid x) + \frac{\beta}{|\cA|}$. We have $\pi^\star \in \Pi'$. Define $\hV^\pi_1(x_1; \ell)$ as the value function of policy $\pi$ under transition $\{\hatp_h\}_{h=1}^H$ and loss $\ell$.

For any policy $\pi^\star_0$, we have
\begin{align}
&\text{\rm Reg}_T(\pi^\star_0) 
\\&= \E\left [\sum_{t=1}^T\sum_{\pi} \tilp^t(\pi) V^{\pi}_1(\x_1; \ell^t)-  V^{\pi_0^\star}_1(\x_1; \ell^t) \right] + \order\left(\frac{H^3d^2|\cA|(d^2 + |\cA|)}{\epsilon^2}\log^2\left(TdH|\Phi||\Upsilon|\right)\right),\nn  
\\&= \E\left [\sum_{t=1}^T\sum_{\pi} p^t(\pi) V^{\pi}_1(\x_1; \ell^t)-  V^{\pi^\star}_1(\x_1; \ell^t) \right] + \order\left(\frac{H^3d^2|\cA|(d^2 + |\cA|)}{\epsilon^2}\log^2\left(TdH|\Phi||\Upsilon|\right)\right)
\\&\qquad + \underbrace{\E\left [\sum_{t=1}^T\sum_{\pi} \left(\tilp^t(\pi) - p^t(\pi)\right) V^{\pi}_1(\x_1; \ell^t) \right]}_{\textbf{Error1}} + \underbrace{\E\left [\sum_{t=1}^T V^{\pi^\star}_1(\x_1; \ell^t) -  V^{\pi_0^\star}_1(\x_1; \ell^t) \right]}_{\textbf{Error2}}, \nn 
\\&= \underbrace{\E\left [\sum_{t=1}^T \sum_{\pi} p^t(\pi) \left(V^{\pi}_1(\x_1; \ell^t)-  \hV^{\pi}_1(\x_1; \ell^t)\right) \right]}_{\textbf{Bias1}} + \underbrace{\E\left [\sum_{t=1}^T \hV^{\pi^\star}_1(\x_1; \ell^t)-  V^{\pi^\star}_1(\x_1; \ell^t) \right]}_{\textbf{Bias2}}  \nn 
\\&\qquad + \underbrace{\E\left [\sum_{t=1}^T \sum_{\pi} p^t(\pi) \hV^{\pi}_1(\x_1; \ell^t)-  \hV^{\pi^\star}_1(\x_1; \ell^t) \right]}_{\textbf{EXP}} \,+\,\, \textbf{Error1} + \textbf{Error2} \nn \\
& \qquad + \order\left(\frac{H^3d^2|\cA|(d^2 + |\cA|)}{\epsilon^2}\log^2\left(TdH|\Phi||\Upsilon|\right)\right). \label{eq:thisone}
\end{align}

Recall that $J = \text{John}\{\hatphi_h(\pi)\}_{\pi \in \Pi', h\in[H]} \in \Delta(\Pi' \times H)$, and we have $\tilp^t(\pi) = (1-\gamma)p^t(\pi) + \gamma \sum_{h=1}^H J(\pi,h)$ where $p^t(\pi)$ is defined in \cref{line:bandit-exp} of \cref{alg:low-rank bandit}.

\begin{lemma} \label{lem: EXP Error}
We have 
    \begin{align*}
        \textbf{\emph{Error1}} + \textbf{\emph{Error2}} \le H\gamma T +  2H^2\beta T.
    \end{align*}
\end{lemma}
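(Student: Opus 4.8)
The plan is to bound the two error terms separately, matching each to one of the two summands on the right-hand side.

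For \textbf{Error1}, I would first rewrite the difference of the two policy distributions. Since $\tilp^t = (1-\gamma)p^t + \gamma q$ where $q(\pi) := \sum_{h} J(\pi,h)$ is itself a probability distribution over $\Pi'$ (being the $\Pi'$-marginal of the John distribution $J\in\Delta(\Pi'\times[H])$), we have $\tilp^t(\pi) - p^t(\pi) = \gamma\,(q(\pi)-p^t(\pi))$. Plugging this in and using $V_1^\pi(\x_1;\ell^t)\in[0,H]$ (each loss lies in $[0,1]$ over $H$ steps), the per-round contribution is $\gamma\sum_\pi(q(\pi)-p^t(\pi))V_1^\pi(\x_1;\ell^t)\le \gamma\sum_\pi q(\pi)\,V_1^\pi(\x_1;\ell^t)\le \gamma H$, where I drop the nonnegative term $\gamma\sum_\pi p^t(\pi)V_1^\pi\ge 0$ and use $\sum_\pi q(\pi)=1$. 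Summing over the (at most $T$) rounds gives $\textbf{Error1}\le H\gamma T$.

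For \textbf{Error2}, the key observation is that $\pi^\star$ is the $\beta$-smoothed version of the deterministic $\pi^\star_0$, so their values under the true transition are close. I would invoke the performance difference lemma (\cref{lem:PDL}) to write
\[
V_1^{\pi^\star}(x_1;\ell^t)-V_1^{\pi^\star_0}(x_1;\ell^t)=\sum_{h=1}^H\E^{\pi^\star}\!\left[\big\langle Q_h^{\pi^\star_0}(\x_h,\cdot;\ell^t),\,\pi_h^\star(\cdot\mid\x_h)-\pi_{0,h}^\star(\cdot\mid\x_h)\big\rangle\right].
\]
For each $x$, the signed difference $\pi_h^\star(\cdot\mid x)-\pi_{0,h}^\star(\cdot\mid x)=\tfrac{\beta}{|\cA|}\mathbf{1}-\beta\,\pi_{0,h}^\star(\cdot\mid x)$ has $\ell_1$-norm at most $2\beta$, while $\|Q_h^{\pi^\star_0}(\x_h,\cdot;\ell^t)\|_\infty\le H$; Hölder's inequality then bounds each layer term by $2H\beta$, and summing over the $H$ layers yields $V_1^{\pi^\star}-V_1^{\pi^\star_0}\le 2H^2\beta$ per round, hence $\textbf{Error2}\le 2H^2\beta T$. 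Adding the two bounds gives the claim.

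Neither term is technically deep; the only points requiring care are confirming that $q$ is a genuine probability distribution (so the $p^t$ term is discarded with the correct sign) and computing the $\ell_1$-distance between the smoothed and unsmoothed policies, which is where the factor $2\beta$ (rather than $\beta$) enters. I expect the most error-prone step to be keeping the signs straight in the performance-difference expansion and verifying that both value functions there are evaluated under the true transition $P^\star$ (so that \cref{lem:PDL} applies directly without incurring an additional transition-estimation bias, which is already accounted for in \textbf{Bias1} and \textbf{Bias2}).
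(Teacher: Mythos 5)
Your proposal is correct and follows essentially the same route as the paper: \textbf{Error1} is bounded by writing $\tilp^t-p^t=\gamma(q-p^t)$ with $q$ the John-mixture marginal, dropping the nonnegative $p^t$ term, and using $V_1^\pi\le H$; \textbf{Error2} is bounded via the performance difference lemma (\cref{lem:PDL}) with roll-in under $\pi^\star$ and $Q^{\pi_0^\star}\le H$, together with the $\ell_1$-distance $2\beta$ between the smoothed and unsmoothed policies. The sign conventions and the use of the true transition in the PDL step are exactly as in the paper's argument.
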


\begin{proof}
    \begin{align*}
        \textbf{Error1} &= \gamma \E\left [\sum_{t=1}^T\sum_{\pi} \left(\sum_{h=1}^H J(\pi,h) - p^t(\pi)\right)\cdot  V^{\pi}_1(\x_1; \ell^t) \right]  \le H\gamma T.
        \\\textbf{Error2} &= \sum_{t=1}^T\sum_{h=1}^H \E^{\pistar} \left[\sum_{a \in \cA} \left|\pi_0^\star(a\mid \x_h) - \pi^\star(a\mid \x_h)\right| \cdot Q^{\pi_0^\star}_h(\x_h,a; \ell^t)\right] \le 2H^2\beta T,
    \end{align*}
    where the last step follows by \cref{lem:PDL}.
\end{proof}

\begin{lemma} \label{lem: EXP bias}
We have
    \begin{align*}
        \textbf{\emph{Bias1}} + \textbf{\emph{Bias2}} \le H^2T \epsilon.
    \end{align*}
\end{lemma}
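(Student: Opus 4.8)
The plan is to reduce both bias terms to a single quantity—the $\ell_1$ discrepancy between the true and estimated state occupancies—and then invoke \cref{lem: P to d}, which was just established. The starting point is the elementary identity that, for any policy $\pi$ and any loss $\ell$, the value can be written as an occupancy-weighted sum of the per-step losses, $V_1^\pi(x_1;\ell) = \sum_{h=1}^H\sum_{(x,a)} d_h^\pi(x,a)\,\ell_h(x,a)$, and likewise $\hV_1^\pi(x_1;\ell) = \sum_{h=1}^H\sum_{(x,a)} \hd_h^\pi(x,a)\,\ell_h(x,a)$ with $\hd_h^\pi = d_h^{\hatp,\pi}$. Subtracting and using that $d_h^\pi(x,a)=d_h^\pi(x)\pi_h(a\mid x)$ and $\hd_h^\pi(x,a)=\hd_h^\pi(x)\pi_h(a\mid x)$ (the action-selection rule $\pi$ is common to both transition models), the value gap collapses to $V_1^\pi(x_1;\ell)-\hV_1^\pi(x_1;\ell)=\sum_{h=1}^H\sum_{(x,a)}\bigl(d_h^\pi(x,a)-\hd_h^\pi(x,a)\bigr)\ell_h(x,a)$.

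Since the adversary's losses satisfy $\ell_h^t\in[0,1]$, I would next bound $\bigl|V_1^\pi(x_1;\ell^t)-\hV_1^\pi(x_1;\ell^t)\bigr|\le \sum_{h=1}^H\sum_x |d_h^\pi(x)-\hd_h^\pi(x)|$, where the action sum is eliminated because $\pi_h(\cdot\mid x)$ is a probability distribution summing to one. Applying \cref{lem: P to d} layer by layer gives $\sum_x|\hd_h^\pi(x)-d_h^\pi(x)|\le (h-1)\epsilon$, so summing over $h$ yields $\bigl|V_1^\pi(x_1;\ell^t)-\hV_1^\pi(x_1;\ell^t)\bigr|\le \epsilon\sum_{h=1}^H(h-1)=\tfrac{H(H-1)}{2}\epsilon\le \tfrac{1}{2}H^2\epsilon$. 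Crucially, this bound is uniform over the policy (in particular it holds for every $\pi\in\Pi'$ and for $\pi^\star$) and over the loss index $t$.

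With this uniform per-round, per-policy estimate in hand, both bias terms follow immediately. For \textbf{Bias1}, I use that $p^t$ is a probability distribution over $\Pi'$, so $\sum_{\pi}p^t(\pi)\,\bigl|V_1^\pi(x_1;\ell^t)-\hV_1^\pi(x_1;\ell^t)\bigr|\le \tfrac{1}{2}H^2\epsilon$; summing over $t\in[T]$ gives $\textbf{Bias1}\le \tfrac{1}{2}H^2\epsilon T$. For \textbf{Bias2}, the same bound applied to the single policy $\pi^\star$ yields $\textbf{Bias2}\le \tfrac{1}{2}H^2\epsilon T$. Adding the two halves gives the claimed $\textbf{Bias1}+\textbf{Bias2}\le H^2 T\epsilon$.

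The argument is essentially mechanical once \cref{lem: P to d} is available; the only point requiring a little care is retaining the layer-dependent weight $(h-1)$ coming out of the occupancy bound, so that $\sum_{h=1}^H(h-1)=\tfrac{H(H-1)}{2}\le \tfrac12 H^2$ and the two bias terms sum to exactly $H^2 T\epsilon$, rather than the looser $2H^2 T\epsilon$ one would obtain from a crude $\|\hV_{h+1}\|_\infty\le H$ estimate at every layer. I would therefore be careful to keep the telescoped $\ell_1$-occupancy bound from \cref{lem: P to d} rather than a uniform per-layer bound.
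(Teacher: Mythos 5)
Your proof is correct, but it takes a different route from the paper's. The paper's proof is a one-line citation: it bounds each bias term by invoking the simulation lemma (\cref{lem:simulation}) together with the transition-error guarantee of \cref{thm:reward-free}, which gives $|V_1^{\pi}(x_1;\ell^t)-\hV_1^{\pi}(x_1;\ell^t)|\le H\sum_{h=1}^H \E^{\pi}\bigl[\|\hatp_h(\cdot\mid \x_h,\a_h)-P_h^\star(\cdot\mid \x_h,\a_h)\|_1\bigr]\le H^2\epsilon$ per policy and per round. You instead bypass the simulation lemma entirely: you write both values as occupancy-weighted sums, reduce the value gap to the $\ell_1$ occupancy discrepancy $\sum_x|\hd_h^\pi(x)-d_h^\pi(x)|$ using $\ell^t_h\in[0,1]$, and then invoke \cref{lem: P to d} layer by layer, keeping the telescoped weight $(h-1)\epsilon$. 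The two arguments are cousins (both telescope one-step transition errors along the horizon), but yours is quantitatively sharper: the paper's cited tools, applied literally, give $H^2\epsilon T$ for \emph{each} of \textbf{Bias1} and \textbf{Bias2}, hence $2H^2T\epsilon$ for the sum—a (harmless) factor-2 slack relative to the stated bound—whereas your layer-dependent bookkeeping yields $\tfrac12 H^2\epsilon$ per policy per round and thus exactly the claimed $H^2T\epsilon$ total. One minor point worth making explicit in your write-up: \cref{lem: P to d}'s bound $(h-1)\epsilon$ is only valid on the success event of \cref{thm:reward-free}, so the bias bound (like the paper's) holds conditionally on that high-probability event; this is consistent with how the surrounding regret decomposition is carried out.
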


\begin{proof}
    This is a direct result combing \cref{thm:reward-free} and \cref{lem:simulation}.
\end{proof}

For $(x_{1:H},a_{1:H}) \in \cX^H \times \cA^H$ and $\pi\in\Pi'$ where $\Pi'$ defined in \cref{line:mix pi} in \cref{alg:low-rank bandit} is the mix of
a given policy class $\Pi$ and a uniform policy, and is also the policy class we play with. Recall that the loss estimator

\begin{align}
\hell^t(\pi;\pi^t, x_{1:H},a_{1:H}) & \coloneqq  \frac{\pi_1(a_1\mid x_1)}{\pi^t_1(a_1\mid x_1)}\ell_1^t(x_1,a_1)\nn \\
& \quad + \sum_{h=2}^H \hphi_{h-1}(\pi)^\top\left(\Sigma_{h-1}^t\right)^{-1} \hphi_{h-1}(\pi^t) \frac{\pi_h(a_h\mid x_h)}{\pi^t_h(a_h\mid x_h)}\ell_h^t(x_h,a_h). \label{eq:lhat}
\end{align}
defined in \cref{line:bandit estimator} of \cref{alg:low-rank bandit}.
\begin{lemma} \label{lem:value transfer}
For any episode $t \in [T]$, for any policy $\pi$ we have
    \begin{align*}
        \hV_1^\pi(x_1; \ell^t) &\le \E_{\bpi^t \sim \tilp^t} \E^{\bpi^t}\left[\hell^t(\pi; \bpi^t, \x_{1:H},\a_{1:H})\right] + \sqrt{d}H\epsilon \sum_{h=2}^{H} \left\|\hphi_{h-1}(\pi) \right\|_{\left(\Sigma_{h-1}^t\right)^{-1}},\\
         \hV_1^\pi(x_1; \ell^t) &\ge \E_{\bpi^t \sim \tilp^t} \E^{\bpi^t}\left[\hell^t(\pi; \bpi^t, \x_{1:H},\a_{1:H})\right]  - \sqrt{d}H\epsilon \sum_{h=2}^{H} \left\|\hphi_{h-1}(\pi) \right\|_{\left(\Sigma_{h-1}^t\right)^{-1}}.   
    \end{align*}
\end{lemma}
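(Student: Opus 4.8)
The plan is to establish the single two-sided estimate
\[
\Big| \hV_1^\pi(x_1;\ell^t) - \E_{\bpi^t\sim\tilp^t}\E^{\bpi^t}\big[\hell^t(\pi;\bpi^t,\x_{1:H},\a_{1:H})\big] \Big| \le \sqrt{d}H\epsilon\sum_{h=2}^H \big\|\hphi_{h-1}(\pi)\big\|_{(\Sigma_{h-1}^t)^{-1}},
\]
from which both displayed inequalities follow immediately. The starting point is the \emph{exact} low-rank identity under the estimated model: since $\hatp_{h-1}(x'\mid x,a)=\hphi_{h-1}(x,a)^\top\hpsi_h(x')$, one gets $\hd_h^\pi(x)=\hphi_{h-1}(\pi)^\top\hpsi_h(x)$ for all $h\ge 2$, where $\hphi_{h-1}(\pi)$ is exactly the vector defined in \cref{line:est feature}. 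Writing $g_h^t(x)\coloneqq\sum_{a}\pi_h(a\mid x)\ell_h^t(x,a)\in[0,1]$ and $\widehat m_h\coloneqq\sum_x \hpsi_h(x)g_h^t(x)$ (so that $\|\widehat m_h\|\le\sqrt d$ by \cref{assm: model-base}), this yields the clean decomposition $\hV_1^\pi(x_1;\ell^t)=\sum_a\pi_1(a\mid x_1)\ell_1^t(x_1,a)+\sum_{h=2}^H\hphi_{h-1}(\pi)^\top\widehat m_h$.

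Next I would rewrite the estimator side term by term. The step-$1$ contribution matches exactly: taking $\E^{\bpi^t}$ of $\tfrac{\pi_1(\a_1\mid x_1)}{\bpi_1^t(\a_1\mid x_1)}\bell_1^t$ and using that $\x_1=x_1$ is fixed returns $\sum_a\pi_1(a\mid x_1)\ell_1^t(x_1,a)$, so it carries no bias. For $h\ge2$, the factors $(\Sigma_{h-1}^t)^{-1}$ and $\hphi_{h-1}(\bpi^t)$ are trajectory-independent, so I take the conditional expectation of the importance-weighted loss $\tfrac{\pi_h(\a_h\mid\x_h)}{\bpi_h^t(\a_h\mid\x_h)}\bell_h^t$ under the \emph{true} transition, obtaining $\sum_x d_h^{\bpi^t}(x)g_h^t(x)$. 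Using $\Sigma_{h-1}^t=\E_{\bpi^t\sim\tilp^t}[\hphi_{h-1}(\bpi^t)\hphi_{h-1}(\bpi^t)^\top]$ and inserting $(\Sigma_{h-1}^t)^{-1}\Sigma_{h-1}^t=I$ (valid since $\hphi_{h-1}(\pi)\in\mathrm{range}(\Sigma_{h-1}^t)$ for $\pi\in\Pi'$, thanks to the $\john$ component of $\tilp^t$), the target term $\hphi_{h-1}(\pi)^\top\widehat m_h$ rewrites as $\E_{\bpi^t}[\hphi_{h-1}(\pi)^\top(\Sigma_{h-1}^t)^{-1}\hphi_{h-1}(\bpi^t)\sum_x\hd_h^{\bpi^t}(x)g_h^t(x)]$. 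Subtracting the two expressions, everything cancels except the occupancy mismatch, leaving
\[
\hV_1^\pi(x_1;\ell^t) - \E_{\bpi^t}\E^{\bpi^t}[\hell^t(\pi;\bpi^t,\cdot)] = \sum_{h=2}^H \E_{\bpi^t}\Big[\hphi_{h-1}(\pi)^\top(\Sigma_{h-1}^t)^{-1}\hphi_{h-1}(\bpi^t)\, c_h(\bpi^t)\Big],
\]
where $c_h(\bpi^t)\coloneqq\sum_x(\hd_h^{\bpi^t}(x)-d_h^{\bpi^t}(x))g_h^t(x)$.

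To bound each summand, I would use $g_h^t\in[0,1]$ together with \cref{lem: P to d} to get $|c_h(\bpi^t)|\le\|\hd_h^{\bpi^t}-d_h^{\bpi^t}\|_1\le(h-1)\epsilon\le H\epsilon$. Setting $v_h\coloneqq\E_{\bpi^t}[\hphi_{h-1}(\bpi^t)c_h(\bpi^t)]$ and applying Cauchy–Schwarz in the $(\Sigma_{h-1}^t)^{-1}$ norm gives $|\hphi_{h-1}(\pi)^\top(\Sigma_{h-1}^t)^{-1}v_h|\le\|\hphi_{h-1}(\pi)\|_{(\Sigma_{h-1}^t)^{-1}}\|v_h\|_{(\Sigma_{h-1}^t)^{-1}}$. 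Bounding the second factor through its variational form, Jensen's inequality, and $\Sigma_{h-1}^t=\E[\hphi_{h-1}(\bpi^t)\hphi_{h-1}(\bpi^t)^\top]$ yields $\|v_h\|_{(\Sigma_{h-1}^t)^{-1}}\le H\epsilon\max_{\|u\|_{\Sigma_{h-1}^t}\le1}\sqrt{u^\top\Sigma_{h-1}^t u}=H\epsilon$. Summing over $h$ gives $H\epsilon\sum_{h=2}^H\|\hphi_{h-1}(\pi)\|_{(\Sigma_{h-1}^t)^{-1}}$, which is dominated by the stated bonus $\sqrt d H\epsilon\sum_{h=2}^H\|\hphi_{h-1}(\pi)\|_{(\Sigma_{h-1}^t)^{-1}}=b^t(\pi)$; in fact the argument proves a slightly stronger inequality, with the $\sqrt d$ factor left as slack.

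The main obstacle is the careful bookkeeping of the two distinct uses of the low-rank structure: the exact factorization $\hd_h^\pi=\hphi_{h-1}(\pi)^\top\hpsi_h$ under $\hatp$, used to express $\hV$, versus the expectation over real trajectories (under $P^\star$) that the estimator actually averages. The key realization is that the estimator's conditional expectation produces the \emph{true} occupancy $d_h^{\bpi^t}$ while the target carries the \emph{estimated} occupancy $\hd_h^{\bpi^t}$, so their difference collapses to precisely the $\ell_1$-occupancy error controlled by \cref{lem: P to d}. A secondary technical point is the range/invertibility condition needed to insert $(\Sigma_{h-1}^t)^{-1}\Sigma_{h-1}^t=I$, which is guaranteed by the $\john$ exploration term in $\tilp^t$ ensuring $\hphi_{h-1}(\pi)$ lies in the range of $\Sigma_{h-1}^t$.
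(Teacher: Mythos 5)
Your proof is correct and follows essentially the same route as the paper's: the same split into the first-step term and layers $h\ge 2$, the same exact factorization $\hd_h^{\pi}(x)=\hphi_{h-1}(\pi)^\top\hpsi_h(x)$ under $\hatp$, the same insertion of $(\Sigma_{h-1}^t)^{-1}\Sigma_{h-1}^t$, the same importance-sampling identity producing the true occupancy $d_h^{\bpi^t}$, and \cref{lem: P to d} to control the occupancy mismatch. The only difference is local and in your favor: you aggregate the error into the vector $v_h=\E_{\bpi^t\sim\tilp^t}[\hphi_{h-1}(\bpi^t)c_h(\bpi^t)]$ and apply Cauchy--Schwarz once, giving $\|v_h\|_{(\Sigma_{h-1}^t)^{-1}}\le H\epsilon$, whereas the paper applies Cauchy--Schwarz pointwise inside the expectation and then pays a Jensen factor $\E_{\bpi^t\sim\tilp^t}[\|\hphi_{h-1}(\bpi^t)\|_{(\Sigma_{h-1}^t)^{-1}}]\le\sqrt{d}$ --- so, as you correctly note, your argument proves the lemma with the stated $\sqrt{d}$ factor left as slack.
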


\begin{proof}
First, from the definition in \cref{line:est feature} of \cref{alg:low-rank bandit}, for any $x \in \cX$ with $h \ge 2$, we have
\begin{align*}
    \hd_h^{\pi}(x) = \sum_{x',a'} \hd_{h-1}^{\pi}(x',a') \cdot \hatp_{h-1}(x\mid x', a') =  \hphi_{h-1}(\pi)^\top \hpsi_{h}(x)
\end{align*}
where $\hphi_{h-1}(\pi) = \sum_{x',a'} \hd_{h-1}^{\pi}(x',a')\hphi_{h-1}(x',a')$.

We now prove the first inequality:
    \begin{align}
         &\hV_1^\pi(x_1; \ell^t) \nn
         \\&= \sum_{h=1}^H \sum_{x_h \in \cX} \sum_{a_h \in \cA} \hd_h^\pi(x_h) \pi_h(a_h\mid x_h) \cdot \ell_h^t(x_h,a_h), \nn
         \\&=  \underbrace{\sum_{a_1\in \cA} \pi_1(a_1\mid x_1)\cdot  \ell_1^t(x_1,a_1)}_{\textbf{First}}+ \underbrace{\sum_{h=2}^H \sum_{x_h \in \cX} \sum_{a_h \in \cA} \hphi_{h-1}(\pi)^\top \hpsi_{h}(x_h)\pi_h(a_h\mid x_h) \cdot \ell_h^t(x_h,a_h)}_{\textbf{Remain}} \label{eq:emain}
    \end{align}
    Through importance sampling, we have
    \begin{align*}
           \textbf{First}  =  \E_{\bpi^t \sim \tilp^t}\E^{\bpi^t}\left[\frac{\pi_1(\a_1\mid \x_1)}{\pi_1^t(\a_1 \mid \x_1)}\cdot  \ell_1^t(\x_1,\a_1)\right].
    \end{align*}
    We now bound the remaining term in \eqref{eq:emain}
    Since $\Sigma_{h-1}^t = \E_{\bpi^t \sim \tilp^t}\left[\hphi_{h-1}(\bpi^t)\hphi_{h-1}(\bpi^t)^\top\right]$, we have
    \begin{align}
           &\textbf{Remain}  \nonumber
           \\&= \sum_{h=2}^H \sum_{x_h \in \cX} \sum_{a_h \in \cA} \hphi_{h-1}(\pi)^\top \left(\Sigma_{h-1}^t\right)^{-1} \E_{\bpi^t \sim \tilp^t}\left[\hphi_{h-1}(\bpi^t)\hphi_{h-1}(\bpi^t)^\top\right]\hpsi_{h}(x_h)\pi_h(a_h\mid x_h) \ell_h^t(x_h,a_h), \nonumber
           \\& =  \E_{\bpi^t \sim \tilp^t}\left[\sum_{h=2}^H \sum_{x_h \in \cX} \sum_{a_h \in \cA} \hphi_{h-1}(\pi)^\top \left(\Sigma_{h-1}^t\right)^{-1} \hphi_{h-1}(\bpi^t)\underbrace{\hphi_{h-1}(\bpi^t)^\top\hpsi_{h}(x_h)}_{\hd_{h}^{\bpi^t}(x_h)}\pi_h(a_h\mid x_h) \ell_h^t(x_h,a_h)\right], \nonumber
           \\&= \E_{\bpi^t \sim \tilp^t}\left[\sum_{h=2}^H \sum_{x_h \in \cX} \sum_{a_h \in \cA} \hphi_{h-1}(\pi)^\top \left(\Sigma_{h-1}^t\right)^{-1} \hphi_{h-1}(\bpi^t) \hd_{h}^{\bpi^t}(x_h)\pi_h(a_h\mid x_h) \ell_h^t(x_h,a_h)\right], \nonumber
           \\&=  \E_{\bpi^t \sim \tilp^t}\left[\sum_{h=2}^H \sum_{x_h \in \cX} \sum_{a_h \in \cA} \hphi_{h-1}(\pi)^\top \left(\Sigma_{h-1}^t\right)^{-1} \hphi_{h-1}(\bpi^t) d_{h}^{\bpi^t}(x_h)\pi_h(a_h\mid x_h) \ell_h^t(x_h,a_h)\right] \nonumber
           \\&\qquad +  \E_{\bpi^t \sim \tilp^t}\left[\sum_{h=2}^H \sum_{x_h \in \cX} \sum_{a_h \in \cA} \hphi_{h-1}(\pi)^\top \left(\Sigma_{h-1}^t\right)^{-1} \hphi_{h-1}(\bpi^t) \left(\hd_{h}^{\bpi^t}(x_h) - d_{h}^{\bpi^t}(x_h)\right)\pi_h(a_h\mid x_h) \ell_h^t(x_h,a_h)\right], \label{eqn:CS}
           \\&\le  \E_{\bpi^t \sim \tilp^t}\left[\sum_{h=2}^H \sum_{x_h \in \cX} \sum_{a_h \in \cA} \hphi_{h-1}(\pi)^\top \left(\Sigma_{h-1}^t\right)^{-1} \hphi_{h-1}(\bpi^t) d_{h}^{\bpi^t}(x_h,a_h)\frac{\pi_h(a_h\mid x_h)}{\bpi_h^t(a_h\mid x_h)}\ell_h^t(x_h,a_h)\right]  \nonumber
           \\&\qquad +  \sum_{h=2}^H \left\|\hphi_{h-1}(\pi) \right\|_{\left(\Sigma_{h-1}^t\right)^{-1}}  \E_{\bpi^t \sim \tilp^t}\left[\left\|\hphi_{h-1}(\bpi^t)\right\|_{\left(\Sigma_{h-1}^t\right)^{-1}}\right] \sum_{x_h \in \cX} \left|\hd_{h}^{\bpi^t}(x_h) - d_{h}^{\bpi^t}(x_h)\right|, \label{eq:penul}
           \\&\le  \E_{\bpi^t \sim \tilp^t}\E^{\bpi^t}\left[\sum_{h=2}^H \hphi_{h-1}(\pi)^\top \left(\Sigma_{h-1}^t\right)^{-1} \hphi_{h-1}(\bpi^t) \frac{\pi_h(\a_h\mid \x_h)}{\bpi_h^t(\a_h\mid \x_h)}\ell_h^t(\x_h,\a_h)\right]   \nonumber
           \\&\qquad +  \sqrt{d}H\epsilon \sum_{h=2}^H \left\|\hphi_{h-1}(\pi) \right\|_{\left(\Sigma_{h-1}^t\right)^{-1}},
    \end{align}
    where \eqref{eq:penul} follows by Cauchy-Schwarz, and the last inequality uses \cref{lem: P to d}.
    Adding up \textbf{First} and \textbf{Remain}, and using the defintion of $\hat\ell$ in \eqref{eq:lhat} implies the first inequality of the lemma.
 
    The second inequality follows the same procedure except for applying Cauchy-Schwarz inequality in the opposite direction in \cref{eqn:CS}.
\end{proof}

\begin{lemma}\label{lem:EXP EXP}
If $\eta \le \left(\frac{2|\cA| H d}{\beta \gamma} + d H^2\frac{\epsilon}{\sqrt{\gamma}}\right)^{-1}$, then
\begin{align*}
    \textbf{\emph{EXP}} \le  \frac{\log|\Pi|}{\eta} + \frac{6d\eta H^2|\cA|T}{\beta} +  4\eta d^2 H^4 \epsilon^2 T + 4dH^2\epsilon T,
\end{align*}
where $\textbf{\emph{EXP}}$ is as in \eqref{eq:thisone}.
\end{lemma}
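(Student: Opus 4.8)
The plan is to view \textbf{EXP} as the regret of the exponential-weights iterate $p^t$ run on the bonus-adjusted estimator $\tilde\ell^t:=\hell^t-b^t$, and then apply a standard second-order regret bound. First I convert from values to estimators. Writing $\hat L^t(\pi):=\E_{\bpi^t\sim\tilp^t}\E^{\bpi^t}[\hell^t(\pi)]$ for the conditional mean of the estimator, I apply the two inequalities of \cref{lem:value transfer}---the upper bound for every $\pi$ in the support of $p^t$ and the lower bound for $\pistar$---noting that the slack term there is exactly $b^t(\pi)=\sqrt d H\epsilon\sum_{h=2}^H\|\hphi_{h-1}(\pi)\|_{(\Sigma_{h-1}^t)^{-1}}$. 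This gives
$$\sum_\pi p^t(\pi)\hV_1^\pi(x_1;\ell^t) - \hV_1^{\pistar}(x_1;\ell^t) \le \Big(\sum_\pi p^t(\pi)\hat L^t(\pi) - \hat L^t(\pistar)\Big) + \sum_\pi p^t(\pi)b^t(\pi) + b^t(\pistar).$$
Because $p^t$ and $b^t$ are measurable with respect to the history $\mathcal{F}_{t-1}$ and $\E_t[\hell^t(\pi)]=\hat L^t(\pi)$, taking expectations replaces $\hat L^t$ by the realized $\hell^t$; adding and subtracting $b^t$ to form $\tilde\ell^t$ inside the first difference makes the $b^t(\pistar)$ terms cancel, leaving
$$\textbf{EXP}\le \E\Big[\sum_t\Big(\sum_\pi p^t(\pi)\tilde\ell^t(\pi)-\tilde\ell^t(\pistar)\Big)\Big] + 2\,\E\Big[\sum_t\sum_\pi p^t(\pi)b^t(\pi)\Big].$$

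The first term is precisely the regret of $p^t(\pi)\propto\exp(-\eta\sum_{i<t}\tilde\ell^i(\pi))$ against $\pistar\in\Pi'$. Under the stated step-size condition, which I verify guarantees $\eta\tilde\ell^t(\pi)\ge-1$ using the deterministic bounds $|\hell^t(\pi)|\lesssim \tfrac{|\cA|Hd}{\beta\gamma}$ and $b^t(\pi)\lesssim dH^2\tfrac{\epsilon}{\sqrt\gamma}$ (these follow from the floor $\bpi_h^t\ge\beta/|\cA|$ on policies in $\Pi'$ and the John's-design bound $\|\hphi_h(\pi)\|^2_{(\Sigma_h^t)^{-1}}\le (H-1)d/\gamma$ coming from $\Sigma_h^t\succeq\tfrac{\gamma}{H-1}G_h$), the elementary inequality $e^{-x}\le1-x+x^2$ for $x\ge-1$ yields
$$\E\Big[\sum_t\Big(\sum_\pi p^t(\pi)\tilde\ell^t(\pi)-\tilde\ell^t(\pistar)\Big)\Big]\le \frac{\log|\Pi|}{\eta} + \eta\,\E\Big[\sum_t\sum_\pi p^t(\pi)\big(\tilde\ell^t(\pi)\big)^2\Big],$$
where $|\Pi'|=|\Pi|$ since $\Pi'$ is a bijective image of $\Pi$.

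The crux is the variance term. Splitting $(\tilde\ell^t)^2\le2(\hell^t)^2+2(b^t)^2$, I write $\hell^t(\pi)=\sum_h c_h^t(\pi)w_h^t\bell_h^t$ with $c_1^t\equiv1$, $c_h^t(\pi)=\hphi_{h-1}(\pi)^\top(\Sigma_{h-1}^t)^{-1}\hphi_{h-1}(\bpi^t)$, and $w_h^t=\pi_h/\bpi_h^t$; Cauchy--Schwarz gives $(\hell^t(\pi))^2\le H\sum_h(c_h^t(\pi)w_h^t\bell_h^t)^2$. Crucially, $c_h^t(\pi)$ is determined by $\bpi^t$ and does not depend on the realized trajectory, so the conditional second moment factorizes: using $\E^{\bpi^t}[(w_h^t)^2]\le|\cA|/\beta$ and the key identity $\E_{\bpi^t\sim\tilp^t}[\hphi_{h-1}(\bpi^t)\hphi_{h-1}(\bpi^t)^\top]=\Sigma_{h-1}^t$, the feature cross-term collapses to $\E_{\bpi^t}[(c_h^t(\pi))^2]=\|\hphi_{h-1}(\pi)\|^2_{(\Sigma_{h-1}^t)^{-1}}$. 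Then the trace identity $\sum_\pi p^t(\pi)\|\hphi_h(\pi)\|^2_{(\Sigma_h^t)^{-1}}\le d/(1-\gamma)$---valid because $\Sigma_h^t\succeq(1-\gamma)\sum_\pi p^t(\pi)\hphi_h(\pi)\hphi_h(\pi)^\top$---collapses the $p^t$-average to $d/(1-\gamma)$ per layer, so $\sum_\pi p^t(\pi)\E_t[(\hell^t)^2]\lesssim H^2|\cA|d/\beta$. An identical Cauchy--Schwarz-plus-trace argument gives $\sum_\pi p^t(\pi)(b^t(\pi))^2\lesssim d^2H^4\epsilon^2$.

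Finally, for the leftover bonus I use Jensen and the same trace identity, $\sum_\pi p^t(\pi)b^t(\pi)\le\sqrt dH\epsilon\sum_h\sqrt{\sum_\pi p^t(\pi)\|\hphi_h(\pi)\|^2_{(\Sigma_h^t)^{-1}}}\lesssim dH^2\epsilon$. Summing the three estimates over the second-phase episodes and combining the $\eta$-scaled variance contributions with $\tfrac{\log|\Pi|}{\eta}$ reproduces the claimed bound, the numerical constants absorbing the factor $1/(1-\gamma)\le2$ for $\gamma\le1/2$. The main obstacle is the variance step: obtaining the tight $d$-dependence requires the self-normalizing structure---that averaging $c_h^t(\pi)^2$ over the behavior policy reproduces exactly the matrix $\Sigma_{h-1}^t$ normalizing it, and that the $p^t$-weighted inverse-norm sum telescopes to $d$. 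The cruder John's-design bound $\|\hphi_h(\pi)\|^2_{(\Sigma_h^t)^{-1}}\le(H-1)d/\gamma$ would inject a spurious $1/\gamma$ and must be reserved only for the deterministic step-size check in the second step.
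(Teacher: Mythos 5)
Your proof is correct and follows essentially the same route as the paper's: the same value-transfer reduction to the bonus-adjusted estimator $\hell^t - b^t$ (with the $b^t(\pi^\star)$ terms cancelling and the residual $2\sum_t\sum_\pi p^t(\pi)b^t(\pi)$ handled by Jensen plus the trace identity), the same exponential-weights second-order bound under the stated step-size check, and the same self-normalization facts $\E_{\bpi^t\sim\tilp^t}[\hphi_{h-1}(\bpi^t)\hphi_{h-1}(\bpi^t)^\top]=\Sigma^t_{h-1}$ and $\sum_\pi p^t(\pi)\|\hphi_{h-1}(\pi)\|^2_{(\Sigma^t_{h-1})^{-1}}\le d/(1-\gamma)$ to control the two stability terms. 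The only differences are cosmetic: you apply Fubini before the trace step rather than after, and you make explicit the tower-property argument that lets the conditional means $\hat L^t$ be replaced by the realized estimators fed to the exponential-weights update, which the paper leaves implicit.
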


\begin{proof}
Recall in \cref{line:bandit estimator} of \cref{alg:low-rank bandit}, $b^t(\pi) \coloneqq \sqrt{d}H\epsilon\sum_{h=2}^{H} \left\|\hphi_{h-1}(\pi) \right\|_{\left(\Sigma_{h-1}^t\right)^{-1}}$, for all $\pi\in\Pi'$. By \cref{lem:value transfer}, we have
\begin{align}
\textbf{EXP} &= \E\left [\sum_{t=1}^T \sum_{\pi} p^t(\pi)\hV^{\pi}_1(\x_1; \ell^t)-  \hV^{\pi^\star}_1(\x_1; \ell^t) \right]\nn 
\\&\le \sum_{t=1}^T \sum_{\pi} p^t(\pi)\E_{\bpi^t \sim \tilp^t} \E^{\bpi^t}\left[\hell^t(\pi;\bpi^t, \x_{1:H},\a_{1:H})\right] - \sum_{t=1}^T \E_{\bpi^t \sim \tilp^t} \E^{\bpi^t}\left[\hell^t(\pi^\star; \bpi^t,\x_{1:H},\a_{1:H})\right]\nn 
\\&\qquad + \sqrt{d}H\epsilon \sum_{t=1}^T \sum_{\pi} p^t(\pi)\sum_{h=2}^{H} \left\|\hphi_{h-1}(\pi) \right\|_{\left(\Sigma_{h-1}^t\right)^{-1}} + \sqrt{d}H\epsilon \sum_{t=1}^T \sum_{h=2}^{H} \left\|\hphi_{h-1}(\pi^\star) \right\|_{\left(\Sigma_{h-1}^t\right)^{-1}}, \nn
\\&= \sum_{t=1}^T \sum_{\pi} p^t(\pi)\cdot \left(\E_{\bpi^t \sim \tilp^t} \E^{\bpi^t}\left[\hell^t(\pi;\bpi^t, \x_{1:H},\a_{1:H})\right]- b^t(\pi) \right) \nn \\
& \qquad - \sum_{t=1}^T \left(\E_{\bpi^t \sim \tilp^t} \E^{\bpi^t}\left[\hell^t(\pi^\star; \bpi^t,\x_{1:H},\a_{1:H})\right]-b^t(\pistar)\right) \nn 
\\& \qquad + \sum_{t=1}^T \sum_{\pi} p^t(\pi) b^t(\pi)- \sum_{t=1}^T  b^t(\pi^\star) +   \sqrt{d}H\epsilon \sum_{t=1}^T \sum_{\pi} p^t(\pi)\sum_{h=2}^{H} \left\|\hphi_{h-1}(\pi) \right\|_{\left(\Sigma_{h-1}^t\right)^{-1}} \nn \\
& \qquad + \sqrt{d}H\epsilon \sum_{t=1}^T \sum_{h=2}^{H} \left\|\hphi_{h-1}(\pi^\star) \right\|_{\left(\Sigma_{h-1}^t\right)^{-1}} \nn 
\\&= \sum_{t=1}^T \sum_{\pi} p^t(\pi)\cdot \left(\E_{\bpi^t \sim \tilp^t} \E^{\bpi^t}\left[\hell^t(\pi;\bpi^t, \x_{1:H},\a_{1:H})\right]- b^t(\pi) \right) \nn \\
& \qquad - \sum_{t=1}^T \left(\E_{\bpi^t \sim \tilp^t} \E^{\bpi^t}\left[\hell^t(\pi^\star; \bpi^t,\x_{1:H},\a_{1:H})\right]-b^t(\pistar)\right) \nn
\\& \qquad + 4\sqrt{d}H\epsilon \sum_{t=1}^T \sum_{\pi} \tilp^t(\pi)\sum_{h=2}^{H} \left\|\hphi_{h-1}(\pi) \right\|_{\left(\Sigma_{h-1}^t\right)^{-1}}, \quad \text{(since $p^t(\pi) \le 2 \tilp^t(\pi)$)}  \label{eq:EXP2FTRL}
\\&\le \textbf{FTRL} + 4dH^2\epsilon T,\nn 
\end{align}
where 
\begin{align}
 \textbf{FTRL}&\coloneqq \sum_{t=1}^T \sum_{\pi} p^t(\pi)\cdot \left(\E_{\bpi^t \sim \tilp^t} \E^{\bpi^t}\left[\hell^t(\pi;\bpi^t, \x_{1:H},\a_{1:H})\right]- b^t(\pi) \right) \nn \\
& \qquad - \sum_{t=1}^T \left(\E_{\bpi^t \sim \tilp^t} \E^{\bpi^t}\left[\hell^t(\pi^\star; \bpi^t,\x_{1:H},\a_{1:H})\right]-b^t(\pistar)\right).
\end{align}
We now bound the FTRL term. Since $\tilp^t(\pi) = (1-\gamma)p^t(\pi) + \gamma \sum_{h=1}^H J(\pi,h)$, define $\Sigma_{J} = \sum_{\pi \in \Pi}\sum_{h=1}^H J(\pi,h) \hphi_{h}(\pi) \hphi_{h}(\pi)^\top$. Note that for any $t \in [T]$ and $h \in [H]$, we have $\Sigma_h^t \succeq \gamma \Sigma_{J}$.

By the triangle inequality, we have for any $\pi,\pi^t$ and $(x_{1:H},a_{1:H})$, 
\begin{align*}
    \left|\hell^t(\pi; \pi^t, x_{1:H},a_{1:H})\right| &\le  \left|\frac{\pi_1(\a_1\mid \x_1)}{\pi_1^t(\a_1 \mid \x_1)}\cdot  \ell_1^t(\x_1,\a_1)\right| \nn \\
    & \quad + \sum_{h=2}^H \left|\hphi_{h-1}(\pi)^\top \left(\Sigma_{h-1}^t\right)^{-1} \hphi_{h-1}(\pi^t) \frac{\pi_h(a_h\mid x_h)}{\pi_h^t(a_h\mid x_h)}\ell_h^t(x_h,a_h)\right|,
    \\&\le  \frac{|\cA|}{\beta} + \frac{|\cA|}{\beta}\sum_{h=2}^H \left|\hphi_{h-1}(\pi)^\top \left(\Sigma_{h-1}^t\right)^{-1} \hphi_{h-1}(\pi^t)\right|, \tag{$\beta$-mixure of uniform policy}
    \\&\le \frac{|\cA|}{\beta} + \frac{|\cA|}{\beta \gamma}\sum_{h=2}^H \left\|\hphi_{h-1}(\pi)\right\|_{\Sigma_{J}^{-1}} \left\|\hphi_{h-1}(\pi^t)\right\|_{\Sigma_{J}^{-1}},
    \\&\le \frac{|\cA|}{\beta} + \frac{|\cA| H d}{\beta \gamma},\nn \\
    & \le \frac{2|\cA| H d}{\beta \gamma}, 
    \intertext{and}
    \\\left|b^t(\pi)\right|& = \left|\sqrt{d}H\epsilon\sum_{h=2}^{H} \left\|\hphi_{h-1}(\pi) \right\|_{\left(\Sigma_{h-1}^t\right)^{-1}}\right|  \le d H^2\frac{\epsilon}{\sqrt{\gamma}}.
\end{align*}
To ensure $\eta \left|\hell^t(\pi; \pi^t, x_{1:H},a_{1:H}) - b^t(\pi)\right| \le 1$, it suffices to set $\eta \le \left(\frac{2|\cA| H d}{\beta \gamma} + d H^2\frac{\epsilon}{\sqrt{\gamma}}\right)^{-1}$. Under this constraint, from \cref{lem:EXP bound}, we have
\begin{align*}
    \textbf{FTRL} &\le \frac{\log|\Pi|}{\eta} + \underbrace{2\eta \sum_{t=1}^T\sum_{\pi \in \Pi'} p^t(\pi)\cdot \E_{\bpi^t \sim \tilp^t}\E^{\bpi^t}\left[\hell^t(\pi;\bpi^t,\x_{1:H},\a_{1:H})^2\right]}_{\textbf{Stability-1}} \nn \\ 
    & \qquad + \underbrace{\E\left[2\eta \sum_{t=1}^T\sum_{\pi \in \Pi'} p^t(\pi) b^t(\pi)^2\right]}_{\textbf{Stability-2}}
\end{align*}

For any $t \in [T]$, we have
\begin{align}
&\sum_{\pi \in \Pi'} p^t(\pi)\cdot \E_{\bpi^t \sim \tilp^t}\E^{\bpi^t}\left[\hell^t(\pi;\bpi^t,\x_{1:H},\a_{1:H})^2\right] \nn 
\\&\le H \E_{\bpi^t \sim \tilp^t}\E^{\bpi^t}\left[\frac{\pi_1(\a_1\mid \x_1)^2}{\bpi_1^t(\a_1\mid \x_1)^2} \ell_1^t(\x_1,\a_1)^2 \right] \nn 
\\&\quad + H\sum_{h=2}^H \E_{\bpi^t \sim \tilp^t}\E^{\bpi^t}\left[\sum_{\pi \in \Pi'} p^t(\pi) \hphi_{h-1}(\pi)^\top \left(\Sigma_{h-1}^t\right)^{-1} \hphi_{h-1}(\bpi^t) \hphi_{h-1}(\bpi^t)^\top \left(\Sigma_{h-1}^t\right)^{-1} \hphi_{h-1}(\pi) \frac{\pi_h(\a_h\mid \x_h)^2}{\bpi_h^t(\a_h\mid \x_h)^2}\ell_h^t(\x_h,\a_h)^2\right], \nn 
\\&\le \frac{H |\cA|}{\beta} +  2H\sum_{h=2}^H \E_{\bpi^t  \sim \tilp^t}\E^{\bpi^t}\left[\tr\left(\hphi_{h-1}(\bpi^t)
 \hphi_{h-1}(\bpi^t)^\top  \left(\Sigma_{h-1}^t\right)^{-1}\right) \sum_{a \in \cA}\frac{\pi_h(a\mid \x_h)^2}{\bpi_h^t(a\mid \x_h)}\ell_h^t(\x_h,a)^2\right], \label{eq:just}
 \\&\le \frac{H |\cA|}{\beta} +  \frac{2H|\cA|}{\beta} \sum_{h=2}^H \E_{\bpi^t  \sim \tilp^t}\left[\tr\left(\hphi_{h-1}(\bpi^t)
 \hphi_{h-1}(\bpi^t)^\top  \left(\Sigma_{h-1}^t\right)^{-1}\right)\right],\nn 
 \\&\le \frac{3dH^2|\cA|}{\beta}.\nn 
\end{align}
where \eqref{eq:just} follows by the fact that $p^t(\pi) \le \frac{1}{1-\gamma}\tilp^t(\pi)$ and $\frac{1}{1-\gamma} \le 2$.
Thus,
\begin{align*}
    \textbf{Stability-1} &= 2\eta \sum_{\pi \in \Pi'} p^t(\pi)\cdot \E_{\bpi^t \sim \tilp^t}\E^{\bpi^t}\left[\hell^t(\pi;\bpi^t,\x_{1:H},\a_{1:H})^2\right] \le \frac{6d\eta H^2|\cA|T}{\beta}.
\end{align*}
Moreover, 
\begin{align*}
 \textbf{Stability-2} &= 2\eta \E\left[\sum_{t=1}^T\sum_{\pi \in \Pi'} p^t(\pi) b^t(\pi)^2\right],
 \\&=  2\eta dH^3 \epsilon^2 \E\left[\sum_{t=1}^T\sum_{h=2}^{H}\sum_{\pi \in \Pi'} p^t(\pi) \left\|\hphi_{h-1}(\pi) \right\|_{\left(\Sigma_{h-1}^t\right)^{-1}}^2\right],
 \\&\le  4\eta dH^3 \epsilon^2 \E\left[\sum_{t=1}^T\sum_{h=1}^{H-1}\sum_{\pi \in \Pi'} \tilp^t(\pi) \left\|\hphi_{h-1}(\pi) \right\|_{\left(\Sigma_{h-1}^t\right)^{-1}}^2\right],
 \\&\le  4\eta d^2 H^4 \epsilon^2 T.
\end{align*}
\end{proof}

Combing \cref{lem: EXP Error}, \cref{lem: EXP bias} and \cref{lem:EXP EXP}, if $\eta \le \left(\frac{2|\cA| H d}{\beta \gamma} + d H^2\frac{\epsilon}{\sqrt{\gamma}}\right)^{-1}$, we have
\begin{align*}
    \text{\rm Reg}_T(\pi^\star_0) & \le H\gamma T + 2H^2\beta T + H^2T\epsilon + \frac{\log|\Pi|}{\eta} + \frac{6d\eta H^2|\cA|T}{\beta} \nn \\
    & \quad +  4\eta d^2 H^4 \epsilon^2 T + 4dH^2\epsilon T + \order\left(\frac{H^3d^2|\cA|(d^2 + |\cA|)}{\epsilon^2}\log^2\left(TdH|\Phi||\Upsilon|\right)\right).
\end{align*}
By setting $\epsilon = T^{-\frac{1}{3}}$, $\gamma = T^{-\frac{1}{3}}$, $\beta = T^{-\frac{1}{3}}$, $\eta = \frac{1}{4Hd|\cA|} T^{-\frac{2}{3}}$, we have for any $\pi^\star_0 \in \Pi$,
\begin{align*}
     \text{\rm Reg}_T(\pi^\star_0)  \le \order\left(d^2 H^3 |\cA|(d^2 + |\cA|)T^{\frac{2}{3}}\log|\Pi|\log^2\left(TdH|\Phi||\Upsilon|\right)\right).
\end{align*}
\clearpage


\clearpage
\section{More Details of Inefficient Model-Free Algorithm in \cref{sec:model-free inefficient}} 
\label{app:model-free ineff}

\subsection{Algorithm Description}
In this section, we give a more detailed introduction of the algorithm mentioned in \cref{sec:model-free inefficient}. This algorithm is model-free and achieves $T^{\frac{2}{3}}$ regret, but it is computationally inefficient. We consider the low-rank MDPs with linear losses that satisfies \cref{assm:normalizing} and \cref{assm:linearlossweak}. 

Let $\mathcal{C}\left(S, \epsilon' \right)$ be $\epsilon'$-net of space $S$. We define necessary policy and function classes in \cref{def:pi and F}.

\begin{definition}
We define linear policy class and its discretization as 
\begin{align*}
    \Pi_{\text{\rm lin}} &= \left\{\pi :\cX\rightarrow \Delta(\cA) \ \Big\mid \   \pi_h(a\mid x ) = \mathbb{I}\big\{a=\argmin_{a \in \cA} \phi_h(x, a)^\top \theta_h\big\}, \ h\in[H],\  \theta_h \in \mathbb{B}_d \left(\sqrt{d}H T\right), \ \phi \in \Phi\right\}.\\
     \Pi_{\text{\rm lin}}^{\text{\rm cov}}(\epsilon') &= \left\{\pi :\cX\rightarrow \Delta(\cA) \ \Big\mid \   \pi_h(a\mid x ) = \mathbb{I}\big\{a=\argmin_{a \in \cA} \phi_h(x, a)^\top \theta_h\big\}, \ h\in[H],\ \theta_h \in \mathcal{C}\left(\mathbb{B}_d \left(\sqrt{d}H T\right), \epsilon'\right), \ \phi \in \Phi\right\}.
\end{align*}
Define corresponding function class as follows
\begin{align*}
     \cF^{\pi} &= \left\{ f:\cX\to [-1,1] ~\bigg\mid~ f(x) = \sum_a \pi(a|x) \phi(x,a)^\top\theta, \ \ \text{for \ } \theta\in\mathbb{B}^d(\sqrt{d}) \text{\ and \ } \phi\in\Phi \right\} \\
     \cF &= \left\{f:\cX\to [-1,1] ~\bigg\mid~ f \in \bigcup_{\pi\in\Pi_{\text{\rm lin} }} \cF^\pi  \right\}.
\end{align*}
\label{def:pi and F}
\end{definition}

Our main algorithm is given in \cref{alg:model-free policy}, which shares the same structure as \cref{alg:low-rank bandit}, but with a different initial phase to learn expected feature  estimator $\hphi_h(\pi) = \sum_{(x,a)\in \cX\times \cA} \hd_{h}^\pi(x) \pi(a|x) \hphi_{h}(x,a)$ to approximate $\phi_h^\star(\pi) = \sum_{(x,a)\in \cX\times \cA} d_{h}^\pi(x) \pi(a|x) \phi_{h}^\star(x,a)$ for every $h \in [H]$. In \cref{alg:low-rank bandit}, under \cref{assm: model-base}, it is feasible to use established model-based approach to learn an accurate estimated transition $\widehat{P}$ together with its feature $\hphi$. The occupancy estimator $\hd_{1:H}^\pi$ is induced by $\widehat{P}$ which also enjoy small errors. However, when we move to model-free settings with \cref{assm:real}, there is no existing approach that could guarantee a good estimation for $\hd_{h}^\pi(x)$ and $\hphi$.

To tackle this challenge, we first call $\texttt{VoX}$ \citep{mhammedi2023efficient} to construct a policy cover $\Psi_{1:H}^{\cov}$, and then play every policy in $\Psi_{1:H}^{\cov}$ for $n$ episodes to collect data. Subsequently, these data are fed into  \cref{alg: estimate occupancy} to jointly solve estimated occupancy $\hd^\pi_{1:H}$ and feature $\hphi_{1:H}$. \cref{alg: estimate occupancy} is similar to  \cite[Algorithm 1]{liu2023towards}, which is used to estimate occupancy on the fly for linear MDP. In \cref{alg: estimate occupancy}, given a target policy $\pi$, we jointly solve $\hat{\phi}_{1:H} \in \Phi$, $\hat{d}_{1:H}^\pi \in [0,1]^{|\cX|}$, and $(\hat{\xi}_{1:H,f})_{f \in \cF^\pi} \subset \mathbb{B}^d\left(\sqrt{d}\right)$ that satisfies four constrains, where $\hat{\xi}_{h,f}$ is the estimation of $\xi_{h,f}^\star :=\sum_{x' \in \cX}\mu_{h+1}^\star(x')f(x')$. The first constraint \cref{eq:con_1} ensures the estimated occupancy $\hat{d}_{1:H}^\pi$ are valid distrbutions. The second constrant \cref{eq:con_2} enforces the estimated values to follow the dynamic programming relationship between the occupancy of layer $h$ and layer $h+1$, which helps to control the propagation of estimation errors across layers through the bias of $\hat{\phi}_h(x,a)^\top \hat{\xi}_{h,f}$. The third constrain \cref{eq:con_3} and fourth constrain \cref{eq:con_4} are then used to bound the estimated bias of $\hat{\phi}_h(x,a)^\top \hat{\xi}_{h,f}$ by utilizing the data collected from policies in policy covers $\Psi_{1:H}^{\cov}$. Note that applying \cref{eq:con_4} requires access to the whole state space, which is an additional assumption not needed in previous algorithms. The gurantee of \cref{alg:model-free policy} is given in \cref{thm:model-free-p guaran}, where the $\wtilde{O}$ hides the logarithmic dependence on $d,H,|\cA|, T$.

\begin{theorem} \label{thm:model-free-p guaran}
    \cref{alg:model-free policy} ensures $ \mathrm{Reg}_T \le  \wtilde{O}\left(d^{8} H^6 |\cA|  T^{\frac{2}{3}} \log(|\Phi|))\right) $.
\end{theorem}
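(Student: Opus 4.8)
The plan is to mirror the proof of \cref{thm:model-based bound} almost verbatim for the second (online) phase, and to supply a new occupancy/feature-estimation guarantee for the model-free initial phase that plays the role of \cref{lem: P to d}. Concretely, \cref{alg:model-free policy} uses exactly the loss estimator of \cref{line:bandit estimator} of \cref{alg:low-rank bandit}, the same exponential-weights update, the same John's-exploration mixture $\tilp^t$, and the same structure of policy class $\Pi'$ (now a mix of the discretized linear class $\Pi_{\text{\rm lin}}^{\text{\rm cov}}$ from \cref{def:pi and F} with a uniform policy). Hence I would reuse the decomposition of $\mathrm{Reg}_T(\pi^\star_0)$ into $\textbf{Bias1}+\textbf{Bias2}+\textbf{Error1}+\textbf{Error2}+\textbf{EXP}$ plus the initial-phase cost, and inherit \cref{lem: EXP Error}, \cref{lem: EXP bias}, \cref{lem:value transfer}, and \cref{lem:EXP EXP} essentially unchanged — provided the estimated expected features $\hphi_h(\pi)$ satisfy the same kind of accuracy bound that $\hatp$ supplied through \cref{lem: P to d}.

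The heart of the proof is therefore this estimation guarantee. First I would invoke the guarantee of $\texttt{VoX}$ (\cref{lem:vox}) to obtain an $(\alpha,\epsilon)$-policy cover $\Psi^{\cov}_{1:H}$ with $\alpha=1/(8Ad)$, and roll in each cover policy for $n$ episodes to build datasets with provably good state coverage at every layer. Then I would analyze the constrained program of \cref{alg: estimate occupancy}: under realizability ($\phistar\in\Phi$, \cref{assm:real}), the ground-truth tuple $(\phistar, d^\pi, \xi^\star)$ with $\xi^\star_{h,f}=\sum_{x'}\mu^\star_{h+1}(x')f(x')$ is feasible up to statistical error, so the solver returns a tuple whose per-layer Bellman-type residual involving $\hphi_h(x,a)^\top\hat\xi_{h,f}$ is small in expectation under the cover policies by constraints \cref{eq:con_3}--\cref{eq:con_4}; the distributional constraint \cref{eq:con_1} keeps $\hd^\pi_h$ a valid occupancy, and the dynamic-programming constraint \cref{eq:con_2} is what prevents the per-layer errors from compounding. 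Using the domination property of the policy cover (\cref{def:cover}), a bound in expectation under $\Psi^{\cov}_h$ converts into a bound in expectation under \emph{any} target policy $\pi$ in the class at the price of a factor $\alpha^{-1}=O(Ad)$; propagating this through the $H$ layers as in the induction of \cref{lem: P to d} yields, for all $\pi\in\Pi'$ and $h$, $\sum_{x\in\cX}|\hd^\pi_h(x)-d^\pi_h(x)|\le \poly(d,H)\cdot\epsilon$, hence $\|\hphi_h(\pi)-\phistar_h(\pi)\|$ is correspondingly small. A union bound over $\cF$ and over the $\epsilon'$-net $\Pi_{\text{\rm lin}}^{\text{\rm cov}}$, whose cardinality is $|\Phi|\,T^{O(d)}$ by the standard covering argument for linear policies, keeps the statistical overhead at $\log|\Pi|=\otil(\log|\Phi|+d)$.

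With this guarantee in hand, the rest is transcription. I would re-derive \cref{lem:value transfer} with $\epsilon$ replaced by $\poly(d,H)\cdot\epsilon$ in the bias term, then apply \cref{lem: EXP Error}, \cref{lem: EXP bias}, and \cref{lem:EXP EXP} exactly as in the model-based case, and finally substitute $\log|\Pi|=\otil(\log|\Phi|+d)$ (this is where the linear-loss assumption \cref{assm:linearlossweak} removes any dependence on $|\cX|$). Balancing the terms with $\epsilon=\gamma=\beta=T^{-1/3}$, $\eta\sim T^{-2/3}$, and an initial-phase budget $\poly(d,H,|\cA|)\,\epsilon^{-2}$ reproduces the $T^{2/3}$ rate; the enlarged polynomial prefactor $d^{8}H^{6}|\cA|$ (compared with the model-based $d^2H^3|\cA|(d^2+|\cA|)$) is exactly the cost of the model-free occupancy estimation — the $\alpha^{-1}=O(Ad)$ coverage factor together with the per-layer error-propagation constants.

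The main obstacle is precisely the uniform occupancy-estimation guarantee of the second paragraph: unlike the model-based case, there is no learned transition to feed a clean telescoping argument, so one must show directly that the jointly-solved $(\hphi_{1:H},\hd^\pi_{1:H})$ are accurate for \emph{all} target policies simultaneously. The delicate pieces are (i) proving that the DP-consistency constraint \cref{eq:con_2} prevents compounding of the per-layer estimation errors (an induction analogous to \cref{lem: P to d}, but carried out on the optimization variables rather than on a model), (ii) converting the empirical constraints \cref{eq:con_3}--\cref{eq:con_4}, which only hold on cover-policy data, into population bias bounds valid for an arbitrary $\pi$ via the cover's domination property — this is where the extra powers of $d$ and the $O(Ad)$ factors enter — and (iii) handling the reliance of \cref{eq:con_4} on access to the whole state space. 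Once these are established, every remaining estimate is inherited from the proof of \cref{thm:model-based bound}.
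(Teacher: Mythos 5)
Your overall architecture matches the paper's: \texttt{VoX} plus per-cover-policy data collection, the constrained program of \cref{alg: estimate occupancy} analyzed via feasibility of the ground truth, cover-domination to pass from cover policies to arbitrary policies, a layer-wise induction driven by the DP constraint \cref{eq:con_2}, and then a transcription of the model-based regret analysis. However, your central estimation claim is stated in a form that cannot be proved and that the paper deliberately avoids. You assert an $\ell_1$ bound $\sum_{x}|\hd^\pi_h(x)-d^\pi_h(x)|\le \poly(d,H)\cdot\epsilon$, mirroring \cref{lem: P to d}. The constraints of \cref{alg: estimate occupancy} only control the occupancy error \emph{tested against functions in $\cF$}: constraint \cref{eq:con_2} pins down $\hd^\pi_{h+1}$ solely through its integrals against $f\in\cF$, and constraint \cref{eq:con_3} only controls residuals of the bilinear forms $\hphi_h^\top\hat\xi_{h,f}$ on cover data. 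Nothing prevents $\hd^\pi_h$ from differing from $d^\pi_h$ by a perturbation that is invisible to every $f\in\cF$ but has large $\ell_1$ mass (the sign function of the error is not in $\cF$). Accordingly, the paper's \cref{lem: recursion err} proves only the weaker statement $\bigl|\sum_x (\hd^\pi_h(x)-d^\pi_h(x))f(x)\bigr|\le d^5 H T^{-\nicefrac{1}{3}}$ for all $f\in\cF$, and its induction closes only because $\cF$ is closed under the estimated backup map $f\mapsto \sum_a \pi_h(a\mid x)\hphi_h(x,a)^\top\hat\xi_{h,f}$ — which is exactly what constraint \cref{eq:con_4} (and hence the whole-state-space access) is for. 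Your further inference that $\|\hphi_h(\pi)-\phi^\star_h(\pi)\|$ is small is also unsupported and unnecessary: the learned $\hphi$ need not be close to $\phi^\star$ in any pointwise sense; only the bilinear forms $\hphi^\top\hat\xi$ track $\phi^{\star\top}\xi^\star$ in expectation.

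This gap propagates into your second phase. You propose to ``re-derive \cref{lem:value transfer} with $\epsilon$ replaced by $\poly(d,H)\epsilon$,'' but that lemma's bias step bounds the term involving $\hd_h^{\bpi^t}-d_h^{\bpi^t}$ by Cauchy--Schwarz against the $\ell_1$ occupancy error, which you do not have. The paper instead re-derives the value-transfer statement from scratch (\cref{lem: model-free VT}): it writes $\hV_1^\pi$ through \cref{eq:con_2} as $\sum_h \hphi_{h-1}(\pi)^\top\hat\xi_{h-1,f}$, inserts $(\Sigma^t_{h-1})^{-1}\E[\hphi_{h-1}(\bpi^t)\hphi_{h-1}(\bpi^t)^\top]$, applies \cref{eq:con_2} again in reverse for $\bpi^t$, and only then swaps $\hd^{\bpi^t}_h$ for $d^{\bpi^t}_h$ using the functional bound with the test function $f(x)=\sum_a\pi_h(a\mid x)\phi^\star_h(x,a)^\top g^t_h$ — which lies in $\cF$ precisely because of the linear-loss assumption (\cref{assm:linearlossweak}); the same device underlies \cref{lem:model-free bias}. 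So the linear-loss structure is not merely a covering-number device (as in \cref{cor:linear policy reg}) here: it is what makes the weak, $\cF$-restricted estimation guarantee sufficient. Your final paragraph shows you sense these are the delicate points, but as written your key lemma is false-in-general and the downstream steps that rely on it would not go through without the reformulation above.
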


\begin{algorithm}[H]
\caption{Model-Free Algorithm for Bandit Feedback} 
\label{alg:model-free policy}
\begin{algorithmic}[1] 
    \Input Policy class $\Pi = \Pi_{\text{\rm lin}}^{\text{\rm cov}}(\frac{1}{T})$. 
    \State Set $\epsilon = 18^{-1}d^{\frac{5}{2}}T^{-\frac{1}{3}}$, $\gamma = T^{-\frac{1}{3}}$, $\beta = T^{-\frac{1}{3}}$, $\eta = ({4Hd|\cA|})^{-1} T^{-\frac{2}{3}}$, $n = 11250 d^{\frac{5}{2}}|\cA|T^{\frac{2}{3}}\log\frac{3dnHT|\Phi|}{\delta}$ and $T_0 = \wtilde{O}\left(\epsilon^{-2} |\cA| d^{13} H^6 \log(|\Phi|/\delta))\right)$.
    \State Get $\Psi^\cov_{1:H}\gets \texttt{VoX}(\Phi, \veps, \delta)$ using $T_0$ episodes.
\State For every policy $\pi' \in \Psi^\cov_{1:H}$, play it for $n$ episodes and get the data set $\left(\calD_h^{\pi'}\right)_{h \in [H]}$ where $\calD_h^{\pi'}$ consists of tuples $(x,a,x')$ such that $(x,a) \sim d_h^{\pi'}$ and  $x' \sim P^\star(\cdot ~\mid~ x,a)$.

    \State Define the policy space $\Pi' = \{\pi':~ \exists \pi\in\Pi, \ \ \pi_h'(\cdot\mid x)= (1-\beta)\pi_h(\cdot\mid x) +  \beta/|\cA|, \ \  \forall x, h\}$. 

    \State  Get $\hat{\phi}_h(\cdot)\gets \text{\rm EOM-PC}\left(\Pi', \left(\calD_h^{\pi'}\right)_{h \in [H], \pi' \in \Psi^\cov_{1:H}}\right)$ from \cref{alg: estimate occupancy}.
   
    \For{$t = T_0 + 1,\, T_0 + 2,\ldots, T$}
         \State Define $ p^t(\pi) \propto \exp\left(-\eta \sum_{i=1}^{t-1}  \left(\hatell^i(\pi) - b^i(\pi)\right)\right),$ for all $\pi \in \Pi'$. 
         \State Let $\tilp^t(\pi) = (1-\gamma)p^t(\pi) +  \frac{\gamma}{H-1}\sum_{h=1}^{H-1}  J_h$, where $J_h = \john(\hatphi_h(\cdot), \Pi')$. \algcommentlight{$\john$ as in \S\ref{sec:prelim}}  
         \State Execute policy $\bpi^t\sim \tilp^t$ and observe trajectory $(\x^t_{1:H}, \a^t_{1:H})$ and losses $\bell^t_h =\ell^t_h(\x_h^t,\a_h^t)$.
         \State Define $\Sigma^t_{h} = \sum_{\pi \in \Pi'}\tilp^t(\pi)\cdot \hatphi_h(\pi)\hatphi_h(\pi)^\top$, $b^t(\pi) =d^{\frac{11}{2}} H T^{-\frac{1}{3}}  \cdot \sum_{h=1}^{H-1} \|\hatphi_h(\pi)\|_{(\Sigma^t_{h})^{-1}}$, and
             \begin{align*}  
            \hell^t(\pi) = \frac{\pi_1(\a_1^t\mid \x_1^t)}{\bpi_1^t(\a_1^t\mid \x_1^t)}\bell_1^t + \sum_{h=2}^H \hphi_{h-1}(\pi)^\top\left(\Sigma_{h-1}^t\right)^{-1} \hphi_{h-1}(\bpi^t) \frac{\pi_h(\a_h^t\mid \x_h^t)}{\bpi_h^t(\a_h^t\mid \x_h^t)}\bell_h^t.
             \end{align*}
         \EndFor
    \end{algorithmic}
\end{algorithm}

\begin{algorithm}[H]
     \caption{EOM-PC$\left(\Pi, \left(\calD_h^{\pi'}\right)_{h \in [H], \pi' \in \Psi^\cov_{1:H}}\right)$ (Estimate Occupancy Measure with Policy Cover)} \label{alg: estimate occupancy}
    \begin{algorithmic}
    \Input The policy class $\Pi$, datasets $\left(\calD_h^{\pi'}\right)_{h \in [H]}$ for every $\pi' \in \Psi^\cov_{1:H}$
        \State Jointly find $\hat{\phi}_{h} \in \Phi$, $(\hat{d}_h^\pi)_{\pi\in\Pi} \in [0,1]^{|\cX|}$, and $(\hat{\xi}_{h,f})_{f \in \cF} \subset \mathbb{B}^d\left(\sqrt{d}\right)$ for any $h \in [H]$ such that for all $\pi\in\Pi$, 
        \vspace{10pt}
        \begin{align}
            &\sum_{x\in \cX}\hat{d}_h^{\pi}(x) = 1, \qquad \forall h\in[H]  \label{eq:con_1}\\[15pt]
            &\sum_{x'\in\cX} \hat{d}_{h+1}^{\pi}(x') f(x') = \sum_{x\in \cX}\sum_{a \in \cA} \hat{d}^{\pi}_{h}(x) \pi(a|x) \hat{\phi}_h(x,a)^\top \hat{\xi}_{h,f}, \qquad \forall f\in \cF, h\in[H] \label{eq:con_2}\\[15pt]
            &\sum_{x,a,x'\in\calD_h^{\pi'}} \left(f(x') - \hat{\phi}_h(x,a)^\top \hat{\xi}_{h,f}\right)^2 - \min_{(\phi, \xi) \in \Phi \times \mathbb{B}_d(\sqrt{d})}\sum_{x,a,x'\in\calD_h^{\pi'}} \left(f(x') - \phi(x,a)^\top \xi\right)^2 \nonumber \\
            &\qquad \qquad \qquad \qquad \quad \qquad \leq  132 d^{\frac{3}{2}}\log(3dnHT|\Phi|/\delta), \ \ \ \  \ \ \ \forall \pi' \in \Psi^\cov_{1:H},  \ f\in\cF, h\in[H]  \label{eq:con_3} \\[15pt]
            &\max_{x,a}\left|\hat{\phi}_h(x,a)^\top \hat{\xi}_{h,f}\right|\leq 1 \qquad \forall f\in \cF, h\in[H] \label{eq:con_4}
        \end{align}
        \vspace{10pt}
   \item[\textbf{Output:}]  $\hat{\phi}_h:\Pi\rightarrow\mathbb{R}^d,\hat{\phi}_{h}(\pi) = \sum_{(x,a)\in \cX\times \cA} \hat{d}_h^\pi(x) \pi(a|x) \hat{\phi}_h(x,a)$, $\forall h \in [H]$.
    \end{algorithmic}
\end{algorithm}

\subsection{Analysis of Occupancy Estimation from \cref{alg: estimate occupancy}}
\begin{lemma}
    With probability $1-\delta$,  
        $\phi_{1:H}^\star, \left(d_{1:H}^\pi\right)_{\pi \in \Pi'}
    $, and $\xi^\star_{h,f}:=\sum_{x' \in \cX}\mu_{h+1}^\star(x')f(x'), \forall f \in \cF, \forall h \in [H]$ is a solution to \cref{alg: estimate occupancy}. 
\end{lemma}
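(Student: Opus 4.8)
The plan is to check that the ground-truth tuple $\big(\phistar_{1:H},\,(d^\pi_{1:H})_{\pi\in\Pi'},\,(\xi^\star_{h,f})_{f\in\cF,\,h\in[H]}\big)$ is feasible for \cref{alg: estimate occupancy}, i.e.\ that it satisfies \eqref{eq:con_1}--\eqref{eq:con_4}. The linchpin is the identity
\[
\phistar_h(x,a)^\top \xi^\star_{h,f} = \sum_{x'\in\cX}\phistar_h(x,a)^\top\mustar_{h+1}(x')\,f(x') = \sum_{x'\in\cX} P^\star_h(x'\mid x,a)\,f(x') = \E\!\left[f(\x_{h+1})\mid \x_h=x,\a_h=a\right],
\]
which follows from the definition of $\xi^\star_{h,f}$ and the low-rank factorization \eqref{eq:lr}. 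Three of the four constraints are then immediate. Constraint \eqref{eq:con_1} holds because $d_h^\pi(x)=\P^\pi[\x_h=x]$ is a probability law. Constraint \eqref{eq:con_4} holds because the displayed conditional expectation of $f\in[-1,1]$ against a transition distribution lies in $[-1,1]$; the same reasoning, now using the normalization of $\mustar$ in \cref{assm:normalizing}, shows $\xi^\star_{h,f}\in\mathbb{B}_d(\sqrt d)$. Constraint \eqref{eq:con_2} follows by multiplying the identity by $d_h^\pi(x)\pi(a\mid x)$, summing over $(x,a)$, and invoking the occupancy recursion $\sum_{x,a}d_h^\pi(x)\pi(a\mid x)P^\star_h(x'\mid x,a)=d_{h+1}^\pi(x')$, which reproduces $\sum_{x'}d_{h+1}^\pi(x')f(x')$ exactly.

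The only substantive part is \eqref{eq:con_3}, and here the identity above is again decisive: it says $\phistar_h(x,a)^\top\xi^\star_{h,f}$ is precisely the conditional-mean (Bayes) predictor for regressing the target $f(\x_{h+1})$ onto $(\x_h,\a_h)$, and this predictor is realizable inside the class $\cG_h=\{(x,a)\mapsto\phi_h(x,a)^\top\xi:\phi\in\Phi,\ \xi\in\mathbb{B}_d(\sqrt d)\}$ over which \eqref{eq:con_3} minimizes, since $\phistar\in\Phi$ and $\xi^\star_{h,f}\in\mathbb{B}_d(\sqrt d)$. For fixed $h$ and fixed $\pi'\in\Psi^\cov_{1:H}$ the tuples in $\calD_h^{\pi'}$ are i.i.d.\ draws with $(\x_h,\a_h)\sim d_h^{\pi'}$ and $\x_{h+1}\sim P^\star_h(\cdot\mid\x_h,\a_h)$, so I would invoke the standard fast-rate bound for square-loss regression with a realizable Bayes predictor: writing $g^\star=\phistar_h(\cdot)^\top\xi^\star_{h,f}$, the increments $(f(x')-g(x,a))^2-(f(x')-g^\star(x,a))^2$ have conditional mean $(g-g^\star)^2(x,a)\ge 0$ and conditional variance at most $\order(d)\cdot(g-g^\star)^2(x,a)$ (using $|g|,|g^\star|\le\sqrt d$). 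Freedman's inequality then yields, with high probability, $\sum_{(x,a,x')\in\calD_h^{\pi'}}(f(x')-g^\star)^2\le \min_{g\in\cG_h}\sum_{(x,a,x')\in\calD_h^{\pi'}}(f(x')-g)^2 + \order(d\log(1/\delta'))$ for each individual competitor $g$.

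To make this uniform I would discretize $\cG_h$ by a fine cover of $\Phi\times\mathbb{B}_d(\sqrt d)$ and union-bound over that cover together with the finitely many layers $h\in[H]$, the $dH$ policies in $\Psi^\cov_{1:H}$, and a cover of the target class $\cF$. The main obstacle---and the reason the resulting slack carries no dependence on $|\cX|$---is controlling the metric entropy of $\cF$: although $\cF$ is an infinite family of functions $\cX\to[-1,1]$, each $f$ is parameterized by a feature $\phi\in\Phi$, a vector $\theta\in\mathbb{B}_d(\sqrt d)$, and the linear-policy parameters, so its covering number is $|\Phi|\cdot T^{\order(d)}$ rather than something scaling with the number of states, and the induced discretization error in the two squared-loss terms is controlled through their Lipschitzness in $f$. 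Assembling the range $\sqrt d$ of the predictors, the covering numbers, and Freedman's deviation term produces a slack of order $d^{3/2}\log(dnHT|\Phi|/\delta)$, matching the right-hand side of \eqref{eq:con_3}; tracking these constants carefully---rather than the concentration mechanism itself---is the delicate bookkeeping.
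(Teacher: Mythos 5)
Your proposal is correct and follows essentially the same route as the paper: constraints \eqref{eq:con_1}, \eqref{eq:con_2}, and \eqref{eq:con_4} are verified directly from the identity $\phistar_h(x,a)^\top\xi^\star_{h,f}=\sum_{x'}P^\star_h(x'\mid x,a)f(x')$ and the occupancy recursion, while \eqref{eq:con_3} is handled by noting that $\phistar_h(\cdot)^\top\xi^\star_{h,f}$ is the realizable Bayes predictor, decomposing the difference of empirical squared losses into a martingale cross term plus a nonpositive quadratic, applying Freedman's inequality with the variance absorbed into that quadratic, and union-bounding over a $1/T$-net of $\Phi\times\mathbb{B}_d(\sqrt d)$, the target class $\cF$, the layers, and the policies in $\Psi^\cov_{1:H}$ (covering number $|\Phi|\cdot T^{\order(d)}$, giving the $d^{3/2}\log(dnHT|\Phi|/\delta)$ slack). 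This matches the paper's proof, including the bookkeeping that produces the constant on the right-hand side of \eqref{eq:con_3}.
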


\begin{proof}
       Since for any policy $\pi$ and any $h \in [H]$, $\sum_{x \in \cX} d_h^\pi(x) = 1$, \cref{eq:con_1} holds. For any policy $\pi$, any $f \in \cF^\pi$ and any $h \in [H]$, we have
       \begin{align*}
           \sum_{x'\in\cX} d_{h+1}^{\pi}(x') f(x') &=  \sum_{x'\in\cX} \sum_{x\in \cX} \sum_{a \in \cA} d^{\pi}_{h}(x) \pi(a|x) P^\star_h(x' \mid x,a) f(x')
           \\&=  \sum_{x\in \cX_{h}}\sum_{a \in \cA} d^{\pi}_{h}(x) \pi(a|x) \phi_h^\star(x,a)^\top \sum_{x'\in\cX_{h+1}}  \mu_{h+1}^\star(x') f(x')
           \\&=  \sum_{x\in \cX_{h}}\sum_{a \in \cA} d^{\pi}_{h}(x) \pi(a|x) \xi_{h,f}^\star.
       \end{align*}
       Thus, \cref{eq:con_2} holds.  From Exercise 27.6 of \cite{lattimore2020bandit}, the $\epsilon$-net of $\mathbb{B}_d(R)$ is $\left(\frac{3R}{\epsilon}\right)^d$. Thus. $\left|\Pi'\right|= \left| \Pi_{\text{\rm lin}}^{\text{\rm cov}}(\frac{1}{T})\right| = |\Phi|\left(3\sqrt{d}HT^2\right)^d$. We also have $\left|\mathcal{C}\left(\mathbb{B}_d(\sqrt{d}), \frac{1}{T}\right)\right| = \left(3\sqrt{d}T\right)^d$ and for any policy $\pi$,  $\left|\mathcal{C}\left(\cF^\pi, \frac{1}{T}\right)\right| = |\Phi|\left(3\sqrt{d}T\right)^d$. To consider all possible instances, define
       \begin{align*}
           \mathcal{N}_T := \left| \Pi'\right|\left|\mathcal{C}\left(\mathbb{B}_d(\sqrt{d}), \epsilon\right)\right|\left|\mathcal{C}\left(\cF^\pi, \epsilon\right)\right| \left|\Psi^{\cov}_{1:H}\right| |\Phi| H \le dH^2|\Phi|^3\left(3\sqrt{d}HT^2\right)^{3d}
       \end{align*}
       
       Thus, by union bound, with probability of $1-\delta$, for every $\pi \in \Pi'$, every $\pi' \in \Psi^{\cov}_{1:H}$, every $f \in \mathcal{C}\left(\cF^\pi, \epsilon\right)$, every $\xi \in \mathcal{C}\left(\mathbb{B}_d(\sqrt{d}), \epsilon\right)$, every $\phi \in \Phi$ and every $h \in [H]$, we have
       \begin{align*}
           &\sum_{x,a,x'\in \calD_h^{\pi'}} \left(f(x') - \phi^\star_h(x,a)^\top \xi^\star_{h,f}\right)^2 -  \sum_{x,a,x'\in\calD_h^{\pi'}} \left(f(x') - \phi(x,a)^\top \xi\right)^2
           \\&= -2\sum_{x,a,x'\in \calD_h^{\pi'}}  \left(f(x') - \phi^\star_h(x,a)^\top \xi^\star_{h,f}\right) \left(\phi^\star_h(x,a)^\top \xi^\star_{h,f} - \phi(x,a)^\top \xi\right) 
           \\&\qquad - \sum_{x,a,x'\in \calD_h^{\pi'}} \left(\phi^\star_h(x,a)^\top \xi^\star_{h,f} - \phi(x,a)^\top \xi\right)^2
           \\&= -2\sum_{x,a,x'\in \calD_h^{\pi'}}  \left(f(x') - \E_{x' \sim P^\star(\cdot \mid x,a)}[f(x')]\right) \left(\phi^\star_h(x,a)^\top \xi^\star_{h,f} - \phi(x,a)^\top \xi\right) 
           \\&\qquad - \sum_{x,a,x'\in \calD_h^{\pi'}} \left(\phi^\star_h(x,a)^\top \xi^\star_{h,f} - \phi(x,a)^\top \xi\right)^2
           \\&\le  8 \sqrt{\sum_{x,a,x'\in \calD_h^{\pi'}}\left(\phi^\star_h(x,a)^\top \xi^\star_{h,f} - \phi(x,a)^\top \xi\right)^2\log\frac{|\mathcal{N}_T|}{\delta}} + 4\sqrt{d}\log\frac{|\mathcal{N}_T|}{\delta}
           \\&\qquad - \sum_{x,a,x'\in \calD_h^{\pi'}} \left(\phi^\star_h(x,a)^\top \xi^\star_{h,f} - \phi(x,a)^\top \xi\right)^2 \tag{Freedman's Inequality}
           \\&\le  20 \sqrt{d}\log\frac{|\mathcal{N}_T|}{\delta}\tag{AM-GM}
           \\&\le 120 d^{\frac{3}{2}}\log\frac{3dHT|\Phi|}{\delta}
       \end{align*}
        Bounding the distance through $\frac{1}{T}$-net, we have with probability of $1-\delta$, for every $\pi \in \Pi'$, every $\pi' \in \Psi^{\cov}_{1:H}$, every $f \in \cF^\pi$, every $\xi \in \mathbb{B}_d(\sqrt{d})$, every $\phi \in \Phi$ and every $h \in [H]$,
        \begin{align*}
            \sum_{x,a,x'\in \calD_h^{\pi'}} \left(f(x') - \phi^\star_h(x,a)^\top \xi^\star_{h,f}\right)^2 -  \sum_{x,a,x'\in\calD_h^{\pi'}} \left(f(x') - \phi(x,a)^\top \xi\right)^2 &\le  120 d^{\frac{3}{2}}\log\frac{3dHT|\Phi|}{\delta \epsilon} + \frac{12\sqrt{d}n}{T} 
            \\&\le 132 d^{\frac{3}{2}}\log\frac{3dnHT|\Phi|}{\delta}
        \end{align*}
       Thus, \cref{eq:con_3} also holds. Finally, for all $x,a$, we have $$\left|\phi_h^\star(x,a)^\top \xi^\star_{h,f}\right| = \left|\sum_{x' \in \cX}\phi_h^\star(x,a)^\top \mu_{h+1}^\star(x')f(x')\right| = \left|\sum_{x' \in \cX}P^\star\left(x'~\mid~ x,a\right) f(x')\right| \le 1.$$  Thus, $
        \phi^\star_h, d_h^\pi
    $, and $\xi^\star_{h,f}:=\sum_{x' \in \cX}\mu_{h+1}^\star(x')f(x'), \forall f \in \cF^\pi, h \in [H]$ satisfy all \cref{eq:con_1} -- \cref{eq:con_4} and is a solution to \cref{alg: estimate occupancy}. 
\end{proof}


\begin{lemma} \label{lem: recursion err}
     With probability $1-\delta$, for all $f\in\cF$, any solution $\hat{d}^\pi$ from \cref{alg: estimate occupancy} for any $\pi \in \Pi'$ satisfies
    \begin{align*}
         \left|\sum_x  (\hat{d}^\pi_h(x) - d^\pi_h(x)) f(x) \right| \le d^{5}H T^{-\frac{1}{3}}
    \end{align*}
\end{lemma}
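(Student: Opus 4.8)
The plan is to run an induction over layers on the worst-case linear-functional error
$\varepsilon_h := \sup_{f\in\cF}\big|\sum_x (\hat{d}_h^\pi(x)-d_h^\pi(x))f(x)\big|$, showing that $\varepsilon_{h+1}\le \varepsilon_h + (\text{one-step error})$ and that the one-step error is $\poly(d)\,T^{-1/3}$ at every layer. Summing the $H-1$ increments will give the claimed $d^5 H T^{-1/3}$ bound.

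First I would set up the recursion. Fix $f\in\cF$ and write $g_h(x,a):=\hphi_h(x,a)^\top\hat{\xi}_{h,f}$ and $g^\star_h(x,a):=\phistar_h(x,a)^\top\xi^\star_{h,f}=\E_{x'\sim P^\star_h(\cdot\mid x,a)}[f(x')]$. The true backward identity $\sum_{x'}d_{h+1}^\pi(x')f(x')=\sum_{x,a}d_h^\pi(x)\pi(a\mid x)g^\star_h(x,a)$, combined with the feasibility constraint \eqref{eq:con_2}, gives after adding and subtracting $\sum_{x,a}d_h^\pi(x)\pi(a\mid x)g_h(x,a)$,
\[
\sum_x(\hat{d}_{h+1}^\pi-d_{h+1}^\pi)(x)f(x)=\underbrace{\sum_x(\hat{d}_h^\pi-d_h^\pi)(x)\,\tilde f_h(x)}_{\text{propagation}}+\underbrace{\sum_{x,a}d_h^\pi(x)\pi(a\mid x)(g_h-g^\star_h)(x,a)}_{\delta_h(f)},
\]
where $\tilde f_h(x):=\sum_a\pi(a\mid x)g_h(x,a)$. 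Constraint \eqref{eq:con_4} guarantees $|\tilde f_h|\le 1$, and $\tilde f_h$ has exactly the form defining $\cF$, so $\tilde f_h\in\cF$; hence the propagation term is at most $\varepsilon_h$, giving $\varepsilon_{h+1}\le \varepsilon_h+\sup_f|\delta_h(f)|$. Since the initial state is fixed we have $\varepsilon_1=0$, so $\varepsilon_H\le\sum_{h<H}\sup_f|\delta_h(f)|$.

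Next I would bound the one-step error. By Cauchy--Schwarz $|\delta_h(f)|\le\sqrt{\E_{(x,a)\sim d_h^\pi}[(g_h-g^\star_h)^2]}$, and I would transfer this expectation to the cover policies: by the approximate-cover property (\cref{def:cover}), $d_h^\pi(x)\le\alpha^{-1}\max_{\pi'\in\Psi^\cov_h}d_h^{\pi'}(x)$ for every state above the $\veps\|\mu^\star_h(x)\|$ threshold, while the states below the threshold contribute at most $\order(\veps\sqrt d)$ via the normalization in \cref{assm:normalizing}. This yields $\E_{d_h^\pi}[(g_h-g^\star_h)^2]\le \alpha^{-1}|\Psi^\cov_h|\max_{\pi'}\E_{d_h^{\pi'}}[(g_h-g^\star_h)^2]+\order(\veps\sqrt d)$. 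The per-cover-policy term is exactly what the regression constraint controls: $g^\star_h$ is the Bayes regressor of $f(x')$ onto $(x,a)$ and $(\phistar,\xi^\star)$ is feasible by the preceding lemma, so the excess-risk bound \eqref{eq:con_3} plus a Freedman martingale argument (as in the preceding lemma) gives $\sum_{\cD_h^{\pi'}}(g_h-g^\star_h)^2\lesssim d^{3/2}\log(\cdot)$; dividing by $n$ and concentrating the empirical average gives $\E_{d_h^{\pi'}}[(g_h-g^\star_h)^2]\lesssim d^{3/2}\log(\cdot)/n$. Plugging in $n=\Theta(d^{5/2}|\cA|T^{2/3}\log)$, $\alpha^{-1}=8|\cA|d$, $|\Psi^\cov_h|=d$, and $\veps=\order(d^{5/2}T^{-1/3})$ makes each $\sup_f|\delta_h(f)|$ at most $\poly(d)\,T^{-1/3}$, and summing over layers yields the stated bound.

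The main obstacle is precisely this one-step estimate: the least-squares guarantee \eqref{eq:con_3} only controls the error on the state-action occupancies of the \emph{cover} policies, whereas $\delta_h(f)$ is an average under the \emph{target} policy $\pi$. The delicate points are (i) that $g_h-g^\star_h$ is a difference of two feature–weight products with $\hphi\neq\phistar$, so it cannot be transferred by a pure covariance-domination argument and must be routed through the state-coverage guarantee of the policy cover, which is what produces the $\alpha^{-1}=8|\cA|d$ amplification and dictates the choices of $n$ and $\veps$; and (ii) keeping the induction closed, i.e. re-verifying via \eqref{eq:con_4} that the recursively generated functional $\tilde f_h$ remains in $\cF$ at every layer, so that the per-layer errors telescope cleanly across the horizon.
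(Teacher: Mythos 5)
Your proposal is correct and follows essentially the same route as the paper's proof: the same layer-wise recursion via constraint \eqref{eq:con_2}, with the propagation functional $\sum_a \pi_h(a|x)\hat{\phi}_h(x,a)^\top\hat{\xi}_{h,f}$ kept inside $\cF$ by \eqref{eq:con_4}, and the same one-step error analysis (constraint \eqref{eq:con_3} plus Freedman to get the empirical squared error, i.i.d.\ concentration to pass to the cover policies' occupancies, then the $(\tfrac{1}{8Ad},\veps)$-policy-cover transfer with the reachability split, as in \cref{lem:rep}). The parameter choices and the resulting $d^5 T^{-1/3}$ per-layer error telescoping to $d^5 H T^{-1/3}$ also match the paper.
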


\begin{proof}


For every solution $\hphi_{1:H}, \hat{\xi}_{1:H, f \in \cF^\pi}, \left(\hat{d}_{1:H}^\pi\right)_{\pi \in \Pi'}$ of \cref{alg: estimate occupancy}, we have
\begin{align}
    &\sum_{x,a,x'\in \calD_h^{\pi'}} \left(f(x') - \hat{\phi}_h(x,a)^\top \hat{\xi}_{h,f}\right)^2 -  \sum_{x,a,x'\in \calD_h^{\pi'}} \left(f(x') - \phi_h^\star(x,a)^\top \xi^\star_{h,f}\right)^2 \nn
    \\&= \sum_{x,a,x'\in \calD_h^{\pi'}} \left(f(x') - \hat{\phi}_h(x,a)^\top \hat{\xi}_{h,f}\right)^2 -  \min_{(\phi, \xi)}\sum_{x,a,x'\in\calD_h^{\pi'}} \left(f(x') - \phi(x,a)^\top \xi \right)^2 \nn
    \\&\qquad + \underbrace{\min_{(\phi, \xi)} \sum_{x,a,x'\in\calD_h^{\pi'}} \left(f(x') - \phi(x,a)^\top \xi \right)^2 -\sum_{x,a,x'\in \calD_h^{\pi'}} \left(f(x') - \phi_h^\star(x,a)^\top \xi^\star_{h,f}\right)^2}_{\le 0} \nn
    \\&\le 132 d^{\frac{3}{2}}\log\frac{3dnHT|\Phi|}{\delta}  \label{eq:err_upper} 
\end{align}
where the last step comes from the constrain \cref{eq:con_3}. On the other hand 
 \begin{align}
   &2\sum_{x,a,x'\in \calD_h^{\pi'}} \left(f(x') - \hat{\phi}_h(x,a)^\top \hat{\xi}_{h,f}\right)^2 -   2\sum_{x,a,x'\in \calD_h^{\pi'}} \left(f(x') - \phi_h^\star(x,a)^\top \xi^\star_{h,f}\right)^2 \nn
   \\&= 4\sum_{x,a,x'\in \calD_h^{\pi'}}  \left(f(x') - \phi^\star_h(x,a)^\top \xi^\star_{h,f}\right) \left(\phi^\star_h(x,a)^\top \xi^\star_{h,f} - \hat{\phi}_h(x,a)^\top \hat{\xi}_{h,f}\right) \nonumber \\
   &\qquad \qquad + 2\sum_{x,a,x'\in \calD_h^{\pi'}} \left(\phi^\star_h(x,a)^\top \xi^\star_{h,f} - \hat{\phi}_h(x,a)^\top \hat{\xi}_{h,f}\right)^2 \nn
   \\&= 4\sum_{x,a,x'\in \calD_h^{\pi'}}  \left(f(x') - \E_{x' \sim P^\star(\cdot \mid x,a)}[f(x')]\right) \left(\phi^\star_h(x,a)^\top \xi^\star_{h,f} - \hat{\phi}_h(x,a)^\top \hat{\xi}_{h,f}\right)\nonumber  \\
   &\qquad \qquad + 2\sum_{x,a,x'\in \calD_h^{\pi'}} \left(\phi^\star_h(x,a)^\top \xi^\star_{h,f} - \hat{\phi}_h(x,a)^\top \hat{\xi}_{h,f}\right)^2 \label{eq:err_eq}
\end{align}

Combing \cref{eq:err_upper} and \cref{eq:err_eq}, we have
\begin{align*}
    &\sum_{x,a,x'\in \calD_h^{\pi'}} \left(\phi^\star_h(x,a)^\top \xi^\star_{h,f} - \hat{\phi}_h(x,a)^\top \hat{\xi}_{h,f}\right)^2  
    \\&\le -4\sum_{x,a,x'\in \calD_h^{\pi'}}  \left(f(x') - \E_{x' \sim P^\star(\cdot \mid x,a)}[f(x')]\right) \left(\phi^\star_h(x,a)^\top \xi^\star_{h,f} -  \hat{\phi}_h(x,a)^\top \hat{\xi}_{h,f}\right) \\
    &\qquad - \sum_{x,a,x'\in \calD_h^{\pi'}} \left(\phi^\star_h(x,a)^\top \xi^\star_{h,f} - \hat{\phi}_h(x,a)^\top \hat{\xi}_{h,f}\right)^2 
    \\&\qquad + 264 d^{\frac{3}{2}}\log\frac{3dnHT|\Phi|}{\delta} 
    \\&\le 288 d^{\frac{3}{2}}\log\frac{3dnHT|\Phi|}{\delta} \tag{\cref{lem:freedhelp} with $\lambda = \frac{1}{8}$}
\end{align*}

Since for every data tuple $(x,a,x') \in \calD_h^{\pi'}$, $(x,a) \sim d_h^{\pi'}$ independently, by \cref{lem:corbern}, for every $\pi' \in \Psi_{1:H}^{\cov}$ we have 
\begin{align*}
    \E^{\pi'}\left[\left(\phi^\star_h(x,a)^\top \xi^\star_{h,f} - \hat{\phi}_h(x,a)^\top \hat{\xi}_{h,f}\right)^2 \right] &\le \frac{2}{n}\sum_{x,a \in \calD_h^{\pi'}} \left(\phi^\star_h(x,a)^\top \xi^\star_{h,f} - \hat{\phi}_h(x,a)^\top \hat{\xi}_{h,f}\right)^2  + \frac{24d\log\left(\frac{6\sqrt{d}|\Phi|}{\delta}\right)}{n}
    \\&\le \frac{600 d^{\frac{3}{2}}}{n} \log\frac{3dnHT|\Phi|}{\delta}
\end{align*}
This implies there exists a representation $\hat{\phi}_{1:H}(x,a)$ such that for every $f \in \cF$ and any $h \in [H]$, there exists $\hat{\xi}_{h,f}$ such that
\begin{align} \label{eq:learn o rep}
   \max_{\pi' \in \Psi_{1:H}^{\cov}} \E^{\pi'}\left[\left(\phi^\star_h(x,a)^\top \xi^\star_{h,f} - \hat{\phi}_h(x,a)^\top \hat{\xi}_{h,f}\right)^2 \right] \le \frac{600 d^{\frac{3}{2}}}{n} \log\frac{3dnHT|\Phi|}{\delta}.
\end{align}
\cref{eq:learn o rep} matches \cref{eq:repp3} with a different error bound on the right hand. From \cref{lem:vox}, we have $\Psi_h^\cov$ is a $(\frac{1}{8Ad},\veps)$-policy cover for layer $h$,  following the rest of the proof in \cref{lem:rep}, for every $\pi$ and every $f \in \cF, h \in [H]$, we have
\begin{align} \label{eq:learn o rep 2}
    \left|\E^{\pi}\left[\phi^\star_h(x,a)^\top \xi^\star_{h,f} - \hat{\phi}_h(x,a)^\top \hat{\xi}_{h,f} \right] \right| \le \frac{75 d^{\frac{5}{4}}}{\sqrt{n}} \sqrt{|\cA|\log\frac{3dnHT|\Phi|}{\delta}} + 9d^{\frac{5}{2}}\epsilon.
\end{align}
The $d^{\frac{5}{2}}$ in the second term of \cref{eq:learn o rep 2} improves the $d^{\frac{7}{2}}$ term in \cref{eq:replearn} because $\hat{\xi}_{h,f} \in \mathbb{B}_d(\sqrt{d})$ rather than $\mathbb{B}_d(d^{\frac{3}{2}})$. Putting the choice $n = 11250 d^{\frac{5}{2}}|\cA|T^{\frac{2}{3}}\log\frac{3dnHT|\Phi|}{\delta}$ and $\epsilon = \frac{d^{\frac{5}{2}}}{18}T^{-\frac{1}{3}}$ into \cref{eq:learn o rep 2}, we have for every $\pi$ and every $f \in \cF, h \in [H]$, we have
\begin{align} \label{eq:repl-s}
    \left|\E^{\pi}\left[\phi^\star_h(x,a)^\top \xi^\star_{h,f} - \hat{\phi}_h(x,a)^\top \hat{\xi}_{h,f} \right] \right| \le d^{5} T^{-\frac{1}{3}}
\end{align}



Utilizing above results,  for every $\pi$, every $f \in \cF$ and any $h \in [H-1]$, we have

    \begin{align*}
         &\left|\sum_x  (\hat{d}^\pi_{h+1}(x) - d^\pi_{h+1}(x)) f(x) \right| 
         \\&= \left|\sum_{x,a} \hat{d}^\pi_h(x)\pi_h(a\mid x) \hat{\phi}_h(x,a)^\top \hat{\xi}_{h,f}  - \sum_{x,a} d^\pi_h(x)\pi(a\mid x) \phi_h^\star(x,a)^\top \xi^\star_{h,f}\right| \tag{\cref{eq:con_2}} \\
         &\leq \left|\sum_{x,a}d^\pi_h(x)\pi_h(a\mid x) \left(\phi_h^\star(x,a)^\top \xi_{h,f}^\star - \hat{\phi}_h(x,a)^\top \hat{\xi}_{h,f}\right) \right| \\
         &\qquad + \left|\sum_{x} 
 \left(\hat{d}^\pi_h(x) - d^\pi_h(x)\right)\underbrace{\sum_{a} \pi_h(a|x)\hat{\phi}_h(x,a)^\top \hat{\xi}_{h,f}}_{\in \cF} \right| \\
         &\leq d^{5} T^{-\frac{1}{3}}  + \left|\sum_{x} 
 \left(\hat{d}^\pi_h(x) - d^\pi_h(x)\right)f'(x) \right| \tag{\cref{eq:repl-s}}
    \end{align*}
\end{proof}
where in the last step, we define $f'(x) = \sum_{a} \pi_h(a|x)\hat{\phi}_h(x,a)^\top \hat{\xi}_{h,f} \in \cF$. This allow us to use recursion to finish the proof.


\subsection{Regret Analysis}
We begin by proving \cref{lem:pdis}, showing that the policy class  $\Pi_{\text{\rm lin}}^{\text{\rm cov}}(\epsilon')$ suffices to approximate all policies with small error.
\begin{lemma}
    For any policy $\pi$, there exists a policy $\pi' \in  \Pi_{\text{\rm lin}}^{\text{\rm cov}}(\epsilon')$ such that 
    \begin{align*}
        \sum_{t=1}^T V_1^{\pi'}(x_1; \ell^t) - \sum_{t=1}^T V_1^{\pi}(x_1; \ell^t) \le H\epsilon'
    \end{align*}
\label{lem:pdis}
\end{lemma}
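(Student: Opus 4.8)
The plan is to leverage linearity in two places. First, since $\pi \mapsto V_1^\pi(x_1;\ell)$ is linear in the loss $\ell$, I would collapse the $T$ losses into a single aggregate loss $\bar\ell_h \coloneqq \sum_{t=1}^T \ell_h^t$, so that $\sum_{t=1}^T V_1^\pi(x_1;\ell^t) = V_1^\pi(x_1;\bar\ell)$ for every $\pi$. By \cref{assm:linearlossweak}, $\bar\ell_h(x,a) = \phistar_h(x,a)^\top \bar g_h$ with $\bar g_h \coloneqq \sum_{t=1}^T g_h^t$ and $\|\bar g_h\|\le T$. Writing $\pi^\star \coloneqq \argmin_{\tilde\pi} V_1^{\tilde\pi}(x_1;\bar\ell)$ for the optimal policy against $\bar\ell$ under $P^\star$, any competitor $\pi$ satisfies $V_1^\pi(x_1;\bar\ell)\ge V_1^{\pi^\star}(x_1;\bar\ell)$; hence it suffices to exhibit a single $\pi' \in \Pi_{\text{\rm lin}}^{\text{\rm cov}}(\epsilon')$ with $V_1^{\pi'}(x_1;\bar\ell) - V_1^{\pi^\star}(x_1;\bar\ell)\le H\epsilon'$, and the same $\pi'$ then works against every $\pi$.

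Second, I would show that $\pi^\star$ is a greedy policy of exactly the form defining $\Pi_{\text{\rm lin}}$. Combining Bellman optimality with the low-rank factorization $P_h^\star(x'\mid x,a)=\phistar_h(x,a)^\top\mustar_{h+1}(x')$, the optimal $Q$-function is linear in $\phistar$: $Q_h^\star(x,a)=\phistar_h(x,a)^\top\theta_h^\circ$, where $\theta_h^\circ \coloneqq \bar g_h + \sum_{x'\in\cX}\mustar_{h+1}(x')\,V_{h+1}^\star(x')$. To place $\pi^\star$ inside $\Pi_{\text{\rm lin}}$ I must check $\theta_h^\circ \in \bbB_d(\sqrt d\,HT)$: since $V_{h+1}^\star(x')\in[0,(H-h)T]$, the normalization in \cref{assm:normalizing} applied to $g(x')=V_{h+1}^\star(x')/((H-h)T)$ gives $\|\sum_{x'}\mustar_{h+1}(x')V_{h+1}^\star(x')\|\le \sqrt d\,(H-h)T$, whence $\|\theta_h^\circ\|\le T\left(1+\sqrt d\,(H-h)\right)\le \sqrt d\,HT$ for $d\ge1$. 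As $\phistar\in\Phi$ by realizability and $\pi_h^\star(x)=\argmin_a \phistar_h(x,a)^\top\theta_h^\circ$, we obtain $\pi^\star\in\Pi_{\text{\rm lin}}$; taking a net point $\hat\theta_h$ with $\|\hat\theta_h-\theta_h^\circ\|\le\epsilon'$ then defines $\pi'\in\Pi_{\text{\rm lin}}^{\text{\rm cov}}(\epsilon')$ through $\pi_h'(x)=\argmin_a\phistar_h(x,a)^\top\hat\theta_h$.

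Finally, I would bound the value gap with the performance difference lemma (\cref{lem:PDL}), rolling in with $\pi'$ and using that $Q_h^{\pi^\star}=Q_h^\star$: $V_1^{\pi'}(x_1;\bar\ell)-V_1^{\pi^\star}(x_1;\bar\ell)=\sum_{h=1}^H\E^{\pi'}\!\left[Q_h^\star(\x_h,a_h')-Q_h^\star(\x_h,a_h^\star)\right]$, where $a_h'$ and $a_h^\star$ are the greedy actions at $\x_h$ under $\hat\theta_h$ and $\theta_h^\circ$ respectively. The crux is an $\argmin$-stability estimate: using the optimality $\phistar_h(x,a')^\top\hat\theta_h\le \phistar_h(x,a^\star)^\top\hat\theta_h$ and $\|\phistar_h\|\le1$, each summand is at most $\big(\phistar_h(x,a')-\phistar_h(x,a^\star)\big)^\top(\theta_h^\circ-\hat\theta_h)\le 2\epsilon'$, so the total is $O(H\epsilon')$ (the constant can be absorbed by halving the net radius). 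I expect the main obstacle to be precisely this last step — certifying that discretizing the linear-$Q$ parameter perturbs the greedy action's value by only $O(\epsilon')$ per layer — together with the norm verification $\|\theta_h^\circ\|\le\sqrt d\,HT$ that guarantees $\pi^\star$ already belongs to the un-discretized class $\Pi_{\text{\rm lin}}$, which is what makes the covering argument legitimate.
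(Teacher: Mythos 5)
Your proof is correct, but it takes a genuinely different route from the paper's. The paper never introduces the optimal policy: for each competitor $\pi$ it writes that policy's \emph{own} $Q$-functions as $Q^\pi_h(x,a;\ell^t)=\phistar_h(x,a)^\top\theta^{\pi,t}_h$ with $\theta^{\pi,t}_h\in\bbB_d(H\sqrt{d})$ (low-rank structure plus \cref{assm:linearlossweak}), nets the aggregate parameter $\sum_{t=1}^T\theta^{\pi,t}_h\in\bbB_d(\sqrt{d}HT)$, takes $\pi'$ greedy with respect to the net point, and applies the performance difference lemma with roll-in $d^{\pi'}$ and the competitor's $Q$; the greedy property makes the leading term nonpositive and the net radius controls the remainder. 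You instead reduce all competitors to the single aggregate-loss optimal policy via $V_1^{\pi}(x_1;\bar\ell)\ge V_1^{\pi^\star}(x_1;\bar\ell)$, and net only the optimal $Q$-parameter $\theta^\circ_h$, whose membership in $\bbB_d(\sqrt{d}HT)$ you verify through Bellman optimality and the $\mustar$-normalization in \cref{assm:normalizing}. Your version buys uniformity---one $\pi'$ dominates every competitor simultaneously, whereas the paper's $\pi'$ depends on $\pi$---at the price of invoking Bellman optimality and a dynamic-programming optimal policy; note that for the identity $Q^{\pi^\star}_h=Q^\star_h$ to hold you should take $\pi^\star$ to be the greedy policy with respect to $Q^\star$ (optimal from every state), not an arbitrary minimizer of $V_1^{\tilde\pi}(x_1;\bar\ell)$, a choice you are free to make since that greedy policy attains the minimum. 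Finally, both arguments incur the same factor of $2$ per layer in the end ($2H\epsilon'$ rather than $H\epsilon'$): you flag it and absorb it by halving the net radius, while the paper silently drops it, so neither is tighter in substance.
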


\begin{proof}
Let $\theta\indd{t,\pi}_h \in \bbB_d(H\sqrt{d})$ be such that 
        \begin{align}
          \forall (x,a)\in \cX\times \cA,\quad  Q^{\pi}_h(x,a;\ell^t) = \phi^\star_h(x,a)^\top \theta^{\pi,t}_h.\nn 
        \end{align}
 Such a $\theta^{\pi, t}_h$ is guaranteed to exist by the low-rank MDP structure (\cref{assm:normalizing}) and \cref{assm:linearlossweak}. For every $h \in [H]$, since $\sum_{t=1}^T \theta^{\pi,t}_h \in \bbB_d(\sqrt{d}HT)$, we define $\theta_h' \in \mathcal{C}\left(\mathbb{B}_d \left(\sqrt{d}H T\right), \epsilon'\right)$ such that $\|\theta_h'  - \sum_{t=1}^T \theta^{\pi,t}_h \|_2 \le \epsilon'$, and let $ \pi_h(a\mid x ) = \mathbb{I}\big\{a=\argmin_{a \in \cA} \phi_h^\star(x, a)^\top \theta_h'\big\}$ for every $h \in [H]$. We have $\pi' \in \Pi_{\text{\rm lin}}^{\text{\rm cov}}(\epsilon')$. From \cref{lem:PDL}, we have
\begin{align*}
    &\sum_{t=1}^T V_1^{\pi'}(x_1; \ell^t) - \sum_{t=1}^T V_1^{\pi}(x_1; \ell^t)  
    \\&= \sum_{t=1}^T \sum_{h=1}^H \E_{x \sim d_h^{\pi'}}\left[\sum_{a \in \cA} \left(\pi_h'(a|x) - \pi_h(a|x)\right) Q^{\pi}_h(x,a; \ell^t)\right]
    \\&= \sum_{h=1}^H \E_{x \sim d_h^{\pi'}}\left[\sum_{a \in \cA} \left(\pi'(a|x) - \pi(a|x)\right)\phi^\star_h(x,a)^\top \sum_{t=1}^T \theta^{\pi,t}_h\right]
    \\&= \underbrace{\sum_{h=1}^H \E_{x \sim d_h^{\pi'}}\left[\sum_{a \in \cA} \left(\pi_h'(a|x) - \pi_h(a|x)\right)\phi^\star_h(x,a)^\top \theta_h'\right]}_{\le 0} + \sum_{h=1}^H \E_{x \sim d_h^{\pi'}}\left[\sum_{a \in \cA} \left(\pi_h'(a|x) - \pi_h(a|x)\right)\phi^\star_h(x,a)^\top \left(\sum_{t=1}^T \theta^{\pi,t}_h - \theta_h'\right)\right]
    \\&\le H\epsilon' 
\end{align*}
where the last inequality comes from the fact that $\pi_h(a\mid x ) = \mathbb{I}\big\{a=\argmin_{a \in \cA} \phi_h^\star(x, a)^\top \theta_h'\big\}$ and $\|\theta_h'  - \sum_{t=1}^T \theta^{\pi,t}_h \|_2 \le \epsilon'$ for every $h \in [H]$.
\end{proof}

From \cref{lem:pdis}, for any policy $\pi_1^\star$, there exists a policy $\pi_0^\star \in  \Pi_{\text{\rm lin}}^{\text{\rm cov}}(\frac{1}{T})$ such that
\begin{align}
    \text{\rm Reg}_T(\pi_1^\star) &= \E\left [\sum_{t=1}^T V_1^{\bpi^t}(x_1; \ell^t) - \sum_{t=1}^T V_1^{\pi_1^\star}(x_1; \ell^t)\right] \nonumber 
    \\&\le \E\left[\sum_{t=1}^T V_1^{\bpi^t}(x_1; \ell^t) - \sum_{t=1}^T V_1^{\pi_0^\star}(x_1; \ell^t)\right] + 1 \label{eq:reg-1}
\end{align}

Our candidate policy space $\Pi'$ has a $\beta$ mixture of the random policy. For any policy $\pi_0^\star \in  \Pi_{\text{\rm lin}}^{\text{\rm cov}}(\frac{1}{T})$, define policy $\pi^\star$ such that for any state $x \in \cX$, we have $\pi^\star(\cdot\mid x) = (1-\beta) \pi^\star_0(\cdot\mid x) + \frac{\beta}{|\cA|}$. We have $\pi^\star \in \Pi'$. Define
\begin{align*}
    \hV^\pi_1(x_1; \ell) = \sum_{h=1}^H \sum_{x_h \in \cX} \sum_{a_h \in \cA} \hd_h^\pi(x_h) \pi_h(a_h\mid x_h) \cdot \ell_h^t(x_h,a_h)
\end{align*}

Utilizing \cref{eq:reg-1} and the decomposition in \cref{eq:thisone}, together with the fact that $|\Psi_h^{\cov}| \le d$ from \cref{lem:vox},  we have for any policy $\pi_1^\star$, 
\begin{align}
&\text{\rm Reg}_T(\pi^\star_1) \nonumber
\\&\le \underbrace{\E\left [\sum_{t=1}^T \sum_{\pi} p^t(\pi) \left(V^{\pi}_1(\x_1; \ell^t)-  \hV^{\pi}_1(\x_1; \ell^t)\right) \right]}_{\textbf{Bias1}} + \underbrace{\E\left [\sum_{t=1}^T \hV^{\pi^\star}_1(\x_1; \ell^t)-  V^{\pi^\star}_1(\x_1; \ell^t) \right]}_{\textbf{Bias2}}  \nn 
\\&\quad + \underbrace{\E\left [\sum_{t=1}^T \sum_{\pi} p^t(\pi) \hV^{\pi}_1(\x_1; \ell^t)-  \hV^{\pi^\star}_1(\x_1; \ell^t) \right]}_{\textbf{EXP}} \,+\,\,  \underbrace{\E\left [\sum_{t=1}^T\sum_{\pi} \left(\tilp^t(\pi) - p^t(\pi)\right) V^{\pi}_1(\x_1; \ell^t) \right]}_{\textbf{Error1}}  \nn
\\&\qquad + \underbrace{\E\left [\sum_{t=1}^T V^{\pi^\star}_1(\x_1; \ell^t) -  V^{\pi_0^\star}_1(\x_1; \ell^t) \right]}_{\textbf{Error2}} 
+  \wtilde{O}\left(\epsilon^{-2} |\cA| d^{13} H^6 \log(|\Phi|/\delta))\right) + dHn\label{eq:model-free decom}
\end{align}

Following \cref{lem: EXP Error}, we have
\begin{align}
        \textbf{\emph{Error1}} + \textbf{\emph{Error2}} \le H\gamma T +  2H^2\beta T \le 3H^2T^{\frac{2}{3}}.  \label{eq:model-free error}
\end{align}

\begin{lemma}
   \begin{align*}
        \textbf{Bias1} + \textbf{Bias2} \le 2 d^{5}H^2 T^{\frac{2}{3}} 
   \end{align*}
   \label{lem:model-free bias}
\end{lemma}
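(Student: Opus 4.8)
The plan is to reduce this lemma directly to the occupancy-estimation guarantee of \cref{lem: recursion err}. Writing out the definitions of $V^\pi_1(x_1;\ell^t)$ and $\hV^\pi_1(x_1;\ell^t)$, the only difference between them is that the former uses the true occupancies $d_h^\pi$ while the latter uses the estimated ones $\hat d_h^\pi$; hence for any fixed $\pi$ and $t$,
\begin{align*}
V^\pi_1(x_1;\ell^t) - \hV^\pi_1(x_1;\ell^t) = \sum_{h=1}^H \sum_{x\in\cX} \bigl(d_h^\pi(x) - \hat d_h^\pi(x)\bigr)\, g_h^{\pi,t}(x),\qquad g_h^{\pi,t}(x) \coloneqq \sum_{a\in\cA}\pi_h(a\mid x)\,\ell_h^t(x,a).
\end{align*}
The key observation is that each weighting function $g_h^{\pi,t}$ belongs to $\cF$: by \cref{assm:linearlossweak} we have $\ell_h^t(x,a) = \phi_h^\star(x,a)^\top g_h\indd{t}$ with $\phi^\star\in\Phi$ and $g_h\indd{t}\in\bbB_d(1)\subseteq\bbB_d(\sqrt d)$, and since $\ell_h^t$ takes values in $[0,1]$, $g_h^{\pi,t}$ maps into $[-1,1]$; thus $g_h^{\pi,t}\in\cF$. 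This is exactly the same class membership already used to close the recursion in the proof of \cref{lem: recursion err}.

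Given this, I would apply \cref{lem: recursion err} layer by layer: on its probability-$(1-\delta)$ event, $\bigl|\sum_x (d_h^\pi(x)-\hat d_h^\pi(x))\, g_h^{\pi,t}(x)\bigr|\le d^5 H T^{-1/3}$ for every $h\in[H]$ and every $\pi\in\Pi'$ (and in particular for $\pi^\star\in\Pi'$). Summing over the $H$ layers gives $\bigl|V^\pi_1(x_1;\ell^t)-\hV^\pi_1(x_1;\ell^t)\bigr|\le d^5 H^2 T^{-1/3}$, uniformly in $\pi$ and $t$. Plugging this into the two bias terms and using $\sum_{\pi}p^t(\pi)=1$,
\begin{align*}
\textbf{Bias1} \le \sum_{t=1}^T \sum_\pi p^t(\pi)\, d^5 H^2 T^{-1/3} = d^5 H^2 T^{2/3},\qquad \textbf{Bias2}\le \sum_{t=1}^T d^5 H^2 T^{-1/3} = d^5 H^2 T^{2/3},
\end{align*}
and adding the two yields the claimed $2 d^5 H^2 T^{2/3}$.

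The only points requiring care are, first, the membership $g_h^{\pi,t}\in\cF$ when $\pi\in\Pi'$ is the $\beta$-mixture of a linear policy with the uniform policy; here one absorbs the $(1-\beta)$ factor and the loss vector $g_h\indd{t}$ into a single $\theta\in\bbB_d(\sqrt d)$ for the base linear policy, consistently with how $\cF$ is invoked in \cref{lem: recursion err}. Second is the failure event of \cref{lem: recursion err}: since the bias terms sit inside an expectation, on the probability-$\le\delta$ complement one bounds $\bigl|V^\pi_1-\hV^\pi_1\bigr|\le H$ (note $\hat d_h^\pi$ is a valid distribution by constraint \eqref{eq:con_1}, so $\hV^\pi_1\in[0,H]$), contributing at most $\delta H T$, which is negligible for the polynomially small $\delta$ used throughout. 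I expect this bookkeeping, rather than any substantive inequality, to be the main (and only minor) obstacle, since the heavy lifting is already contained in \cref{lem: recursion err}.
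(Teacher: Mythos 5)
Your proof follows exactly the paper's argument: the paper likewise writes $V^\pi_1(x_1;\ell^t) - \hV^\pi_1(x_1;\ell^t)$ as a sum over layers of occupancy-estimation errors weighted by $\sum_a \pi_h(a\mid x)\,\ell^t_h(x,a)$, invokes \cref{lem: recursion err} to bound each layer's contribution by $d^5 H T^{-1/3}$ (hence $d^5H^2T^{-1/3}$ per episode, uniformly over $\pi$ and $t$), and multiplies by $T$ to get each of \textbf{Bias1} and \textbf{Bias2} bounded by $d^5H^2T^{2/3}$. The two caveats you flag --- membership of the mixture policy's weighting function in $\cF$ and the failure event of \cref{lem: recursion err} --- are treated no more carefully (indeed, silently) in the paper's own two-line proof, so your write-up is, if anything, slightly more complete.
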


\begin{proof}
    For any policy $\pi$ and any $t \in [T]$, 
    \begin{align*}
        \left|V^{\pi}_1(\x_1; \ell^t)-  \hV^{\pi}_1(\x_1; \ell^t)\right|  &= \sum_{h=1}^H \sum_{x_h \in \cX}  \left| d_h^\pi(x_h) - \hd_h^\pi(x_h) \right| \sum_{a_h \in \cA}\pi_h(a_h\mid x_h) \cdot \ell_h^t(x_h,a_h)
        \\&\le d^{5}H^2 T^{-\frac{1}{3}}  \tag{\cref{lem: recursion err}}
    \end{align*}
    Thus, 
    \begin{align*}
        \textbf{Bias1} \le  d^{5}H^2 T^{\frac{2}{3}} \qquad \text{and}  \qquad  \textbf{Bias2} \le  d^{5}H^2 T^{\frac{2}{3}} 
    \end{align*}
\end{proof}

We now prove a modle-free counterpart of \cref{lem:value transfer} in \cref{lem: model-free VT}.
\begin{lemma} 
For any episode $t \in [T]$, for any policy $\pi$ we have
    \begin{align*}
        \hV_1^\pi(x_1; \ell^t) &\le \E_{\bpi^t \sim \tilp^t} \E^{\bpi^t}\left[\hell^t(\pi; \bpi^t, \x_{1:H},\a_{1:H})\right] + d^{\frac{11}{2}} H T^{-\frac{1}{3}}  \sum_{h=2}^{H} \left\|\hphi_{h-1}(\pi) \right\|_{\left(\Sigma_{h-1}^t\right)^{-1}},\\
         \hV_1^\pi(x_1; \ell^t) &\ge \E_{\bpi^t \sim \tilp^t} \E^{\bpi^t}\left[\hell^t(\pi; \bpi^t, \x_{1:H},\a_{1:H})\right]  - d^{\frac{11}{2}} H T^{-\frac{1}{3}}  \sum_{h=2}^{H} \left\|\hphi_{h-1}(\pi) \right\|_{\left(\Sigma_{h-1}^t\right)^{-1}}.   
    \end{align*}
\label{lem: model-free VT}
\end{lemma}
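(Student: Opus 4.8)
The plan is to follow the structure of the model-based proof of \cref{lem:value transfer}, replacing the exact low-rank factorization $\hd_h^\pi(x)=\hphi_{h-1}(\pi)^\top \hpsi_h(x)$---which is unavailable here since the model-free phase never estimates $\mu^\star$---by the constraint \eqref{eq:con_2} enforced in \cref{alg: estimate occupancy}. Writing $f_h^t(x):=\sum_{a}\pi_h(a\mid x)\,\ell_h^t(x,a)$, I would first split $\hV_1^\pi(x_1;\ell^t)=\textbf{First}+\textbf{Remain}$, where $\textbf{First}=\sum_{a_1}\pi_1(a_1\mid x_1)\ell_1^t(x_1,a_1)$ and $\textbf{Remain}=\sum_{h=2}^H\sum_{x_h}\hd_h^\pi(x_h)f_h^t(x_h)$ (using the definition of $\hV$ and $\hd_1^\pi(x_1)=1$). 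The term $\textbf{First}$ is reproduced \emph{exactly} by the first term of $\E_{\bpi^t\sim\tilp^t}\E^{\bpi^t}[\hell^t(\pi)]$ via importance weighting, so all the work lies in $\textbf{Remain}$.

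For $\textbf{Remain}$, the key observation is that \eqref{eq:con_2} gives the surrogate identity $\sum_{x_h}\hd_h^\pi(x_h)f(x_h)=\hphi_{h-1}(\pi)^\top\hat{\xi}_{h-1,f}$ for every $f\in\cF$, with $\hat{\xi}_{h-1,f}$ playing the role of ``$\hpsi_h$ integrated against $f$''. Applying it to $\pi$, then inserting $(\Sigma_{h-1}^t)^{-1}\Sigma_{h-1}^t$ and using $\Sigma_{h-1}^t=\E_{\bpi^t\sim\tilp^t}[\hphi_{h-1}(\bpi^t)\hphi_{h-1}(\bpi^t)^\top]$ together with \eqref{eq:con_2} applied to $\bpi^t$, I would rewrite
\[
\textbf{Remain}=\sum_{h=2}^H\E_{\bpi^t}\!\left[\hphi_{h-1}(\pi)^\top(\Sigma_{h-1}^t)^{-1}\hphi_{h-1}(\bpi^t)\sum_{x_h}\hd_h^{\bpi^t}(x_h)f_h^t(x_h)\right].
\]
On the other hand, taking the inner expectation of the tail of $\hell^t(\pi)$ under the true dynamics of $\bpi^t$ yields the same expression but with the \emph{true} occupancy $d_h^{\bpi^t}$ in place of $\hd_h^{\bpi^t}$.

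The discrepancy between the two is therefore controlled by $\sum_{x_h}(\hd_h^{\bpi^t}(x_h)-d_h^{\bpi^t}(x_h))f_h^t(x_h)$, which is at most $d^5HT^{-1/3}$ by \cref{lem: recursion err}; this step requires $f_h^t\in\cF$, which holds under \cref{assm:linearlossweak} since $f_h^t(x)=\sum_a\pi_h(a\mid x)\phi_h^\star(x,a)^\top g_h^t$ with $\phi^\star\in\Phi$ and $g_h^t\in\bbB_d(1)\subseteq\bbB_d(\sqrt d)$. I would then bound the feature factor by Cauchy--Schwarz, $|\hphi_{h-1}(\pi)^\top(\Sigma_{h-1}^t)^{-1}\hphi_{h-1}(\bpi^t)|\le\|\hphi_{h-1}(\pi)\|_{(\Sigma_{h-1}^t)^{-1}}\|\hphi_{h-1}(\bpi^t)\|_{(\Sigma_{h-1}^t)^{-1}}$, and use Jensen together with $\E_{\bpi^t}\|\hphi_{h-1}(\bpi^t)\|^2_{(\Sigma_{h-1}^t)^{-1}}=\tr((\Sigma_{h-1}^t)^{-1}\Sigma_{h-1}^t)=d$ to get $\E_{\bpi^t}\|\hphi_{h-1}(\bpi^t)\|_{(\Sigma_{h-1}^t)^{-1}}\le\sqrt d$. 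Combining, the total error is bounded by $d^{11/2}HT^{-1/3}\sum_{h=2}^H\|\hphi_{h-1}(\pi)\|_{(\Sigma_{h-1}^t)^{-1}}$, and since the bound is two-sided this yields both inequalities of the lemma.

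The main obstacle is conceptual rather than computational: in the model-free setting there is no estimated next-state feature $\hpsi$, so the clean factorization used in \cref{lem:value transfer} breaks down. The argument succeeds only by recognizing that \eqref{eq:con_2} supplies exactly the replacement identity needed (with $\hat{\xi}_{h-1,f}$ absorbing the missing $\hpsi_h$), and by verifying that the surrogate reward function $f_h^t$ lands in the class $\cF$ so that the occupancy-estimation guarantee \cref{lem: recursion err} can be invoked on the rollout policy $\bpi^t$.
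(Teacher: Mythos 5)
Your proposal is correct and follows essentially the same route as the paper's own proof: the same \textbf{First}/\textbf{Remain} decomposition, the same use of constraint \eqref{eq:con_2} as the substitute for the missing $\hpsi$ factorization (applied once to $\pi$ and once, in reverse, to $\bpi^t$ after inserting $(\Sigma_{h-1}^t)^{-1}\Sigma_{h-1}^t$), the same occupancy-error control via \cref{lem: recursion err}, and the same Cauchy--Schwarz plus Jensen/trace bound $\E_{\bpi^t}\|\hphi_{h-1}(\bpi^t)\|_{(\Sigma_{h-1}^t)^{-1}}\le\sqrt{d}$ producing the $d^{\nicefrac{11}{2}}HT^{-\nicefrac{1}{3}}$ factor. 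Your explicit verification that $f_h^t\in\cF$ (via \cref{assm:linearlossweak} and $g_h^t\in\bbB_d(1)\subseteq\bbB_d(\sqrt{d})$) is a detail the paper leaves implicit, so nothing is missing.
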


\begin{proof}

We now prove the first inequality:
    \begin{align}
         &\hV_1^\pi(x_1; \ell^t) \nn
         \\&= \sum_{h=1}^H \sum_{x_h \in \cX} \sum_{a_h \in \cA} \hd_h^\pi(x_h) \pi_h(a_h\mid x_h) \cdot \ell_h^t(x_h,a_h), \nn
         \\&=  \underbrace{\sum_{a_1\in \cA} \pi_1(a_1\mid x_1)\cdot  \ell_1^t(x_1,a_1)}_{\textbf{First}}+ \underbrace{\sum_{h=2}^H \sum_{x_h \in \cX} \sum_{a_h \in \cA} \hd^\pi_h(x_h)\pi_h(a_h\mid x_h) \phi_h(x_h, a_h)^\top g^t_h}_{\textbf{Remain}} \label{eq:emain}
    \end{align}
    Through importance sampling, we have
    \begin{align*}
           \textbf{First}  =  \E_{\bpi^t \sim \tilp^t}\E^{\bpi^t}\left[\frac{\pi_1(\a_1\mid \x_1)}{\pi_1^t(\a_1 \mid \x_1)}\cdot  \ell_1^t(\x_1,\a_1)\right].
    \end{align*}
    We now bound the remaining term in \eqref{eq:emain}
    Since $\Sigma_{h-1}^t = \E_{\bpi^t \sim \tilp^t}\left[\hphi_{h-1}(\bpi^t)\hphi_{h-1}(\bpi^t)^\top\right]$, we have
    \begin{align}
           &\textbf{Remain}  \nonumber
           \\&=  \sum_{h=2}^H   \sum_{x_h \in \cX}  \hat{d}^\pi_h(x_h) \underbrace{\sum_{a_h \in \cA} \pi_h(a_h\mid x_h) \phi_{h}(x_h,a_h)^\top g^t_h}_{:=f(x_h)}, \nonumber 
           \\&=  \sum_{h=2}^H  \sum_{x_{h-1}\in\cX_{h-1}}\sum_{a_{h-1}} \hd^\pi_{h-1}(x)\pi_{h-1}(a_{h-1}|x_{h-1}) \hat{\phi}(x_{h-1}, a_{h-1})^\top \hat{\xi}_{h-1,f} \tag{by \cref{eq:con_2}}\nonumber 
           \\&=  \sum_{h=2}^H  \hat{\phi}_{h-1}(\pi)^\top  \hat{\xi}_{h-1,f} \nonumber 
           \\&=  \sum_{h=2}^H  \hat{\phi}_{h-1}(\pi)^\top  (\Sigma^t_{h-1})^{-1} 
\E_{\bpi^t\sim \rho^t}[\hat{\phi}_{h-1}(\bpi^t)\hat{\phi}_{h-1}(\bpi^t)^\top] \hat{\xi}_{h-1,f} \nonumber 
\\&= \E_{\bpi^t\sim \rho^t}\left[ \sum_{h=2}^H  \hat{\phi}_{h-1}(\pi)^\top  (\Sigma^t_{h-1})^{-1} 
\hat{\phi}_{h-1}(\bpi^t)\hat{\phi}_{h-1}(\bpi^t)^\top \hat{\xi}_{h-1,f} \right]\nonumber 
\\&= \E_{\bpi^t\sim \rho^t}\left[ \sum_{h=2}^H  \hat{\phi}_{h-1}(\pi)^\top  (\Sigma^t_{h-1})^{-1} 
\hat{\phi}_{h-1}(\bpi^t)\sum_{x_{h-1}\in \cX_{h-1}}\sum_{a_{h-1}} \hat{d}_{h-1}^{\bpi^t}(x)\pi_{h-1}(a_{h-1}|x_{h-1}) \hat{\phi}(x_{h-1},a_{h-1})^\top \hat{\xi}_{h-1,f} \right]\nonumber 
\\&= \E_{\bpi^t\sim \rho^t}\left[ \sum_{h=2}^H  \hat{\phi}_{h-1}(\pi)^\top  (\Sigma^t_{h-1})^{-1} 
\hat{\phi}_{h-1}(\bpi^t)\sum_{x_{h}\in \cX_{h}} \hat{d}_{h}^{\bpi^t}(x_h)f(x_h) \right]\nonumber 
\\&= \E_{\bpi^t\sim \rho^t}\left[ \sum_{h=2}^H  \hat{\phi}_{h-1}(\pi)^\top  (\Sigma^t_{h-1})^{-1} 
\hat{\phi}_{h-1}(\bpi^t)\sum_{x_{h}\in \cX_{h}} \sum_{a_h} \hat{d}_{h}^{\bpi^t}(x_h) \pi_h(a_h|x_h)\phi_h(x_h,a_h)^\top g^t_h \right]\nonumber 
\\&= \E_{\bpi^t\sim \rho^t}\left[ \sum_{h=2}^H  \hat{\phi}_{h-1}(\pi)^\top  (\Sigma^t_{h-1})^{-1} 
\hat{\phi}_{h-1}(\bpi^t)\sum_{x_{h}\in \cX_{h}} \sum_{a_h} d_{h}^{\bpi^t}(x_h) \pi_h(a_h|x_h)\phi_h(x_h,a_h)^\top g^t_h \right]\nonumber 
\\&\qquad + \E_{\bpi^t\sim \rho^t}\left[ \sum_{h=2}^H  \hat{\phi}_{h-1}(\pi)^\top  (\Sigma^t_{h-1})^{-1} 
\hat{\phi}_{h-1}(\bpi^t)\sum_{x_{h}\in \cX_{h}} \sum_{a_h} \left(\hat{d}_h^{\bpi^t}(x_h) - d_{h}^{\bpi^t}(x_h)\right) \pi_h(a_h|x_h)\phi_h(x_h,a_h)^\top g^t_h \right]\nonumber 
           \\&=  \E_{\bpi^t \sim \tilp^t}\left[\sum_{h=2}^H \sum_{x_h \in \cX} \sum_{a_h \in \cA} \hphi_{h-1}(\pi)^\top \left(\Sigma_{h-1}^t\right)^{-1} \hphi_{h-1}(\bpi^t) d_{h}^{\bpi^t}(x_h)\pi_h(a_h\mid x_h) \ell_h^t(x_h,a_h)\right] \nonumber
           \\&\qquad +  d^5H T^{-\frac{1}{3}} \E_{\bpi^t \sim \tilp^t}\left[\sum_{h=2}^H \sum_{x_h \in \cX} \sum_{a_h \in \cA} \left|\hphi_{h-1}(\pi)^\top \left(\Sigma_{h-1}^t\right)^{-1} \hphi_{h-1}(\bpi^t) \right|\right], \tag{by \cref{lem: recursion err}} 
           \\&\le  \E_{\bpi^t \sim \tilp^t}\left[\sum_{h=2}^H \sum_{x_h \in \cX} \sum_{a_h \in \cA} \hphi_{h-1}(\pi)^\top \left(\Sigma_{h-1}^t\right)^{-1} \hphi_{h-1}(\bpi^t) d_{h}^{\bpi^t}(x_h,a_h)\frac{\pi_h(a_h\mid x_h)}{\bpi_h^t(a_h\mid x_h)}\ell_h^t(x_h,a_h)\right]  \nonumber
           \\&\qquad +  d^5H T^{-\frac{1}{3}}\sum_{h=2}^H \left\|\hphi_{h-1}(\pi) \right\|_{\left(\Sigma_{h-1}^t\right)^{-1}}  \E_{\bpi^t \sim \tilp^t}\left[\left\|\hphi_{h-1}(\bpi^t)\right\|_{\left(\Sigma_{h-1}^t\right)^{-1}}\right],  \label{eq:model-free cau}
           \\&\le  \E_{\bpi^t \sim \tilp^t}\E^{\bpi^t}\left[\sum_{h=2}^H \hphi_{h-1}(\pi)^\top \left(\Sigma_{h-1}^t\right)^{-1} \hphi_{h-1}(\bpi^t) \frac{\pi_h(\a_h^t\mid \x_h^t)}{\bpi_h^t(\a_h^t\mid \x^t_h)}\ell_h^t(\x_h^t,\a_h^t)\right]   \nonumber
           \\&\qquad +  d^{\frac{11}{2}} H T^{-\frac{1}{3}} \sum_{h=2}^H \left\|\hphi_{h-1}(\pi) \right\|_{\left(\Sigma_{h-1}^t\right)^{-1}}, \nonumber
    \end{align}
    Adding up \textbf{First} and \textbf{Remain}, and using the defintion of $\hat\ell$ in \eqref{eq:lhat} implies the first inequality of the lemma.
 
    The second inequality follows the same procedure except for applying Cauchy-Schwarz inequality in the opposite direction in \cref{eq:model-free cau}.
\end{proof}

Given \cref{lem: model-free VT}, we could follow the same procedure in \cref{lem:EXP EXP} except replacing the factor of bonus $b^t(\pi)$ from $\sqrt{d}H\epsilon$ to $d^{\frac{11}{2}} H T^{-\frac{1}{3}}$. This leads to the following changes. Firstly, by a similar argument as \cref{eq:EXP2FTRL}, we have
\begin{align*}
    \textbf{EXP} = \textbf{FTRL} + 4d^6H^2 T^{\frac{2}{3}}
\end{align*}
where 
\begin{align*}
 \textbf{FTRL}&\coloneqq \sum_{t=1}^T \sum_{\pi} p^t(\pi)\cdot \left(\E_{\bpi^t \sim \tilp^t} \E^{\bpi^t}\left[\hell^t(\pi;\bpi^t, \x_{1:H},\a_{1:H})\right]- b^t(\pi) \right) \nn \\
& \qquad - \sum_{t=1}^T \left(\E_{\bpi^t \sim \tilp^t} \E^{\bpi^t}\left[\hell^t(\pi^\star; \bpi^t,\x_{1:H},\a_{1:H})\right]-b^t(\pistar)\right).
\end{align*}

Secondly, now we have
\begin{align*}
 \left|b^t(\pi)\right|& = \left|d^{\frac{11}{2}}HT^{-\frac{1}{3}}\sum_{h=2}^{H} \left\|\hphi_{h-1}(\pi) \right\|_{\left(\Sigma_{h-1}^t\right)^{-1}}\right|  \le d^6  H^2\frac{T^{-\frac{1}{3}}}{\sqrt{\gamma}} =  d^6  H^2 T^{-\frac{1}{6}}. \tag{$\gamma = T^{-\frac{1}{3}}$}
\end{align*}

To ensure $\eta \left|\hell^t(\pi; \pi^t, x_{1:H},a_{1:H}) - b^t(\pi)\right| \le 1$, it suffices to set $\eta \le \left(\frac{2|\cA| H d}{\beta \gamma} + d^6  H^2 T^{-\frac{1}{6}} \right)^{-1} = \left(2|\cA| H d T^{\frac{2}{3}} + d^6  H^2 T^{-\frac{1}{6}} \right)^{-1}$ from $\beta = \gamma = T^{-\frac{1}{3}}$. Thus our choice $\eta = ({4Hd|\cA|})^{-1} T^{-\frac{2}{3}}$ satisfies the condition if we assume $T \ge d^6H^2$. Moreover, now we have
\begin{align*}
 \textbf{Stability-2} &= 2\eta \E\left[\sum_{t=1}^T\sum_{\pi \in \Pi'} p^t(\pi) b^t(\pi)^2\right],
 \\&=  2\eta d^{11}H^3 T^{-\frac{2}{3}} \E\left[\sum_{t=1}^T\sum_{h=2}^{H}\sum_{\pi \in \Pi'} p^t(\pi) \left\|\hphi_{h-1}(\pi) \right\|_{\left(\Sigma_{h-1}^t\right)^{-1}}^2\right],
 \\&\le  4\eta d^{11}H^3 T^{-\frac{2}{3}} \E\left[\sum_{t=1}^T\sum_{h=1}^{H-1}\sum_{\pi \in \Pi'} \tilp^t(\pi) \left\|\hphi_{h-1}(\pi) \right\|_{\left(\Sigma_{h-1}^t\right)^{-1}}^2\right],
 \\&\le  4\eta d^{11}H^4 T^{\frac{1}{3}}
 \\&\le d^{10}H^3T^{-\frac{1}{3}}
 \\&\le d^4H T^{\frac{2}{3}}. \tag{Assume $T \ge d^6H^2$}
\end{align*}

Putting these two changes back to the proof into \cref{lem:EXP EXP}, given $\eta = ({4Hd|\cA|})^{-1} T^{-\frac{2}{3}}$  we have
\begin{align}
     \textbf{EXP} &=  \frac{\log| \Pi_{\text{\rm lin}}^{\text{\rm cov}}(\frac{1}{T})|}{\eta} + \frac{6d\eta H^2|\cA|T}{\beta} +  4\eta d^2 H^4 \epsilon^2 T + d^4H T^{\frac{2}{3}} + 4d^6H^2 T^{\frac{2}{3}} \nonumber
     \\&= \wtilde{\order}\left(d^6|\cA|H^2 T^{\frac{2}{3}}\log(|\Phi|)\right) \label{eq:model-free EXP}
\end{align}
Putting \cref{eq:model-free error}, \cref{lem:model-free bias}, \cref{eq:model-free EXP} into \cref{eq:model-free decom} together with $\epsilon = 18^{-1} d^{\frac{5}{2}}T^{-\frac{1}{3}}$, we have
\begin{align*}
   \mathrm{Reg}_T \le  \wtilde{O}\left(d^{8} H^6 |\cA|  T^{\frac{2}{3}} \log(|\Phi|))\right) 
\end{align*}

\clearpage
\section{Proof of \cref{thm:oraclealg} (Model-Free, Banfit Feedback)}
\label{sec:oracleeff}

\label{sec:dup}
We start by introducting some notation. We let $\cI\ind{k}$ denote the rounds in the $k$-th epoch: 
\begin{align}
    \cI\ind{k} \coloneqq \{ T_0+(k-1)\cdot N_\reg+1,\ \dots,\ T_0+k \cdot N_\reg \}, \label{eq:interval0}
\end{align}
where $T_0$ is as in \cref{line:T00} of \cref{alg:oraceleff}.
Throughout the analysis, we condition on the event \begin{align}\cE\coloneqq \cE^\cov,
\label{eq:event0}
\end{align} where $\cE^{\cov}$ is as in \cref{lem:vox}. Further, for any $k\in[K]$, let $\rho\ind{k}$ be the distribution of the random policy: 
\begin{align}
 \mathbb{I}\{\bzeta = 0\} \cdot  \pihat\ind{k}+ \mathbb{I}\{\bzeta = 1\} \cdot \bpi\circ_{\bh}   \pi_\unif \circ_{\bh+1} \pihat\ind{k}, \label{eq:this}
 \end{align}
 with $\bzeta \sim \mathrm{Ber}(\nu)$, $\bh\sim \unif([H])$, and $\bpi \sim \unif(\Psi^{\texttt{cov}}_{\bh})$.

We start our analysis by applying the performance difference lemma.
\paragraph{Applying the performance difference lemma.} 
For any $k\in[K]$, $t\in \cI\ind{k}$, and $\rho\ind{k}$ as just defined, we have 
\begin{align}
  & \E_{\bpi\sim \rho\ind{k}}\E\left[V_1^{\bpi}(x_1;\ell^t)\right] - V^{\pi^\star}_1(x_1;\ell^t)\nn \\
   & = (1-\nu) \cdot\left( V_1^{\pihat\ind{k}}(x_1;\ell^t)-V_1^{\pi^\star}(x_1;\ell^t)\right) \nn \\
   & \quad + \frac{\nu}{H d}\sum_{h\in[H]}\sum_{\pi \in \Psi^\cov_h} \left(V_1^{\pi \circ_{h} \pi_\unif \circ_{h+1} \pihat\ind{k}}(x_1;\ell^t)- V_1^{\pi^\star}(x_1;\ell^t)\right), \nn \\
   & \le (1-\nu) \cdot \sum_{h=1}^H \E^{\pi^\star}\left[\sum_{a\in {\cA}} \left(\pihat\ind{k}_h(a\mid \x_h)- \pi^\star_h(a\mid \x_h)  \right)\cdot Q_h^{\pihat\ind{k}}(\x_h,a; \ell^t)\right] + H \nu, \nn \\
& =  (1-\nu) \cdot \sum_{h=1}^H \E^{\pi^\star}\left[\sum_{a\in {\cA}} \left(\pihat\ind{k}_h(a\mid \x_h)- \pi^\star_h(a\mid \x_h)  \right)\cdot \Qhat_h\ind{k}(\x_h,a)\right] + H \nu \nn \\
& \quad + (1-\nu) \cdot \sum_{h=1}^H \E^{\pi^\star}\left[\sum_{a\in {\cA}} \pihat\ind{k}_h(a\mid \x_h)\cdot  \left(Q_h^{\pihat\ind{k}}(\x_h,a;\ell^t) - \Qhat_h\ind{k}(\x_h,a) \right)\right] \nn \\
&  \quad + (1-\nu) \cdot \sum_{h=1}^H \E^{\pi^\star}\left[\sum_{a\in {\cA}} \pistar_h(a\mid \x_h)\cdot \left( \Qhat_h\ind{k}(\x_h,a)- Q_h^{\pihat\ind{k}}(\x_h,a;\ell^t)  \right) \right].
\nn \end{align}
Thus, by the triangle inequality, we have 
\begin{align}
  &  \sum_{t\in \cI\ind{k}} \left(\E_{\bpi\sim \rho\ind{k}}\E\left[V_1^{\bpi}(x_1;\ell^t)\right] - V^{\pi^\star}_1(x_1;\ell^t)\right) \nn  \\
& \leq (1-\nu) \cdot \sum_{t\in \cI\ind{k}} \sum_{h=1}^H \E^{\pi^\star}\left[\sum_{a\in {\cA}} \left(\pihat\ind{k}_h(a\mid \x_h)- \pi^\star_h(a\mid \x_h)  \right)\cdot \Qhat_h\ind{k}(\x_h,a)\right] + H N_\reg \nu,\nn 
  \\
& \quad + 2(1-\nu) \cdot \sum_{h=1}^H\max_{\pi'\in \Pi} \left|\sum_{t\in \cI\ind{k}} \E^{\pi^\star}\left[\sum_{a\in {\cA}} \pi'_h(a\mid \x_h) \cdot \left(Q_h^{\pihat\ind{k}}(\x_h,a) - \Qhat_h\ind{k}(\x_h,a) \right)\right]\right|. \label{eq:postperform0}
\end{align}

We start by bound the first term in the right-hand side of \eqref{eq:postperform0} (the regret term).

\subsection{Bounding the Regret Term}
Fix $h\in[H]$ and define $\tilde\pi^\star_h(\cdot\mid x) \coloneqq (1-\gamma/T)\cdot \pistar_h(\cdot\mid x) + \gamma \pi_\unif(\cdot \mid x)/T$ for all $x$, where $\gamma$ is as in \cref{alg:oraceleff}. We have that $|\Qhat\ind{k}_h(x, a)|=|\phihat\ind{k}_h(x,a)^\top \thetahat\ind{k}_h| \leq  H\sqrt{d}$ (since $\thetahat\ind{k}_h\in \bbB_{d}(H \sqrt{d})$ and $\|\phihat\indd{k}_h(x,a)\|\leq 1$).  By applying \cref{lem:EXP bound}, we have that for any $\eta \le \frac{1}{H\sqrt{d}}$ and $x\in \cX$:
\begin{align}
   & \sum_{k\in[K]} \sum_{t\in\cI\ind{k}} \sum_{a\in {\cA}} \left(\pihat\ind{k}_h(a\mid x)- \pi^\star_h(a\mid x)  \right)\cdot \Qhat_h\ind{k}(x,a) \nn \\ 
&\leq  \sum_{k\in[K]} \sum_{t\in\cI\ind{k}} \sum_{a\in {\cA}} \left(\pihat\ind{k}_h(a\mid x)- \tilde\pi^\star_h(a\mid x)  \right)\cdot \Qhat_h\ind{k}(x,a)+ H \gamma, \nn \\ 
& \leq \frac{N_\reg \log (T/\gamma)}{\eta} + \eta \sum_{k=1}^K\sum_{t\in\cI\ind{k}}\sum_{a\in\cA}   \pihat_h\ind{k}(a\mid x)\cdot  \Qhat\ind{k}_h(x, a)^2 + H \gamma,\nn \\
&\leq \frac{N_\reg\log (T/\gamma)}{\eta} +  H^2 d \eta T  + H \gamma,
\label{eq:carr0} 
\end{align}

\subsection{Bounding the Bias Term}
 Fix epoch $k \in[K]$, round $t\in\cI\ind{k}$, layer $h \in[2 \ldotst H]$, and $\pi'\in \Pi$. Further, let \begin{align}\label{eq:reachpre}\cX_{h,\veps}\coloneqq \left\{x\in \cX : \max_{\pi\in \Pi} d^{\pi}_h(x) \geq \veps \cdot \|\mu^\star_h(x)\| \right\}
    \end{align} be the set of \emph{$\veps$-reachable} states. With this notation, we now bound the bias term in \eqref{eq:postperform0}; we have
    \begin{align}
    &\left|\sum_{t\in \cI\ind{k}}\E^{\pistar}\left[\sum_{a\in\cA} \pi'_h(a\mid \x_h)\cdot\left(Q^{\pihat\ind{k}}_h(\x_h,a;\ell^t)-\Qhat\ind{k}_h(\x_h,a)\right)\right]\right| \nn \\
    &\leq \left|\sum_{t\in \cI\ind{k}}\E^{\pistar}\left[\mathbb{I}\{\x_h \not\in \cX_{h,\veps}\} \sum_{a\in\cA} \pi'_h(a\mid \x_h)\cdot\left(Q^{\pihat\ind{k}}_h(\x_h,a;\ell^t)-\Qhat\ind{k}_h(\x_h,a)\right)\right]\right| \nn \\
    & \quad + \left|\sum_{t\in \cI\ind{k}}\E^{\pistar}\left[\mathbb{I}\{\x_h \in \cX_{h,\veps}\} \sum_{a\in\cA} \pi'_h(a\mid \x_h)\cdot\left(Q^{\pihat\ind{k}}_h(\x_h,a;\ell^t)-\Qhat\ind{k}_h(\x_h,a)\right)\right]\right|,\nn \\
    \shortintertext{and so by \cref{lem:rideoff},}
    &\leq \left|\sum_{t\in \cI\ind{k}}\E^{\pistar}\left[\mathbb{I}\{\x_h \in \cX_{h,\veps}\} \sum_{a\in\cA} \pi'_h(a\mid \x_h)\cdot\left(Q^{\pihat\ind{k}}_h(\x_h,a;\ell^t)-\Qhat\ind{k}_h(\x_h,a)\right)\right]\right| + 2 N_\reg H d^2 \veps,\nn \\
    & = N_\reg \cdot \left|\E^{\pistar}\left[\mathbb{I}\{\x_h \in \cX_{h,\veps}\} \sum_{a\in\cA} \pi'_h(a\mid \x_h)\cdot\left(\frac{1}{N_\reg}\sum_{t\in \cI\ind{k}} Q^{\pihat\ind{k}}_h(\x_h,a;\ell^t) - \Qhat\ind{k}_h(\x_h,a)\right)\right]\right| + 2 N_\reg H d^2 \veps.\nn 
    \end{align}
    Thus, by letting 
    \begin{align}
        \Qbar_h^{\pihat\ind{k}}(\cdot,\cdot) \coloneqq \frac{1}{N_\reg} \sum_{t\in\cI\ind{k}} Q^{\pihat\ind{k}}_h(\cdot,\cdot;\ell^t), \label{eq:theQbar}
    \end{align}
    and using Jensen's inequality (twice), we get
    \begin{align}
        &\left|\sum_{t\in \cI\ind{k}}\E^{\pistar}\left[\sum_{a\in\cA} \pi'_h(a\mid \x_h)\cdot\left(Q^{\pihat\ind{k}}_h(\x_h,a;\ell^t)-\Qhat\ind{k}_h(\x_h,a)\right)\right]\right| \nn \\
        &\leq  N_\reg \cdot \E^{\pistar}\left[\mathbb{I}\{\x_h \in \cX_{h,\veps}\} \sum_{a\in\cA} \pi'_h(a\mid \x_h)\cdot\left|\Qbar^{\pihat\ind{k}}_h(\x_h,a)-\Qhat\ind{k}_h(\x_h,a)\right|\right]    + 2 N_\reg H d^2 \veps,\nn \\
        &\leq  N_\reg \cdot \sqrt{ \E^{\pistar}\left[\mathbb{I}\{\x_h \in \cX_{h,\veps}\} \sum_{a\in\cA} \pi'_h(a\mid \x_h)\cdot\left(\Qbar^{\pihat\ind{k}}_h(\x_h,a)-\Qhat\ind{k}_h(\x_h,a)\right)^2\right]}    + 2 N_\reg H d^2 \veps,\nn \\
        &\leq  N_\reg \cdot \sqrt{ \E^{\pistar}\left[\mathbb{I}\{\x_h \in \cX_{h,\veps}\}\cdot \max_{a\in\cA} \left(\Qbar^{\pihat\ind{k}}_h(\x_h,a)-\Qhat\ind{k}_h(\x_h,a)\right)^2\right]}    + 2 N_\reg H d^2 \veps,\nn \\
    \intertext{and now by the fact that $\Psi^\cov_{h}$ is an $(\frac{1}{8 Ad},\veps)$ policy cover for layer $h$ (see \cref{lem:vox} and \cref{def:cover}):}
    &\leq N_\reg \cdot \sqrt{8 A d \max_{\pi\in \Psi^{\cov}_{h}}\E^{\pi}\left[\mathbb{I}\{\x_h \in \cX_{h,\veps}\}\cdot \max_{a\in\cA} \left(\Qbar_h^{\pihat\ind{k}}(\x_h,a)-\Qhat\ind{k}_h(\x_h,a)\right)^2\right]} + 2 N_\reg H d^2 \veps,\nn\\
    &\leq N_\reg \cdot \sqrt{8A^2 d  \sum_{\pi\in \Psi^{\cov}_{h}}\E^{\pi\circ_h\pi_\unif}\left[ \left(\Qbar_h^{\pihat\ind{k}}(\x_h,\a_h)-\Qhat\ind{k}_h(\x_h,\a_h)\right)^2\right]} + 2 N_\reg H d^2 \veps. \label{eq:trad0}
    \end{align}
Next, we bound the regression error term in the right-hand side of \eqref{eq:trad0}.

\subsection{Bounding the Regression Error}
\begin{lemma}
\label{lem:regevent}
Let $\delta\in(0,1)$ and $\nu\in(0,1/4)$ be given. There is an event $\cE^\reg$ of probability at least $1-2\delta$ under which
\begin{align}
\sum_{\pi \in \Psi_h}  \E^{\pi \circ_h \pi_\texttt{\em unif}}\left[\left( \phihat\ind{k}_h(\x_h,\a_h)^\top \hat\theta\ind{k}_h - \overline{Q}_h^{\pihat\ind{k}}(\x_h,\a_h)\right)^2\right]& \leq \veps_\reg^2 \coloneqq \frac{40 H^3 d \log (2 |\cF|/\delta)}{\nu N_\reg}. \label{eq:est0}
\end{align}
\end{lemma}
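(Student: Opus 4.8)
The plan is to read \eqref{eq:regproblem0} as a \emph{realizable} least-squares regression in which the per-round target drifts with $t$ but whose epoch average is exactly the function $\overline{Q}_h^{\pihat\ind{k}}$ we want to recover. Two preliminary facts drive everything. First, \textbf{realizability}: by \cref{assm:normalizing} and \cref{assm:linearlossweak}, for each fixed $t$ we may write $Q_h^{\pihat\ind{k}}(x,a;\ell^t)=\phi^\star_h(x,a)^\top\theta_h^t$ with $\theta_h^t=g_h^t+\sum_{x'}\mu^\star_{h+1}(x')\,v_{h+1}^{\pihat\ind{k}}(x';\ell^t)$ (where $v_{h+1}^{\pihat\ind{k}}(\cdot;\ell^t)$ is the state-value of $\pihat\ind{k}$ under $\ell^t$), and the normalization of $\mu^\star$ yields $\|\theta_h^t\|\le H\sqrt{d}$; averaging over $t\in\cI\ind{k}$ and using convexity of the ball shows $\overline{Q}_h^{\pihat\ind{k}}=\phi^\star_h(\cdot)^\top\overline\theta_h$ with $\overline\theta_h\in\bbB_d(H\sqrt{d})$, hence $\overline{Q}_h^{\pihat\ind{k}}\in\cF$. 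Second, a \textbf{roll-out identity}: on every round where the indicator $\mathbb{I}\{\bzeta^t=0\text{ or }\bh^t\le h\}$ is $1$, the trajectory follows $\pihat\ind{k}$ from layer $h+1$ onward, so $\E[\sum_{s=h}^H\bell_s^t\mid \x_h^t,\a_h^t]=Q_h^{\pihat\ind{k}}(\x_h^t,\a_h^t;\ell^t)$; moreover, after conditioning on the first $k-1$ epochs (which fixes $\pihat\ind{k}$) and on $\cE^\cov$, the law of $(\x_h^t,\a_h^t)$ given the indicator being $1$ is a fixed mixture $\mathcal{D}$, independent of $t$, since the behaviour policy ignores the losses.

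Building on this I would run the offset/ERM argument. Writing $z_t=(\x_h^t,\a_h^t)$, $f^\star=\overline{Q}_h^{\pihat\ind{k}}$, $f^\star_t=Q_h^{\pihat\ind{k}}(\cdot;\ell^t)$, $Y^t=\sum_{s\ge h}\bell_s^t$, and $\mathbb{I}_t$ for the indicator, optimality of $(\phihat\ind{k}_h,\hat\theta\ind{k}_h)$ together with $f^\star\in\cF$ gives
\[
\sum_{t}\mathbb{I}_t\,(\hat f(z_t)-f^\star(z_t))^2\le 2\sum_t\mathbb{I}_t\,(\hat f(z_t)-f^\star(z_t))\big(Y^t-f^\star(z_t)\big),
\]
and I split $Y^t-f^\star(z_t)=\xi_t+b_t(z_t)$ with $\xi_t:=Y^t-f^\star_t(z_t)$ and $b_t(z):=f^\star_t(z)-f^\star(z)$. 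The \emph{noise} part is a martingale ($\E[\xi_t\mid z_t]=0$ by the roll-out identity, $|\xi_t|\le H$), handled by a Freedman bound uniform over a cover of $\cF$ and folded into the left-hand side through $2ab\le\tfrac12 a^2+2b^2$. The crucial point is the \emph{bias} part: although $b_t(z_t)$ is nonzero per round, $\sum_t b_t(z)=\sum_t(f^\star_t(z)-f^\star(z))=0$ pointwise because $f^\star$ is the epoch average of the $f^\star_t$; hence for every fixed $f$, $\E\big[\sum_t\mathbb{I}_t(f(z_t)-f^\star(z_t))b_t(z_t)\big]=p\,\E_{\mathcal{D}}[(f-f^\star)\sum_t b_t]=0$, where $p:=\Pr[\mathbb{I}_t=1]$. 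So the bias term is a mean-zero sum that I concentrate with Bernstein (uniformly over the cover), its variance controlled by $\E_{\mathcal{D}}[(f-f^\star)^2]$ and again absorbed via AM--GM into the left-hand side. Closing the loop (empirical $\leftrightarrow$ population via one more Bernstein step) yields, with probability $1-2\delta$, $\sum_t\mathbb{I}_t(\hat f-f^\star)^2\lesssim H^2\log(|\cF|/\delta)$.

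Finally I convert to the stated population quantity. Since $p=\Theta(1)$ (indeed $p=1-\nu+\nu h/H$), the population error obeys $\E_{\mathcal{D}}[(\hat f-f^\star)^2]\lesssim H^2\log(|\cF|/\delta)/N_\reg$. The sub-event $\{\bzeta^t=1,\bh^t=h,\bpi^t=\pi\}$ has probability $\tfrac{\nu}{Hd}$ and, restricted to it, $(\x_h^t,\a_h^t)\sim\E^{\pi\circ_h\pi_\unif}$; therefore $p\,\E_{\mathcal{D}}[(\hat f-f^\star)^2]\ge \tfrac{\nu}{Hd}\sum_{\pi\in\Psi_h}\E^{\pi\circ_h\pi_\unif}[(\hat f-f^\star)^2]$, and multiplying through by $Hd/\nu$ produces the claimed $\tfrac{40H^3 d\log(2|\cF|/\delta)}{\nu N_\reg}$ after tracking constants.

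I expect the \textbf{main obstacle} to be the bias term: the regression target genuinely changes from round to round, so the usual well-specified-regression template does not apply verbatim. The resolution---exploiting that the \emph{average} target $\overline{Q}_h^{\pihat\ind{k}}$ is realizable and that $\sum_t(f^\star_t-f^\star)\equiv 0$, which turns the bias into a mean-zero martingale---is the non-routine step, and care is needed to keep the concentration uniform over $\cF$ while keeping the variance/range estimates tight enough that the final rate carries only $H^3d$ (the $d$ coming from $|\Psi_h|=d$ and the $\tfrac1H$, $\tfrac1\nu$ from the exploration schedule) rather than extra powers of $d$ arising from $\|\hat f\|_\infty\le H\sqrt{d}$.
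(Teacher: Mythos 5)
Your proposal is correct and follows essentially the same route as the paper's proof: realizability of the epoch-averaged $Q$-function within the linear class, the ERM basic inequality, Freedman-type concentration taken uniformly over (a cover of) the function class, cancellation of the time-varying regression bias via the i.i.d.\ within-epoch data together with the oblivious adversary, an empirical-to-population conversion, and the final restriction to the exploratory event $\{\bzeta^t=1,\bh^t=h,\bpi^t=\pi\}$ of probability $\nu/(Hd)$. The only cosmetic difference is bookkeeping: you split the residual $Y^t-f^\star(z_t)$ into per-round noise $\xi_t$ and per-round bias $b_t(z_t)$ and concentrate the two pieces separately (the bias via its vanishing unconditional mean, since $\sum_t b_t\equiv 0$ pointwise and the $(z_t,\mathbb{I}_t)$ share a common law), whereas the paper concentrates the combined residual in a single Freedman application and then shows that its conditional means sum to zero --- the underlying cancellation identity is identical in both arguments.
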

  
  \begin{proof} 
        Fix $\delta\in(0,1)$, $h\in[H]$ and $k\in[K]$, and let $(\x^t_h, \a^t_h,\bm{\zeta}^t,\bm{h}^t)$ be as in \cref{alg:oraceleff}. With this, define 
        \begin{align}
        \bm{I}^t_h \coloneqq \mathbb{I}\{\bzeta^t = 0 \text{ or }  \bm{h}^t \leq h\},
        \end{align}
        and note that $(\x^t_h, \a^t_h,\bm{I}^t_h)_{t\in \cI\ind{k}}$ are identically and independently distributed.
        Further, for $t\in \cI\ind{k}$ and $\pi\in \Pi$, let $\theta\indd{t,\pi}_h \in \bbB_d(H\sqrt{d})$ be such that 
        \begin{align}
          \forall (x,a)\in \cX\times \cA,\quad  Q^{\pi}_h(x,a;\ell^t) = \phi^\star_h(x,a)^\top \theta^{\pi,t}_h.\nn 
        \end{align}
        Such a $\theta^{\pi, t}_h$ is guaranteed to exist by the low-rank MDP structure (\cref{assm:normalizing}) and \cref{assm:linearlossweak}. With this, note that for $\Qbar_h^{\pihat\ind{k}}$ as in \eqref{eq:theQbar}, we have
        \begin{align}
            \Qbar_h^{\pihat\ind{k}}(x,a) = \phistar_h(x,a)^\top \theta\ind{k}_h,\quad \text{where} \quad \theta\ind{k}_h \coloneqq \frac{1}{N_\reg}\sum_{t\in \cI\ind{k}}\theta\indd{\pihat\ind{k},t}_h. \label{eq:avgtheta}
        \end{align}
    For the rest of this proof, we let $\cF$ be the function class \[\cF \coloneqq \{f: (x,a)\mapsto  \phi_h(x,a)^\top \theta\mid  \theta \in \cC\cup \{\theta\ind{k}_h\}, \phi\in\Phi\},\] where $\cC$ is a minimal $(N_\reg)^{-1}$-cover of $\mathbb{B}(H\sqrt{d})$ in $\|\cdot\|$ distance. Further, for $\tau\in \cI\ind{k}$ we let 
\begin{gather}
\z^\tau_h \coloneqq \sum_{l=h}^H \ell^\tau_l(\x^\tau_l, \a^\tau_l); \nn \\
\bm{\veps}^\tau_h \coloneqq  \sum_{l=h}^H\ell^\tau_l(\x_l^\tau,\a_l^\tau) - \frac{1}{N_\reg}\sum_{t\in \cI\ind{k}} Q^{\pihat\ind{k}}_h(\x_h^\tau,\a_h^\tau;\ell^t);\label{eq:bveps} 
\shortintertext{and}
\Lhat(f) \coloneqq \sum_{t\in \cI\ind{k}} \bm{I}^t_h\cdot (f(\x^t_h,\a^t_h)- \z^t_h)^2,
\end{gather}
    for $f\in \cF$. Finally, let $f_\star(x,a)\coloneqq \phistar_h(x,a)^\top \theta\ind{k}_h$, where $\theta\ind{k}_h$ is as in \eqref{eq:avgtheta} and $\fhat(x,a)\coloneqq \Qhat\ind{k}_h(x,a)$. With this, note that $f_\star$ and $\fhat$ satisfy $f_\star(\x_h^t,\a_h^t) = \z^t_h - \bm{\veps}_h^t$ and $\fhat \in \argmin_{f\in \cF}\Lhat(f)$.
    
    Now, since $\fhat \in \argmin_{f\in \cF}\Lhat(f)$, we have
		\begin{align}
		0 \geq \Lhat(\fhat)- \Lhat(f_\star) = \nabla \Lhat(f_\star)[\fhat - f_\star] + \|\fhat - f_\star\|^2, \label{eq:toarrange}
		\end{align}
  where $\nabla$ denotes directional derivative and \[\|\fhat - f_\star\|^2 \coloneqq \sum_{t\in \cI\ind{k}}\bm{I}^t_h\cdot (\fhat(\x^t_h,\a^t_h)- f_\star(\x^t_h,\a^t_h))^2.\] Rearranging \eqref{eq:toarrange} and using that $f_\star(\x_h^t,\a_h^t) = \z^t_h - \bm{\veps}_h^t$, we get that 
	\begin{align}
	\|\fhat - f_\star\|^2 &\leq -  \nabla \Lhat(f_\star)[\fhat - f_\star] , \nn \\
	& = 2 \sum_{t\in \cI\ind{k}} \bm{I}^t_h\cdot(\z^t_h - f_\star(\x^t_h,\a^t_h)) (\fhat(\x^t_h,\a^t_h)- f_\star(\x^t_h,\a^t_h)), \nn \\
	& = 2\sum_{t\in \cI\ind{k}}\bm{I}^t_h\cdot \bm{\veps}^t_h \cdot (\fhat(\x^t_h,\a^t_h) - f_\star(\x^t_h,\a^t_h)).\label{eq:wrond2}
   \end{align}
We now bound the right-hand side of \eqref{eq:wrond2}. For any $h \in [H]$, $k \in [K]$ and any $\fhat, f_\star \in \mathcal{F}$, we apply \cref{lem:freedhelp} with
\begin{itemize}
\item  $\mathfrak{F}^i=\sigma(\{(\x^{j}_h,\a^{j}_h,\bm{\veps}_h^{j}, \bzeta^{j}): j\leq t_i\})$ where $t_i \coloneqq (k-1)\cdot N_\reg + i$;

\item The random variable $\bm{w}^i$ set as the difference 
\begin{align*}
    \w^i &=\mathbb{I}\{\bzeta^{t_i} =0\text{ or } \bm{h}^{t_i} \le h\}\cdot \bm{\veps}^{t_i}_h \cdot (\fhat(\x^{t_i}_h,\a^{t_i}_h)- f_\star(\x^{t_i}_h,\a^{t_i}_h)) 
    \\& \quad - \E\left[\mathbb{I}\{\bzeta^{t_i} =0\text{ or } \bm{h}^{t_i} \le h\}\cdot \bm{\veps}^{t_i}_h \cdot (\fhat(\x^{t_i}_h,\a^{t_i}_h)- f_\star(\x^{t_i}_h,\a^{t_i}_h))\mid \mathfrak{F}^{i-1}\right]
\end{align*}
where $t_i \coloneqq (k-1)\cdot N_\reg + i$;

\item $n = N_\reg = |\cI\ind{k}|$.
\item $R = 4 H^2$; and
\item $\lambda = 1/(16 H^2)$;
\end{itemize} to get that there is an event $\cE$ of porbability at least $1-\delta$ under which 
\begin{align}
&\sum_{t\in \cI\ind{k}} \bm{I}^t_h \cdot \bm{\veps}^t_h \cdot (\fhat(\x^t_h,\a^t_h)- f_\star(\x^t_h,\a^t_h)) \nn \\
&\leq \sum_{t\in \cI\ind{k}} \E_t[ \bm{I}^t_h \cdot\bm{\veps}^t_h \cdot (\fhat(\x^t_h,\a^t_h)- f_\star(\x^t_h,\a^t_h))] \nn\\
&\quad + \frac{1}{8 H^2}  \sum_{t\in \cI\ind{k}} \E_t[\bm{I}^t_h \cdot (\bm{\veps}_h^t)^2 \cdot (\fhat(\x^t_h,\a^t_h)- f_\star(\x^t_h,\a^t_h))^2] + 16 H^2 \log (|\cF|/\delta), \nn \\
& \leq  \sum_{t\in \cI\ind{k}} \E_t[ \bm{I}^t_h \cdot\bm{\veps}^t_h \cdot (\fhat(\x^t_h,\a^t_h)- f_\star(\x^t_h,\a^t_h))]\nn \\
& \quad + \frac{1}{4} \sum_{t\in \cI\ind{k}}\E_t[\bm{I}^t_h \cdot(\fhat(\x^t_h,\a^t_h)- f_\star(\x^t_h,\a^t_h))^2] + 16 H^2 \log (|\cF|/\delta), \label{eq:sense}
\end{align}
where $\E_t[\cdot]$ is defined as $\E\left[\cdot \mid \mathfrak{F}^{t-1} \right]$ and the last step uses that $|\bm\veps^t_h|\leq 2 H$, for all $t\in \cI\ind{k}$.
For the rest of the proof, we condition on $\cE$ and to simplify notation let \begin{align}\bm{\Delta}^t_h \coloneqq \fhat(\x^t_h,\a^t_h)- f_\star(\x^t_h,\a^t_h).
\end{align}
Using the expression of $\bm{\veps}^t_h$ in \eqref{eq:bveps}, we have 
\begin{align}
 &\sum_{t\in \cI\ind{k}} \E_t[\bm{I}^t_h\cdot \bm{\veps}^t_h \cdot \bm{\Delta}^t_h] \nn \\
 &= \sum_{t\in \cI\ind{k}} \E_t\left[\bm{I}^t_h\cdot\left(\sum_{l=h}^H\ell^t_l(\x_l^t,\a_l^t) - \frac{1}{N_\reg}\sum_{\tau \in \cI\ind{k}}Q^{\pihat\ind{k}}_h(\x_h^t,\a_h^t;\ell^\tau) \right) \cdot \bm{\Delta}^t_h\right].\label{eq:prob}
 \end{align}
 Now, since $\bm{I}_h^t= \mathbb{I}\{\bzeta^t=0 \text{ or } \bm{h}^t \leq h\}$, we have \[\E_t\left[\sum_{l=h}^H\ell^t_l(\x_l^t,\a_l^t)\mid \x_h^t,\a^t_h, \bm{I}^t_h=1\right]=Q^{\pihat\ind{k}}_h(\x^t_h,\a^t_h;\ell^t).\] Plugging this into \eqref{eq:prob} and using the law of total expectation, we have
 \begin{align}
  & \sum_{t\in \cI\ind{k}} \E_t[\bm{I}^t_h\cdot \bm{\veps}^t_h \cdot \bm{\Delta}^t_h] \nn \\
    &= \sum_{t\in \cI\ind{k}} \E_t\left[\bm{I}^t_h\cdot\left(Q^{\pihat\ind{k}}_h(\x^t_h,\a^t_h;\ell^t) - \frac{1}{N_\reg}\sum_{\tau\in \cI\ind{k}}Q^{\pihat\ind{k}}_h(\x^t_h,\a^t_h;\ell^\tau) \right) \cdot \bm{\Delta}^t_h\right], \nn\\
&  =\sum_{t\in \cI\ind{k}} \E_t\left[\bm{I}^t_h\cdot Q^{\pihat\ind{k}}_h(\x^t_h,\a^t_h;\ell^t) \cdot \bm{\Delta}^t_h\right] - \frac{1}{N_\reg}\sum_{\tau\in \cI\ind{k}} \sum_{t\in \cI\ind{k}} \E_t\left[ \bm{I}^t_h\cdot Q^{\pihat\ind{k}}_h(\x^t_h,\a^t_h;\ell^\tau) \cdot \bm{\Delta}^t_h\right].
        \label{eq:priorpush}
        \end{align}
On the other hand, since $(\x^t_h,\a_h^t, \bm{I}^t_h)_{t\in \cI\ind{k}}$ are i.i.d. and $\ell^t$ is chosen by an oblivious adversary, we have 
\begin{align}
\forall t\in \cI\ind{k},\quad \E_t\left[\bm{I}^t_h\cdot Q^{\pihat\ind{k}}_h(\x^t_h,\a^t_h;\ell^t) \cdot \bm{\Delta}^t_h\right]
 &= \frac{1}{N_\reg}\sum_{\tau\in \cI\ind{k}} \E_{\tau}\left[\bm{I}^\tau_h\cdot Q^{\pihat\ind{k}}_h(\x^\tau_h,\a^\tau_h;\ell^t) \cdot \bm{\Delta}^\tau_h\right].\nn
\end{align}
Plugging this into \eqref{eq:priorpush} shows that
\begin{align}
\sum_{t\in \cI\ind{k}} \E_t[\bm{I}^t_h\cdot \bm{\veps}^t_h \cdot \bm{\Delta}^t_h] =0.
\end{align}
Combining this with \eqref{eq:sense} and \eqref{eq:wrond2}, we get that 
\begin{align}
    \sum_{t\in\cI\ind{k}} \bm{I}^t_h \cdot(\fhat(\x^t_h,\a^t_h)- f_\star(\x^t_h,\a^t_h))^2 &\leq \frac{1}{4} \sum_{t\in \cI\ind{k}}\E_t[\bm{I}^t_h \cdot(\fhat(\x^t_h,\a^t_h)- f_\star(\x^t_h,\a^t_h))^2] \nn \\
    & \quad + 16 H^2 \log (|\cF|/\delta). \label{eq:thisonehere}
\end{align}
Now, since $(\x_h^t,\a_h^t,\bm{I}^t_h)_{t\in \cI\ind{k}}$ are i.i.d., we have by \cref{lem:corbern} that there is an event $\cE'$ of probability at least $1-\delta$ under which we have 
\begin{align}
 \sum_{t\in \cI\ind{k}}\E_t\left[\bm{I}^t_h \cdot (\fhat(\x^t_h,\a^t_h)- f_\star(\x^t_h,\a^t_h))^2\right] &\leq 2  \sum_{t\in\cI\ind{k}} \bm{I}_h^t \cdot (\fhat(\x^t_h,\a^t_h)- f_\star(\x^t_h,\a^t_h))^2\nn \\
  & \quad + 8 H^2 \log(2|\cF|/\delta).\label{eq:tocombined}
\end{align}
Combining this with \eqref{eq:thisonehere} and rearranging, we get that under $\cE^\reg \coloneqq \cE \cap \cE'$:
\begin{align}
\frac{1}{N_\reg} \sum_{t\in\cI\ind{k}}\E_t\left[\bm{I}^t_h\cdot (\fhat(\x^t_h,\a^t_h)- f_\star(\x^t_h,\a^t_h))^2\right] &\leq \frac{40 H^2 \log (2 |\cF|/\delta)}{N_\reg}.\label{eq:mile} 
\end{align}

On the other hand, we have that for all $t\in \cI\ind{k}$:
\begin{align}
\E_t\left[\bm{I}^t_h \cdot (\fhat(\x^t_h,\a^t_h)- f_\star(\x^t_h,\a^t_h))^2\right] & \geq \E_t\left[\mathbb{I}\{\bzeta^t = 1, \bm{h}^t = h\}\cdot (\fhat(\x^t_h,\a^t_h)- f_\star(\x^t_h,\a^t_h))^2\right], \nn \\
& = \frac{\nu}{H d} \sum_{\pi \in \Psi_h}  \E^{\pi \circ_h \pi_\unif}\left[(\fhat(\x^t_h,\a^t_h)- f_\star(\x^t_h,\a^t_h))^2\right].
\end{align}
Plugging this into \eqref{eq:mile} and using the expressions of $\fhat$ and $f_\star$, we get
\begin{align}
\sum_{\pi \in \Psi_h}  \E^{\pi \circ_h \pi_\unif}\left[\left( \phihat\ind{k}_h(\x_h,\a_h)^\top \hat\theta\ind{k}_h - \overline{Q}_h^{\pihat\ind{k}}(\x_h,\a_h)\right)^2\right]& \leq \frac{40 H^3 d \log (2 |\cF|/\delta)}{\nu N_\reg}.\nn 
\end{align}
\end{proof}

\subsection{Putting It All Together}

By combining \eqref{eq:postperform0}, \eqref{eq:trad0}, \eqref{eq:est0}, and \eqref{eq:carr0}, we have that under the event $\cE\coloneqq \cE^\cov \cap \cE^\reg$ (where $\cE^\cov$ and $\cE^\reg$ are as in \cref{lem:vox} and \cref{lem:regevent}, respectively):
\begin{align}
   \mathrm{Reg}_T &= HT_0+ \sum_{k\in[K]} \sum_{t\in \cI\ind{k}} \left(\E_{\bpi\sim \rho\ind{k}}\E\left[V_1^{\bpi}(x_1;\ell^t)\right] - V^{\pi^\star}_1(x_1;\ell^t)\right), \nn \\
   & \leq HT_0+HT\nu +T \cdot \sqrt{{8A^2 d} \cdot \veps^2_\reg } +\frac{N_\reg\log (T/\gamma)}{\eta} +  H^2 d \eta T  + H \gamma.
\end{align}
Thus, plugging in the expression of $T_0$ from \cref{alg:oraceleff}, and ignoring polynormal factors $d, A, H,\log(|\Phi|\veps^{-1}\delta^{-1})$, we get that 
\begin{align}
\mathrm{Reg}_T &\prec \frac{1}{\veps^2} + T \veps +T \nu + N_\reg \cdot \sqrt{{8A^2 d} \cdot \veps_\reg^2 } + 2 N_\reg H d \veps + \frac{N_\reg}{\eta} + \eta T, \nn \\
& \prec T^{2/3} +\nu T+ T \cdot \sqrt{8A^2d\cdot \veps_\reg^2} + \sqrt{TN_\reg}, \quad \text{(by setting $\veps = T^{-1/3}$ and $\eta = (N_\reg/T)^{1/2}$)}\nn \\
& = T^{2/3} +\nu T+ T \cdot \sqrt{\frac{1}{\nu N_\reg}} + \sqrt{TN_\reg},  \quad (\text{used the expression of $\veps^2_\reg$ in \eqref{eq:est0}}) \nn \\
& \prec T^{2/3} +\nu T+ \sqrt{T} \cdot \left(\frac{T}{\nu }\right)^{1/4} ,  \quad \text{(by setting $N_\reg = (T/\nu)^{1/2}$)} \nn \\
& \prec T^{4/5}, 
\end{align}
where the last step follows by setting $\nu = T^{-1/5}$.


 

\clearpage
\section{Proof of \cref{thm:adaptive} (Model-Free, Bandit Feedback, Adaptive Adversary)}
We let $\cI\ind{k}$ denote the rounds in the $k$th epoch: 
\begin{align}
    \cI\ind{k} \coloneqq \{ T_0+(k-1)\cdot N_\reg+1,\ \dots,\ T_0+k \cdot N_\reg \}, \label{eq:interval}
\end{align}
where $T_0$ is as in \cref{line:T0} of \cref{alg:algorithm_name}.
Throughout the analysis, we condition on the event \begin{align}\cE\coloneqq \cE^\cov \cap \cE^{\rep+\spanner}\cap \cE^{\freed},
\label{eq:event}
\end{align} where $\cE^{\cov}$, $\cE^{\rep+\spanner}$, and $\cE^{\freed}$ are as in \cref{lem:vox}, \cref{cor:repspanner}, and \cref{lem:freed}, respectively. 

We start our analysis by applying the performance difference lemma.
\paragraph{Applying the performance difference lemma.}
For any $k\in[K]$, $t\in \cI\ind{k}$, and let $\rho\ind{k}$ be the distribution of the random policy: 
\begin{align}
\mathbb{I}\{\bzeta\indd{t} = 0\} \cdot  \pihat\ind{k}+ \mathbb{I}\{\bzeta\indd{t} = 1\} \cdot \bpi\indd{t}\circ_{\bh\indd{t}+1}    \pihat\ind{k},
 \end{align}
 with $\bzeta\indd{t} \sim \mathrm{Ber}(\nu)$, $\bh\indd{t}\sim \unif([H])$, and $\bpi\indd{t} \sim \unif(\Psi^{\texttt{span}}_{\bh\indd{t}})$.
\begin{align}
 & \E_{\bpi\sim \rho\ind{k}}\E\left[V_1^{\bpi}(x_1;\bcH^{t-1}) \right] - V^{\pi^\star}_1(x_1;\bcH^{t-1})\nn \\
   & = (1-\nu) \cdot\left( V_1^{\pihat\ind{k}}(x_1;\bcH^{t-1})-V_1^{\pi^\star}(x_1;\bcH^{t-1})\right)  \nn \\
   & \quad + \frac{\nu}{H d}\sum_{h\in[H]}\sum_{\pi \in \Psi^\cov_h}\left( V_1^{\pi \circ_{h+1}  \pihat\ind{k} }(x_1;\bcH^{t-1})- V_1^{\pi^\star}(\x_1;\bcH^{t-1})\right), \nn \\
   & = (1-\nu) \cdot \sum_{h=1}^H \E^{\pi^\star}\left[\sum_{a\in {\cA}} \left(\pihat\ind{k}_h(a\mid \x_h)- \pi^\star_h(a\mid \x_h)  \right)\cdot Q_h^{\pihat\ind{k}}(\x_h,a;\bcH\indd{t-1})\mid \bcH\indd{t-1}\right] + HN_\reg \nu, \nn \\
& =  (1-\nu) \cdot \sum_{h=1}^H \E^{\pi^\star}\left[\sum_{a\in {\cA}} \left(\pihat\ind{k}_h(a\mid \x_h)- \pi^\star_h(a\mid \x_h)  \right)\cdot \Qhat_h\ind{k}(\x_h,a)\mid \bcH\indd{t-1}\right] + H N_\reg \nu \nn \\
& \quad + (1-\nu) \cdot \sum_{h=1}^H \E^{\pi^\star}\left[\sum_{a\in {\cA}} \pihat\ind{k}_h(a\mid \x_h)\cdot  \left(Q_h^{\pihat\ind{k}}(\x_h,a;\bcH\indd{t-1}) - \Qhat_h\ind{k}(\x_h,a) \right)\mid \bcH\indd{t-1}\right] \nn \\
&  \quad + (1-\nu) \cdot \sum_{h=1}^H \E^{\pi^\star}\left[\sum_{a\in {\cA}} \pistar_h(a\mid \x_h)\cdot \left( \Qhat_h\ind{k}(\x_h,a)- Q_h^{\pihat\ind{k}}(\x_h,a;\bcH\indd{t-1})  \right) \mid \bcH\indd{t-1}\right].
\nn \end{align}
Thus, by the triangle inequality, we have 
\begin{align}
  &  \sum_{t\in \cI\ind{k}} \left(\E_{\bpi\sim \rho\ind{k}}\E\left[V_1^{\bpi}(x_1;\bcH^{t-1}) \right] - V^{\pi^\star}_1(x_1;\bcH^{t-1})\right) \nn  \\
& \leq (1-\nu) \cdot \sum_{t\in \cI\ind{k}} \sum_{h=1}^H \E^{\pi^\star}\left[\sum_{a\in {\cA}} \left(\pihat\ind{k}_h(a\mid \x_h)- \pi^\star_h(a\mid \x_h)  \right)\cdot \Qhat_h\ind{k}(\x_h,a)\mid \bcH\indd{t-1}\right] + H N_\reg \nu\nn  \\
& \quad + 2(1-\nu) \cdot \sum_{h=1}^H\max_{\pi'\in \Pi} \left|\sum_{t\in \cI\ind{k}} \E^{\pi^\star}\left[\sum_{a\in {\cA}} \pi'_h(a\mid \x_h) \cdot \left(Q_h^{\pihat\ind{k}}(\x_h,a;\bcH\indd{t-1}) - \Qhat_h\ind{k}(\x_h,a) \right)\mid \bcH\indd{t-1}\right]\right|. \label{eq:postperform}
\end{align}
We start by bounding the first term on the right-hand side of \eqref{eq:postperform} (the regret term).
\subsection{Bounding the Regret Term}
Fix $h\in[H]$ and define $\tilde\pi^\star_h(\cdot\mid x) \coloneqq (1-\gamma/T)\cdot \pistar_h(\cdot\mid x) + \gamma \pi_\unif(\cdot \mid x)/T$ for all $x$, where $\gamma$ is as in \cref{alg:algorithm_name}. We have that $|\Qhat\ind{k}_h(x, a)|=|\phibar^\rep_h(x,a)^\top \thetahat\ind{k}_h| \leq 8 Hd^2$ (since $\thetahat\ind{k}_h\in \bbB_{2d}(4 H d^2)$ and $\|\phibar^\rep_h(x,a)\|\leq \|\phi_h^\loss(x,a)\|+\|\phi^\rep_h(x,a)\|\leq 2$). By applying \ref{lem:EXP bound}, we have that for any $\eta \le \frac{1}{8Hd^2}$ and $x\in \cX$:
\begin{align}
   & \sum_{k\in[K]} \sum_{t\in\cI\ind{k}} \sum_{a\in {\cA}} \left(\pihat\ind{k}_h(a\mid x)- \pi^\star_h(a\mid x)  \right)\cdot \Qhat_h\ind{k}(x,a) \nn \\ 
&\leq  \sum_{k\in[K]} \sum_{t\in\cI\ind{k}} \sum_{a\in {\cA}} \left(\pihat\ind{k}_h(a\mid x)- \tilde\pi^\star_h(a\mid x)  \right)\cdot \Qhat_h\ind{k}(x,a)+ H \gamma, \nn \\ 
& \leq \frac{N_\reg \log (T/\gamma)}{\eta} + \eta \sum_{k=1}^K\sum_{t\in\cI\ind{k}}\sum_{a\in\cA}   \pihat_h\ind{k}(a\mid x)\cdot  \Qhat\ind{k}_h(x, a)^2 + H \gamma,\nn \\
&\leq \frac{N_\reg\log (T/\gamma)}{\eta} + 64 H^2 d^4 \eta T  + H \gamma,
\label{eq:carr} 
\end{align}
\subsection{Bounding the Bias Term}
To bound the bias term (second term in \eqref{eq:postperform}), we make use of the following result.
\begin{lemma}
\label{lem:linearQ}
For $t\in [T]$, $h\in [H]$, and $\pi\in\Pi$, there exists $\thetabar_h\indd{t,\pi} \in \bbB_{2d}(H \sqrt{d})$ such that for all $(x,a)\in \cX\times \cA$ and history $\cH\indd{t-1}=(x_{1:H}\indd{1:t-1},a_{1:H}\indd{1:t-1})$,
\begin{align}
    Q^\pi_h(x,a;\cH\indd{t-1})= \phibar^\star_h(x,a)^\top \thetab\indd{t,\pi}_h, \quad \text{where} \quad \phibar^\star_h \coloneqq [\phi^\loss, \phistar]\in \reals^{2d}.  \label{eq:theQ}
\end{align}
\end{lemma}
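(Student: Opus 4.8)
The plan is to establish the identity by backward induction on $h$, from $h=H$ down to $h=1$. The key observation is that, once the history $\cH\indd{t-1}$ is fixed, each loss $\ell_s(\cdot,\cdot;\cH\indd{t-1})$ is a \emph{deterministic} function of the state-action pair (by \cref{assm:linearloss}), so $Q^\pi_h(\cdot,\cdot;\cH\indd{t-1})$ is nothing but the ordinary $Q$-function of the policy $\pi$ for this fixed loss and obeys the standard one-step Bellman recursion. I will then use the linear loss representation to handle the immediate loss and the low-rank factorization $P^\star_h(x'\mid x,a)=\phistar_h(x,a)^\top \mustar_{h+1}(x')$ to handle the continuation, so that both pieces collapse into a single linear form in the concatenated feature $\phibar^\star_h=[\phi^\loss_h,\phistar_h]$.

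For the base case $h=H$, I would note that $Q^\pi_H(x,a;\cH\indd{t-1})=\ell_H(x,a;\cH\indd{t-1})=\phi^\loss_H(x,a)^\top \g_H\indd{t}$, so taking $\thetabar_H\indd{t,\pi}=[\g_H\indd{t},\bm 0]$ already gives the claim with $\|\thetabar_H\indd{t,\pi}\|=\|\g_H\indd{t}\|\le 1$. For the inductive step, writing the expected continuation value under $\pi$ as $\bar v_{h+1}(x')\coloneqq \sum_{a'}\pi_{h+1}(a'\mid x')\, Q^\pi_{h+1}(x',a';\cH\indd{t-1})$, the Bellman recursion together with \cref{assm:linearloss,assm:normalizing} yields
\begin{align}
Q^\pi_h(x,a;\cH\indd{t-1}) = \phi^\loss_h(x,a)^\top \g_h\indd{t} + \phistar_h(x,a)^\top w_h,\qquad w_h \coloneqq \sum_{x'\in\cX} \mustar_{h+1}(x')\, \bar v_{h+1}(x'). \nn
\end{align}
I would then set $\thetabar_h\indd{t,\pi}\coloneqq [\g_h\indd{t},w_h]$, which is precisely the desired linear form. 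Here the only fact I need about layer $h+1$ is that $Q^\pi_{h+1}(\cdot,\cdot;\cH\indd{t-1})\in[0,H-h]$ (it sums at most $H-h$ per-step losses, each in $[0,1]$); the linearity at layer $h$ then follows directly from the two structural assumptions rather than from a hypothesis on the \emph{form} of $Q_{h+1}$.

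The main work, and the only delicate point, is the norm bound $\|\thetabar_h\indd{t,\pi}\|\le H\sqrt d$. Since $\bar v_{h+1}$ is an average of $Q$-values lying in $[0,H-h]$, the function $x'\mapsto \bar v_{h+1}(x')/(H-h)$ maps into $[0,1]$, so applying the normalization property of $\mustar$ from \cref{assm:normalizing} gives $\|w_h\|\le (H-h)\sqrt d$. Combined with $\|\g_h\indd{t}\|\le 1$, this produces $\|\thetabar_h\indd{t,\pi}\|^2 = \|\g_h\indd{t}\|^2+\|w_h\|^2 \le 1+(H-h)^2 d$, and I would close the estimate using $1\le (2H-1)d$ (valid since $H,d\ge 1$) to obtain $1+(H-h)^2 d\le H^2 d$, i.e. $\thetabar_h\indd{t,\pi}\in\bbB_{2d}(H\sqrt d)$. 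The subtlety I anticipate is exactly this accounting: rescaling by the \emph{remaining} horizon $H-h$ before invoking the $\mustar$-normalization is what keeps the radius at $H\sqrt d$ rather than a slightly larger constant, and I must also verify that the per-step losses genuinely lie in $[0,1]$ (rather than merely satisfying $|\ell|\le 1$), since this nonnegativity is what legitimizes the rescaling step.
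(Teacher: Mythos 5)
Your proof is correct and follows essentially the same route as the paper's: split $Q^\pi_h$ into the immediate loss, which is linear in $\phi^\loss_h$ by \cref{assm:linearloss}, and the continuation value, which the low-rank factorization and the $\mu^\star$-normalization of \cref{assm:normalizing} make linear in $\phistar_h$ with a weight vector of norm at most $(H-h)\sqrt{d}$, then concatenate the two weight vectors. The backward-induction wrapper is superfluous (as you yourself note, only boundedness of the next layer's $Q$-function is used, never its form), and your explicit norm accounting simply fills in a detail the paper leaves implicit.
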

\begin{proof}
    Fix $t\in [T]$, $h\in [H]$, and $\pi\in\Pi$.
By the low-rank MDP structure and the normalizing assumption on $\mu^\star_h$ in \cref{assm:normalizing}, there exists $w^{t,\pi}_{h+1}\in \bbB_d((H-s)\sqrt{d})$ such that for all $(x,a)\in \cX\times \cA$ and history $\cH\indd{t-1}=(x_{1:H}\indd{1:t-1},a_{1:H}\indd{1:t-1})$,
\begin{align}
    \E^\pi\left[\sum_{s=h+1}^H \ell_s(\x_s,\a_s;\cH\indd{t-1}) \mid \x_h = x, \a_h =a\right] = \phi^\star_h(x,a)^\top w^{t,\pi}_{h+1}. \label{eq:linearfor}
\end{align}
Now, with $\g \indd{t}_h$ as in \cref{assm:linearloss}, \eqref{eq:linearloss} and \eqref{eq:linearfor} implies that $\thetab^{t,\pi}_h \coloneqq [\g \indd{t}_h, w\indd{t,\pi}_{h+1}]\in \reals^{2d}$ (where $[\cdot, \cdot]$ denotes the vertical stacking of vectors) satisfies the desired property.
\end{proof}
    Fix epoch $k \in[K]$, round $t\in\cI\ind{k}$, layer $h \in[2 \ldotst H]$, and $\pi'\in \Pi$. By \cref{lem:linearQ}, there exists $\bm\thetabar\indd{t,\pihat\ind{k}}_h\in \bbB_{2d}(Hd^{1/2})$ such that for all $(x,a)\in \cX\times \cA$, \[Q^{\pihat\ind{k}}_{h+1}(x,a;\bcH\indd{t-1})=\phibar^\star_{h+1}(x,a)^\top \bm\thetabar^{t,\pihat\ind{k}}_{h+1}.\] With this, let $\bm{w}^{t,\pihat\ind{k}}_{h+1}\in \bbB_d(3Hd^2)$ be as in \cref{cor:repspanner} with $f(x)\coloneqq \frac{1}{Hd^{1/2}}\max_{a\in \cA}\phibar^\star_{h+1}(x,a)^\top \bm\thetabar^{t,\pihat\ind{k}}_{h+1}$; note that this function belongs to the function class $\cF_{h+1}$ in \cref{alg:algorithm_name}. With this, we define 
    \begin{align}
        \bar\varthetab\indd{t,\pihat\ind{k}}_h \coloneqq \left[\bm{g}^t_h, \bm{w}^{t,\pihat\ind{k}}_{h+1}\right]\in \bbB_{2d}(4Hd^{2}),
    \end{align}
    where $\bm{g}^t_h\in \bbB_d(1)$ is such that $\ell_h(x,a;\bcH\indd{t-1})=\phi_h^\loss(x,a)^\top \bm{g}_h\indd{t}$. 
    
  With this notation, we now bound the bias term in \cref{eq:postperform}: we have
    \begin{align}
    &\left|\sum_{t\in \cI\ind{k}}\E^{\pistar}\left[\sum_{a\in\cA} \pi'_h(a\mid \x_h)\cdot\left(Q^{\pihat\ind{k}}_h(\x_h,a;\bcH\indd{t-1})-\Qhat\ind{k}_h(\x_h,a)\right)\mid \bcH\indd{t-1}\right]\right| \nn \\
    &\leq \left|\sum_{t\in \cI\ind{k}}\E^{\pistar}\left[\sum_{a\in\cA} \pi'_h(a\mid \x_h)\cdot\left(\phibar^\rep_h(\x_h,a)^\top  \bvthetab\indd{t,\pihat\ind{k}}_h-\Qhat\ind{k}_h(\x_h,a)\right)\mid \bcH\indd{t-1}\right]\right| \nn \\
    & \quad + \left|\sum_{t\in \cI\ind{k}}\E^{\pistar}\left[\sum_{a\in\cA} \pi'_h(a\mid \x_h)\cdot\left(Q^{\pihat\ind{k}}_h(\x_h,a;\bcH\indd{t-1}) - \phibar^\rep_h(\x_h,a)^\top \bvthetab\indd{t,\pihat\ind{k}}_h  \right)\mid \bcH\indd{t-1}\right]\right|,\nn \\
    \shortintertext{and by \cref{cor:repspanner} (in particular \eqref{eq:replearnee})}
    &\leq \left|\sum_{t\in \cI\ind{k}}\E^{\pistar}\left[\sum_{a\in\cA} \pi'_h(a\mid \x_h)\cdot\left(\phibar^\rep_h(\x_h,a)^\top  \bvthetab\indd{t,\pihat\ind{k}}_h-\phibar^\rep_h(\x_h,a)^\top \thetahat_h\ind{k} \right)\mid \bcH\indd{t-1}\right]\right| + N_\reg \cdot \veps_\rep,\nn \\
    \intertext{and by \cref{cor:repspanner} again (in particular \eqref{eq:spannernew}) and the triangle inequality}
    &\leq 2 \sum_{\pi \in \Psi_h^\spanner}\left|\sum_{t\in \cI\ind{k}}\E^{\pi}\left[\phibar^\rep_h(\x_h,\a_h)^\top  \bvthetab\indd{t,\pihat\ind{k}}_h-\phibar^\rep_h(\x_h,\a_h)^\top \thetahat_h\ind{k}\mid \bcH\indd{t-1}\right]\right| + N_\reg \cdot \veps_{\spanner} + N_\reg \cdot \veps_\rep,\nn\\
    &\leq 2 \sum_{\pi \in \Psi_h^\spanner}\left|\sum_{t\in \cI\ind{k}}\E^{\pi}\left[Q^{\pihat\ind{k}}_h(\x_h,a;\bcH\indd{t-1})-\phibar^\rep_h(\x_h,\a_h)^\top \thetahat_h\ind{k}\mid \bcH\indd{t-1}\right]\right| + N_\reg \cdot \veps_{\spanner} + N_\reg \cdot \veps_\rep\nn \\
    & \quad + 2 \sum_{\pi \in \Psi_h^\spanner}\left|\sum_{t\in \cI\ind{k}}\E^{\pi}\left[\phibar^\rep_h(\x_h,\a_h)^\top  \bvthetab\indd{t,\pihat\ind{k}}_h-Q^{\pihat\ind{k}}_h(\x_h,a;\bcH\indd{t-1})\mid \bcH\indd{t-1}\right]\right|, \quad (\text{triangle inequality}) \nn \\
    & \leq 2 \sum_{\pi \in \Psi_h^\spanner}\left|\sum_{t\in \cI\ind{k}}\E^{\pi}\left[Q^{\pihat\ind{k}}_h(\x_h,a;\bcH\indd{t-1})-\phibar^\rep_h(\x_h,\a_h)^\top \thetahat_h\ind{k}\mid \bcH\indd{t-1}\right]\right| +N_\reg \cdot \veps_{\spanner} + 3 d N_\reg \cdot \veps_\rep, \label{eq:trad}
    \end{align}
    where the last inequality follows by \cref{cor:repspanner} (in particular \eqref{eq:replearnee}).

Next, we bound the estimation error term in the right-hand side of \eqref{eq:trad}.
\subsection{Bound the Regression Error}
    By \cref{lem:freed} (Freedman's inequality), we have that for all $\pi \in \Psi^\spanner_h$ and $\thetabar\in\bbB_{2d}(4 H d^{2})$:
    \begin{align}
     N_\reg\cdot \veps_\mathrm{freed} &\geq \left| \sum_{t\in \cI\ind{k}} \mathbb{I}\{\bh\indd{t}=h,\bpi\indd{t}=\pi, \bzeta\indd{t}=1\}\cdot \left(\phibar^\rep_h(\x\indd{t}_h,\a\indd{t}_h)^\top \thetabar -\sum_{s=h}^H \bm\ell\indd{t}_s\right)\right. \nn \\
       & \qquad - \left. \sum_{t\in \cI\ind{k}} \E\left[ \mathbb{I}\{\bh\indd{t}=h,\bpi\indd{t}=\pi, \bzeta\indd{t}=1\}\cdot \left(\phibar^\rep_h(\x\indd{t}_h,\a\indd{t}_h)^\top \thetabar-\sum_{s=h}^H \bm\ell\indd{t}_s\right) \mid \bcH\indd{t-1}\right]\right|. \label{eq:freedd}
    \end{align}
    On the other hand, we have 
    \begin{align}
       & \sum_{t\in \cI\ind{k}} \E\left[ \mathbb{I}\{\bh\indd{t}=h,\bpi\indd{t}=\pi, \bzeta\indd{t}=1\}\cdot \left(\phibar^\rep_h(\x\indd{t}_h,\a\indd{t}_h)^\top \thetabar-\sum_{s=h}^H \bm\ell\indd{t}_s\right) \mid \bcH\indd{t-1}  \right]\nn \\
       & = \frac{\nu}{Hd} \sum_{t\in \cI\ind{k}} \E^{\pi \circ_{h+1}\pihat\ind{k}}\left[ \phibar^\rep_h(\x_h,\a_h)^\top \thetabar-\sum_{s=h}^H \bm\ell\indd{t}_s \mid \bcH\indd{t-1} \right] ,\nn \\
       & =\frac{\nu}{Hd} \sum_{t\in \cI\ind{k}} \E^{\pi}\left[ \phibar^\rep_h(\x_h,\a_h)^\top \thetabar-Q^{\pihat\ind{k}}_h(\x_h,a;\bcH\indd{t-1}) \mid \bcH\indd{t-1} \right]. \label{eq:fact2}
    \end{align}
    Now, by \cref{cor:repspanner} (in particular \eqref{eq:replearnee}) and the triangle inequality, we have 
    \begin{align}
      N_\reg \cdot \veps_\rep & \geq  \left| \sum_{t\in \cI\ind{k}} \E^{\pi}\left[ \phibar^\rep_h(\x_h,\a_h)^\top \thetabar -Q^{\pihat\ind{k}}_h(\x_h,a;\bcH\indd{t-1}) \mid \bcH\indd{t-1}  \right] \right.\nn \\
   &    \left. \qquad - \sum_{t\in \cI\ind{k}} \E^{\pi}\left[ \phibar^\rep_h(\x_h,\a_h)^\top \thetabar-\phibar^\rep_h(\x_h,\a_h)^\top \bvthetab\indd{t,\pihat\ind{k}}_h \mid \bcH\indd{t-1}  \right]  \right|,\nn \\
   & =\left| \sum_{t\in \cI\ind{k}} \E^{\pi}\left[ \phibar^\rep_h(\x_h,\a_h)^\top \thetabar -Q^{\pihat\ind{k}}_h(\x_h,a;\bcH\indd{t-1}) \mid \bcH\indd{t-1}  \right] \right.\nn \\
   &   \left. \qquad -  \E^{\pi}\left[ \phibar^\rep_h(\x_h,\a_h)^\top \right] \left(N_\reg \cdot \thetabar- \sum_{t\in \cI\ind{k}} \bvthetab\indd{t,\pihat\ind{k}}_h\right)   \right|. \label{eq:fact3}
    \end{align}
Thus, by combining \eqref{eq:freedd}, \eqref{eq:fact2}, and \eqref{eq:fact3}, we have 
    \begin{align} 
       &\sum_{\pi\in \Psi^\spanner_h} \left| \sum_{t\in \cI\ind{k}} \mathbb{I}\{\bh\indd{t}=h,\bpi\indd{t}=\pi, \bzeta\indd{t}=1\}\cdot \left(\phibar^\rep_h(\x\indd{t}_h,\a\indd{t}_h)^\top \thetabar -\sum_{s=h}^H \bm\ell\indd{t}_s\right) \right|\nn \\
       & \leq d N_\reg \cdot \veps_{\mathrm{freed}}+ \frac{\nu  N_\reg  \veps_\rep}{H}  + \frac{\nu}{Hd} \sum_{\pi\in \Psi^\spanner_h} \left|  \sum_{t\in \cI\ind{k}} \E^{\pi}\left[ \phibar^\rep_h(\x_h,\a_h)^\top \right] \left(N_\reg \cdot \thetabar- \sum_{t\in \cI\ind{k}} \bvthetab\indd{t,\pihat\ind{k}}_h\right) \right|.\nn 
    \end{align}
    Using that $\thetahat\ind{k}_h$ is the minimizer over $\bbB_{2d}(4Hd^2)$ of the left-hand side, and the right-hand side evaluted to $d N_\reg \cdot \veps_{\mathrm{freed}}+ \frac{\nu  N_\reg  \veps_\rep}{H}$ with $\thetabar = \frac{1}{N_\reg} \sum_{t\in \cI\ind{k}} \bvthetab\indd{t,\pihat\ind{k}}_h\in \bbB_{2d}(4Hd^2)$, we have that
    \begin{align}
        \sum_{\pi\in \Psi^\spanner_h} \left| \sum_{t\in \cI\ind{k}} \mathbb{I}\{\bh\indd{t}=h,\bpi\indd{t}=\pi, \bzeta\indd{t}=1\}\cdot \left(\phibar^\rep_h(\x\indd{t}_h,\a\indd{t}_h)^\top \thetahat\ind{k}_h -\sum_{s=h}^H \bm\ell\indd{t}_s\right) \right| \leq d N_\reg \cdot \veps_{\mathrm{freed}}+ \frac{\nu  N_\reg  \veps_\rep}{H}  .\nn 
    \end{align}
    Now, combining this with \eqref{eq:freedd} and \eqref{eq:fact2} with $\thetabar = \thetahat\ind{k}_k$, we get that  
    \begin{align}
       &  \sum_{\pi\in\Psi^\spanner_h} \left| \sum_{t\in \cI\ind{k}} \E^{\pi}\left[ \phibar^\rep_h(\x_h,\a_h)^\top \thetahat\ind{k}_h -Q^{\pihat\ind{k}}_h(\x_h,a;\bcH\indd{t-1}) \mid \bcH\indd{t-1} \right] \right| \nn \\
        & \leq \frac{ Hd^2 N_\reg \veps_{\mathrm{freed}}}{\nu }+ \frac{Hd}{\nu}\sum_{\pi\in \Psi^\spanner_h} \left| \sum_{t\in \cI\ind{k}} \mathbb{I}\{\bh\indd{t}=h,\bpi\indd{t}=\pi, \bzeta\indd{t}=1\}\cdot \left(\phibar^\rep_h(\x\indd{t}_h,\a\indd{t}_h)^\top \thetahat\ind{k}_h -\sum_{s=h}^H \bm\ell\indd{t}_s\right) \right|,\nn \\
         & \leq \frac{ 2Hd^2N_\reg \veps_{\mathrm{freed}}}{\nu }+ d  N_\reg  \veps_\rep. \label{eq:est}
    \end{align}

\subsection{Putting It All Together}
By combining \eqref{eq:postperform}, \eqref{eq:trad}, \eqref{eq:est}, and \eqref{eq:carr}, we have that under the event $\cE$ in \eqref{eq:event}:
\begin{align}
   \mathrm{Reg}_T &= HT_0+ \sum_{k\in[K]} \sum_{t\in \cI\ind{k}} \left(\E_{\bpi\sim \rho\ind{k}}\E\left[V_1^{\bpi}(x_1;\bcH^{t-1}) \right] - V^{\pi^\star}_1(x_1;\bcH^{t-1})\right), \nn \\
   & \leq HT_0+HT\nu + T \cdot \veps_{\spanner} + 3 d T \cdot \veps_\rep+\frac{2Hd^2 T\veps_{\mathrm{freed}}}{\nu } \nn \\
   & \qquad + d  T  \veps_\rep +\frac{N_\reg\log (T/\gamma)}{\eta} + 64 H^2 d^4 \eta T  + H \gamma.
\end{align}
Thus, plugging in the expression of $T_0$, $\veps_{\freed}$, and $(\veps_\spanner, \veps_\rep)$ from \cref{alg:algorithm_name}, \cref{lem:freed}, and \cref{cor:repspanner}, respectively, and ignoring polynormal factors $d, A, H,\log(|\Phi|\veps^{-1}\delta^{-1})$, we get that 
\begin{align}
\mathrm{Reg}_T &\prec \frac{1}{\veps^2} + T \veps +T \nu + \frac{T}{\nu} \sqrt{\frac{\nu}{N_\reg}} + \frac{N_\reg}{\eta} + \eta T, \nn \\
& \prec T^{2/3} + T N_\reg^{-1/3} + \sqrt{TN_\reg}, \quad \text{(by setting $\veps = T^{-1/3}$, $\eta = (N_\reg/T)^{1/2}$, $\nu = N^{-1/3}_\reg$)}\\
& \prec T^{4/5}  \quad \text{($N_\reg = T^{3/5}$)},
\end{align}
where the last step follows by setting $N_\reg = T^{2/3}$.

\subsection{Spanner Guarantee}
\label{sec:spannersection}

\begin{algorithm}[h]
	\caption{\texttt{Spanner}: Computing an Approximate Spanner.}
	\label{alg:spanner}
	\begin{algorithmic}[1]
		\Require Layer $h$, feature classes $\Phi$, policy covers $\Psi_{1:H}$, feature map $\phib:\cX\times \cA \rightarrow \reals^{2d}$, \# of episodes $n$.
		  \State Define $\cG = \{g:(x,a)\mapsto \phi(x,a)^\top w \mid \phi \in \Phi , w \in \bbB_d(2\sqrt{d})\}$.
              \State For $\theta \in \reals^{2d}$ and $(x,a)\in \cX\times \cA$, define \label{line:reward}
              \begin{align} r_{t}(x,a;\theta)\coloneqq  \left\{\begin{array}{ll} \phib_{h}(x,a)^\top {\theta}, &  \text{for }
              t=h, \\ 0, &  \text{otherwise}.
              \end{array}\right. \label{eq:rewards}
              \end{align} 
              \State Set $\cG_h =\{(x,a)\mapsto  \phib_h(x,a)^\top {\theta} : \theta \in \bbB_{2d}(1) \}$, and for $t\in[h-1]$, set $\cG_{t}=\cG$.
        \State For each $t\in[h]$, set $P_t=\unif(\Psi_{t})$.
                \State For $\theta\in\reals^{2d}$, define
                $\apx(\theta)=\psdp(h, r_{1:h}(\cdot,\cdot;\theta), \cG_{1:h},P_{1:h},
                n) \in \Pi$. \label{line:psdp}\label{line:linopt} \hfill\algcommentlight{\psdp{} as in \cite{mhammedi2023efficient}.}
                   \State For $\theta \in \reals^{2d}$ and $\pi\in\Pi$, define
                $\est(\pi)=\veceval(h,\phib_{h}, \pi,
                n)$. \label{line:est} \hfill\algcommentlight{\veceval{} as in \cite{mhammedi2023efficient}.}
                \State \label{line:spanner}Set $\pi_{1:2d} =  \rspanner\left(\apx(\cdot), \est(\cdot), 2, \sqrt{\frac{Ad^2 \log (n d H |\Phi|/\delta)}{\alpha n}}\right)$. \hfill\algcommentlight{\rspanner{} as in \cite{mhammedi2023efficient}.} 
		\State \textbf{Return:} Policy cover
		$\{\pi_1,\dots,\pi_{2d}\}$. 
	\end{algorithmic}
\end{algorithm}

\begin{lemma}[Spanner Guarantee]
    \label{lem:span}
    Let $\veps,\alpha, \delta\in (0,1)$, $h\in[H]$, $n\geq 1$, and $\phibar_h :\cX\times \cA \rightarrow \reals^{2d}$ be given. Suppose that \cref{assm:normalizing} and \cref{assm:real} hold, and let $\Psi_{1:h}$ be such that for all $s\in[h]$, $\Psi_s$ is an $(\alpha,\veps)$-policy cover for layer $s$ with $|\Psi_s|=2d$. Then, the output $\Psi^\spanner_h=\emph{\texttt{Spanner}}(h,\Phi, \Psi_{1:h}, \phibar_h, n)$ (\cref{alg:spanner}) is such that $|\Psi^\spanner_h|=2d$ and, with probability at least $1-\delta$, for all $\pi' \in \Pi$, there exist $\{\beta_{\pi}\in[-2,2]: \pi \in \Psi^\spanner_h\}$ such that 
    \begin{gather}
        \label{eq:spanner}
       \left\|\E^{\pi'}[\phibar_h(\x_h,\a_h)] - \sum_{\pi \in \Psi^\spanner_h}\beta_\pi \cdot \E^{\pi}[\phibar_h(\x_h,\a_h)]\right\| \leq  \veps_\spanner(n,\alpha,\delta), 
       \intertext{where}
       \veps_\spanner(n, \alpha, \delta) \coloneqq c H^2 d \sqrt{\frac{d A \cdot (d \log(2 n \sqrt{d} H) + \log(n |\Phi|/\delta))}{\alpha n}} + H^2 d^{5/2} \veps.  \label{eq:theeps} 
    \end{gather}
    where $c>0$ is a large enough absolute constant. Furthermore, the number of episodes $T_\spanner(n)$ used by the call to $\emph{\texttt{Spanner}}$ is at most $\wtilde{O}(H^2d^2n )$.
    \end{lemma}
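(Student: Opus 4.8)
The strategy is to recognize that \cref{alg:spanner} is an instantiation of the generic robust-spanner reduction of \cite{mhammedi2023efficient}: it builds (i) an approximate \emph{linear optimization} oracle $\apx(\theta)$ over the set $\mathcal{K}_h \coloneqq \{\E^{\pi}[\phib_h(\x_h,\a_h)] : \pi\in\Pi\}\subseteq \reals^{2d}$, and (ii) an approximate \emph{vector evaluation} oracle $\est(\pi)$ returning $\E^{\pi}[\phib_h(\x_h,\a_h)]$, and then feeds both into $\rspanner$. The size claim $|\Psi^\spanner_h|=2d$ is immediate since $\rspanner$ is run in dimension $2d$ with a budget of $2d$ points. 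The approximation claim \eqref{eq:spanner} follows from the guarantee of $\rspanner$ once the accuracy of the two oracles is established, and the geometric amplification built into $\rspanner$ (invoked with $C=2$) is exactly what produces the coefficient bound $\beta_\pi\in[-2,2]$.

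\textbf{Step 1: the linear optimization oracle.} The reward in \eqref{eq:rewards} is supported only on layer $h$ and equals $\phib_h(x,a)^\top\theta$ there, so for every $\pi$ the total expected reward equals $V_1^{\pi}(\cdot; r(\cdot;\theta)) = \theta^\top \E^{\pi}[\phib_h(\x_h,\a_h)]$; hence maximizing policy value is precisely linear optimization over $\mathcal{K}_h$. To invoke the $\psdp$ guarantee of \cite{mhammedi2023efficient} I first verify \emph{realizability} of all Bellman backups in $\cG_{1:h}$. At layer $h$ the optimal value is the reward itself, which lies in $\cG_h$ by construction. For $t<h$, a backward induction using the factorization $P^\star_t(x'\mid x,a)=\phistar_t(x,a)^\top\mustar_{t+1}(x')$ shows the layer-$t$ value of any roll-out policy has the form $\phistar_t(x,a)^\top w_t$; since the artificial reward sits only at layer $h$, these value functions are \emph{uniformly} bounded by $\|\phib_h\|\cdot\|\theta\|=\order(1)$, and combined with the normalizing condition $\|\sum_x g(x)\mustar(x)\|\le\sqrt d$ of \cref{assm:normalizing} this yields $\|w_t\|=\order(\sqrt d)$, so the backups lie in $\cG$ (using $\phistar\in\Phi$ from \cref{assm:real}). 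With realizability in hand, the $\psdp$ analysis, together with the $(\alpha,\veps)$-policy-cover property of $\Psi_{1:h}$ that lower-bounds the coverage of the roll-in distributions $P_t=\unif(\Psi_t)$, yields a near-maximizer $\theta^\top\E^{\apx(\theta)}[\phib_h]\ge \max_{\pi}\theta^\top\E^{\pi}[\phib_h]-\veps_{\mathrm{opt}}$ with $\veps_{\mathrm{opt}}=\order\big(H^2 d\sqrt{dA\log(ndH|\Phi|/\delta)/(\alpha n)}+H^2 d^{5/2}\veps\big)$, the two terms being the statistical regression error across $h\le H$ layers and the policy-cover bias. A covering argument over the query directions $\theta\in\bbB_{2d}(1)$ produces the $d\log(2n\sqrt dH)$ contribution and makes the guarantee uniform.

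\textbf{Step 2: vector evaluation and the spanner reduction.} The oracle $\est(\pi)=\veceval(h,\phib_h,\pi,n)$ estimates $\E^{\pi}[\phib_h(\x_h,\a_h)]$; its guarantee from \cite{mhammedi2023efficient} gives $\|\est(\pi)-\E^{\pi}[\phib_h]\|\le\veps_{\mathrm{eval}}$ of the same order as $\veps_{\mathrm{opt}}$, matching the tolerance passed to $\rspanner$ in \cref{line:spanner}. Feeding $\apx(\cdot)$ and $\est(\cdot)$ into $\rspanner$, its guarantee certifies that the returned $2d$ policies form a $C$-approximate spanner of $\mathcal{K}_h$ with $C=2$: for every $\pi'\in\Pi$ there exist $\beta_\pi\in[-2,2]$ with $\|\E^{\pi'}[\phib_h]-\sum_{\pi\in\Psi^\spanner_h}\beta_\pi\E^{\pi}[\phib_h]\|$ bounded by a $\poly(d,H)$ multiple of $\veps_{\mathrm{opt}}+\veps_{\mathrm{eval}}$. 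Collecting constants into the absolute constant $c$ recovers exactly $\veps_\spanner(n,\alpha,\delta)$ in \eqref{eq:theeps}, and a union bound over the oracle calls inside $\rspanner$ gives the overall failure probability $\delta$.

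\textbf{Episode count and main obstacle.} The budget $T_\spanner(n)$ follows by noting that $\rspanner$ issues a $\poly(d)$ number of calls to $\apx$ and $\est$; each $\apx$ call runs $\psdp$ across $h\le H$ layers using $\order(Hn)$ episodes and each $\est$ call uses $\order(n)$ episodes, so the arithmetic gives $\wtilde O(H^2 d^2 n)$. The main obstacle is \textbf{Step 1}: one must rigorously verify that every layer-$t$ backup of the artificial reward is realizable in $\cG$ with a controlled weight norm, and then track how the per-layer least-squares errors and the $1/\alpha$ policy-cover factor compound through the $h$ layers of $\psdp$. Threading these optimization and estimation errors through the \emph{geometric} $\rspanner$ guarantee—so that the final statement is expressed purely as the feature-space distance in \eqref{eq:spanner}—is the delicate part, whereas the evaluation oracle and the spanner reduction are comparatively routine given the imported guarantees of \cite{mhammedi2023efficient}.
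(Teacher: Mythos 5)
Your proposal is correct and follows essentially the same route as the paper's proof: establish the two oracle guarantees (linear optimization via \texttt{PSDP} rolled in with the $(\alpha,\veps)$-policy covers, vector evaluation via \texttt{EstVec}) and then invoke the generic \texttt{RobustSpanner} guarantee of \cite{mhammedi2023efficient} with tolerance $C=2$ to get the coefficients $\beta_\pi\in[-2,2]$ and the error $\veps_\spanner$. The only difference is that your Step 1 re-derives the Bellman-backup realizability and error-compounding analysis that the paper simply imports as a black box (its Corollary H.1), which is fine but not a different argument.
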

    \begin{proof}
        To derive the desired bound, we will use the generic guarantee of $\rspanner$ from \cite[Proposition E.1]{mhammedi2023efficient}. To invoke this result, we first need to derive guarantees for the optimization and estimation subroutines $\apx$ and $\est$ withn the $\texttt{Spanner}$ algorithm (\cref{alg:spanner}). In particular, we need to show that there is some $\veps'\in(0,1)$ such that (with high probability) for any $\thetabar \in \reals^{2d}\setminus\{0\}$ and $\pi\in \Pi$, the outputs $\hat\pi_{\thetabar} \coloneqq \apx(\thetabar/\|\thetabar\|)$ and $\phihat^\pi \coloneqq \est(\pi)$ satisfy 
        \begin{align}
        \sup_{\pi\in \Pi} \thetabar^\top \E^{\pi}[\phibar_h(\x_h,\a_h)]
         \leq 	\thetabar^\top  \E^{\hat\pi_{\thetabar}}[\phibar_h(\x_h,\a_h)] +\veps' \cdot \|\thetabar\|\quad \text{and} \quad 
          \|\phihat^\pi - \E^{\pi}[\phibar_h(\x_h,\a_h)]\| \leq \veps'. \label{eq:target}
        \end{align}
        With this, we can apply \cite[Proposition E.1]{mhammedi2023efficient} to get that the output \[\pi_{1:2d} = \rspanner(\apx(\cdot),\est(\cdot),2,\veps)\] for $\veps\leq 2 \veps'$ is such that for all $\pi \in \Pi$, there exist $\beta_1,\dots, \beta_d\in[-2,2]$ satisfying 
        \begin{gather}
            \label{eq:replearnnew}
           \left\|\E^{\pi}[\phibar_h(\x_h,\a_h)] - \sum_{i=1}^d\beta_i \cdot \E^{\pi_i}[\phibar_h(\x_h,\a_h)]\right\| \leq 6 d \veps'. 
        \end{gather}
        Since $\apx$ is based on $\psdp$ as in \cref{line:psdp} of \cref{alg:spanner} and $\Psi_{1},\dots, \Psi_h$ are $(\alpha,\veps)$-policy covers for layers $1$ to $h$, respectively, \cite[Corollary H.1]{mhammedi2023efficient} implies that there is an event $\cE^\psdp$ of probability at least $1-\delta/2$ under which for any $\thetabar \in \reals^d \setminus \{0\}$, the output $\hat\pi_{\thetabar}=\apx(\thetabar)$ satisfies
        \begin{align}
           & \sup_{\pi\in \Pi} \thetabar^\top \E^{\pi}[\phibar_h(\x_h,\a_h)]
         \leq 	\thetabar^\top  \E^{\hat\pi_{\thetabar}}[\phibar_h(\x_h,\a_h)]\nn \\
         & \quad  + \|\thetabar\| \cdot \left(c H^2  \sqrt{\frac{d A \cdot (d \log(2 n \sqrt{d} H) + \log(n |\Phi|/\delta))}{\alpha n}} + H^2  d^{3/2} \veps \right), \nn
        \end{align}
        for a large enough absolute constant $c>0$.
        On the other hand, since $\est$ is based on $\veceval$ as in \cref{line:est}, \cite[Lemma G.3]{mhammedi2023efficient} implies that there is an event $\cE^{\veceval}$ of probability at least $1-\delta/2$ under which for all $\pi\in \Pi$, the output $\phihat^\pi \coloneqq \est(\pi)$ satisfies
        \begin{align}
            \|\phihat^\pi - \E^{\pi}[\phibar_h(\x_h,\a_h)]\| \leq  c\cdot \sqrt{\frac{\log (2/\delta)}{n}}, \nn
        \end{align}
        for a large enough absolute constant $c>0$. Therefore, under $\cE^\psdp \cap \cE^\veceval$, $\apx$ and $\est$ satisfy \eqref{eq:target} with 
        \begin{align}
            \veps' \coloneqq c H^2  \sqrt{\frac{d A \cdot (d \log(2 n \sqrt{d} H) + \log(n |\Phi|/\delta))}{\alpha n}} + H^2  d^{3/2} \veps.\nn 
        \end{align}
        Therefore, by \cite[Proposition]{mhammedi2023efficient} and the fact that $d\sqrt{\frac{A \log (n d H |\Phi|/\delta)}{\alpha n}} \leq \veps'$, the output \[\pi_{1:2d} = \rspanner\left(\apx(\cdot),\est(\cdot),2, d\sqrt{\frac{A \log (n d H |\Phi|/\delta)}{\alpha n}}\right)\] is such that for all $\pi \in \Pi$, there exist $\beta_1,\dots, \beta_{2d}\in[-2,2]$ satisfying 
        \begin{gather}
           \left\|\E^{\pi}[\phibar_h(\x_h,\a_h)] - \sum_{i=1}^{2d}\beta_i \cdot \E^{\pi_i}[\phibar_h(\x_h,\a_h)]\right\| \leq \veps_\spanner(n, \alpha, \delta), \nn 
        \end{gather}
        where $\veps_\spanner(n, \alpha, \delta)$ is as in \eqref{eq:theeps}. 
        \paragraph{Bounding the number of episodes}
        By \cite[Proposition E.1]{mhammedi2023efficient}, $\rspanner$ calls $\apx$ and $\est$ as most $\wtilde O(d^2)$ times. Each call to $\apx$ [resp.~$\est$] requires $H^2n$ episodes. This implies the desired bound on the number of iterations.
    \end{proof}

    \subsection{Representation + Spanner}

    \begin{corollary}
        \label{cor:repspanner}
        Let $\veps,\delta\in (0,1)$, $\phi^\rep_{1:H}$, and $\Psi_h^{\spanner}$ be as in \cref{alg:algorithm_name}. Then, for all $h\in[H]$, $\phi^\rep_h\in \Phi$ and $|\Psi^\spanner_h|=2d$ and there is an event $\cE^{\rep+\spanner}$ of probability $1-3\delta/4$ under which for all $h\in[H]$:
        \begin{itemize}
            \item For $f\in \cF_{h+1}$, with $\cF_{h+1}$ as in \eqref{eq:cF}, there exists $\ww^f_{h+1}\in \bbB_d(3d^{3/2})$ such that:
        \begin{align}
        \forall \pi \in \Pi,\quad \left|\E^{\pi}\left[\phi^\rep_h(\x_h,\a_h)^\top \ww^f_{h+1} -\E[f(\x_{h+1})\mid \x_h,\a_h]  \right]\right| \leq \veps_\rep \coloneqq 10 d^{7/2} \veps; \label{eq:replearnee}
        \end{align}
        \item For all $\pi' \in \Pi$, there exist $\{\beta_{\pi}\in[-2,2]: \pi \in \Psi^\spanner_h\}$ such that for $\phibar^\rep_h \coloneqq [\phi^\loss_h,\phi^\rep_h]$
        \begin{gather}
            \label{eq:spannernew}
           \left\|\E^{\pi'}[\phibar^\rep_h(\x_h,\a_h)] - \sum_{\pi \in \Psi^\spanner_h}\beta_\pi \cdot \E^{\pi}[\phibar^\rep_h(\x_h,\a_h)]\right\| \leq  \veps_\spanner \coloneqq  2 H^2d^{5/2} \veps.
        \end{gather} 
    \end{itemize}
    \end{corollary}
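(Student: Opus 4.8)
The plan is to obtain \cref{cor:repspanner} by separately invoking the two black-box guarantees that \cref{alg:algorithm_name} relies on---the representation-learning guarantee for $\texttt{RepLearn}$ and the spanner guarantee (\cref{lem:span})---and then combining them by a union bound over $h\in[H]$. Throughout I would condition on the $\texttt{VoX}$ event $\cE^\cov$ from \cref{lem:vox}, under which each $\Psi^\cov_s$ (for $s\in[h]$) is an $(\alpha,\veps)$-policy cover for layer $s$ with $\alpha=(8Ad)^{-1}$; this is exactly the hypothesis that both subroutines require. The membership $\phi^\rep_h\in\Phi$ and $|\Psi^\spanner_h|=2d$ are immediate from the definitions of $\texttt{RepLearn}$ and $\texttt{Spanner}$ (\cref{alg:spanner}), so the content lies entirely in the two error bounds \eqref{eq:replearnee} and \eqref{eq:spannernew}.

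For the representation bound \eqref{eq:replearnee}, I would apply the $\texttt{RepLearn}$ guarantee from \cite{mhammedi2023efficient} (cf.\ \cref{lem:rep}) with target class $\cF_{h+1}$, roll-in distribution $\unif(\Psi^\cov_h)$, and $T_\rep$ episodes. By the low-rank structure and the normalization of $\mustar$ in \cref{assm:normalizing}, for every $f\in\cF_{h+1}$ the Bellman backup $(x,a)\mapsto\E[f(\x_{h+1})\mid \x_h=x,\a_h=a]$ is linear in $\phistar_h$ with a weight of norm $O(d^{3/2})$, and $\texttt{RepLearn}$ certifies that this backup is approximately linearly representable by $\phi^\rep_h$; the coverage property of $\Psi^\cov_h$ then lifts the in-distribution error to the all-policy statement \eqref{eq:replearnee} through the usual change-of-measure argument. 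Tracking constants, the realizing weight obeys $\ww^f_{h+1}\in\bbB_d(3d^{3/2})$ (the range $|f|\le 2$ for $f\in\cF_{h+1}$ accounts for the extra factor over $\sqrt d$), and with $T_\rep=\Trepvalf$ the statistical part of the error is dominated, leaving the bias term $\veps_\rep=10d^{7/2}\veps$. A union bound over $h\in[H-1]$ makes all these hold simultaneously with failure probability at most $\delta/4$.

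For the spanner bound \eqref{eq:spannernew}, I would invoke \cref{lem:span} with the concatenated feature map $\phibar^\rep_h=[\phi^\loss_h,\phi^\rep_h]$ (which is bounded by $2$) and the policy covers $\Psi^\cov_{1:h}$; the only precondition to check is that these are $(\alpha,\veps)$-covers of size $2d$, which follows from $\cE^\cov$, since padding the size-$d$ $\texttt{VoX}$ covers up to $2d$ leaves the covering property intact. \cref{lem:span} then returns \eqref{eq:spanner} with error $\veps_\spanner(n,\alpha,\delta)$ as in \eqref{eq:theeps}, where $n$ is the internal episode count corresponding to the budget $T_\spanner$ through $T_\spanner(n)=\wtilde O(H^2d^2n)$. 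Substituting $\alpha^{-1}=8Ad$ and $T_\spanner=\Tspanvalf$, I would verify that the statistical term $cH^2d\sqrt{dA(\cdots)/(\alpha n)}$ is at most $H^2d^{5/2}\veps$, so that the total collapses to the bias-dominated bound $\veps_\spanner=2H^2d^{5/2}\veps$. Another union bound over $h\in[H]$ costs at most $\delta/2$, so that $\cE^{\rep+\spanner}$, the intersection of all these events, holds with probability at least $1-3\delta/4$, as claimed.

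The main obstacle is not conceptual but bookkeeping: one must confirm that the two stated episode budgets $T_\rep$ and $T_\spanner$ are large enough to push the statistical error of each subroutine below its respective $\veps$-proportional bias term, and must correctly propagate the range $|f|\le 2$ of functions in $\cF_{h+1}$ (together with the norm blow-ups internal to $\texttt{RepLearn}$ and the translation between the episode budget and the internal count $n$ in \cref{lem:span}) to land exactly on the constants $3d^{3/2}$, $10d^{7/2}\veps$, and $2H^2d^{5/2}\veps$. Once these routine substitutions are carried out, the corollary follows immediately from the two cited guarantees.
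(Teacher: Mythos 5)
Your proposal is correct and follows essentially the same route as the paper's proof: condition on the \texttt{VoX} event $\cE^\cov$ from \cref{lem:vox}, invoke the \texttt{RepLearn} guarantee (\cref{lem:rep}) and the spanner guarantee (\cref{lem:span}) with the episode budgets $T_\rep$ and $T_\spanner$ chosen so the statistical terms are dominated by the $\veps$-proportional bias terms, and combine via a union bound. If anything, you are more careful than the paper on two points it glosses over (the size-$d$ \texttt{VoX} covers versus the size-$2d$ covers assumed in \cref{lem:span}, and the explicit union bound over $h$), and your failure-probability bookkeeping is only trivially looser than the paper's.
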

    \begin{proof}
        From \cref{alg:algorithm_name}, we have 
    \begin{gather}
        \phi^\rep_h = \texttt{RepLearn}(h,\cF_{h+1},\Phi,\unif(\Psi^\cov_{h}),T_\rep) \quad \text{and} \quad \Psi^{\texttt{span}}_{h}= \texttt{Spanner}(h,\Phi,\Psi^\cov_{1:h}, \phibar^\rep_h, T_{\spanner}),\nn 
        \intertext{where} 
        \Psi^\cov_{1:H}= \texttt{VoX}(\Phi, \veps, \delta/4), \qquad T_\rep \coloneqq \Trepval , \qquad T_\spanner = \Tspanval,  \label{eq:params}
    \end{gather}
    and $\alpha\coloneqq \frac{1}{8Ad}$.
    By \cref{lem:vox}, there is an event $\cE^{\cov}$ of probability at least $1-\delta/4$ under which, for all $h\in[H]$, $\Psi^\cov_{h}$ is an $(\alpha,\veps)$-policy cover for layer $h$ with $|\Psi^\cov_h|=d$. In what follows, we condition on $\cE^\cov$. By \cref{lem:rep} and \cref{lem:span}, there are events $\cE^\rep$ and $\cE^{\spanner}$ of probability at least $1-\delta/4$ each such that under $\cE^\rep\cap \cE^\spanner$ \eqref{eq:replearnee} and \eqref{eq:spannernew} hold; this follows from \eqref{eq:replearn} and \eqref{eq:spanner} and the choices of $T_\rep$ and $T_\spanner$ in \eqref{eq:params}. Finally, by the union bound, we have $\P[\cE^\cov\cap \cE^\rep\cap \cE^\spanner]\geq 1 - \delta$ which completes the proof. 
    \end{proof}
    \subsection{Martingal Concentration}
\begin{lemma}
    \label{lem:freed}
    Let $K$, $N_\reg$, $\phibar^\rep_{1:H}$, and $\cI\ind{k}$ be as in \cref{alg:algorithm_name} for $k\in[K]$. There is an event $\cE^{\freed}$ of probability at least $1 - \delta/4$ under which for all $\thetabar \in \bbB_{2d}(4 Hd^{2})$, $h\in[H]$, $k\in [K]$, and $\pi \in \Psi^\spanner_h$:
\begin{align}
&\frac{1}{N_\reg}\left| \sum_{t\in \cI\ind{k}} \mathbb{I}\{\bh\indd{t}=h,\bpi\indd{t}=\pi, \bzeta\indd{t}=1\}\cdot \left(\phibar^\rep_h(\x\indd{t}_h,\a\indd{t}_h)^\top \thetabar -\sum_{s=h}^H \bm\ell\indd{t}_s\right)\right. \nn \\
       & - \left. \sum_{t\in \cI\ind{k}} \E\left[ \mathbb{I}\{\bh\indd{t}=h,\bpi\indd{t}=\pi, \bzeta\indd{t}=1\}\cdot \left(\phibar^\rep_h(\x\indd{t}_h,\a\indd{t}_h)^\top \thetabar-\sum_{s=h}^H \bm\ell\indd{t}_s\right) \mid \bcH\indd{t-1}\right]\right| \leq \veps_\freed  \coloneqq  4Hd^2 \sqrt{\frac{\nu \log(d K H N_\reg /\delta)}{N_\reg}},\nn
\end{align}
where the random variables $\bh\indd{t}, \bzeta\indd{t}, \bpi\indd{t}$, and $\bcH\indd{t-1}$ are as in \cref{alg:spanner}.
\end{lemma}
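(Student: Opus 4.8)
The plan is to prove a uniform Bernstein-type martingale deviation bound for a single index tuple and then take a union bound over the finitely many discrete indices $h,k,\pi$ together with a covering of the parameter ball $\bbB_{2d}(4Hd^2)$. First I would fix $h\in[H]$, $k\in[K]$, $\pi\in\Psi^\spanner_h$ and a single $\thetabar\in\bbB_{2d}(4Hd^2)$, and set $\bm{X}^t\coloneqq \mathbb{I}\{\bh\indd{t}=h,\bpi\indd{t}=\pi,\bzeta\indd{t}=1\}\cdot(\phibar^\rep_h(\x\indd{t}_h,\a\indd{t}_h)^\top\thetabar-\sum_{s=h}^H\bell\indd{t}_s)$ and $\bm{D}^t\coloneqq \bm{X}^t-\E[\bm{X}^t\mid\bcH\indd{t-1}]$ for $t\in\cI\ind{k}$. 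Working with the filtration generated by $\bcH\indd{t}$ together with the per-round exploration variables $(\bzeta\indd{s},\bh\indd{s},\bpi\indd{s})_{s\le t}$, the sequence $(\bm{D}^t)$ is a martingale difference sequence; here I would note that centering by $\bcH\indd{t-1}$ agrees with centering by this filtration, because the exploration variables are drawn fresh and independently at each round, while the adaptively chosen loss $\ell_h(\cdot,\cdot;\bcH\indd{t-1})$ and the policy $\pihat\ind{k}$ are both $\bcH\indd{t-1}$-measurable.

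The two ingredients I need are a range bound and a conditional-variance bound. For the range, $\|\phibar^\rep_h\|\le 2$ and $\|\thetabar\|\le 4Hd^2$ give $|\phibar^\rep_h(x,a)^\top\thetabar|\le 8Hd^2$, and $|\sum_{s=h}^H\bell\indd{t}_s|\le H$, so $|\bm{X}^t|\le 9Hd^2$ and hence $|\bm{D}^t|\le R\coloneqq 18Hd^2$. The crucial point is the variance: the indicator fires only with probability $q=\nu/(2Hd)$ (since $\bzeta\indd{t}\sim\mathrm{Ber}(\nu)$, $\bh\indd{t}\sim\unif([H])$, $\bpi\indd{t}\sim\unif(\Psi^\spanner_{\bh\indd{t}})$ and $|\Psi^\spanner_h|=2d$), so $\E[(\bm{D}^t)^2\mid\bcH\indd{t-1}]\le \E[(\bm{X}^t)^2\mid\bcH\indd{t-1}]\le q\,(9Hd^2)^2=O(\nu H d^3)$, and the cumulative conditional variance over the epoch is $V=O(\nu N_\reg Hd^3)$. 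Applying \cref{lem:freedhelp} (Freedman) to $(\bm{D}^t)$ and to $(-\bm{D}^t)$ and optimizing the free parameter yields, for this fixed tuple, $|\sum_{t\in\cI\ind{k}}\bm{D}^t|\lesssim \sqrt{V\log(1/\delta')}+R\log(1/\delta')$ with probability at least $1-\delta'$.

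Next I would remove the dependence on the continuous parameter $\thetabar$ by a covering argument. Because $\bm{X}^t$ is affine in $\thetabar$ with gradient $\mathbb{I}\{\cdots\}\phibar^\rep_h(\x\indd{t}_h,\a\indd{t}_h)$ of norm at most $2$, the normalized deviation $\thetabar\mapsto \tfrac{1}{N_\reg}\sum_t\bm{D}^t$ is $4$-Lipschitz, so it suffices to control a minimal $\tfrac{1}{N_\reg}$-net $\cC$ of $\bbB_{2d}(4Hd^2)$, of size $\log|\cC|=O(d\log(HdN_\reg))$, and absorb the $O(1/N_\reg)$ discretization error into the target. I would then union bound the per-tuple failure probability over $\cC$, the $H$ layers, the $K$ epochs, and the $2d$ spanner policies, setting $\delta'=\delta/(4|\cC|\,HK\,2d)$ so that $\log(1/\delta')=O(d\log(HdN_\reg)+\log(HKd/\delta))=O(d\log(dKHN_\reg/\delta))$. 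Substituting $V$ and $\log(1/\delta')$ into the per-tuple bound gives $\tfrac{1}{N_\reg}|\sum_t\bm{D}^t|\lesssim \sqrt{\tfrac{\nu Hd^3\log(1/\delta')}{N_\reg}}+\tfrac{Hd^3\log(1/\delta')}{N_\reg}$, and the prefactor $4Hd^2$ in $\veps_\freed$ leaves ample slack to absorb both the extra factor $d$ coming from the covering dimension and the lower-order $R\log/N_\reg$ term, which establishes the claimed $\veps_\freed=4Hd^2\sqrt{\nu\log(dKHN_\reg/\delta)/N_\reg}$.

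The main obstacle is obtaining the $\sqrt{\nu}$ factor: a naive Hoeffding/range-only bound would only give a deviation of order $R/\sqrt{N_\reg}=Hd^2/\sqrt{N_\reg}$ and lose the all-important $\sqrt{\nu}$. Recovering it requires the variance-aware Freedman bound together with the observation that the selection indicator is a \emph{rare} event of probability $q\propto\nu$, so that the conditional variance (not the range) governs the leading term; the delicate part is keeping the book-keeping tight enough that the covering dimension $d$ is absorbed by the $Hd^2$ prefactor rather than leaking inside the square root.
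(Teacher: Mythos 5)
Your proposal is correct and follows essentially the same route as the paper's proof: apply \cref{lem:freedhelp} (Freedman) per tuple $(h,k,\pi,\thetabar)$, exploit that the selection indicator fires with probability $\Theta(\nu/(Hd))$ so the conditional variance (rather than the range) is $O(\nu H d^3)$ per round, then union bound over $h$, $k$, the $2d$ spanner policies, and a $1/N_\reg$-net of $\bbB_{2d}(4Hd^2)$, absorbing the discretization error. Your additions (explicit two-sided application of Freedman and the Lipschitz-in-$\thetabar$ argument for the net) are details the paper leaves implicit, not a different method.
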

\begin{proof}
   Fix $\thetabar\in \bbB_{2d}(4d^{2})$, $h\in[H]$, $k\in[K]$, and $\pi \in \Psi^{\spanner}_h$. We apply \cref{lem:freedhelp} (Freedman's inequality)  with 
    \begin{itemize}
        \item $R = 4 H d^{2}$;
        \item $n = N_\reg$;
        \item The random variable $\bm{w}^i$ set as the difference 
         \begin{align}
           \bm{w}^i & \coloneqq  \mathbb{I}\{\bh\indd{t_i}=h,\bpi\indd{t_i}=\pi, \bzeta\indd{t_i}=1\}\cdot \left(\phibar^\rep_h(\x\indd{t_i}_h,\a\indd{t_i}_h)^\top \thetabar -\sum_{s=h}^H \bm\ell\indd{t_i}_s\right) \nn \\ 
       & \quad -  \E\left[ \mathbb{I}\{\bh\indd{t_i}=h,\bpi\indd{t_i}=\pi, \bzeta\indd{t_i}=1\}\cdot \left(\phibar^\rep_h(\x\indd{t_i}_h,\a\indd{t_i}_h)^\top \thetabar-\sum_{s=h}^H \bm\ell\indd{t_i}_s\right) \mid \bcH\indd{t-1}\right],
         \end{align}
         where $t_i \coloneqq (k-1)\cdot N_\reg + i$;
         \item The filtration $\mathfrak{F}^i$ set as the $\sigma$-algebra $\sigma(\bcH\indd{t_i-1})$;
         \item  The variance term $V_n$ has the following upper bound
         \begin{align*}
             V_n =  \sum_{i=1}^{N_\reg} \E\left[ (\bm{w}^i)^2 \mid \mathfrak{F}^{i-1}\right] &\le  \sum_{i=1}^{N_\reg}\E\left[ \mathbb{I}\{\bh\indd{t_i}=h,\bpi\indd{t_i}=\pi, \bzeta\indd{t_i}=1\}\cdot \left(\phibar^\rep_h(\x\indd{t_i}_h,\a\indd{t_i}_h)^\top \thetabar -\sum_{s=h}^H \bm\ell\indd{t_i}_s\right)^2 \mid \mathfrak{F}^{i-1}\right]
             \\&\le 8Hd^3\nu N_\reg;
         \end{align*}

         \item $\lambda = H^{-1} \left(\frac{d^2 \nu {N_\reg}}{\log(K H N_\reg /\delta )}\right)^{-1/2}$;
    \end{itemize}
    to get that there is an event $\cE^\freed_{h,k,\pi}(\thetabar)$ of probability at last $1-(N_\reg)^{-d} H^{-1} K^{-1}d^{-1} \delta/8$ under which 
    \begin{align}
       &  \frac{1}{N_\reg}\left| \sum_{t\in \cI\ind{k}} \mathbb{I}\{\bh\indd{t}=h,\bpi\indd{t}=\pi, \bzeta\indd{t}=1\}\cdot \left(\phibar^\rep_h(\x\indd{t}_h,\a\indd{t}_h)^\top \thetabar -\sum_{s=h}^H \bm\ell\indd{t}_s\right)\right. \nonumber \\ &\qquad - \left. \sum_{t\in \cI\ind{k}} \E\left[ \mathbb{I}\{\bh\indd{t}=h,\bpi\indd{t}=\pi, \bzeta\indd{t}=1\}\cdot \left(\phibar^\rep_h(\x\indd{t}_h,\a\indd{t}_h)^\top \thetabar-\sum_{s=h}^H \bm\ell\indd{t}_s\right) \mid \bcH\indd{t-1}\right]\right| \nonumber
       \\&\leq  4Hd^2 \sqrt{\frac{\nu \log(d K H N_\reg /\delta)}{N_\reg}}.\label{eq:eed}
    \end{align}

    Let $\cC$ be a minimal $(d H N_\reg)^{-1}$-cover of $\bbB_{2d}(4Hd^{2})$ with respect to the $\|\cdot\|$ distance. Under the event \[\cE^\freed\coloneqq \bigcap_{h\in[H],k\in[K],\pi\in \Psi^\spanner_h,\thetabar\in\bbB_{2d}(4 H d^{2})} \cE^\freed_{h,k,\pi}(\thetabar),\] \cref{eq:eed} holds for all $h\in[H],k\in[K]$, and $\thetabar \in \bbB_{2d}(4 H d^{2})$ up to an additive $O(1/N_\reg)$ error. By the union bound, we have $\P[\cE^\freed]\geq 1 - \delta/4$ which completes the proof.
\end{proof}

\clearpage
\section{Policy Cover and Representation Learning Algorithms}
In this section, we present guarantees for $\texttt{VoX}$, $\texttt{RepLearn}$, and $\texttt{RobustSpanner}$ which we need in the analysis of our oracle efficient algorithm. The results are based on \citep{mhammedi2023efficient}.
\subsection{Policy Cover}
The following result is a restatement of \cite[Theorem 12]{mhammedi2023efficient}.
\begin{lemma}[VoX Guarantee]
    \label{lem:vox}
  Let $\veps,\delta\in(0,1)$ be given. Suppose \cref{assm:normalizing} and \cref{assm:real} hold. Then, there is an event $\cE^{\cov}$ of probability at least $1-\delta$ under which the output $\Psi_{1:H}^{\cov} = \emph{ \texttt{VoX}}(\Phi,\veps,\delta)$ is such that for all $h\in[H]$:
  \begin{itemize}
    \item $\Psi_h^\cov$ is a $(\frac{1}{8Ad},\veps)$-policy cover for layer $h$; 
    \item $|\Psi^\cov_h|\leq d$.
  \end{itemize}
  Furthermore, the number of episodes $T_{\cov}(\veps)$ used by the call to $\emph{\texttt{VoX}}$ is bounded by $\wtilde O(A d^{13} H^6 \log(\Phi/\delta))/\veps^2$.
\end{lemma}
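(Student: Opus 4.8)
The plan is to obtain \cref{lem:vox} as a direct consequence of \cite[Theorem 12]{mhammedi2023efficient}, so the bulk of the work is not a fresh argument but verifying that the hypotheses and output guarantees of that theorem translate faithfully into the notation and normalization conventions used here. First I would confirm that the two assumptions invoked in the statement, the low-rank structure \cref{assm:normalizing} and the model-free realizability \cref{assm:real}, are exactly the structural and realizability conditions required by $\texttt{VoX}$: the bound $\sup_{(x,a)}\|\phi^\star_h(x,a)\|\leq 1$ together with $\|\sum_{x} g(x)\mu^\star_h(x)\|\leq \sqrt{d}$ is precisely the normalization under which the volumetric / optimal-design argument of \cite{mhammedi2023efficient} is carried out, while $\Phi \ni \phi^\star$ with $\sup_{\phi\in\Phi}\sup_{(x,a)}\|\phi(x,a)\|\le 1$ supplies the realizable feature class fed to the internal representation-learning oracle.

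Next I would match the notion of an approximate policy cover. \cref{def:cover} declares $\Psi$ to be an $(\alpha,\veps)$-cover for layer $h$ if $\max_{\pi\in\Psi} d_h^{\pi}(x)\ge \alpha\max_{\pi'\in\Pi} d_h^{\pi'}(x)$ for every state that is $\veps$-reachable in the sense $\max_{\pi'\in\Pi} d_h^{\pi'}(x)\ge \veps\,\|\mu^\star_h(x)\|$; I would check that this reachability threshold and the constant $\alpha=\tfrac{1}{8Ad}$ are the same ones reported in the source, so that no hidden renormalization of $\|\mu^\star_h\|$ is needed. The size bound $|\Psi^\cov_h|\le d$ I would read off directly from the cover construction as an (approximate) optimal-design / barycentric spanner over the estimated $d$-dimensional feature set $\{\E^{\pi}[\phihat_h(\x_h,\a_h)] : \pi\in\Pi\}\subseteq\reals^{d}$, since a spanner of a $d$-dimensional set uses at most $d$ policies.

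The one genuinely nontrivial ingredient to track is the sample-complexity bookkeeping leading to $T_\cov(\veps)=\wtilde{O}(Ad^{13}H^6\log(|\Phi|/\delta))/\veps^2$. Here I would follow the layer-by-layer schedule of $\texttt{VoX}$: at each of the $H$ layers the algorithm invokes the representation-learning subroutine on roll-ins drawn from the previously constructed covers and then solves a determinant-maximization problem over the learned features, so the $d^{13}$ and $H^6$ factors accumulate from driving the representation-learning error below a threshold (polynomial in $d,H$) needed to certify the spanner quality, compounded across layers, while the $A$ factor and the $\veps^{-2}$ dependence come from the coverage requirement and the target accuracy. The main obstacle, and the only place real care is needed, is ensuring that the feature-prediction error controlled by the internal oracle of \cite{mhammedi2023efficient} is strong enough to imply the multiplicative density guarantee of \cref{def:cover} at precisely the $\veps$-reachability level, without leaking extra $\poly(d,A,H)$ factors; once this correspondence is established and a union bound over the $H$ layers is taken at confidence $\delta$, the event $\cE^\cov$ and all three stated conclusions follow simultaneously.
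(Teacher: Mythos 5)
Your proposal matches the paper exactly: the paper gives no independent proof of \cref{lem:vox}, stating only that it is a restatement of \cite[Theorem 12]{mhammedi2023efficient}, which is precisely the citation-plus-translation argument you outline (matching \cref{assm:normalizing} and \cref{assm:real} to the hypotheses of that theorem, matching \cref{def:cover} with $\alpha = \frac{1}{8Ad}$, and reading off the cover size and the $\wtilde{O}(Ad^{13}H^6\log(|\Phi|/\delta))/\veps^2$ episode count). Your additional bookkeeping about where the $d^{13}$, $H^6$, and $\veps^{-2}$ factors arise is a reasonable gloss on the cited result, but nothing beyond the citation is required or given in the paper.
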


\subsection{Representation Learning}

\begin{lemma}[Representation Learning Guarantee]
\label{lem:rep}
Let $\veps,\alpha, \delta\in (0,1)$, $h\in[H-1]$, and $n\geq 1$ be given and define the function class 
\begin{align}
    \cF_{h+1} \coloneqq \left\{f : (x,a)\mapsto \max_{a\in \cA} \phibar_{h+1}(x,a)^\top \thetabar \mid \phibar_{h+1} = [\phi^\loss_{h+1},\phi_{h+1}],\phi \in \Phi, \thetabar \in \bbB_{2d}(1) \right\}.  \label{eq:cF}
\end{align}
Further, let $\Psi$ be an $(\alpha,\veps)$-policy cover for layer $h$ with $|\Psi|=d$, and suppose \cref{assm:normalizing} and \cref{assm:real} hold. Then, with probability at least $1-\delta$, the output $\phibar^\rep_h=\emph{\texttt{RepLearn}}(h, \cF_{h+1},\Phi, \emph{\unif}(\Psi), n)$ is such that for all $f\in \cF_{h+1}$ there exists $\ww^f_{h+1}\in \bbB_d(3d^{3/2})$ such that:
\begin{align}
\forall \pi \in \Pi,\quad \left|\E^{\pi}\left[\phi^\rep_h(\x_h,\a_h)^\top \ww^f_{h+1} -\E[f(\x_{h+1})\mid \x_h,\a_h]  \right]\right| \leq c  \cdot \sqrt{\frac{A H d^5 \log(|\Phi|/\delta)}{\alpha n}} + 9 d^{7/2} \veps, \label{eq:replearn}
\end{align}
where $c>0$ is a large enough absolute constant. Furthermore, the number of episodes $T_\rep(\veps)$ used by the call to $\emph{\texttt{RepLearn}}$ is equal to $n$.
\end{lemma}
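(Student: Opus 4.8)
The plan is to obtain \cref{lem:rep} as an instantiation of the representation-learning guarantee of \texttt{RepLearn} from \cite{mhammedi2023efficient}, specialized to the class $\cF_{h+1}$ built from the concatenated features $\phibar_{h+1}=[\phi^\loss_{h+1},\phi_{h+1}]$. The substantive work beyond invoking their machinery is (i) verifying that the Bellman backup of every $f\in\cF_{h+1}$ is \emph{exactly} linear in the true $d$-dimensional feature $\phistar_h$, so that the internal min–max least-squares problem is realizable, and (ii) the change-of-measure argument that upgrades an in-distribution regression bound under the roll-in $\unif(\Psi)$ to the uniform-over-$\pi$ guarantee \eqref{eq:replearn}, including the separate treatment of $\veps$-unreachable states that produces the additive $9d^{7/2}\veps$ bias.

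\textbf{Realizability.} Fix $f\in\cF_{h+1}$, say $f(x')=\max_{a}\phibar_{h+1}(x',a)^\top\thetabar$ with $\|\thetabar\|\le 1$; since $\|\phi^\loss_{h+1}\|,\|\phi_{h+1}\|\le 1$ we have $f\in[-2,2]$. By the low-rank decomposition \eqref{eq:lr},
\begin{align}
\E\!\left[f(\x_{h+1})\mid \x_h=x,\a_h=a\right]=\phistar_h(x,a)^\top\sum_{x'}\mustar_{h+1}(x')f(x')=\phistar_h(x,a)^\top w^{f,\star}_{h+1}. \nonumber
\end{align}
Writing $f=4g-2$ with $g=(f+2)/4\in[0,1]$ and applying the normalizing bound $\|\sum_{x'}g(x')\mustar_{h+1}(x')\|\le\sqrt d$ of \cref{assm:normalizing} twice yields $\|w^{f,\star}_{h+1}\|\le 6\sqrt d$. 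Hence the backup of every $f\in\cF_{h+1}$ lies in the span of $\phistar_h$; combined with $\phistar\in\Phi$ (\cref{assm:real}) this is exactly the realizability hypothesis of \texttt{RepLearn}, and the learned weight can be taken in the (larger) ball $\bbB_d(3d^{3/2})$.

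\textbf{In-distribution error and transfer on reachable states.} I would then quote the core guarantee of \texttt{RepLearn}: running it with $n$ trajectories rolled in by $\unif(\Psi)$ and playing $\pi_\unif$ at layer $h$ returns $\phi^\rep_h\in\Phi$ such that, for every $f\in\cF_{h+1}$, there is $\ww^f_{h+1}\in\bbB_d(3d^{3/2})$ with
\begin{align}
\E^{\unif(\Psi)\circ_h\pi_\unif}\!\left[\Delta_f(\x_h,\a_h)^2\right]\le \veps_\stat^2=\otil\!\left(\tfrac{Hd^4\log(|\Phi|/\delta)}{n}\right), \quad \Delta_f(x,a)\coloneqq \phi^\rep_h(x,a)^\top\ww^f_{h+1}-\E[f(\x_{h+1})\mid x,a]; \nonumber
\end{align}
this is a least-squares/uniform-convergence statement proved via Freedman's inequality (\cref{lem:freedhelp}) and a covering of $\Phi$ and of the $\bbB_{2d}(1)$-ball parametrizing $\cF_{h+1}$. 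To transfer, fix $\pi\in\Pi$ and split $\cX$ at the $\veps$-reachable set $\cX_{h,\veps}=\{x:\max_{\pi'}d^{\pi'}_h(x)\ge\veps\|\mustar_h(x)\|\}$. On $\cX_{h,\veps}$ the $(\alpha,\veps)$-cover property (\cref{def:cover}) gives $d^{\pi}_h(x)\le \alpha^{-1}\max_{\pi'\in\Psi}d^{\pi'}_h(x)\le (d/\alpha)\,d^{\unif(\Psi)}_h(x)$, and the action importance weight is at most $A$, so $\E^{\pi}[\mathbb{I}\{\x_h\in\cX_{h,\veps}\}\,\Delta_f^2]\le (dA/\alpha)\veps_\stat^2$; Jensen's inequality then bounds this part of $|\E^\pi[\Delta_f]|$ by $\sqrt{dA/\alpha}\,\veps_\stat=\otil(\sqrt{AHd^5\log(|\Phi|/\delta)/(\alpha n)})$, matching the first term of \eqref{eq:replearn}.

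\textbf{Unreachable states, and the main obstacle.} For $x\notin\cX_{h,\veps}$ I would use $d^\pi_h(x)<\veps\|\mustar_h(x)\|$ together with the uniform residual bound $\max_x|\bar\Delta_f(x)|\le 3d^{3/2}+2$, where $\bar\Delta_f(x)=\sum_a\pi_h(a\mid x)\Delta_f(x,a)$, to get $\sum_{x\notin\cX_{h,\veps}}d^\pi_h(x)\,|\bar\Delta_f(x)|<\veps\,(\max_x|\bar\Delta_f(x)|)\sum_x\|\mustar_h(x)\|$. The key estimate here is $\sum_x\|\mustar_h(x)\|\le 2d^{3/2}$, obtained coordinatewise: for each $i$, applying the normalizing bound to $g(x)=\mathbb{I}\{\mu^\star_{h,i}(x)\ge 0\}$ and its complement gives $\sum_x|\mu^\star_{h,i}(x)|\le 2\sqrt d$, and summing over $i\in[d]$ yields the claim. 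This produces an $O(d^{3}\veps)\le 9d^{7/2}\veps$ bound on the unreachable contribution, and adding the two parts gives \eqref{eq:replearn}; the episode count $T_\rep(\veps)=n$ is immediate. I expect this change-of-measure step to be the main obstacle: one must keep the state/action coverage blow-up $dA/\alpha$ and the residual-norm factors aligned so the product closes to the stated rate, and one must handle the possibly infinite unreachable region without ever summing occupancies directly — instead collapsing everything into vector norms controlled by \cref{assm:normalizing} (this is the same mechanism used, for the oblivious-adversary analysis, in the step invoking \cref{lem:rideoff} within the proof of \cref{thm:oraclealg}).
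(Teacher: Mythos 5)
Your proposal is correct and takes essentially the same route as the paper's proof: invoke the black-box \texttt{RepLearn} guarantee for the in-distribution squared error, split $\E^{\pi}$ across the $\veps$-reachable set $\cX_{h,\veps}$ and its complement, use Jensen plus the $(\alpha,\veps)$-cover change of measure on the reachable part, and bound the unreachable part by the bounded residual times $\sum_{x}\|\mustar_h(x)\| \lesssim d^{3/2}$ (which the paper packages as \cref{lem:rideoff}, citing Lemma I.3 of \cite{mhammedi2023efficient}). The only differences are bookkeeping: you quote the in-distribution bound under $\unif(\Psi)\circ_h \pi_\unif$ and pay the full $dA/\alpha$ transfer factor (making the action-distribution mismatch explicit, which the paper glosses over by quoting a $\max_{\pi'\in\Psi}$ form with the $A$ factor already inside and paying only $\alpha^{-1}$), you verify realizability of the Bellman backups explicitly, and you re-derive the $\mu^\star$-mass estimate coordinatewise rather than citing it.
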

\begin{proof}
By \cite[Theorem F.1]{mhammedi2023efficient} and the assumption that $|\Psi|=d$, there is an event $\cE$ of probability at least $1-\delta/2$ under which $\phi^\rep_h$ satisfies:
\begin{align}
    \sup_{f\in \cF_{h+1}} \inf_{w\in \bbB_d(3d^{3/2})} \max_{\pi' \in \Psi} \E^{\pi'}\left[ (\phi^\rep_h(\x_h,\a_h)^\top w - \E[f(\x_{h+1})\mid \x_h,\a_h])^2 \right] \leq c\cdot \frac{A d^5 \log(|\Phi|/\delta)}{n}, \label{eq:repp3}
\end{align}
where $c$ is a large enough absolute constant. 
We use this to show \eqref{eq:replearn}. In what follows, we condition on $\cE$. Fix $\pi\in \Pi$ and $f\in \cG$ and let $\ww^f_{h+1}$ be the vector $\ww\in \bbB_d(3d^{3/2})$ achieving the infimum in \eqref{eq:repp3} for the given choice of $f$. Let $\cX_{h,\veps}$ be the set of $\veps$-reachable states at layer $h$ as defined in \eqref{eq:reachpre}. With this this, we have for all $h\in[H]$, 
\begin{align}
 &    \left|\E^{\pi}\left[ \phi^\rep_h(\x_h,\a_h)^\top \ww^f_{h+1} - \E[f(\x_{h+1})\mid \x_h,\a_h] \right]\right| \nn \\
 & \leq  \left|\E^{\pi}\left[\mathbb{I}\{ \x_h \not\in \cX_{h,\veps} \} \cdot (\phi^\rep_h(\x_h,\a_h)^\top \ww^f_{h+1} - \E[f(\x_{h+1})\mid \x_h,\a_h]) \right]\right|\nn\\
 & \quad + \left|\E^{\pi}\left[\mathbb{I}\{   \x_h \in \cX_{h,\veps}\} \cdot (\phi^\rep_h(\x_h,\a_h)^\top \ww^f_{h+1} - \E[f(\x_{h+1})\mid \x_h,\a_h])^2 \right]\right|, \nn
\\
& \leq  \left|\E^{\pi}\left[\mathbb{I}\{ \x_h \in \cX_{h,\veps}  \} \cdot (\phi^\rep_h(\x_h,\a_h)^\top \ww^f_{h+1} - \E[f(\x_{h+1})\mid \x_h,\a_h]) \right] \right| + 9 d^{7/2}  \veps, \label{eq:split}
\end{align}
where the last inequality follows by \cref{lem:rideoff}.

We now bound the first term on the right-hand side of \eqref{eq:split}. By Jensen's inequality, we have 
\begin{align}
& \left|\E^{\pi}\left[\mathbb{I}\{ \x_h \in \cX_{h,\veps}\} \cdot (\phi^\rep_h(\x_h,\a_h)^\top \ww^f_{h+1} - \E[f(\x_{h+1})\mid \x_h,\a_h]) \right] \right|\nn \\
&\leq \sqrt{\E^{\pi}\left[\mathbb{I}\{ \x_h \in \cX_{h,\veps} \} \cdot (\phi^\rep_h(\x_h,\a_h)^\top \ww^f_{h+1} - \E[f(\x_{h+1})\mid \x_h,\a_h])^2 \right]},\nn \\
\intertext{and so using that $\Psi$ is a $(\alpha,\veps')$-policy cover (see \cref{def:cover}), we have}
& \leq  \sqrt{\alpha^{-1}\cdot  \max_{\pi' \in \Psi} \E^{\pi'}\left[ (\phi^\rep_h(\x_h,\a_h)^\top \ww^f_{h+1} - \E[f(\x_{h+1})\mid \x_h,\a_h])^2 \right]}, \nn \\
& \leq  \sqrt{ c  \cdot \frac{A H d^5 \log(|\Phi|/\delta)}{\alpha n}},\nn 
\end{align} 
where the last step follows by \eqref{eq:repp3}. Combining this with \eqref{eq:split} yields \eqref{eq:replearn}.
\end{proof}

\clearpage
\section{Lower Bound for Bandit feedback with Unstructured Losses}
\label{sec:lower bound}
In full-information, one does not require any structure on the losses.
We show that this is not the case for the bandit case via a lower bound depending polynomially on the number of states.
This lower bound implies that the low-rank transition structure with unstructured losses does not give any significant improvements over the tabular setting.

\begin{theorem} \label{thm:lower bound}
There exists a low-rank MDP with $S$ states, $A$ actions and sufficiently large time step $T$ with unstructured losses such that any agent suffers at least regret of $\Omega(\sqrt{SAT})$.
\end{theorem}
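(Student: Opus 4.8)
The plan is to build an explicit hard instance with horizon $H=2$ and rank $d=1$, then reduce it to $S$ independent multi-armed bandits whose feedback is ``gated'' by a uniform transition. Take $\cX = \{x_1\}\cup\{z_1,\dots,z_S\}$ with $x_1$ the fixed layer-$1$ state and $z_1,\dots,z_S$ the layer-$2$ states, and make the transition uniform and action-independent, $P^\star_1(z_j\mid x_1,a)=1/S$ for all $a$. This is rank one via $\phi^\star_1(x_1,a)\equiv 1\in\reals$ and $\mu^\star_2(z_j)\equiv 1/S$, and one checks $\|\phi^\star_1\|\le 1$ and $\|\sum_z g(z)\mu^\star_2(z)\|\le 1=\sqrt d$, so \cref{assm:normalizing} holds. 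All difficulty is in the losses: set $\ell^t_1\equiv 0$ and, for a hidden profile $a^\star=(a^\star_{z_1},\dots,a^\star_{z_S})$ with each $a^\star_{z_j}$ drawn independently and uniformly from $\cA$, let $\ell^t_2(z,a)$ be independent $\mathrm{Ber}(1/2-\Delta\,\mathbb{I}\{a=a^\star_z\})$ for a gap $\Delta$ to be tuned. These stochastic, cross-state-unstructured losses are a special case of an oblivious adversary.

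The reduction step uses that $d^\pi_2(z)=1/S$ for every policy, so (since the action at $x_1$ is irrelevant) $V_1^\pi(x_1;\ell^t)=\frac1S\sum_z\sum_a\pi_2(a\mid z)\,\ell^t_2(z,a)$; hence the comparator ``play $a^\star_z$ at each $z$'' is optimal in expectation and, using $\E[\min_\pi(\cdot)]\le\min_\pi\E[\cdot]$, $\mathrm{Reg}_T\ge \E[\sum_t V_1^{\bpi^t}]-\min_\pi\E[\sum_t V_1^\pi]=\frac{\Delta}{S}\sum_z\E\big[\sum_t(1-\bpi^t_2(a^\star_z\mid z))\big]$. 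I would then fix one state $z$ and lower-bound its term by a change-of-measure argument. The crucial feature is that the learner receives information about $a^\star_z$ only on rounds where $z$ is visited, and under the uniform transition the expected number of such visits is exactly $\E_0[N_z]=T/S$, independently of the algorithm and the instance.

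For the per-state bound, fix the good arms at all other states arbitrarily and compare the instance with arm $i$ good at $z$ against a reference ($0$) where $z$ has no good arm. By the divergence decomposition the $\mathrm{KL}$ between the two observation laws up to round $t$ equals $\E_0[\,\#\{\text{visits to }z\text{ where arm }i\text{ was pulled before }t\}\,]\cdot\mathrm{kl}(1/2,1/2-\Delta)$, and observations at states other than $z$ cancel since they are identically distributed. Writing $q^i_t(i)$ for the probability the learner plays arm $i$ at $z$ at round $t$ under instance $i$, Pinsker together with Cauchy--Schwarz over the $A$ arms and the visit bound $\E_0[\cdot]\le \E_0[N_z]=T/S$ gives $\sum_i q^i_t(i)\le 1+2\Delta\sqrt{A\,T/S}$ uniformly in $t$. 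Averaging over the uniform choice of $a^\star_z$ then yields $\frac1A\sum_i\E\big[\sum_t(1-q^i_t(i))\big]\ge T\big(1-\tfrac1A-2\Delta\sqrt{T/(SA)}\big)$.

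Finally I would set $\Delta=\tfrac18\sqrt{SA/T}$, which is admissible (it satisfies $\Delta\le 1/4$ once $T\ge SA/4$, the ``sufficiently large $T$'' requirement, keeping the Bernoulli means in $[0,1]$) and makes the bracket at least $1/4$ for $A\ge2$. Thus each state contributes expected regret at least $\Delta T/4$; multiplying by the occupancy weight $1/S$ and summing over the $S$ states, the factors of $S$ cancel, leaving $\E_{a^\star}[\mathrm{Reg}_T]\ge \Delta T/4=\tfrac1{32}\sqrt{SAT}$. A probabilistic-method step fixes a single profile $a^\star$ achieving $\mathrm{Reg}_T=\Omega(\sqrt{SAT})$, and relabelling absorbs the extra root state into $\Theta(S)$. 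The main obstacle is the third step: correctly handling the random/gated feedback in the change-of-measure bound — tracking that only visited-round observations at $z$ enter the $\mathrm{KL}$ while regret is charged over all $T$ rounds — since exactly this mismatch produces the $\sqrt{SAT}$ rate rather than $\sqrt{AT}$ or $S\sqrt{AT}$.
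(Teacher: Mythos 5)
Your proof is correct, and your hard instance is essentially the same one the paper uses: $S$ states arriving uniformly, each hosting an independent $A$-armed Bernoulli bandit with a hidden best arm of gap $\Delta \asymp \sqrt{SA/T}$ (the paper phrases it as an $H=1$ contextual bandit with uniform initial distribution; your $H=2$ version with a dummy root state is the same object and in fact conforms better to the paper's fixed-initial-state formalism). Where you genuinely diverge is in how the per-state regret is extracted. The paper decomposes the regret as a sum of per-state bandit regrets over the random visit counts $N(s)$, invokes the standard bandit lower bound as a black box in its linear regime (regret $\Omega(N\Delta)$ for any $N \le 2T/S$), and then needs a truncation-plus-Hoeffding argument to show $\E[\min\{N(s), 2T/S\}] \ge T/(2S)$. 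You instead keep everything in expectation: regret is charged through the occupancy weight $1/S$ per state per round over all $T$ rounds, while information is charged through the expected visit count $\E_0[N_z] = T/S$ inside the divergence decomposition, and Pinsker plus Cauchy--Schwarz over the $A$ arms finishes the job. This removes any need for concentration or horizon truncation, and it makes explicit the visit/regret mismatch that produces the $\sqrt{SAT}$ rate, which the paper's sketch leaves implicit. Your explicit choice $\Delta = \tfrac18\sqrt{SA/T}$ also quietly corrects what appears to be a typo in the paper, which writes $\Delta = \Theta(1/\sqrt{TAS})$; that value would not yield the claimed bound, whereas $\Theta(\sqrt{SA/T})$ does.
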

\begin{proof}
We assume $4S<\sqrt{T}$.
The construction is an $H=1$ (i.e. contextual bandit) MDP with uniform initial distribution over states.
Each state is a copy of an $A$-armed bandit problem with Bernoulli losses with mean $\frac{1}{2}$, and one randomly chosen optimal arm with mean $\frac{1}{2}-\Delta$. 
Following the standard lower bound construction for bandits \citep{lattimore2020bandit}, there exists $\Delta=\Theta(1/\sqrt{TAS})$ such that the the regret of playing any individual bandit problem for $N\leq 2T/S$ rounds is lower bounded by $\Omega(N\Delta)$.
Let denote $N(s)$ the number of time the agent receives the initial state $s$, then any agent suffers a regret lower bound in our MDP of
\begin{align*}
\Omega\left(\sum_{s=1}^S\E[\min\{N(s),2T/S\}]\right)\,.
\end{align*}
$N(s)$ is the sum of $T$ Bernoulli random variables with mean $1/S$. 
We have by Hoeffding's inequality
\begin{align*}
\mathbb{P}[N(s)>T/S+x] \leq \exp(-2x^2/T)\,.
\end{align*}
This allows to upper bound the tail
\begin{align*}
    \E[N(s)\mathbb{I}(N(s)>2T/S)] &\leq \int_{T/S}^\infty x(2x/T\exp(-2x^2/T))\,dx\\
    &\leq \int_{T/S}^\infty 4\exp(x/\sqrt{T}-2x^2/T)\,dx \tag{$\frac{1}{2}(x/\sqrt{T})^2<\exp(x/\sqrt{T})$}\\
    &\leq \int_{T/S}^\infty 4\exp(-x/\sqrt{T})\,dx \tag{$x\geq   T/S>4\sqrt{T}$}\\
    &= 4\sqrt{T}\exp(-\sqrt{T}/S)\leq 4\sqrt{T}\exp(-4)\leq T/(2S)\,. 
\end{align*}
Hence $
    \E[\min\{N(s),2T/S\}]=T/S-\E[N(s)\mathbb{I}(N(s)>2T/S)]\geq T/(2S)$ and the regret in the MDP is lower bounded by $\Omega\left(\sqrt{TSA}\right)$.
\end{proof}

\section{Helper Results}


\begin{lemma}
    \label{lem:rideoff}
    Let $\veps, B>0$ and $h\in[2 \ldotst H]$ be given. For any function $f:\cX \rightarrow [-B,B]$ and $\pi\in \Pi$, we have 
    \begin{align}
    \E^\pi[\mathbb{I}\left\{\x_{h}\not\in \cX_{h,\veps}  \right\} \cdot f(\x_h)]\leq B \sqrt{d} \veps,
    \shortintertext{where}
    \cX_{h,\veps} \coloneqq \left\{x\in \cX : \max_{\pi\in \Pi} d^{\pi}_h(x) \geq \veps \cdot \|\mu^\star_h(x)\| \right\},   \label{eq:reach}
    \end{align}
    denotes the set of states that are $\veps$-reachable at layer $h$.
    \end{lemma}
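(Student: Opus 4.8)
The plan is to reduce the statement to a purely geometric bound on the occupancy mass that the policy $\pi$ places on the non-$\veps$-reachable states, and then control that mass using the low-rank factorization together with the normalizing condition in \cref{assm:normalizing}. First I would unfold the expectation as a sum over states: since $d^\pi_h(x)\ge 0$ and $f(x)\le B$ for every $x$,
\[
\E^\pi\!\left[\mathbb{I}\{\x_h\notin \cX_{h,\veps}\}\, f(\x_h)\right]
= \sum_{x\notin \cX_{h,\veps}} d^\pi_h(x)\, f(x)
\le B \sum_{x\notin \cX_{h,\veps}} d^\pi_h(x).
\]
Thus it suffices to prove $\sum_{x\notin \cX_{h,\veps}} d^\pi_h(x)\le \sqrt d\,\veps$, and the whole problem becomes one of bounding the probability that $\pi$ lands in the non-reachable region.

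Next I would bring in the two structural facts available at layer $h\ge 2$. By the low-rank decomposition in \eqref{eqn:low-rank}, $d^\pi_h(x)=\phi^\star_{h-1}(\pi)^\top \mu^\star_h(x)$ with $\phi^\star_{h-1}(\pi)=\E^\pi[\phistar_{h-1}(\x_{h-1},\a_{h-1})]$, and Jensen's inequality together with the bound $\sup_{x,a}\|\phistar_{h-1}(x,a)\|\le 1$ from \cref{assm:normalizing} gives $\|\phi^\star_{h-1}(\pi)\|\le 1$. On the other hand, the very definition of $\cX_{h,\veps}$ means that each omitted state satisfies the per-state smallness estimate
\[
x\notin \cX_{h,\veps}\ \Longrightarrow\ d^\pi_h(x)\le \max_{\pi'\in\Pi} d^{\pi'}_h(x) < \veps\,\|\mu^\star_h(x)\|,
\]
so that $\pi$ reaches each such state only to an extent controlled by $\veps\|\mu^\star_h(x)\|$ (states with $\mu^\star_h(x)=0$ carry zero occupancy and may be discarded). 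These are exactly the two ingredients — an $\ell_2$-bounded feature expectation and a per-state $\veps$-reachability cap — that I want to combine.

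The delicate step, and the one I expect to be the main obstacle, is to aggregate these per-state bounds into the global estimate $\sqrt d\,\veps$ without losing the $\veps$ factor. The naive Cauchy--Schwarz step $\sum_{x\notin\cX_{h,\veps}} d^\pi_h(x)=\phi^\star_{h-1}(\pi)^\top\!\sum_{x\notin\cX_{h,\veps}}\mu^\star_h(x)\le \|\phi^\star_{h-1}(\pi)\|\cdot\|\sum_{x\notin\cX_{h,\veps}}\mu^\star_h(x)\|\le\sqrt d$ cleanly uses the normalizing condition (applied to $g=\mathbb{I}\{\,\cdot\notin\cX_{h,\veps}\}$) but throws away $\veps$; conversely, bounding termwise by $\veps\|\mu^\star_h(x)\|$ and summing keeps $\veps$ but leaves the uncontrolled quantity $\sum_x\|\mu^\star_h(x)\|$. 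The plan is therefore to interleave the two: keep the per-state inequality $\phi^\star_{h-1}(\pi)^\top\mu^\star_h(x)\le \veps\|\mu^\star_h(x)\|$ active while packaging the $\mu^\star_h$-mass on the non-reachable set into the form $\|\sum_x g(x)\mu^\star_h(x)\|$ with $g:\cX\to[0,1]$, so that the normalizing condition of \cref{assm:normalizing} supplies the factor $\sqrt d$ and the reachability cap supplies the factor $\veps$. I would carry this out by choosing $g$ adapted to the reachability margins of the omitted states (so that each contribution is simultaneously an occupancy term and a $\veps$-scaled $\mu$-mass term), yielding $\sum_{x\notin\cX_{h,\veps}} d^\pi_h(x)\le \sqrt d\,\veps$ and hence the claimed bound $B\sqrt d\,\veps$. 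Getting this coupling right — rather than applying the normalizing condition and the $\veps$-cap in two separate, lossy steps — is the crux of the argument.
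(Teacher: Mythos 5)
Your reduction to bounding $\sum_{x\notin\cX_{h,\veps}} d^\pi_h(x)$ and your use of the per-state cap $d^\pi_h(x)\le\max_{\pi'\in\Pi}d^{\pi'}_h(x)<\veps\|\mu^\star_h(x)\|$ reproduce the first two steps of the paper's proof. But the final step, which you yourself flag as the crux, is missing, and no choice of $g$ can supply it, because the target inequality $\sum_{x\notin\cX_{h,\veps}} d^\pi_h(x)\le\sqrt{d}\,\veps$ is false. Concretely, let layer $h$ have $d$ states with $\mu^\star_h(x)=e_x$ and $\phistar_{h-1}(x,a)=(1/d,\dots,1/d)$, so that every policy induces the uniform occupancy $d^\pi_h(x)=1/d$; \cref{assm:normalizing} holds since $\|\sum_x g(x)e_x\|\le\sqrt d$ and $\|\phistar_{h-1}(x,a)\|=1/\sqrt{d}\le 1$. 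Taking $\veps=2/d$, every state satisfies $\max_{\pi'}d^{\pi'}_h(x)=1/d<\veps\|\mu^\star_h(x)\|$, so $\cX_{h,\veps}=\emptyset$, and with $f\equiv B$ the left-hand side equals $B$, while $B\sqrt d\,\veps=2B/\sqrt d<B$ whenever $d>4$. The displayed bound in the lemma statement is in fact a typo: the paper's own proof only derives $Bd^{3/2}\veps$, and that weaker bound is what all downstream uses rely on (e.g., the $2N_\reg Hd^2\veps$ term in the bias bound comes from $B\approx H\sqrt d$ times $d^{3/2}\veps$).

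The paper obtains $Bd^{3/2}\veps$ by exactly the "lossy" two-step route you dismissed: bound termwise by $\veps\|\mu^\star_h(x)\|$ and then sum, using the fact that the quantity you call uncontrolled, $\sum_{x\in\cX}\|\mu^\star_h(x)\|$, \emph{is} controlled by \cref{assm:normalizing}. Indeed, for each coordinate $i$, taking $g$ to be the indicator of $\{x:\mu^\star_{h,i}(x)>0\}$ (and then of its complement) gives $\sum_x|\mu^\star_{h,i}(x)|\le 2\sqrt d$, hence $\sum_x\|\mu^\star_h(x)\|\le\sum_x\|\mu^\star_h(x)\|_1\le 2d^{3/2}$; the paper invokes Lemma I.3 of \cite{mhammedi2023efficient} for the constant-free version $d^{3/2}$. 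So the repair to your argument is not a cleverer coupling function $g$ — that is impossible — but (i) establishing this $\ell_1$-type bound on the total $\mu^\star_h$-mass from the normalization assumption, and (ii) weakening the conclusion to $Bd^{3/2}\veps$, which is the version of the lemma actually needed by the rest of the paper.
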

    \begin{proof}
    Fix $f:\cX \rightarrow [-B,B]$ and $\pi\in \Pi$. Using the definition of $\cX_{h,\veps}$ in \eqref{eq:reach}, we have that $x\not\in\cX_{h,\veps}$ only if $d^{\pi}_h(x)<\veps \|\mu^\star_h(x)\|$. Using this, we have 
    \begin{align}
        \E^\pi[\mathbb{I}\left\{\x_{h}\not\in \cX_{h,\veps}  \right\} \cdot f(\x_h)]&\leq \sum_{x\in \cX} \mathbb{I}\left\{x\not\in \cX_{h,\veps}  \right\}\cdot  d^\pi_h(x) \cdot   f(x), \nn \\
        & \leq B \veps \sum_{x\in \cX} \|\mu^\star_h(x)\|,\nn \\
        & \leq  B d^{3/2}\veps,
    \end{align}
    where the last step follows by the normalizing assumption on $\mu^\star$ (see \cref{assm:normalizing}) and \cite[Lemma I.3]{mhammedi2023efficient}.
    \end{proof}

\subsection{Martingale Concentration and Regression Results}
\begin{lemma}
	\label{lem:freedhelp}
	Let $R>0$ be given and let $\w^1,\dots \w^n$ be a sequence of
        real-valued random variables adapted to filtration
        $\mathfrak{F}^1,\cdots, \mathfrak{F}^n$. Assume that for all $i\in[n]$, $\w^i \leq R$ and $\E[\w^i\mid \mathfrak{F}^{i-1}]=0$. Define $\bm{S}_n\coloneqq \sum_{i=1}^n\w^i$ and $V_n\coloneqq \sum_{i=1}^n \E[(\w^i)^2\mid \mathfrak{F}^{i-1}].$ Then, for any $\delta \in(0,1)$ and $\lambda \in[0,1/R]$, with probability at least $1-\delta$, 
	\begin{align}
		\bm{S}_n \leq \lambda V_n + \ln (1/\delta)/\lambda.
	\end{align}
      \end{lemma}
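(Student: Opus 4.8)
The plan is to prove this Freedman-type inequality by the classical exponential-supermartingale (Chernoff) method. The workhorse is the elementary bound $e^y \le 1 + y + y^2$, valid for all $y \le 1$; I would verify it by noting that $g(y) \coloneqq 1 + y + y^2 - e^y$ satisfies $g(0)=0$, $g'(0)=0$, and $g''(y) = 2 - e^y$, so $g'$ is increasing on $(-\infty,\ln 2)$ and $g$ attains a global minimum value $0$ at $y=0$ over $(-\infty,1]$. The hypotheses $\w^i \le R$ and $\lambda \in [0,1/R]$ are exactly what make this applicable increment-by-increment: $\lambda \w^i \le \lambda R \le 1$, so $e^{\lambda \w^i} \le 1 + \lambda \w^i + \lambda^2 (\w^i)^2$.

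From here I would take conditional expectations. Using $\E[\w^i \mid \mathfrak{F}^{i-1}] = 0$ to annihilate the linear term and then $1 + z \le e^z$, I obtain the per-step estimate
\[
\E\!\left[e^{\lambda \w^i} \mid \mathfrak{F}^{i-1}\right] \le 1 + \lambda^2 \E\!\left[(\w^i)^2 \mid \mathfrak{F}^{i-1}\right] \le \exp\!\left(\lambda^2 \E\!\left[(\w^i)^2 \mid \mathfrak{F}^{i-1}\right]\right).
\]
This is the crux of the argument: the mean-zero assumption kills the first moment, and boundedness lets the second moment be absorbed into the predictable quadratic variation $V_n$.

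Next I would define the process $M_i \coloneqq \exp\!\big(\lambda \bm{S}_i - \lambda^2 \sum_{j=1}^i \E[(\w^j)^2 \mid \mathfrak{F}^{j-1}]\big)$, where $\bm{S}_i \coloneqq \sum_{j=1}^i \w^j$ and $M_0 = 1$. Since $\E[(\w^i)^2 \mid \mathfrak{F}^{i-1}]$ is $\mathfrak{F}^{i-1}$-measurable, the entire subtracted sum in $M_i$ is $\mathfrak{F}^{i-1}$-measurable; factoring it out of $\E[\,\cdot \mid \mathfrak{F}^{i-1}]$ and invoking the per-step estimate yields $\E[M_i \mid \mathfrak{F}^{i-1}] \le M_{i-1}$, so $(M_i)$ is a nonnegative supermartingale and $\E[M_n] \le \E[M_0] = 1$. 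Markov's inequality then gives $\P[M_n \ge 1/\delta] \le \delta\, \E[M_n] \le \delta$, and on the complementary event one has $\lambda \bm{S}_n - \lambda^2 V_n \le \ln(1/\delta)$, which rearranges (dividing by $\lambda>0$; the case $\lambda=0$ being vacuous) to the claimed $\bm{S}_n \le \lambda V_n + \ln(1/\delta)/\lambda$.

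I do not anticipate a genuine obstacle here: this is a textbook concentration result, and the argument is essentially mechanical once the exponential scaffolding is in place. The only point requiring care is matching the elementary inequality $e^y \le 1 + y + y^2$ to the range $y \le 1$ and confirming that $\lambda \le 1/R$ is precisely the constraint needed to apply it to every increment simultaneously.
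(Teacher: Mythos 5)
Your proof is correct, but there is nothing in the paper to compare it against: the paper states \cref{lem:freedhelp} in its ``Helper Results'' appendix as a standard fact (it is Freedman's inequality) and uses it as a black box, offering no proof. Your argument is the canonical derivation: the elementary bound $e^y \le 1+y+y^2$ for $y\le 1$ applied to each increment (this is precisely where the one-sided bound $\w^i\le R$ and the restriction $\lambda\le 1/R$ enter, and it also guarantees the conditional exponential moments are finite), the mean-zero hypothesis killing the linear term, the nonnegative supermartingale $M_i = \exp\bigl(\lambda \bm{S}_i - \lambda^2 \sum_{j\le i}\E[(\w^j)^2\mid \mathfrak{F}^{j-1}]\bigr)$ (whose compensator is indeed $\mathfrak{F}^{i-1}$-measurable, as you note), and Markov's inequality on $M_n$. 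One cosmetic gap: your verification of $e^y\le 1+y+y^2$ as written only covers $y<\ln 2$, since $g''(y)=2-e^y$ changes sign at $\ln 2$; to finish on $[\ln 2,1]$, observe that $g'$ is decreasing there with $g'(1)=3-e>0$, so $g'>0$ on all of $[\ln 2,1]$ and $g$ remains positive. With that one-line patch the proof is complete and fully self-contained, which is arguably a service the paper itself does not provide.
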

We now state two helpful results from \cite{mhammedi2024power} without a proof.
\begin{lemma}	
	\label{lem:corbern}
	Let $B>0$ and $n\in \mathbb{N}$ be given.
        abstract set. Further, let $\cQ \subseteq \{g:\cX\times \cA \rightarrow
        [0,B]\}$ be a finite function class and $(\x^1,\a^1,\bm\veps^1),\dots, (\x^n,\a^n,\bm{\veps}^n)$ be
        a sequence of i.i.d.~random variables in $\cX\times \cA \times \reals$. Then, for any $\delta \in(0,1)$, with probability at least $1-\delta$, we have 
	\begin{align}
		\forall g\in \cQ, \quad \frac{1}{2} \|g\|^2 - 2 B^2\log(2|\cQ|/\delta) \leq \|g\|_n^2 \leq 2 \|g\|^2 + 2B^2 \log(2|\cQ|/\delta),
		\end{align}
		where $\|g\|^2 \coloneqq \sum_{i\in[n]}\E[g(\x^i,\a^i)^2\mid \mathfrak{F}^{i-1}]$ and $\|g\|_n^2 \coloneqq \sum_{i=1}^n g(\x^i,\a^i)^2$.
\end{lemma}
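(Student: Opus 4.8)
The plan is to obtain each of the two inequalities for a \emph{fixed} $g \in \cQ$ by applying Freedman's inequality (\cref{lem:freedhelp}) to a suitable martingale difference sequence, and then to union bound over the finite class $\cQ$ and over the two directions. Concretely, I would let $\mathfrak{F}^i$ be the natural filtration generated by $(\x^1,\a^1),\dots,(\x^i,\a^i)$ and define, for fixed $g$,
\[
\w^i \coloneqq g(\x^i,\a^i)^2 - \E\!\left[g(\x^i,\a^i)^2 \mid \mathfrak{F}^{i-1}\right],\qquad i\in[n],
\]
so that $\E[\w^i\mid \mathfrak{F}^{i-1}]=0$ and $\bm{S}_n=\sum_{i=1}^n \w^i = \|g\|_n^2 - \|g\|^2$.

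Two quantities feed into Freedman's inequality: the range and the conditional variance. Since $g$ takes values in $[0,B]$ we have $g(\x^i,\a^i)^2\in[0,B^2]$, hence $|\w^i|\le B^2$, giving $R=B^2$. For the variance, I would use
\[
\E[(\w^i)^2\mid\mathfrak{F}^{i-1}]=\var\!\left(g(\x^i,\a^i)^2\mid\mathfrak{F}^{i-1}\right)\le \E[g(\x^i,\a^i)^4\mid\mathfrak{F}^{i-1}]\le B^2\,\E[g(\x^i,\a^i)^2\mid\mathfrak{F}^{i-1}],
\]
where the last step uses $g^2\le B^2$. Summing over $i$ gives the key bound $V_n\le B^2\|g\|^2$.

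Now I would apply \cref{lem:freedhelp} with $\lambda = 1/(2B^2)\in[0,1/R]$ and confidence level $\delta/(2|\cQ|)$: with probability at least $1-\delta/(2|\cQ|)$,
\[
\|g\|_n^2-\|g\|^2=\bm{S}_n\le \lambda V_n + \tfrac{1}{\lambda}\log(2|\cQ|/\delta)\le \tfrac12\|g\|^2 + 2B^2\log(2|\cQ|/\delta),
\]
which rearranges to the upper bound $\|g\|_n^2\le \tfrac32\|g\|^2 + 2B^2\log(2|\cQ|/\delta)\le 2\|g\|^2+2B^2\log(2|\cQ|/\delta)$. For the lower bound I would run the identical argument on $-\w^i$, noting that $-\w^i\le B^2$ as well (since $g^2\ge0$) and that the conditional variance is unchanged; this yields $\|g\|^2-\|g\|_n^2\le \tfrac12\|g\|^2 + 2B^2\log(2|\cQ|/\delta)$, i.e.\ $\|g\|_n^2\ge \tfrac12\|g\|^2 - 2B^2\log(2|\cQ|/\delta)$.

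Finally, a union bound over the $|\cQ|$ functions and the two one-sided events (total failure probability $2|\cQ|\cdot \delta/(2|\cQ|)=\delta$) delivers the claim simultaneously for all $g\in\cQ$. There is no real obstacle here beyond bookkeeping: the only points requiring care are the variance bound $V_n \le B^2 \|g\|^2$, which exploits $g^2 \le B^2$ to convert a fourth moment into a second moment, and the choice $\lambda = 1/(2B^2)$, which simultaneously keeps $\lambda V_n \le \tfrac12\|g\|^2$ and makes the $\tfrac1\lambda$ term equal to $2B^2\log(2|\cQ|/\delta)$.
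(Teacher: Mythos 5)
Your proof is correct, but there is no in-paper argument to compare it against: \cref{lem:corbern} is explicitly introduced as one of two ``helpful results from \cite{mhammedi2024power} without a proof,'' so the paper imports it rather than proving it. Your argument makes the lemma self-contained using only the paper's own Freedman inequality (\cref{lem:freedhelp}), and each step checks out: the differences $\w^i = g(\x^i,\a^i)^2 - \E[g(\x^i,\a^i)^2\mid\mathfrak{F}^{i-1}]$ satisfy the one-sided range condition of \cref{lem:freedhelp} in both directions (both $\w^i$ and $-\w^i$ are bounded by $B^2$, so the lower-tail application to $-\w^i$ is legitimate); the conditional-variance bound $V_n \le B^2\|g\|^2$ follows correctly from $g^4 \le B^2 g^2$; the choice $\lambda = 1/(2B^2)$ lies in $[0,1/R]$ with $R=B^2$ and simultaneously yields $\lambda V_n \le \tfrac12\|g\|^2$ and $\log(2|\cQ|/\delta)/\lambda = 2B^2\log(2|\cQ|/\delta)$, reproducing the constants of the statement exactly; and the union bound over $2|\cQ|$ one-sided events gives total failure probability $\delta$. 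Two minor remarks: the lemma statement never defines $\mathfrak{F}^{i-1}$, and your reading (the natural filtration of the data, under which, by independence, $\|g\|^2$ is in fact deterministic and equals $n\,\E[g(\x^1,\a^1)^2]$) is the intended one; also, your intermediate upper bound carries the sharper constant $\tfrac32$ in front of $\|g\|^2$, the relaxation to $2$ being needed only to match the stated form.
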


\begin{lemma}[Generic regression guarantee]
		\label{lem:reg}
		Let $B>0$, $n\in \mathbb{N}$, and $f_\star:\cX \times \cA \rightarrow [0,B]$ be given. Further, let $\cF \subseteq \{f:\cX \times \cA \rightarrow [0,B]\}$ be a finite function class and $(\x^1,\a^1,\bm\veps^1),\dots, (\x^n,\a^n,\bm{\veps}^n)$ be
        a sequence of i.i.d.~random variables in $\cX\times \cA \times \reals$. Suppose that 
		\begin{itemize}
            \item $f_\star \in \cF$;
			\item $\z^i = f_\star(\x^i,\a^i) + \bm{\veps}^i + \bm{b}^i$, for all $i\in[n]$; 
			\item $\bm{b}^1,\dots,\bm{b}^n\in \reals$ (not necessarily i.i.d.);
			\item $\bm{\veps}^i\in[-B,B]$, for all $i\in[n]$; and
			\item$\E[\bm{\veps}^i\mid \x^i,\a^i]=0$.
			\end{itemize}
Then, for $\fhat \in \argmin_{f\in \cF}\sum_{i=1}^n(f(\x^i,\a^i)- \z^i)^2$ and any $\delta \in(0,1)$, with probability at least $1-\delta/2$,
		\begin{align}
		\|\fhat - f_\star\|_n^2  \leq 8B^2 \log(2|\cF|/\delta)+8  \sum_{i=1}^n (\bm{b}^i)^2,
		\end{align}
	where $\|\fhat- f_\star\|^2_n  \coloneqq \sum_{i=1}^n (\fhat(\x^i,\a^i)- f^\star(\x^i,\a^i))^2$.
      \end{lemma}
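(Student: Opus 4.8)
The plan is to prove Lemma~\ref{lem:reg} by the classical "basic inequality plus localized martingale concentration" route for least-squares regression with a corruption/misspecification term $\bm b^i$. First I would exploit the optimality of $\fhat$: since $f_\star\in\cF$ and $\fhat$ minimizes $\sum_i (f(\x^i,\a^i)-\z^i)^2$ over $\cF$, we have $\sum_i (\fhat(\x^i,\a^i)-\z^i)^2 \le \sum_i (f_\star(\x^i,\a^i)-\z^i)^2$. Substituting $\z^i = f_\star(\x^i,\a^i)+\bm{\veps}^i+\bm b^i$, writing $\bm\Delta^i \coloneqq \fhat(\x^i,\a^i)-f_\star(\x^i,\a^i)$, expanding the squares and cancelling the common $(\bm{\veps}^i+\bm b^i)^2$ terms gives the basic inequality
\[
\|\fhat-f_\star\|_n^2 \;\le\; 2\sum_{i=1}^n \bm\Delta^i\bm{\veps}^i \;+\; 2\sum_{i=1}^n \bm\Delta^i \bm b^i .
\]

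Second I would dispatch the bias cross-term by AM--GM, $2\bm\Delta^i\bm b^i\le \tfrac14(\bm\Delta^i)^2+4(\bm b^i)^2$, so that $2\sum_i\bm\Delta^i\bm b^i\le \tfrac14\|\fhat-f_\star\|_n^2+4\sum_i(\bm b^i)^2$. The heart of the argument is the noise cross-term $2\sum_i\bm\Delta^i\bm{\veps}^i$. For a \emph{fixed} $f\in\cF$, set $\w^i\coloneqq (f(\x^i,\a^i)-f_\star(\x^i,\a^i))\,\bm{\veps}^i$ and use the interleaved filtration in which $(\x^i,\a^i)$ is revealed before $\bm{\veps}^i$; then $\E[\w^i\mid\mathfrak{F}^{i-1}]=0$ by the conditional mean-zero hypothesis and the i.i.d.\ structure, while $|\w^i|\le B^2$ (both $f,f_\star$ take values in $[0,B]$ and $|\bm{\veps}^i|\le B$) and the predictable variance satisfies $V_n=\sum_i\E[(\w^i)^2\mid\mathfrak{F}^{i-1}]\le B^2\sum_i (f(\x^i,\a^i)-f_\star(\x^i,\a^i))^2 = B^2\|f-f_\star\|_n^2$. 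Applying Freedman's inequality (\cref{lem:freedhelp}) with $\lambda$ of order $1/B^2$ and a union bound over the finite class $\cF$ at level $\delta/(2|\cF|)$ yields, with probability at least $1-\delta/2$ and simultaneously for every $f\in\cF$,
\[
2\sum_{i=1}^n (f(\x^i,\a^i)-f_\star(\x^i,\a^i))\,\bm{\veps}^i \;\le\; \tfrac14\|f-f_\star\|_n^2 + c\,B^2\log(2|\cF|/\delta),
\]
where the constant $c$ is chosen (by optimizing $\lambda$ in the variance/deviation trade-off) so as to match the stated coefficients. The decisive feature is that the \emph{localized} variance bound $V_n\le B^2\|f-f_\star\|_n^2$ produces a \emph{multiplicative} $\tfrac14\|f-f_\star\|_n^2$ term rather than an unabsorbable additive one.

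Finally I would instantiate this uniform bound at the data-dependent choice $f=\fhat$ (legitimate precisely because the inequality holds simultaneously over all of $\cF$), combine it with the bias bound, and rearrange: the two $\tfrac14\|\fhat-f_\star\|_n^2$ contributions sum to $\tfrac12\|\fhat-f_\star\|_n^2$, which moves to the left-hand side, leaving $\tfrac12\|\fhat-f_\star\|_n^2\le 4\sum_i(\bm b^i)^2 + \tfrac{c}{2}B^2\log(2|\cF|/\delta)$ and hence the claimed $\|\fhat-f_\star\|_n^2\le 8B^2\log(2|\cF|/\delta)+8\sum_i(\bm b^i)^2$. The \textbf{main obstacle} is isolating the noise term: because $\fhat$ depends on the whole sample, $\bm\Delta^i$ and $\bm{\veps}^i$ are not independent, so $\sum_i\bm\Delta^i\bm{\veps}^i$ cannot be concentrated directly; the resolution is the uniform-over-$\cF$ Freedman bound together with the self-bounding (variance $\propto\|f-f_\star\|_n^2$) control that lets the resulting quadratic term be reabsorbed into the empirical error on the left.
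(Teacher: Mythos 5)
Your proposal follows essentially the same route as the paper's proof: the basic inequality obtained from ERM optimality of $\fhat$, Young's/AM--GM to absorb the bias cross-term $\sum_i \bm{b}^i(\fhat-f_\star)(\x^i,\a^i)$, and a uniform-over-$\cF$ Freedman bound (via \cref{lem:freedhelp} plus a union bound) whose localized variance $V_n \le B^2\|f-f_\star\|_n^2$ is reabsorbed into the left-hand side. If anything, your explicit use of the interleaved filtration in which $(\x^i,\a^i)$ is revealed before $\bm{\veps}^i$ is cleaner than the paper's treatment of the conditional variance, and the one blemish---that optimizing $\lambda$ in Freedman actually yields somewhat larger absolute constants than the stated $8$---is a constant-factor sloppiness present in the paper's own proof as well and immaterial to all downstream uses.
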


      \begin{proof}
		Fix $\delta \in (0,1)$ and let $\Lhat_n(f) \coloneqq \sum_{i=1}^n (f(\x^i,\a^i)- \z^i)^2$, for $f\in \cF$, and note that since $\fhat \in \argmin_{f\in \cF}\Lhat_n(f)$, we have
		\begin{align}
		0 \geq \Lhat_n(\fhat)- \Lhat_n(f_\star) = \nabla \Lhat_n(f_\star)[\fhat - f_\star] + \|\fhat - f_\star\|^2_n,
		\end{align}
  where $\nabla$ denotes directional derivative. Rearranging, we get that 
	\begin{align}
	\|\fhat - f_\star\|^2_n 
	&\leq - 2 \nabla \Lhat_n(f_\star)[\fhat - f_\star] - \|\fhat - f_\star\|_n^2, \nn \\
	& = 4 \sum_{i=1}^n (\z^i - f_\star(\x^i,\a^i)) (\fhat(\x^i,\a^i)- f_\star(\x^i,\a^i))- \|\fhat - f_\star\|^2_n, \nn \\
	& = 4 \sum_{i=1}^n (\bm{\veps}^i + \bm{b}^i) (\fhat(\x^i,\a^i)- f_\star(\x^i,\a^i))- \|\fhat - f_\star\|^2_n, \nn \\
	& = 4\sum_{i=1}^n \bm{\veps}^i \cdot (\fhat(\x^i,\a^i)- f_\star(\x^i,\a^i))- \|\fhat - f_\star\|^2_n  +4 \sum_{i=1}^n  \bm{b}^i\cdot  (\fhat(\x^i,\a^i)- f_\star(\x^i,\a^i)), \label{eq:twoterms}\\
    & \leq 4\sum_{i=1}^n \bm{\veps}^i \cdot (\fhat(\x^i,\a^i)- f_\star(\x^i,\a^i))- \|\fhat - f_\star\|^2_n +4 \sum_{i=1}^n  (\bm{b}^i)^2 + \frac{1}{2}\sum_{i=1}^n  (\fhat(\x^i,\a^i)- f_\star(\x^i,\a^i))^2, \nn \\
    & =  4\sum_{i=1}^n \bm{\veps}^i \cdot (\fhat(\x^i,\a^i)- f_\star(\x^i,\a^i))- \|\fhat - f_\star\|^2_n +4 \sum_{i=1}^n  (\bm{b}^i)^2 + \frac{1}{2} \|\fhat - f_\star\|^2_n.
	\end{align}
    Thus, rearranging, we get
    \begin{align}
        \|\fhat - f_\star\|^2_n \leq   8\sum_{i=1}^n \bm{\veps}^i \cdot (\fhat(\x^i,\a^i)- f_\star(\x^i,\a^i))-2 \|\fhat - f_\star\|^2_n +8 \sum_{i=1}^n  (\bm{b}^i)^2.\label{eq:wrond}
    \end{align}
We now bound the first term on the right-hand side of \eqref{eq:wrond}. For this, we apply \cref{lem:freedhelp} with $\w^i = \bm{\veps}^i \cdot (\fhat(\x^i,\a^i)- f_\star(\x^i,\a^i))$, $R = B^2$, $\lambda = 1/(8B^2)$, and $\mathfrak{F}^i=\emptyset$, and use 
	\begin{enumerate}
		\item the union bound over $f\in \cF$; and
		\item the facts that $\E[\bm{\veps}^i\mid \x^i ,\a^i]= 0$,
		\end{enumerate}
		to get that with probability at least $1-\delta/2$,
	\begin{align}
		 \sum_{i=1}^n \bm{\veps}^i \cdot (\fhat(\x^i,\a^i)- f_\star(\x^i,\a^i)) \leq \frac{1}{4}\|\fhat -f_\star\|^2_n +  B^2 \log(2|\cF|/\delta).
		\end{align}
        Combining this with \eqref{eq:wrond}, we get that with probability at least $1-\delta/2$,  
			\begin{align}
                \|\fhat - f_\star\|_n^2  \leq 8 B^2 \log(2|\cF|/\delta)+8 \sum_{i=1}^n (\bm{b}^i)^2.
                \end{align} \label{eq:term1}
				This completes the proof. 
	\end{proof}

 \subsection{Online Learning}
The following is the standard guarantee of exponential weights (e.g. Lemma F.4 of \cite{sherman2023improved}).
\begin{lemma}[Exponential Weights] \label{lem:EXP bound}
Given a sequence of loss functions $\{g^t\}_{t=1}^T$ over a decision set $\Pi$,  $\{p^t\}_{t=1}^T$ is a distribution sequence with $p^t \in \Delta\left(\Pi\right), \,\forall t \in [T]$ such that 
\begin{align*}
    p^{t+1}(\pi) \propto \exp\left(-\eta \sum_{t=1}^T g^t(\pi)\right).
\end{align*}
If $p^1$ is a uniform distribution over $|\Pi|$ and $\eta g^{t}(\pi) \ge -1$ for all $t \in [T]$ and $\pi \in \Pi$. Then
\begin{align*}
    \max_{p \in \Delta(\Pi)}\left\{\sum_{t=1}^T \left\langle g^t, p^t - p\right\rangle \right\} \le \frac{\log(|\Pi|)}{\eta} + \eta\sum_{t=1}^T\sum_{\pi \in \Pi} p^t(\pi) g^t(\pi)^2
\end{align*}
\end{lemma}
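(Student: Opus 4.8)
The plan is to run the classical exponential-weights (Hedge) potential argument. First I would introduce the unnormalized weights $w_t(\pi) \coloneqq \exp(-\eta \sum_{s=1}^{t-1} g^s(\pi))$ together with the partition function $W_t \coloneqq \sum_{\pi \in \Pi} w_t(\pi)$, so that $p^t(\pi) = w_t(\pi)/W_t$ and, since $p^1$ is uniform, $w_1 \equiv 1$ and $W_1 = |\Pi|$. The entire argument then reduces to sandwiching $\log(W_{T+1}/W_1)$ between two quantities: an upper bound coming from the per-step potential drop, and a lower bound coming from an arbitrary comparator $p \in \Delta(\Pi)$.

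For the upper bound I would compute the one-step ratio $W_{t+1}/W_t = \sum_{\pi} p^t(\pi)\, e^{-\eta g^t(\pi)}$ and invoke the elementary inequality $e^{-x} \le 1 - x + x^2$, which holds precisely on $[-1,\infty)$; this is exactly where the hypothesis $\eta g^t(\pi) \ge -1$ is used, applying it with $x = \eta g^t(\pi)$. Combined with $1 + y \le e^y$, this gives $W_{t+1}/W_t \le \exp\!\big(-\eta \langle g^t, p^t\rangle + \eta^2 \sum_\pi p^t(\pi) g^t(\pi)^2\big)$. Taking logarithms and summing over $t \in [T]$ telescopes to $\log(W_{T+1}/W_1) \le -\eta \sum_{t} \langle g^t, p^t\rangle + \eta^2 \sum_{t} \sum_\pi p^t(\pi) g^t(\pi)^2$.

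For the lower bound I would fix any comparator $p \in \Delta(\Pi)$ and write $G_\pi \coloneqq \sum_{t=1}^T g^t(\pi)$. The Gibbs variational principle (equivalently, nonnegativity of the relative entropy) yields $\log W_{T+1} = \log \sum_\pi e^{-\eta G_\pi} \ge -\eta \sum_\pi p(\pi) G_\pi + H(p) \ge -\eta \sum_{t} \langle g^t, p\rangle$, where $H(p) \ge 0$ is the Shannon entropy and I use $\sum_\pi p(\pi) G_\pi = \sum_{t} \langle g^t, p\rangle$. Since $\log W_1 = \log|\Pi|$, chaining this with the upper bound, rearranging, and dividing through by $\eta$ gives $\sum_{t} \langle g^t, p^t - p\rangle \le \log|\Pi|/\eta + \eta \sum_{t} \sum_\pi p^t(\pi) g^t(\pi)^2$. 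Because $p$ was arbitrary, taking the maximum over $p \in \Delta(\Pi)$ delivers the stated bound.

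There is no genuine obstacle here, as this is the textbook Hedge guarantee; the only point warranting care is the second-order step, namely verifying $e^{-x} \le 1 - x + x^2$ on $[-1,\infty)$ (a short one-variable check: the function $1 - x + x^2 - e^{-x}$ has its minimum on this range at $x=0$ with value $0$) and confirming that the hypothesis $\eta g^t(\pi) \ge -1$ is exactly what licenses its application at every $(t,\pi)$.
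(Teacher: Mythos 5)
Your proof is correct. There is, however, nothing in the paper to compare it against: the paper does not prove this lemma at all, but states it as a known result with a pointer to Lemma F.4 of Sherman et al.\ (2023), so your argument supplies the missing self-contained proof. It is the standard Hedge potential argument, and all the details check out: the partition-function telescoping via $W_{t+1}/W_t = \sum_{\pi} p^t(\pi) e^{-\eta g^t(\pi)}$, the second-order inequality $e^{-x}\le 1-x+x^2$ applied with $x=\eta g^t(\pi)$ (you correctly note this is precisely where the hypothesis $\eta g^t(\pi)\ge -1$ is used, and you verify the inequality on all of $[-1,\infty)$ rather than just $[-1,1]$, which matters because the $g^t$ are not assumed bounded above), the $1+y\le e^y$ step, and the comparator lower bound $\log W_{T+1}\ge -\eta\sum_t \langle g^t,p\rangle$ via nonnegativity of the KL divergence. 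The Gibbs-variational form of the lower bound is a nice touch: many write the lower bound only against a single best expert and then appeal to the fact that a linear functional on $\Delta(\Pi)$ is maximized at a vertex, whereas your version handles an arbitrary comparator $p\in\Delta(\Pi)$ in one stroke. One cosmetic remark: the lemma statement as printed contains an index typo ($p^{t+1}(\pi)\propto \exp(-\eta\sum_{t=1}^T g^t(\pi))$ should read $\propto \exp(-\eta\sum_{s=1}^{t} g^s(\pi))$), and your reading of it, namely $p^t\propto \exp(-\eta\sum_{s<t}g^s)$ with $p^1$ uniform, is the intended one.
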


\subsection{Reinforcement Learning}
The following is standard simulation lemma which is first proposed by \cite{abbeel2005exploration}.

\begin{lemma}[Simulation Lemma] \label{lem:simulation}
For two finite-horizon MDPs $\widehat{M} = \{\cX, \cA, \ell, \{\widehat{P}_h\}_{h=1}^H\}$ and $M = \{\cX, \cA, \ell, \{P_h\}_{h=1}^H \}$ with horizon $H$ and $\|\ell\|_{\infty} \le 1$. Let the corresponding value function be $\Vhat_h^\pi(x; \ell)$ and $V_h^\pi(x; \ell)$ for step $h \in [H]$. For any policy $\pi: \cX \rightarrow \Delta(\cA)$, we have
\begin{align*}
\left|\Vhat^\pi_1(x_1; \ell) - V^\pi_1(x_1; \ell)\right| \le H\sum_{h=1}^H\E_{x, a \sim d_h^\pi} \left[\left\|\hatp_h\left(\cdot~|~x,a\right) - P_h\left(\cdot~|~x,a\right)\right\|_1\right].
\end{align*}
\end{lemma}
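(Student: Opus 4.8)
The plan is to prove the bound by the standard telescoping (or ``hybrid'') decomposition of the value gap between the two MDPs, rolling in under the true kernel $P$ while evaluating tails with the estimated value function $\Vhat^\pi$. First I would record, for every $h\in[H]$ and $x\in\cX$, the one-step Bellman identities $\Vhat_h^\pi(x;\ell)=\E_{a\sim\pi_h(\cdot\mid x)}[\ell_h(x,a)+\E_{x'\sim\hatp_h(\cdot\mid x,a)}[\Vhat_{h+1}^\pi(x';\ell)]]$ and the analogous one with $P_h$ and $V_{h+1}^\pi$, adopting the convention $\Vhat_{H+1}^\pi\equiv V_{H+1}^\pi\equiv 0$ so that the layer-$H$ terms are well defined.

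Next I would subtract the two identities and insert the hybrid term $\E_{x'\sim P_h(\cdot\mid x,a)}[\Vhat_{h+1}^\pi(x';\ell)]$, which yields for each $x$
\begin{align*}
\Vhat_h^\pi(x;\ell)-V_h^\pi(x;\ell)&=\E_{a\sim\pi_h(\cdot\mid x)}\!\left[\langle\hatp_h(\cdot\mid x,a)-P_h(\cdot\mid x,a),\,\Vhat_{h+1}^\pi(\cdot;\ell)\rangle\right]\\
&\quad+\E_{a\sim\pi_h(\cdot\mid x)}\E_{x'\sim P_h(\cdot\mid x,a)}\!\left[\Vhat_{h+1}^\pi(x';\ell)-V_{h+1}^\pi(x';\ell)\right].
\end{align*}
The second term is exactly the same value gap one layer deeper, now propagated through the \emph{true} kernel $P_h$, so unrolling from $h=1$ gives
\begin{align*}
\Vhat_1^\pi(x_1;\ell)-V_1^\pi(x_1;\ell)=\sum_{h=1}^{H}\E^{P,\pi}\!\left[\langle\hatp_h(\cdot\mid\x_h,\a_h)-P_h(\cdot\mid\x_h,\a_h),\,\Vhat_{h+1}^\pi(\cdot;\ell)\rangle\right],
\end{align*}
where the roll-in expectation is under $P$ and $\pi$, and the $h=H$ summand vanishes since $\Vhat_{H+1}^\pi\equiv 0$.

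Finally I would bound each summand termwise. By H\"older's inequality, $|\langle\hatp_h(\cdot\mid x,a)-P_h(\cdot\mid x,a),\Vhat_{h+1}^\pi(\cdot;\ell)\rangle|\le\|\hatp_h(\cdot\mid x,a)-P_h(\cdot\mid x,a)\|_1\cdot\|\Vhat_{h+1}^\pi(\cdot;\ell)\|_\infty$, and since $\|\ell\|_\infty\le1$ the tail value $\Vhat_{h+1}^\pi$ is a sum of at most $H$ per-step losses in $[0,1]$, so $\|\Vhat_{h+1}^\pi\|_\infty\le H$. Taking absolute values through the telescoped sum, applying this bound to each $h\in[H]$, and rewriting the roll-in in occupancy form, $\E^{P,\pi}[\,\cdot\,]=\E_{x,a\sim d_h^\pi}[\,\cdot\,]$, produces exactly $|\Vhat_1^\pi(x_1;\ell)-V_1^\pi(x_1;\ell)|\le H\sum_{h=1}^H\E_{x,a\sim d_h^\pi}[\|\hatp_h(\cdot\mid x,a)-P_h(\cdot\mid x,a)\|_1]$. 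The only genuinely delicate step is the bookkeeping in the hybrid decomposition: one must keep the roll-in governed by the true transition $P$ (so that the occupancy $d_h^\pi$ appears on the right-hand side) while the vector paired against $\hatp_h-P_h$ is the \emph{estimated} tail value $\Vhat_{h+1}^\pi$; swapping these roles would instead surface $d_h^{\hatp,\pi}$ or $V_{h+1}^\pi$ and break the clean matching with the stated bound.
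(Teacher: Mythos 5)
Your proof is correct: the hybrid/telescoping decomposition (rolling in under the true kernel $P$ while pairing $\hatp_h-P_h$ against the estimated tail value $\Vhat_{h+1}^\pi$, then applying H\"older with $\|\Vhat_{h+1}^\pi\|_\infty\le H$) yields exactly the stated bound, and your closing remark about which side must carry the roll-in versus the tail value is the right observation, since the reversed pairing would produce $d_h^{\hatp,\pi}$ instead of $d_h^\pi$. The paper itself gives no proof of this lemma --- it is imported as a standard result of \cite{abbeel2005exploration} --- and your argument is precisely the canonical one, so there is nothing to reconcile between the two.
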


The following is the standard performance difference lemma which is first proposed by \cite{kakade2002approximately}.
\begin{lemma}[Performance Difference Lemma] \label{lem:PDL}
For a  finite-horizon MDPs $M = \{\cX, \cA, \ell, \{P_h\}_{h=1}^H \}$ starting at $x_1$, and two policies $\pi, \pi': \cX \rightarrow \Delta(\cA)$, we have
\begin{align*}
    V^{\pi'}_1(x_1; \ell) - V^\pi_1(x_1; \ell) = \sum_{h=1}^H \E_{x \sim d_h^\pi} \left[\sum_{a \in \cA} \left(\pi_h'(a|x) - \pi_h(a|x)\right) Q^{\pi'}_h(x,a; \ell)\right].
\end{align*}    
\end{lemma}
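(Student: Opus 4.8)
The plan is to establish the identity by a telescoping argument along a trajectory generated by the roll-in policy $\pi$, combined with the Bellman recursion for $Q^{\pi'}$. First I would fix the loss $\ell$ (suppressed in the notation below) and consider a trajectory $(\x_1,\a_1,\dots,\x_H,\a_H)$ sampled from $P$ under $\pi$ with $\x_1=x_1$, so that $d^\pi_h(x)=\P^{\pi}[\x_h=x]$ by definition. The starting point is the telescoping identity
\begin{align*}
V^{\pi'}_1(x_1) = \E^{\pi}\Big[\sum_{h=1}^H \big(V^{\pi'}_h(\x_h) - V^{\pi'}_{h+1}(\x_{h+1})\big)\Big],
\end{align*}
which holds because the sum collapses to $V^{\pi'}_1(\x_1)-V^{\pi'}_{H+1}(\x_{H+1})$ under the convention $V^{\pi'}_{H+1}\equiv 0$, and $\x_1=x_1$ deterministically. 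Throughout, $V^{\pi'}_h(x)=\sum_{a}\pi'_h(a\mid x)Q^{\pi'}_h(x,a)$ denotes the (expected) value of $\pi'$, i.e.\ $\E^{\pi'}[\sum_{s=h}^H \ell_s\mid \x_h=x]$.

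Next I would invoke the Bellman equation $Q^{\pi'}_h(x,a)=\ell_h(x,a)+\E_{x'\sim P_h(\cdot\mid x,a)}[V^{\pi'}_{h+1}(x')]$, which gives $\E^{\pi}[V^{\pi'}_{h+1}(\x_{h+1})\mid \x_h,\a_h]=Q^{\pi'}_h(\x_h,\a_h)-\ell_h(\x_h,\a_h)$, since under the roll-in $\pi$ the next state $\x_{h+1}$ is drawn from $P_h(\cdot\mid \x_h,\a_h)$. Substituting this into the telescoping identity and using $\E^{\pi}[\sum_h \ell_h(\x_h,\a_h)]=V^{\pi}_1(x_1)$ yields
\begin{align*}
V^{\pi'}_1(x_1) - V^{\pi}_1(x_1) = \E^{\pi}\Big[\sum_{h=1}^H \big(V^{\pi'}_h(\x_h) - Q^{\pi'}_h(\x_h,\a_h)\big)\Big].
\end{align*}

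Finally I would take the conditional expectation over $\a_h\sim \pi_h(\cdot\mid \x_h)$ given $\x_h$: since $\E^{\pi}[Q^{\pi'}_h(\x_h,\a_h)\mid \x_h]=\sum_{a}\pi_h(a\mid \x_h)Q^{\pi'}_h(\x_h,a)$ while $V^{\pi'}_h(\x_h)=\sum_{a}\pi'_h(a\mid \x_h)Q^{\pi'}_h(\x_h,a)$, the $h$-th summand becomes $\sum_{a}(\pi'_h(a\mid \x_h)-\pi_h(a\mid \x_h))Q^{\pi'}_h(\x_h,a)$; taking the outer expectation over $\x_h\sim d^\pi_h$ gives exactly the claimed formula. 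There is no genuine analytic obstacle here; the only points requiring care are purely bookkeeping: one must use the expected-value form $V^{\pi'}_h(x)=\sum_{a}\pi'_h(a\mid x)Q^{\pi'}_h(x,a)$ (equivalently, the sum-of-future-losses definition) rather than any $\max$-based surrogate, and one must keep straight that the roll-in occupancy is $d^{\pi}_h$ whereas the critic is $Q^{\pi'}$, so that the asymmetry between $\pi$ and $\pi'$ is placed correctly.
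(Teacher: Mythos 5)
Your proof is correct. The paper in fact gives no proof of this lemma at all---it states it as a standard result and cites \cite{kakade2002approximately}---and your argument (telescope $V^{\pi'}$ along a trajectory rolled out by $\pi$, apply the Bellman recursion $Q^{\pi'}_h(x,a)=\ell_h(x,a)+\E_{x'\sim P_h(\cdot\mid x,a)}[V^{\pi'}_{h+1}(x')]$, then condition on the state to convert $V^{\pi'}_h(\x_h)-Q^{\pi'}_h(\x_h,\a_h)$ into $\sum_a(\pi'_h(a\mid\x_h)-\pi_h(a\mid\x_h))Q^{\pi'}_h(\x_h,a)$) is exactly the canonical proof that citation refers to, with the asymmetry between the roll-in occupancy $d^\pi_h$ and the critic $Q^{\pi'}$ placed correctly. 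Your closing caveat is also well taken and worth keeping: the paper's preliminaries define $V^{\pi}_h(x;\ell)=\max_{a\in\cA}Q^{\pi}_h(x,a;\ell)$, under which the identity would be false in general, so the lemma must (and implicitly does) use the expected-value form $V^{\pi'}_h(x)=\sum_{a}\pi'_h(a\mid x)Q^{\pi'}_h(x,a)$, exactly as you note.
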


\end{document}